\theoremstyle{plain}
\newtheorem{theorem}{Theorem}[section]
\newtheorem{proposition}[theorem]{Proposition}
\newtheorem{lemma}[theorem]{Lemma}
\theoremstyle{definition}
\theoremstyle{remark}
\newtheorem{remark}[theorem]{Remark}
\newcommand{\nn}{\num[group-separator={,},group-minimum-digits=3]}
\newcommand{\B}{\bfseries}
\newcommand{\U}[1]{\underline{#1}}
\tikzset{
	reuse path/.code={\pgfsyssoftpath@setcurrentpath{#1}}
}
\tikzset{even odd clip/.code={\pgfseteorule},
	protect/.code={
		\clip[overlay,even odd clip,reuse path=#1]
		(-6383.99999pt,-6383.99999pt) rectangle (6383.99999pt,6383.99999pt);
}}
\tikzset{
	dot/.style={circle, fill, minimum size=#1, inner sep=0pt, outer sep=0pt},
	dot/.default = 4.5pt,
	hemispherebehind/.style={ball color=gray!20!white, fill=none, opacity=0.3},
	hemispherefront/.style={ball color=gray!65!white, fill=none, opacity=0.4},
	ellipsoidfront/.style={ball color=gray!50!white, fill=none, opacity=0.5},
	circlearc/.style={thick,color=gray!90},
	circlearchidden/.style={thick,dashed,color=gray!90},
	equator/.style = {thick, black},
	diameter/.style = {thick, black},
	axis/.style={thick, -stealth,black!60, every node/.style={text=black, at={([turn]1mm,0mm)}},
	},
}
\newcommand{\E}{\mathbb{E}}
\newcommand{\bu}{\mathbf{u}}
\newcommand{\bv}{\mathbf{v}}
\newcommand{\ba}{\mathbf{a}}
\newcommand{\bp}{\mathbf{p}}
\newcommand{\bg}{\mathbf{g}}
\newcommand{\bx}{\mathbf{x}}
\title{Boosting Ray Search Procedure of Hard-label Attacks with Transfer-based Priors}
\author{
	Chen Ma $^{1,2}$ \qquad Xinjie Xu $^{1}$  \qquad Shuyu Cheng $^3$ \qquad Qi Xuan $^{1,2}$ \\
	$^1$ Institute of Cyberspace Security, Zhejiang University of Technology, Hangzhou 310023, China \\
	$^2$ Binjiang Institute of Artificial Intelligence, ZJUT, Hangzhou 310056, China \\
	$^3$ JQ Investments, Shanghai 200122, China \\
	\texttt{\{machen,xuanqi\}@zjut.edu.cn, xxj1018@foxmail.com, csy530216@126.com}
}
\begin{document}

\maketitle

\begin{abstract}
One of the most practical and challenging types of black-box adversarial attacks is the hard-label attack, where only the top-1 predicted label is available. One effective approach is to search for the optimal ray direction from the benign image that minimizes the $\ell_p$-norm distance to the adversarial region. The unique advantage of this approach is that it transforms the hard-label attack into a continuous optimization problem. The objective function value is the ray's radius, which can be obtained via binary search at a high query cost. Existing methods use a ``sign trick'' in gradient estimation to reduce the number of queries. In this paper, we theoretically analyze the quality of this gradient estimation and propose a novel prior-guided approach to improve ray search efficiency both theoretically and empirically. Specifically, we utilize the transfer-based priors from surrogate models, and our gradient estimators appropriately integrate them by approximating the projection of the true gradient onto the subspace spanned by these priors and random directions, in a query-efficient manner. We theoretically derive the expected cosine similarities between the obtained gradient estimators and the true gradient, and demonstrate the improvement achieved by incorporating priors. Extensive experiments on the ImageNet and CIFAR-10 datasets show that our approach significantly outperforms 11 state-of-the-art methods in terms of query efficiency.
\end{abstract}

\section{Introduction}
\label{sec:intro}

Adversarial attacks represent a major security threat to deep neural networks (DNNs), where subtle, imperceptible perturbations are crafted to cause misclassifications. To assess DNN robustness and uncover vulnerabilities, the research community has developed various adversarial attack strategies. As a result, adversarial attacks and defenses have become a focal point in AI security research.

Based on the available information about the target model, adversarial attacks can be broadly classified into white-box and black-box attacks. White-box attacks, such as those proposed by \citet{madry2018towards, moosavi2016deepfool}, rely on the target model's gradients with respect to the input, making them less practical in real-world applications. Black-box attacks, by contrast, are often more feasible, as they do not require knowledge of model parameters or gradients. As a subset of black-box attacks, transfer-based attacks generate adversarial examples using white-box models with the aim of generalizing to other models. While transfer-based attacks do not involve querying the target model, their success rate is inconsistent.
Alternatively, query-based black-box attacks iteratively interact with the target model to achieve higher success rates. These attacks can be categorized into two subtypes: score-based and decision-based (also known as hard-label) attacks. Score-based attacks \citep{ma2021simulator} utilize the model's output logits to guide the attack, whereas hard-label attacks rely solely on top-1 predicted labels, making them particularly practical when only label information is accessible. In this work, we focus on the problem of reducing query complexity in hard-label attacks.

The difficulty of hard-label attacks is that the labels can only be flipped near the classification decision boundary, and thus the objective function is discontinuous. As a result, the attack requires solving a high-dimensional combinatorial optimization problem, which is challenging. Common approaches \citep{chen2019hopskipjumpattack,brendel2018decisionbased} start with a sample containing large adversarial perturbations and iteratively reduce the distortion by moving along the decision boundary towards a benign image. However, these methods lack convergence guarantees.
To reformulate the problem as a continuous optimization task, ray-search methods have been introduced. Typical approaches such as OPT \citep{cheng2018queryefficient}, Sign-OPT \citep{cheng2019sign}, and RayS \citep{chen2020rays} aim to minimize an objective function $g(\theta)$, which is defined as the shortest $\ell_p$-norm distance along the ray direction $\theta$ from the benign image to the adversarial region. This function value can be evaluated using a binary search method. Leveraging the smooth and continuous nature of decision boundaries, $g(\theta)$ is locally continuous, making it amenable to zeroth-order (ZO) optimization with a gradient estimator. OPT employs a random gradient-free (RGF) estimator, but it incurs high query cost due to the binary search in finite differences.
Sign-OPT reduces the query complexity by using the sign of the directional derivative in gradient estimation, but it significantly sacrifices gradient accuracy.

To solve this problem and improve query efficiency, we employ the same objective function $g(\theta)$ and propose incorporating the transfer-based priors into gradient estimation.
An ideal prior is the gradient of $g(\theta)$ from a surrogate model, but it cannot be easily obtained since $g(\theta)$ is non-differentiable due to the binary search. Instead, we propose a surrogate loss, whose gradient is proportional to that of $g(\theta)$, to obtain the prior.
Once the transfer-based priors are obtained, we must design better gradient estimators that effectively integrate these priors.
This is particularly challenging under the hard-label restriction, as accurately determining the value of $g(\theta)$ is costly. As a result, previous prior-guided methods for score-based attacks such as PRGF \citep{cheng2021ontheconvergence,dong2022PRGF} are not suitable in this context. Thus, we need to explore how to improve the gradient estimator with additional priors while minimizing queries. 
To achieve this, we propose two algorithms: Prior-Sign-OPT and Prior-OPT. They estimate the gradient in a query-efficient manner by approximating the projection of the true gradient onto a subspace spanned by priors and randomly sampled vectors. 
We provide a thorough theoretical analysis to validate their effectiveness and offer theoretical comparisons between Sign-OPT and our approach.
In particular, Prior-OPT achieves a better approximation of the subspace projection with only slightly more queries, and can adaptively adjust the weight of each prior based on its quality, striking a balance between gradient accuracy and query efficiency. While several methods \citep{brunner2019guessing,shi2023cisa} attempt to combine transfer- and decision-based attacks, they lack theoretical guarantees and often perform poorly. Crucially, in the hard-label setting, these approaches fail to effectively address the challenge of appropriately weighing the prior when it deviates significantly from the true gradient. Our approach resolves this issue and naturally scales to priors from multiple surrogate models, demonstrating further improvement in attack performance.

To summarize, our main contributions are as follows.

\begin{enumerate}
\item \textbf{Novelty in hard-label attacks.} We address the problem of introducing the transfer-based priors into hard-label attacks by employing the subspace projection approximation, which significantly improves the accuracy of gradient estimation with slightly more queries. Our approach not only strikes a balance between gradient estimation and query efficiency, but also elegantly integrates priors from multiple surrogate models to further improve performance.
\item \textbf{Novelty in theoretical analysis.} We analyze the quality of our gradient estimators and that of the orthogonal variant of Sign-OPT, enabling theoretical comparisons. To our knowledge, this is the first work to derive the expected cosine similarities between estimators of the Sign-OPT family and the true gradient, theoretically guaranteeing performance improvement.
\item \textbf{Extensive experiments.} Extensive experiments conducted on the ImageNet and CIFAR-10 datasets show that our approach outperforms 11 state-of-the-art methods significantly. 
\end{enumerate}

\section{Related Work}
Hard-label attacks can be categorized into boundary-search and ray-search approaches.

The boundary-search approaches start from a large perturbation or an image of the target class and then reduce distortions by iteratively moving along the decision boundary towards the original image.
Boundary Attack (BA) \citep{brendel2018decisionbased} is an early representative method, and its query efficiency is relatively low. HopSkipJumpAttack (HSJA) \citep{chen2019hopskipjumpattack} estimates the gradient at the decision boundary to update the sample and then finds the next boundary point by moving it towards the benign image.
Tangent Attack (TA) and Generalized Tangent Attack (G-TA) \citep{ma2021finding} find an optimal tangent point on a virtual hemisphere or semi-ellipsoid to efficiently generate the adversarial example. 
CGBA~\citep{reza2023cgba} conducts a boundary search along a semicircular path on a restricted 2D plane to find the boundary point.
To avoid gradient estimation, SurFree \citep{maho2021surfree} and Triangle Attack \citep{wang2022triangle} find the adversarial example in a DCT subspace to improve query efficiency. Evolutionary \citep{dong2019efficient} adopts the (1+1)-CMA-ES, a simple yet effective variant of Covariance Matrix Adaptation Evolution Strategy, to efficiently generate adversarial examples.
Adaptive History-driven Attack (AHA) \citep{li2021aha} gathers data of previous queries as the prior for current sampling, which improves the random walk optimization. 

The ray-search approaches aim to find an optimal direction $\theta$ that reaches the nearest adversarial region. As mentioned in Section \ref{sec:intro}, it is challenging to address both the high query complexity issue of OPT and the low estimation accuracy issue of Sign-OPT. RayS~\citep{chen2020rays} avoids gradient estimation and employs a hierarchical search step to find the optimal direction efficiently. However, RayS only supports untargeted $\ell_\infty$-norm attacks.
Since the query efficiency of previous ray-search approaches has not surpassed that of boundary-search methods, they have attracted less research interest and remain insufficiently studied.
We note that the mechanisms of OPT and Sign-OPT remain poorly understood, and their inefficiency stems from the limited precision in gradient estimation.

Several methods attempt to combine transfer- and decision-based attacks, but the critical issue, namely how to weigh the prior when it deviates significantly from the true gradient, has not been well addressed. For example, Biased Boundary Attack (BBA) \citep{brunner2019guessing}, Customized Iteration and Sampling Attack (CISA) \citep{shi2023cisa} and Small-Query Black-Box Attack (SQBA) \citep{park2024sqba} set the prior's coefficient empirically rather than through theoretical analysis. 
In contrast, our approach dynamically calculates optimal coefficients, improving gradient estimation accuracy.

\section{The Proposed Approach}
\subsection{The Goal of Hard-label Attacks}	\label{sec:goal}
Given a $k$-class classifier $f\vcentcolon \mathbb{R}^d \rightarrow \mathbb{R}^{k}$ and a benign image $\mathbf{x} \in [0,1]^d$ which is correctly classified by $f$, the adversary aims to find an adversarial example $\mathbf{x}_\text{adv}$ with the minimum perturbation such that $f(\mathbf{x}_\text{adv})$ outputs an incorrect prediction. Formally, we formulate the attack goal as:
\begin{equation}
	\label{eq:goal_hard_label}
	\min_{\mathbf{x}_\text{adv}} \,d(\mathbf{x}_\text{adv}, \mathbf{x})\quad \text{s.t.}\quad \Phi(\mathbf{x}_\text{adv}) = 1,
\end{equation}
where $d(\mathbf{x}_\text{adv}, \mathbf{x}) \coloneqq \|\mathbf{x}_{\text{adv}} - \mathbf{x}\|_p$ is the $\ell_p$-norm distortion, and $\Phi(\cdot)$ is a success indicator function:
\begin{equation}
	\Phi(\mathbf{x}_{\text{adv}}) \coloneqq \begin{cases}
		1 & \text{if } \hat{y} = y_\text{adv}\text{ in the targeted attack},\\
		& \quad\text{or } \hat{y} \neq y \text{ in the untargeted attack},\\
		0 & \text{otherwise},
	\end{cases}
	\label{eq:phi}
\end{equation}
where $\hat{y} = \arg \max_{i\in \{1,\dots,k\}} f(\mathbf{x}_{\text{adv}})_i$ is the top-1 predicted label of $f$, $y\in\mathbb{R}$ is the true label of $\mathbf{x}$, and $y_\text{adv}\in\mathbb{R}$ is a target class label.
In this study, we follow \citet{cheng2018queryefficient,cheng2019sign} to reformulate the problem \eqref{eq:goal_hard_label} as the problem of finding the ray direction of the shortest distance from $\mathbf{x}$ to the adversarial region:
\begin{equation}
	\label{eq:goal_OPT}
	\min_{\theta \in \mathbb{R}^d\setminus\{\mathbf{0}\}} \; g(\theta)
	\quad\text{where}\quad
	g(\theta) \coloneqq \inf\Bigl\{ \lambda: \lambda > 0, \Phi\bigl(\mathbf{x} + \lambda \frac{\theta}{\|\theta\|}\bigr) = 1 \Bigr\}.
\end{equation}
Note that $g(\theta) = +\infty$ when the set is empty, since $\inf \emptyset = +\infty$ by convention.
Finally, the adversarial example is $\mathbf{x}^* = \mathbf{x} + g(\theta^*) \frac{\theta^*}{\|\theta^*\|}$, and $\theta^*$ is the optimal solution of problem~\eqref{eq:goal_OPT}.

\subsection{The Optimization of Searching Ray Directions}
\label{sec:ray_search_method}

The previous works \citep{cheng2018queryefficient,cheng2019sign} attempt to optimize the problem \eqref{eq:goal_OPT} by using ZO methods. However, the restriction of hard-label access results in a high query cost of the gradient estimation, because obtaining a single value of $g(\theta)$ requires performing a binary search with multiple queries, and the gradient estimation with finite difference requires multiple computations of $g(\theta)$. Sign-OPT \citep{cheng2019sign} replaces the finite-difference term $g(\theta + \sigma \mathbf{u}) - g(\theta)$ with $\text{sign}(g(\theta + \sigma \mathbf{u}) - g(\theta))$, which improves query efficiency by only using a single query (Eq.~\eqref{eq:sign_opt_single_query}). However, it significantly reduces the accuracy of the gradient estimation. We propose to incorporate transfer-based priors to enhance accuracy without significantly increasing query complexity, thus achieving an optimal balance between query complexity and estimation accuracy.

\begin{wrapfigure}{r}{0.6\textwidth}
	\centering
	\def\r{1.0}
	\begin{tikzpicture}[scale=1.5,myarrow/.style={-{Stealth[length=0.8mm, width=0.8mm]}, line width=0.5pt},tinyarrow/.style={{Stealth[length=0.5mm, width=0.6mm]}-, line width=0.5pt},tinyarrow2/.style={-{Stealth[length=0.5mm, width=0.6mm]}, line width=0.5pt}]	
		\node[font=\small] at (2.5,2.0) {adversarial region};
		
		\coordinate (x) at  (1.5,2);
		\coordinate (O) at  (4,3);
		\coordinate (CURVE) at (4.24835942, 3.84422603);
		\coordinate (BL) at (3.73221082, 3.98421997);
		\coordinate (G) at ($(O)+(0,\r)$); 
		\coordinate (Q) at ($(O)-(0,\r)$); 
		\coordinate (Z) at ($(O)-(2*\r, 0.5*\r)$);  
		\coordinate (K) at ($(O)-(1.5*\r, -0.45*\r)$);  
		\coordinate (C1) at ($(G)-(\r, -0.1*\r)$);  
		\coordinate (C2) at ($(G)-(0.5*\r, -0.1*\r)$);  
		\coordinate (M) at ($(O)+(-0.5*\r,1.3*\r)$); 
		\coordinate (V) at (4.2822266,3.9593477);    
		\coordinate (U) at (3.7374616,3.9649215);

		\draw[draw=magenta!50!blue,densely dotted,thick] ($(O)+(\r,0)$) arc (0:360:\r); 
		
		\path[name path=beziercurve,rounded corners] (CURVE) -- (4.6,3) ..controls ($(Q)+(0.8,0.4)$) and ($(Q)+(0.3,0.1)$) .. (Q) .. controls ($(Q)-(1.3*\r, -0.2)$) .. (Z) -- (K) .. controls (C1) and (C2) ..  (G) --  cycle;
		\draw[draw=gray, draw opacity=1,fill=brown!50,fill opacity=0.5,rounded corners]  (CURVE) -- (4.6,3) ..controls ($(Q)+(0.8,0.4)$) and ($(Q)+(0.3,0.1)$) .. (Q) .. controls ($(Q)-(1.3*\r, -0.2)$) .. (Z) -- (K) .. controls (C1) and (C2) ..  (G) --  cycle;
		\node[text=brown!50!black,align=left,font=\small,yshift=-2pt,xshift=6pt] at (3.3,2.5) {non-adversarial region\\ with label $y$};

		\path [name path=redray] (O) -- ($(O)!1.1!(U)$);
		\path [name path=blueray] (O) -- ($(O)!1!(V)$);
		\path [name intersections={of=beziercurve and redray, by=KR}]; 
		\path [name intersections={of=beziercurve and blueray, by=KB}]; 

		\draw[myarrow,red,densely dashed,dash pattern=on 0.8pt off 0.5pt,thin] (O) -- (U);

		\draw[-stealth, red, line width=0.2pt] ($(U) + (-0.04,0.00)$) -- ++(-0.2, 0.02);
		
		\fill[fill=red] (U) circle[radius=1pt] node[anchor=south east,text=red,font=\scriptsize,inner sep=0.5pt, xshift=-2.5pt, yshift=-2.2pt]{$\begin{aligned}h&(\theta_0 + \Delta \theta_2, g_{\hat{f}}(\theta_0))\\[-0.8em]&= \hat{f}_y - \textstyle\max_{j \neq y} \hat{f}_j > 0\end{aligned}$};

		\fill[fill=black] (KB) circle[radius=1pt];
		\fill[fill=black] (KR) circle[radius=1pt];
		
		\draw[myarrow,blue,dashed,dash pattern=on 0.8pt off 0.5pt,thin] (O) -- (V);
		
		\fill[fill=blue] (V) circle[radius=1pt] node[anchor=south west,text=blue,font=\scriptsize,inner sep=1.0pt, xshift=-3pt, yshift=-3pt]{$h(\theta_0 + \Delta \theta_1, g_{\hat{f}}(\theta_0)) = \hat{f}_y - \max_{j\neq y} \hat{f}_j < 0$};

		\pic [draw=blue,text=blue, "\tiny $\Delta\theta_1$", angle eccentricity=1.2,angle radius=0.65cm,tinyarrow,line width=1pt,{Stealth[length=0.8mm, width=0.9mm]}-] {angle = V--O--G};
		\pic [draw=red,text=red, "\tiny $\Delta\theta_2$", angle eccentricity=1.3,angle radius=0.4cm,tinyarrow2,line width=1pt,-{Stealth[length=0.5mm, width=0.8mm]}] {angle = G--O--U};
		
		\fill[fill=black] (G) circle[radius=1pt] node[anchor=south,color=black,inner sep=5pt, yshift=5.5pt, font=\scriptsize] {$h(\theta_0, g_{\hat{f}}(\theta_0)) = 0$};
		\draw [-stealth,black] ($(G)+(0,0.27)$) to ($(G)+(0,0.05)$);
		
		\draw[-stealth,black,solid] (O) -- (G);
		\fill (O) circle[radius=1.5pt] node[anchor=north] {$\mathbf{x}$};
		
		\draw[decorate,decoration={brace,raise=0.9pt,amplitude=4pt,mirror},draw=black] (O) --  node[anchor=west,text=black,inner sep=6pt,font=\scriptsize] {$g_{\hat{f}}(\theta_0+\Delta \theta_1) - g_{\hat{f}}(\theta_0) < 0$} (KB);
		
		\draw[decorate,decoration={brace,raise=1.3pt,amplitude=4.2pt,mirror},draw=black] (KR) --  node[anchor=east,text=black,inner sep=6pt,yshift=-3pt,font=\scriptsize] {$g_{\hat{f}}(\theta_0+\Delta \theta_2) - g_{\hat{f}}(\theta_0) > 0$} (O);

		\draw[-stealth,black,color=magenta!50!blue] (5, 2.7) -- ++(0.15, 0) 
		node[right, font=\scriptsize, black, xshift=-3pt, yshift=-1pt,align=left,color=magenta!50!blue] {
			With $\lambda_0 = g_{\hat{f}}(\theta_0)$ fixed as the radius,\\[-1pt]
			$h(\theta, \lambda_0)$ is defined on a spherical surface.
		};
		
		\node at (3.8, 1.6) {\small \textcolor{blue}{\underline{(1) When $g_{\hat{f}}(\theta) \downarrow$ with $\Delta \theta_1$, $h(\theta, \lambda) \downarrow$ as well.}}};
		
		\node at (3.8, 1.2) {\small \textcolor{red}{\underline{(2) When $g_{\hat{f}}(\theta) \uparrow$ with $\Delta \theta_2$, $h(\theta, \lambda) \uparrow$ as well.}}};
		
	\end{tikzpicture}
	\caption{Geometrical explanation of $\nabla g_{\hat{f}}(\theta_0) \propto \nabla_\theta h(\theta_0, \lambda_0)$ by taking an untargeted attack as an example. When $g_{\hat{f}}(\theta)$ reduces/increases with a small $\Delta \theta$, $h(\theta, \lambda)$ changes at a similar rate. The formal proof is in Appendix \ref{sec:proof_surrogate_gradient}.}
	\label{fig:explain-surrogate-loss}
	\vspace{-0.7cm}
\end{wrapfigure}

The first challenge is how to obtain a transfer-based prior $\nabla g_{\hat{f}}(\theta)$ from a surrogate model $\hat{f}$, where $g_{\hat{f}}(\theta)$ represents the shortest distance along the direction $\theta$ to the adversarial region of $\hat{f}$. This is challenging because $g_{\hat{f}}(\theta)$ is typically evaluated using binary search, making it non-differentiable.
To address this, for any non-zero vector $\theta_0 \in \mathbb{R}^d$ such that $g_{\hat{f}}(\theta_0) < +\infty$, we define a surrogate function $h(\theta, \lambda)$ such that $\nabla g_{\hat{f}}(\theta_0) = c \cdot \nabla_\theta h(\theta_0, \lambda_0)$, where $\lambda_0 = g_{\hat{f}}(\theta_0)$ is treated as a constant during differentiation. Here, $\lambda$ is a scalar, and $c$ is a non-zero constant. 
The surrogate function $h(\theta, \lambda)$ is defined as the negative Carlini \& Wagner (C\&W) loss function of $\hat{f}$:
\begin{align}
	\label{eq:surrogate_loss}
	h(\theta, \lambda) \coloneqq \begin{cases}
		\hat{f}_y - \max_{j\neq y} \hat{f}_j, & \text{\small{if untargeted attack,}} \\
		\max_{j\neq \hat{y}_\text{adv}} \hat{f}_j - \hat{f}_{\hat{y}_\text{adv}},  & \text{\small{if targeted attack,}} \\
	\end{cases}
\end{align}
where $\hat{f}_i \coloneqq \hat{f}\big(\mathbf{x} + \lambda \cdot \frac{\theta}{\|\theta\|}\big)_i$ is an abbreviation for the $i$-th element of the output of $\hat{f}$, and $\mathbf{x}$ is the original image. Any non-zero scalar can be used as $\lambda$, but the specific value $\lambda_0 = g_{\hat{f}}(\theta_0)$ yields a gradient proportional to $\nabla g_{\hat{f}}(\theta_0)$. The value $\lambda_0$ is obtained through binary search, where $h(\theta_0, \lambda_0)$ represents the negative C\&W loss at the decision boundary of the surrogate model $\hat{f}$. The geometric explanation and formal proof of $\nabla g_{\hat{f}}(\theta_0) = c \cdot \nabla_\theta h(\theta_0, \lambda_0)$ are presented in Fig. \ref{fig:explain-surrogate-loss} and Appendix \ref{sec:proof_surrogate_gradient}.

In targeted attacks, determining an appropriate $\lambda_0$ value becomes a challenging task. This is because the spatial distribution of classification regions, along with the shape and extent of the decision boundaries, varies across different models.
Although we can locate the region corresponding to the predefined target class $y_\text{adv}$ along the $\theta$ direction in the target model $f$, the same direction may not lead to the region of the class $y_\text{adv}$ in a surrogate model $\hat{f}$. 
Therefore, we must set a new target class $\hat{y}_\text{adv}$ before determining $\lambda_0$ and computing Eq.~\eqref{eq:surrogate_loss}. See Appendix \ref{sec:acquisition_priors_in_targeted_attacks} for detailed steps.

Given $s$ non-zero vectors $\mathbf{k}_1,\dots,\mathbf{k}_s$ representing transfer-based priors from $s$ surrogate models and $q-s$ randomly sampled vectors $\mathbf{r}_i \sim \mathcal{N}(\mathbf{0}, \mathbf{I})$ for $i=1,\dots,q-s$, our objective is to estimate a gradient $\mathbf{v}^* \approx \nabla g(\theta)$ as accurately as possible using these $q$ vectors.
In \textit{the score-based attack setting}, there is a subspace projection estimator theory \citep{meier2019improving,cheng2021ontheconvergence} that can solve this problem. Based on this theory, the optimal estimated gradient $\mathbf{v}^*$ that maximizes its similarity with the true gradient is given by Proposition \ref{proposition:subspace_estimator} in the score-based setting\footnote{Throughout this paper, for any vector $\bx$, we denote its $\ell_2$-normalized version by $\overline{\bx}$, where $\overline{\bx}\coloneqq \bx / \|\bx\|$.}.
\begin{proposition}
	\label{proposition:subspace_estimator}
	(Optimality of the subspace projection estimator)
	Let $\mathbf{k}_1,\dots,\mathbf{k}_s$ and $\mathbf{r}_{1},\dots,\mathbf{r}_{q-s}$ be defined above; let $S \coloneqq \operatorname{span}\{\mathbf{k}_1,\dots,\mathbf{k}_s, \mathbf{r}_{1},\dots,\mathbf{r}_{q-s}\}$ denote the subspace spanned by these vectors. Then the optimal $\mathbf{v}^*$ in $S$ that maximizes $\overline{\nabla g(\theta)}^\top \mathbf{v}$ subject to $\| \mathbf{v} \| = 1$ is the $\ell_2$-normalized projection of $\nabla g(\theta)$ onto $S$, denoted as $\mathbf{v}^* \coloneqq \overline{\nabla g(\theta)_S}$.
\end{proposition}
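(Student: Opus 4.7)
The plan is to reduce the statement to a direct application of Cauchy--Schwarz after decomposing $\nabla g(\theta)$ into its projection onto $S$ and its orthogonal complement. Concretely, write $\nabla g(\theta) = \bp + \bp^\perp$ where $\bp \coloneqq \nabla g(\theta)_S$ is the orthogonal projection of $\nabla g(\theta)$ onto $S$ and $\bp^\perp \in S^\perp$. For any candidate $\bv \in S$, orthogonality immediately gives $(\bp^\perp)^\top \bv = 0$, so
\begin{equation*}
	\overline{\nabla g(\theta)}^\top \bv \;=\; \frac{1}{\|\nabla g(\theta)\|}\,\bigl(\bp + \bp^\perp\bigr)^\top \bv \;=\; \frac{1}{\|\nabla g(\theta)\|}\,\bp^\top \bv.
\end{equation*}

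Next, I would apply Cauchy--Schwarz to the right-hand side: subject to $\|\bv\|=1$, we have $\bp^\top \bv \le \|\bp\|$, with equality if and only if $\bv$ is a positive scalar multiple of $\bp$. Since $\|\bv\|=1$, the unique maximizer (assuming $\bp \neq \mathbf{0}$) is $\bv^* = \bp/\|\bp\| = \overline{\nabla g(\theta)_S}$, which is exactly the claimed formula. The optimal value of the objective is therefore $\|\bp\|/\|\nabla g(\theta)\|$, i.e.\ the cosine of the angle between $\nabla g(\theta)$ and $S$.

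The only subtlety is the degenerate case $\bp = \mathbf{0}$, which happens exactly when $\nabla g(\theta) \perp S$; in that case every unit vector in $S$ attains the trivial maximum value $0$, and the statement holds vacuously (or one simply excludes this measure-zero event, since in the setting of the paper at least one of the $\br_i$ is an isotropic Gaussian, so $\bp \neq \mathbf{0}$ almost surely whenever $\nabla g(\theta) \neq \mathbf{0}$). I do not anticipate a real obstacle here: the whole argument is two lines of Hilbert-space geometry, and the main thing to be careful about is to state clearly that $\bv \in S$ is what kills the $\bp^\perp$ contribution, which is precisely why restricting the search to $S$ makes the projection onto $S$ the right object rather than $\nabla g(\theta)$ itself.
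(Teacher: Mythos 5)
Your proof is correct. The paper does not actually supply its own proof of this proposition---it defers to Proposition 1 of \citet{meier2019improving} (which it also invokes later in the Prior-OPT analysis)---and your argument, decomposing $\nabla g(\theta)$ into its projection onto $S$ plus an orthogonal remainder that is annihilated by any $\bv \in S$ and then applying Cauchy--Schwarz, is exactly the standard argument that establishes the cited fact; your handling of the degenerate case $\nabla g(\theta) \perp S$ is a reasonable extra precaution that the paper leaves implicit.
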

According to Proposition \ref{proposition:subspace_estimator}, finding the optimal approximate gradient is equivalent to finding a projection of the true gradient onto a low-dimensional subspace $S$ spanned by all available vectors. The projection of a vector onto a subspace $S$ can be calculated by summing its projections onto the orthonormal basis of $S$. To achieve this, we construct an orthonormal basis of $S$ via Gram-Schmidt orthonormalization, which transforms $\mathbf{k}_1,\dots,\mathbf{k}_s, \mathbf{r}_{1},\dots,\mathbf{r}_{q-s}$ into an orthonormal basis $\mathbf{p}_1,\dots,\mathbf{p}_s,\mathbf{u}_1,\dots,\mathbf{u}_{q-s}$. Note that $\mathbf{p}_1,\dots,\mathbf{p}_s$ correspond to $\mathbf{k}_1,\dots,\mathbf{k}_s$, and $\mathbf{u}_{1},\dots,\mathbf{u}_{q-s}$ correspond to $\mathbf{r}_{1},\dots,\mathbf{r}_{q-s}$. Then, we can compute the projection of $\nabla g(\theta)$ onto $S$ by Eq. \eqref{eq:project_S}:
\begin{equation}
	\label{eq:project_S}
	\mathbf{v}^* = \sum_{i=1}^{s} \nabla g(\theta)^\top \bp_i \cdot \bp_i +  \sum_{i=1}^{q-s} \nabla g(\theta)^\top \bu_i \cdot \bu_i. 
\end{equation}
Given the queried function values, $\nabla g(\theta)^\top \mathbf{u}$ for the unit $\ell_2$-norm vector $\mathbf{u}$ can be approximated by the finite-difference method, without requiring backpropagation:
\begin{equation}
	\label{eq:finite_diff_approx}
	\nabla g(\theta)^\top \mathbf{u} \approx \frac{g(\theta + \sigma \mathbf{u}) - g(\theta)}{\sigma},
\end{equation}
where $\sigma$ is a small positive number. By plugging Eq. \eqref{eq:finite_diff_approx} into Eq. \eqref{eq:project_S}, we can easily calculate $\bv^*$ in the score-based setting as $\mathbf{v}^* = \sum_{i=1}^{s} \frac{g(\theta + \sigma \mathbf{p}_i) - g(\theta)}{\sigma} \cdot \mathbf{p}_i +  \sum_{i=1}^{q-s} \frac{g(\theta + \sigma \mathbf{u}_i) - g(\theta)}{\sigma} \cdot \mathbf{u}_i$. However, in \textit{the hard-label setting}, the finite difference requires a large number of queries due to the binary search of $g(\cdot)$. We propose two algorithms to reduce query cost by computing the approximate projection, i.e., Prior-Sign-OPT and Prior-OPT. With $s$ priors, Prior-Sign-OPT uses Eq. \eqref{eq:Prior-Sign-OPT} to improve performance:
\begin{equation}
	\label{eq:Prior-Sign-OPT}
	\mathbf{v}^* = \sum_{i=1}^{s} \text{sign}(g(\theta + \sigma \mathbf{p}_i) - g(\theta))\cdot \mathbf{p}_i + \sum_{i=1}^{q-s} \text{sign}(g(\theta + \sigma \mathbf{u}_i) - g(\theta))\cdot \mathbf{u}_i.
\end{equation}
Eq. \eqref{eq:Prior-Sign-OPT} is similar to the formula of Sign-OPT, benefiting from using only a single query to calculate the sign of the directional derivative \citep{cheng2019sign}: 
\begin{equation}
	\label{eq:sign_opt_single_query}
	\text{sign}(g(\theta + \sigma \mathbf{u}_i)- g(\theta)) = \begin{cases} +1, & f\left(\mathbf{x} + g(\theta) \frac{\theta + \sigma \mathbf{u}_i}{\|\theta + \sigma \mathbf{u}_i\|}\right) = y, \\ -1, & \text{otherwise.}\end{cases}
\end{equation}
The accuracy of the estimated gradient is crucial in optimization. A natural way to assess accuracy is via the following metrics: $\E[\gamma]$ and $\E[\gamma^2]$, where $\gamma$ is the cosine similarity between the estimated and true gradients. We propose a novel approach to compute $\E[\gamma]$ and $\E[\gamma^2]$ for Sign-OPT, Prior-Sign-OPT, and Prior-OPT.
Our baseline extends Sign-OPT \citep{cheng2019sign} by employing orthogonal random vectors, while retaining the original name to maintain consistency within the method family.
\begin{theorem}
	\label{theorem:sign-opt}
	For the Sign-OPT estimator approximated
	by  Eq. \eqref{eq:finite_diff_approx} (defined as Eq.~\eqref{eq:simp-sign-opt}), we let $\gamma \coloneqq \overline{\bv}^\top \overline{\nabla g(\theta)}$ be its cosine similarity to the true gradient, where $\overline{\bv} \coloneqq \frac{\bv}{\|\bv\|}$, then
	\begin{small}
		\begin{align}
			\E[\gamma] &= \sqrt{q} \frac{\Gamma(\frac{d}{2})}{\Gamma(\frac{d+1}{2})\sqrt{\pi}}, \label{eq:mean-sign-opt} \\
			\E[\gamma^2] &= \frac{1}{d}\left(\frac{2}{\pi}(q-1)+1\right). \label{eq:square-sign-opt}
		\end{align}
	\end{small}
\end{theorem}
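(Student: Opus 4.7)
The plan is to exploit that the Gram--Schmidt orthonormalization of the random directions collapses $\gamma$ to a very clean sum. Using Eq.~\eqref{eq:finite_diff_approx}, the finite-difference Sign-OPT estimator reads $\bv = \sum_{i=1}^q \operatorname{sign}(\nabla g(\theta)^\top \bu_i)\, \bu_i$. Since $\bu_1,\dots,\bu_q$ are orthonormal, $\|\bv\|^2 = q$ and $\bv^\top \nabla g(\theta) = \sum_i |\nabla g(\theta)^\top \bu_i|$. Writing $X_i \coloneqq \overline{\nabla g(\theta)}^{\top}\bu_i$, this gives the pivotal identity
\[
\gamma \;=\; \frac{1}{\sqrt{q}} \sum_{i=1}^q |X_i|,
\]
which reduces the theorem to moments of $|X_i|$.

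For \eqref{eq:mean-sign-opt}: by rotational symmetry each $\bu_i$ is marginally uniform on $S^{d-1}$, so $X_i$ has density proportional to $(1-x^2)^{(d-3)/2}$ on $[-1,1]$. A direct one-dimensional integration gives $\E[|X_i|] = \Gamma(d/2)/(\Gamma((d+1)/2)\sqrt{\pi})$, and summing over $i$ delivers the claim after dividing by $\sqrt{q}$.

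For \eqref{eq:square-sign-opt}: expand $\gamma^2 = \frac{1}{q}\bigl(\sum_i X_i^2 + \sum_{i\neq j}|X_i||X_j|\bigr)$. The diagonal contribution is immediate from $\E[\bu_i \bu_i^\top]=\mathbf{I}/d$, yielding $\E[X_i^2]=1/d$. The main obstacle is the cross term, since the orthogonality constraint makes $\bu_i$ and $\bu_j$ dependent. My plan is to condition on $\bu_i$: then $\bu_j$ is uniform on the unit sphere of the $(d-1)$-dimensional subspace $\bu_i^\perp$. Decomposing $\overline{\nabla g(\theta)} = X_i \bu_i + \sqrt{1-X_i^2}\,\bq$ with $\bq$ a unit vector in $\bu_i^\perp$ yields the factorization $X_j = \sqrt{1-X_i^2}\cdot Z$, where $Z$ is distributed as a single coordinate of a uniform vector on $S^{d-2}$ and is independent of $X_i$. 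Hence $\E[|X_i||X_j|] = \E[|X_i|\sqrt{1-X_i^2}]\cdot \E[|Z|]$. Each factor is a Beta-type integral; the values $2\Gamma(d/2)/(d\,\Gamma((d-1)/2)\sqrt{\pi})$ and $\Gamma((d-1)/2)/(\Gamma(d/2)\sqrt{\pi})$ collapse under the $\Gamma$-recurrence to the clean $\E[|X_i||X_j|]=2/(\pi d)$, and substitution produces \eqref{eq:square-sign-opt}.

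The hardest step is establishing the conditional factorization $X_j = \sqrt{1-X_i^2}\,Z$ with $Z$ independent of $X_i$, which hinges on the rotational invariance of the Gaussian input to Gram--Schmidt within $\bu_i^\perp$; once that is in place, the rest of the argument is routine Beta integration together with the $\Gamma$-function identity $2\Gamma((d+1)/2) = (d-1)\,\Gamma((d-1)/2)$.
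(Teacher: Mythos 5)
Your proposal is correct and follows essentially the same route as the paper: the same reduction $\gamma = \frac{1}{\sqrt{q}}\sum_i |X_i|$, the same conditioning on $\bu_i$ with the decomposition of $\overline{\nabla g(\theta)}$ into its components along and orthogonal to $\bu_i$ (the paper's Lemma \ref{lemma:cond-abs-expect}), and the same Beta-integral evaluations collapsing to $\E[|X_i||X_j|]=\frac{2}{\pi d}$. The only cosmetic difference is that you obtain $\E[|X_i|]$ by integrating the marginal density directly, whereas the paper's Lemma \ref{lemma:expect-alpha} gets it from the Gaussian representation (independence of direction and norm, half-normal and chi moments); both yield the identical value.
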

The proof of Theorem \ref{theorem:sign-opt} is included in Appendix \ref{sec:analysis_Sign-OPT}. For Prior-Sign-OPT, we have Theorem~\ref{theorem:prior-sign-opt-multi-priors}.
\begin{theorem}
	\label{theorem:prior-sign-opt-multi-priors}
	For the Prior-Sign-OPT estimator approximated by Eq. \eqref{eq:finite_diff_approx} (defined as Eq.~\eqref{eq:simp-prior-sign-opt-s>1}), we let $\gamma \coloneqq \overline{\bv^*}^\top \overline{\nabla g(\theta)}$ be its cosine similarity to the true gradient, where $\overline{\bv^*} \coloneqq \frac{\bv^*}{\|\bv^*\|}$, then
	\begin{small}
		\begin{align}
			\E[\gamma] &= \frac{1}{\sqrt{q}}\left[\sum_{i=1}^s|\alpha_i|+(q-s)\sqrt{1-\sum_{i=1}^s \alpha_i^2} \cdot \frac{\Gamma(\frac{d-s}{2})}{\Gamma(\frac{d-s+1}{2})\sqrt{\pi}}\right], \label{eq:prior-sign-opt-mean-s>1} \\
			\E[\gamma^2] &= \frac{1}{q}\Biggl[\left(\sum_{i=1}^s|\alpha_i|\right)^2 + \frac{q-s}{d-s}\left(\frac{2}{\pi}(q-s-1)+1\right)\left(1-\sum_{i=1}^s\alpha_i^2\right) \notag \\
			&\quad\quad + 2 \left(\sum_{i=1}^s|\alpha_i|\right)(q-s)\sqrt{1-\sum_{i=1}^s\alpha_i^2}\cdot\frac{\Gamma(\frac{d-s}{2})}{\Gamma(\frac{d-s+1}{2})\sqrt{\pi}}\Biggr], \label{eq:prior-sign-opt-square-s>1}
		\end{align}
	\end{small}
	where $\alpha_i \coloneqq \bp_i^\top \overline{\nabla g(\theta)}$ is the cosine similarity between the $i$-th prior and the true gradient.
\end{theorem}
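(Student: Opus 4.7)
The plan is to apply the finite-difference approximation Eq.~\eqref{eq:finite_diff_approx} inside the definition \eqref{eq:Prior-Sign-OPT}, which reduces each sign factor to $\text{sign}(\nabla g(\theta)^\top \bp_i)$ or $\text{sign}(\nabla g(\theta)^\top \bu_j)$. Writing $\bg \coloneqq \overline{\nabla g(\theta)}$, $\alpha_i = \bp_i^\top\bg$, and $\beta_j = \bu_j^\top\bg$, the orthonormality of $\bp_1,\dots,\bp_s,\bu_1,\dots,\bu_{q-s}$ yields $\|\bv^*\|^2 = q$, and each signed inner product collapses to an absolute value, so $\gamma = \frac{1}{\sqrt q}\bigl[\sum_{i=1}^s|\alpha_i|+\sum_{j=1}^{q-s}|\beta_j|\bigr]$. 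The first sum is deterministic (the priors $\bp_i$ are fixed), so all randomness lives in $(\beta_1,\dots,\beta_{q-s})$.

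Next, I would decompose $\bg = \bg_\parallel + \bg_\perp$ with $\bg_\parallel$ the projection onto $\text{span}\{\bp_1,\dots,\bp_s\}$, so that $\|\bg_\perp\|^2 = 1 - \sum_i\alpha_i^2$ and $\beta_j = \bu_j^\top\bg_\perp$ (since $\bu_j\perp\bp_i$). Because applying Gram-Schmidt to iid standard Gaussians inside the orthogonal complement of $\text{span}\{\bp_i\}$ produces a uniformly distributed orthonormal frame on the $(d-s)$-dimensional Stiefel manifold, rotational invariance gives that $(\beta_1,\dots,\beta_{q-s})$ has the same distribution as $\|\bg_\perp\|\cdot(\eta_1,\dots,\eta_{q-s})$, where $(\eta_1,\dots,\eta_{q-s})$ are the first $q-s$ coordinates of a uniform random unit vector on $S^{d-s-1}$.

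The task thus reduces to three moments of the $\eta_j$: (i) the marginal density of $\eta_1$ on $(-1,1)$ is proportional to $(1-t^2)^{(d-s-3)/2}$, and the substitution $u=1-t^2$ together with the identity $\Gamma(\tfrac{d-s+1}{2}) = \tfrac{d-s-1}{2}\Gamma(\tfrac{d-s-1}{2})$ yields $\E[|\eta_1|] = \Gamma(\tfrac{d-s}{2})/[\sqrt\pi\,\Gamma(\tfrac{d-s+1}{2})]$; (ii) $\E[\eta_1^2] = 1/(d-s)$ follows from $\sum_{i=1}^{d-s}\eta_i^2 = 1$ and symmetry; (iii) writing $(\eta_1,\eta_2) = r(\cos\phi,\sin\phi)$ with $\phi\sim\text{Unif}[0,2\pi)$ independent of $r=\sqrt{\eta_1^2+\eta_2^2}$ gives $|\eta_1\eta_2| = \tfrac{r^2}{2}|\sin 2\phi|$, so $\E[|\eta_1\eta_2|] = \tfrac{1}{2}\E[r^2]\,\E[|\sin 2\phi|] = \tfrac{2}{\pi(d-s)}$, using $\E[r^2] = 2/(d-s)$ and $\E[|\sin 2\phi|] = 2/\pi$.

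Assembly is then mechanical: linearity of expectation immediately yields Eq.~\eqref{eq:prior-sign-opt-mean-s>1}, and expanding $\gamma^2 = \tfrac1q\bigl[(\sum_i|\alpha_i|)^2 + 2(\sum_i|\alpha_i|)(\sum_j|\beta_j|) + (\sum_j|\beta_j|)^2\bigr]$ with $\E[(\sum_j|\beta_j|)^2] = \|\bg_\perp\|^2\bigl[(q-s)\E[\eta_1^2] + (q-s)(q-s-1)\E[|\eta_1\eta_2|]\bigr]$ reproduces the factor $\frac{q-s}{d-s}[(2/\pi)(q-s-1)+1](1-\sum_i\alpha_i^2)$ in Eq.~\eqref{eq:prior-sign-opt-square-s>1}. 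The main obstacle I anticipate is justifying cleanly the distributional identity for $(\beta_1,\dots,\beta_{q-s})$ --- specifically, invoking the uniform-on-Stiefel property of Gram-Schmidt and using rotational symmetry in the complement to align $\bg_\perp$ with a canonical coordinate axis; once that is in hand, the computation of $\E[|\eta_1\eta_2|]$ via the independent angle $\phi$ is the only slightly non-obvious step, with everything else reducing to standard spherical integrals.
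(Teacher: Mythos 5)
Your proposal is correct, and its skeleton coincides with the paper's: split $\gamma$ into the deterministic prior contribution $\frac{1}{\sqrt q}\sum_i|\alpha_i|$ plus the random part, project $\bg$ onto the complement of $\operatorname{span}\{\bp_i\}$ so that $\|\bg_\perp\|^2=1-\sum_i\alpha_i^2$, and reduce everything to first and second moments of coordinates of a uniform unit vector in the $(d-s)$-dimensional complement. The one place where you genuinely diverge is the cross-moment $\E[|\beta_j||\beta_k|]$. The paper conditions on $\bu_j$, applies a conditional-expectation lemma to get $\E[|\beta_k|\,|\,\bu_j]\propto\sqrt{1-\beta_j^2}$, and then evaluates $\E[|\beta_j|\sqrt{1-\beta_j^2}]$ by integrating against the explicit marginal density $(1-t^2)^{(m-3)/2}/\mathrm{B}(\tfrac{m-1}{2},\tfrac12)$ and invoking beta/gamma identities. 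You instead exploit the rotational invariance of the pair $(\eta_1,\eta_2)$ in its own plane, writing it as $r(\cos\phi,\sin\phi)$ with $\phi$ uniform and independent of $r$, so that $\E[|\eta_1\eta_2|]=\tfrac12\E[r^2]\E[|\sin 2\phi|]=\tfrac{2}{\pi(d-s)}$; this is more elementary (no incomplete beta functions, no explicit one-dimensional density needed for this term) and lands on the same value $\tfrac{2}{\pi m}$ that the paper obtains. Your justification of the distributional identity $(\beta_1,\dots,\beta_{q-s})\overset{d}{=}\|\bg_\perp\|(\eta_1,\dots,\eta_{q-s})$ via left-invariance of the uniform Stiefel frame is sound and is exactly the fact the paper uses implicitly when it treats $\bv_\perp$ as a Sign-OPT estimator for $\bg_\perp$ in the subspace; the remaining assembly steps match the paper's term by term.
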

The proof of Theorem \ref{theorem:prior-sign-opt-multi-priors} is presented in Appendix \ref{sec:analysis_Prior-Sign-OPT}.
Now we can compare $\E[\gamma]$ of Sign-OPT (Eq. \eqref{eq:mean-sign-opt}) and Prior-Sign-OPT (Eq. \eqref{eq:prior-sign-opt-mean-s>1}). 
In Sign-OPT, applying Jensen's inequality yields the bound $\E[\gamma] \leq \sqrt{\E[\gamma^2]} = \sqrt{\left(2(q-1) + \pi\right) / (\pi d)}$. When $q \ll d$, $\E[\gamma]$ becomes very small, resulting in poor performance. In contrast, Prior-Sign-OPT with a single prior can improve performance. For instance, when attacking an image of size $32 \times 32 \times 3$, and using parameters $q = 200$ and $s = 1$, if $0.01422 \leq |\alpha_1| \leq 0.611$, $\E[\gamma]$ of Prior-Sign-OPT surpasses that of Sign-OPT.
However, Prior-Sign-OPT may underperform Sign-OPT in certain cases, such as when $|\alpha_1| \geq 0.612$ in the example above, because it applies sign-based multipliers to both random vectors and priors.
Intuitively, random vectors $\bu_1,\dots,\bu_{q-s}$ have relatively consistent cosine similarities with the true gradient as they are identically distributed, allowing for efficient sign-based estimation.
In contrast, the cosine similarities between the priors $\bp_1,\dots,\bp_s$ and the true gradient differ, requiring more precise estimation.
To address this, we propose Prior-OPT that treats priors and random vectors differently.
Fig.~\ref{fig:fig1} illustrates the process of gradient estimation in Prior-OPT, which is based on the following formula:
\begin{equation}
	\label{eq:Prior-OPT}
	\mathbf{v}^* = \sum_{i=1}^{s} \frac{g(\theta + \sigma \mathbf{p}_i) - g(\theta)}{\sigma} \cdot \mathbf{p}_i + \frac{g(\theta + \sigma \overline{\bv_\perp}) - g(\theta)}{\sigma} \cdot \overline{\bv_\perp},
\end{equation}
where $\overline{\bv_\perp}$ is the $\ell_2$ normalization of $\bv_\perp$, and $\bv_\perp$ is obtained by:
\begin{equation}
	\bv_\perp \coloneqq \sum_{i=1}^{q-s} \text{sign}({g(\theta + \sigma \mathbf{u}_i) - g(\theta)}) \cdot \mathbf{u}_i. \label{eq:Sign-OPT}
\end{equation}
Since random vectors $\mathbf{u}_{1},\dots,\mathbf{u}_{q-s}$ poorly align with $\nabla g(\theta)$, we aggregate them into a single vector $\bv_\perp$ using a less accurate estimator in Eq. \eqref{eq:Sign-OPT}, which is orthogonal to all priors.
Compared with Eq.~\eqref{eq:Prior-Sign-OPT}, Eq.~\eqref{eq:Prior-OPT} provides a more accurate projection approximation. We now present Theorem \ref{theorem:prior-opt-multi-priors}.
\begin{theorem}
	\label{theorem:prior-opt-multi-priors}
	For the Prior-OPT estimator approximated by Eq.~\eqref{eq:finite_diff_approx} (defined as Eq. \eqref{eq:simp-prior-opt-s>1}), we let $\gamma \coloneqq \overline{\bv^*}^\top \overline{\nabla g(\theta)}$ be its cosine similarity to the true gradient, where $\overline{\bv^*} \coloneqq \frac{\bv^*}{\|\bv^*\|}$, then
	\begin{small}
		\begin{align}
			\E[\gamma] &\geq \sqrt{\sum_{i=1}^{s}\alpha_i^2 + \frac{(q-s)(1-\sum_{i=1}^{s}\alpha_i^2)}{\pi} \left (\frac{\Gamma(\frac{d-s}{2})}{\Gamma(\frac{d-s+1}{2})} \right )^2}, \label{eq:prior-opt-expectation_gamma_lower_bound}\\
			\E[\gamma] &\leq \sqrt{\sum_{i=1}^{s} \alpha_i^2+\frac{1}{d-s}\left(\frac{2}{\pi}(q-s-1)+1\right)\left(1-\sum_{i=1}^{s} \alpha_i^2\right)}, \label{eq:prior-opt-expectation_gamma_upper_bound}\\
			\E[\gamma^2] &= \sum_{i=1}^{s} \alpha_i^2+\frac{1}{d-s}\left(\frac{2}{\pi}(q-s-1)+1\right)\left(1-\sum_{i=1}^{s} \alpha_i^2\right), \label{eq:prior-opt-expectation_gamma_square-s>1}
		\end{align}
	\end{small}
	where $\alpha_i \coloneqq \bp_i^\top \overline{\nabla g(\theta)}$ is the cosine similarity between the $i$-th prior and the true gradient.
\end{theorem}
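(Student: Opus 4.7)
My plan is to reduce the Prior-OPT cosine similarity to a Sign-OPT quantity restricted to the orthogonal complement of the priors, then invoke Theorem~\ref{theorem:sign-opt} in dimension $d-s$. Under the finite-difference approximation $\frac{g(\theta+\sigma \bu) - g(\theta)}{\sigma} \approx \bg^\top \bu$ with $\bg := \nabla g(\theta)$, Eq.~\eqref{eq:Prior-OPT} becomes $\bv^* = \sum_{i=1}^s (\bg^\top \bp_i)\bp_i + (\bg^\top \overline{\bv_\perp}) \overline{\bv_\perp}$. By construction $\overline{\bv_\perp}$ lies in $\operatorname{span}\{\bu_1,\ldots,\bu_{q-s}\}$, which is orthogonal to every $\bp_i$, so $\{\bp_1,\ldots,\bp_s,\overline{\bv_\perp}\}$ is an orthonormal set and $\bv^*$ is exactly the projection of $\bg$ onto their span. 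This gives $\bv^{*\top}\bg = \|\bv^*\|^2$, hence
\begin{equation*}
\gamma^2 \;=\; \frac{\|\bv^*\|^2}{\|\bg\|^2} \;=\; \sum_{i=1}^s \alpha_i^2 \;+\; \Bigl(1 - \sum_{i=1}^s \alpha_i^2\Bigr)\,\beta'^2,
\end{equation*}
where $\beta' := \overline{\bg_\perp}^\top \overline{\bv_\perp}$ and $\bg_\perp$ denotes the component of $\bg$ in the orthogonal complement of $\operatorname{span}\{\bp_1,\ldots,\bp_s\}$.

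Next I would identify $\beta'$ with a Sign-OPT cosine similarity in the $(d-s)$-dimensional subspace. Since each $\bu_i$ is orthogonal to all priors we have $\bg^\top \bu_i = \bg_\perp^\top \bu_i$, and so $\bv_\perp = \sum_{i=1}^{q-s}\operatorname{sign}(\bg_\perp^\top \bu_i)\bu_i$ is precisely the orthogonal-Sign-OPT estimator for target vector $\bg_\perp$ using $q-s$ orthonormal random directions in dimension $d-s$. Applying Theorem~\ref{theorem:sign-opt} with $d \mapsto d-s$ and $q\mapsto q-s$ then gives $\E[\beta'] = \sqrt{q-s}\cdot \frac{\Gamma((d-s)/2)}{\Gamma((d-s+1)/2)\sqrt{\pi}}$ and $\E[\beta'^2] = \frac{1}{d-s}\bigl(\frac{2}{\pi}(q-s-1)+1\bigr)$. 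Taking expectation of the identity for $\gamma^2$ immediately produces Eq.~\eqref{eq:prior-opt-expectation_gamma_square-s>1}.

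Finally, for the two bounds on $\E[\gamma]$, I would write $\gamma = f(\beta')$ with $f(t) := \sqrt{A + B t^2}$, $A := \sum_i \alpha_i^2$, $B := 1-A$. The upper bound in Eq.~\eqref{eq:prior-opt-expectation_gamma_upper_bound} is immediate from Jensen's inequality applied to the concave square root: $\E[\gamma] = \E[\sqrt{\gamma^2}] \leq \sqrt{\E[\gamma^2]}$. For the lower bound in Eq.~\eqref{eq:prior-opt-expectation_gamma_lower_bound} I would invoke convexity of $f$: a direct computation yields $f''(t) = AB\,(A+Bt^2)^{-3/2}$, and Bessel's inequality $A = \sum_i (\bp_i^\top \overline{\bg})^2 \leq 1$ ensures $B \geq 0$, so $f$ is convex. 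Jensen's inequality for convex functions then delivers $\E[\gamma] \geq f(\E[\beta']) = \sqrt{A + B(\E[\beta'])^2}$, which coincides with the claimed bound after substituting the explicit $\E[\beta']$.

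The main obstacle I anticipate is the bookkeeping in the second step: one must carefully exploit the orthogonality of the $\bu_i$ to the priors so that the sign pattern $\operatorname{sign}(\bg^\top \bu_i)$ coincides with $\operatorname{sign}(\bg_\perp^\top \bu_i)$, and then argue that the Gram-Schmidt output is rotationally uniform within the $(d-s)$-dimensional complement, so that Theorem~\ref{theorem:sign-opt} applies verbatim with the reduced parameters. Once this identification is in place, the convexity check for $f$ and the two Jensen arguments are routine.
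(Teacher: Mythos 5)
Your proposal is correct, and its skeleton coincides with the paper's: you reduce $\gamma^2$ to $\sum_{i}\alpha_i^2+(1-\sum_i\alpha_i^2)\beta'^2$ via the orthogonality of $\bv_\perp$ to the priors, identify $\bv_\perp$ as the orthogonal Sign-OPT estimator for $\bg_\perp$ in dimension $d-s$ with $q-s$ directions so that Theorem~\ref{theorem:sign-opt} supplies $\E[\beta']$ and $\E[\beta'^2]$, obtain Eq.~\eqref{eq:prior-opt-expectation_gamma_square-s>1} by taking expectations, and get the upper bound from $\E[\gamma]\leq\sqrt{\E[\gamma^2]}$ — all exactly as in Appendix~\ref{sec:analysis_Prior-OPT}. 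The one place you genuinely diverge is the lower bound. The paper introduces an auxiliary estimator $\widetilde{\bv^*}$ that freezes the coefficient of $\overline{\bv_\perp}$ at its expectation $\E[\overline{\bv_\perp}^\top\bg]$, invokes the optimality of the subspace projection (Proposition 1 of \citet{meier2019improving}) to conclude $\gamma\geq\widetilde{\gamma}$ pointwise, and then computes $\E[\widetilde{\gamma}]=\sqrt{\sum_i\alpha_i^2+\E[\overline{\bv_\perp}^\top\bg]^2}$. You instead observe that $t\mapsto\sqrt{A+Bt^2}$ has second derivative $AB(A+Bt^2)^{-3/2}\geq 0$ (with $A=\sum_i\alpha_i^2\leq 1$ by Bessel, so $B=1-A\geq 0$) and apply Jensen's inequality for convex functions directly to $\gamma=f(\beta')$. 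The two arguments produce the identical bound $f(\E[\beta'])$ — the paper's pointwise inequality $\gamma\geq\widetilde{\gamma}$ is in effect a Cauchy--Schwarz/tangent-plane form of the same convexity statement — but yours is more self-contained, replacing the appeal to an external projection-optimality proposition with an elementary second-derivative check, whereas the paper's construction makes the geometric meaning of the bound (comparison against a fixed-coefficient estimator in the same subspace) more transparent.
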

\begin{figure}[t]
	\centering
	\def\r{1.0}
	\begin{subfigure}{0.24\textwidth}
		\centering
		\begin{tikzpicture}[scale=1]
			\coordinate (O) at (1,1,1);
			\foreach \i/\x/\y/\z in {
				1/0.37291825/1.2149348/0.25128675,
				2/1.3809599/1.8148968/1.436821,
				3/0.5991419/1.7266006/1.5580002,
				4/0.9370655/0.7598818/1.9687015,
				6/1.162043/0.8393441/1.9736178,
				7/0.056186676/1.0829829/1.3198911,
				9/0.73519963/1.6415079/0.28003585,
				11/1.7931199/1.603379/0.916967,
				12/1.9312075/0.66977704/0.84571046,
				13/1.5443605/1.686753/0.5182928,
				14/1.1201775/0.3503778/0.24930143
			}{
				\coordinate (P\i) at (\x,\y,\z);
				\draw[-stealth, black, solid] (O) -- (P\i) node[pos=1.3,font=\scriptsize] {$\mathbf{r}_{\i}$};
			}
			\draw[-stealth,magenta,solid] (O) --	(0.3113681,0.7955099,1.6956794) node[pos=1.2,font=\scriptsize] {prior $\mathbf{k}_1$};
			\draw[-stealth, black, solid] (O) --	(1.7481637,1.3316087,0.4252941) node[anchor=west,inner sep=0.5pt,font=\scriptsize] {$\mathbf{r}_5$};
			\draw[-stealth, black, solid] (O) --	(1.4361223,0.2522058,1.5006008) node[anchor=north west,inner sep=0.5pt,font=\scriptsize] {$\mathbf{r}_{10}$};
			\draw[-stealth, black, solid] (O) --	(1.0907601,0.0060170293,1.0613216) node[anchor=north,inner sep=0.5pt,font=\scriptsize] {$\mathbf{r}_8$};
			\fill (O) circle[radius=1.5pt];
			\node at (1.7,1.05,1) {\rotatebox{60}{\dots}};
		\end{tikzpicture}
		\caption{\scriptsize Get prior $\mathbf{k}_1$ and sample $\mathbf{r}_{i}$.}
		\label{subfig:prior_and_sample_random_vectors}
	\end{subfigure}
	\begin{subfigure}{0.24\textwidth}
		\begin{tikzpicture}[scale=1]	
			\coordinate (O) at  (1,1, 1);
			\foreach \i/\x/\y/\z in {
				1/0.9215757279862964/1.9747869430549083/1.208902008627939,
				2/0.2791423836806647/1.0892984400854824/0.31269360137366287
			}{
				\coordinate (P\i) at (\x,\y,\z);
				\draw[-stealth, black, solid] (O) -- (P\i) node[pos=1.34,font=\scriptsize] {$\mathbf{u}_{\i}$};
			}
			\draw[-stealth,magenta,solid] (O) --	(0.3113681,0.7955099,1.6956794) node[pos=1.35,font=\scriptsize] {prior $\mathbf{p}_1$};
			\node at (1.3,0.8,1) {\rotatebox{50}{\dots}};
			\draw[-stealth,solid] (O) -- (0.8,0.1,0.29) node[pos=1.35,font=\scriptsize] {$\mathbf{u}_{q-1}$};
			\draw[-stealth,solid] (O) -- (1.0,0.80388388,0.01941932) node[pos=1.35,font=\scriptsize] {$\mathbf{u}_3$};
			\fill (O) circle[radius=1.5pt];
		\end{tikzpicture}
		\caption{\scriptsize Orthonormal basis.}
		\label{subfig:orthonormal_basis}
	\end{subfigure}
	\begin{subfigure}{0.24\textwidth}
		\begin{tikzpicture}[scale=0.9,myarrow/.style={-{Stealth[length=0.8mm, width=0.8mm]}, line width=0.5pt}]	
			\node at (2.3,1.84) {\scriptsize adversarial region};
			\coordinate (x) at  (1.5,2);
			\coordinate (O) at  (4,3);
			\draw[draw=black,densely dotted,thick] ($(O)+(\r,0)$) arc (0:360:\r); 
			\coordinate (G) at ($(O)+(0,\r)$);
			\coordinate (Q) at ($(O)-(0,1.2*\r)$);
			\coordinate (CURVE) at (4.24835942, 3.84422603);
			\coordinate (Z) at ($(O)-(2*\r, 0.5*\r)$);  
			\coordinate (K) at ($(O)-(1.5*\r, -0.45*\r)$);  
			\coordinate (C1) at ($(G)-(\r, -0.1*\r)$);  
			\coordinate (C2) at ($(G)-(0.5*\r, -0.1*\r)$);  
			
			\draw[draw=gray, draw opacity=1,fill=brown!50,fill opacity=0.5,rounded corners]  (CURVE) -- (4.6,3) ..controls ($(Q)+(0.8,0.4)$) and ($(Q)+(0.3,0.1)$) .. (Q) .. controls ($(Q)-(1.3*\r, -0.2)$) .. (Z) -- (K) .. controls (C1) and (C2) ..  (G) --  cycle;
			\node[text=brown!50!black] at (3.4,2.5) {\scriptsize non-adversarial region};
			\foreach \p in {
				(3.9300287,3.997549),
				(3.848811,3.988505),
				(3.8955553,3.9945307),
				(3.86418,3.9907336),
				(3.8647184,3.990807),
				(3.7374616,3.9649215),
				(3.773552,3.9740233),
				(3.9337268,3.9978015),
				(3.9939308,3.9999816)
			}
			{
				\draw[myarrow,black!50!green,densely dashed,dash pattern=on 0.8pt off 0.5pt,thin] (O) -- \p;
				\fill[fill=black!50!green] \p circle[radius=0.5pt]; 
			}
			\foreach \p in {
				(4.007019,3.9999752),
				(4.2822266,3.9593477),
				(4.2904725,3.9568834),
				(4.07582,3.9971216),
				(4.1597466,3.987158),
				(4.097964,3.9951901),
				(4.01223,3.9999251),
				(4.2741966,3.9616737),
				(4.0163474,3.9998665),
				(4.074998,3.9971838),
				(4.1452703,3.989392)
			}
			{
				\draw[myarrow,red,densely dashed,dash pattern=on 0.8pt off 0.5pt,thin] (O) -- \p;
				\fill[fill=red] \p circle[radius=0.5pt];
			}
			\node[text=black!50!green,anchor=south,inner sep=0.5pt] at (3.85,3.990807) {\tiny $\bm{+}$};
			\node[text=red,anchor=south,inner sep=0.5pt] at (4.1452703,3.989392) {\tiny $\bm{-}$};
			\fill[fill=black] (G) circle[radius=1pt] node[anchor=south,color=black] {$\theta$};
			\draw[-stealth,black,solid] (O) -- (G);
			\fill (O) circle[radius=1.5pt] node[anchor=north] {$\mathbf{x}$};		
			\node at (2.37,4.2) {\tiny{$\text{sign}({g(\theta + \sigma \mathbf{u}_i) - g(\theta)})$}};
			\node at (2.35,3.9) {\tiny{for $i \in \{1,\dots,q-1\}$}};
			\node at (2.1,3.6) {\tiny{one query for each $\mathbf{u}_i$}};
		\end{tikzpicture}
		\caption{\scriptsize Estimate a sign-based $\bv_\perp$.}
		\label{subfig:sign_based_grad}
	\end{subfigure}
	\begin{subfigure}{0.24\textwidth}
		\begin{tikzpicture}[scale=0.9,myarrow/.style={-{Stealth[length=0.8mm, width=0.8mm]}, line width=0.5pt}]	
			\node at (2.3,1.84) {\scriptsize adversarial region};
			
			\coordinate (x) at  (1.5,2);
			\coordinate (O) at  (4,3);
			\coordinate (G) at ($(O)+(0,\r)$);
			\coordinate (Q) at ($(O)-(0,1.2*\r)$);
			\coordinate (CURVE) at (4.24835942, 3.84422603);
			\coordinate (Z) at ($(O)-(2*\r, 0.5*\r)$);  
			\coordinate (K) at ($(O)-(1.5*\r, -0.45*\r)$);  
			\coordinate (C1) at ($(G)-(\r, -0.1*\r)$);  
			\coordinate (C2) at ($(G)-(0.5*\r, -0.1*\r)$);  
			\coordinate (U) at (3.73221082, 3.98421997);
			\coordinate (V) at (4.27159177,3.89468593);    
			
			\path[name path=beziercurve,rounded corners] (CURVE) -- (4.6,3) ..controls ($(Q)+(0.8,0.4)$) and ($(Q)+(0.3,0.1)$) .. (Q) .. controls ($(Q)-(1.3*\r, -0.2)$) .. (Z) -- (K) .. controls (C1) and (C2) ..  (G) --  cycle;
			\path [name path=magentaray] (O) -- ($(O)!1.1!(U)$);
			\path [name path=blueray] (O) -- ($(O)!1!(V)$);
			\path [name intersections={of=beziercurve and magentaray, by=KO}]; 
			\path [name intersections={of=beziercurve and blueray, by=KB}]; 
			\draw[draw=gray, draw opacity=1,fill=brown!50,fill opacity=0.5,rounded corners]  (CURVE) -- (4.6,3) ..controls ($(Q)+(0.8,0.4)$) and ($(Q)+(0.3,0.1)$) .. (Q) .. controls ($(Q)-(1.3*\r, -0.2)$) .. (Z) -- (K) .. controls (C1) and (C2) ..  (G) --  cycle;
			\node[text=brown!50!black] at (3.4,2.5) {\scriptsize non-adversarial region};
			\draw[myarrow,magenta,densely dashed,dash pattern=on 0.8pt off 0.5pt,thin] (O) -- (KO);
			\fill[fill=magenta] (KO) circle[radius=1pt] node[anchor=south east,text=magenta,font=\tiny,inner sep=0.5pt]{$g(\theta + \sigma \mathbf{p}_1)$};
			
			\draw[line width=0.3pt,decorate,decoration={brace,raise=0.9pt,amplitude=4pt,mirror},draw=magenta] (KO) --  node[anchor=east,yshift=-2pt,text=magenta,inner sep=6pt,font=\scriptsize] {binary search} (O);

			\draw[myarrow,blue,dashed,dash pattern=on 0.8pt off 0.5pt,thin] (O) -- (KB);
			\fill[fill=blue] (KB) circle[radius=1pt] node[anchor=south west,text=blue,font=\tiny,inner sep=1.0pt]{$g(\theta + \sigma \overline{\bv_\perp})$};
			\fill[fill=black] (G) circle[radius=1pt] node[anchor=south,color=black] {$\theta$};
			\draw[-stealth,black,solid] (O) -- (G);
			\draw[line width=0.3pt,decorate,decoration={brace,raise=0.9pt,amplitude=4pt,mirror},draw=blue] (O) --  node[anchor=west,text=blue,inner sep=6pt,yshift=-2pt,font=\scriptsize] {binary search} (KB);
			\fill (O) circle[radius=1.5pt] node[anchor=north] {$\mathbf{x}$};
		\end{tikzpicture}
		\caption{\scriptsize Estimate $\mathbf{v}^*$ with $\bv_\perp$ and $\mathbf{p}_1$.}
		\label{subfig:final_grad}
	\end{subfigure}
	\caption{Simplified two-dimensional illustration of the gradient estimation of Prior-OPT with a single transfer-based prior. In Fig. \ref{subfig:prior_and_sample_random_vectors}, we first sample random vectors $\mathbf{r}_i \sim \mathcal{N}(\mathbf{0}, \mathbf{I})$ for $i=1,\dots,q-1$ and obtain a transfer-based prior $\mathbf{k}_1$ using $\nabla_\theta h(\theta, \lambda)$, where $h(\theta, \lambda)$ is defined in Eq. \eqref{eq:surrogate_loss}. Then, as shown in Fig. \ref{subfig:orthonormal_basis}, we perform Gram-Schmidt orthonormalization on these vectors to obtain an orthonormal basis $\mathbf{p}_1,\mathbf{u}_1,\dots,\mathbf{u}_{q-1}$, where $\mathbf{p}_1 = \mathbf{k}_1$. Next, we estimate $\bv_\perp$ based on Eq. \eqref{eq:Sign-OPT} with $\mathbf{u}_1,\dots,\mathbf{u}_{q-1}$ (Fig. \ref{subfig:sign_based_grad}), where each $\text{sign}(g(\theta + \sigma \mathbf{u}_i) - g(\theta))$ requires only a single query (Eq. \eqref{eq:sign_opt_single_query}). Finally, as shown in Fig. \ref{subfig:final_grad}, we estimate a gradient $\mathbf{v}^*$ based on Eq.~\eqref{eq:Prior-OPT} with $\mathbf{p}_1$ and $\bv_\perp$, where the values of $g(\theta + \sigma \mathbf{p}_1)$ and $g(\theta + \sigma \overline{\bv_\perp})$ are obtained via the binary search with multiple queries.}
	\label{fig:fig1}
\end{figure}
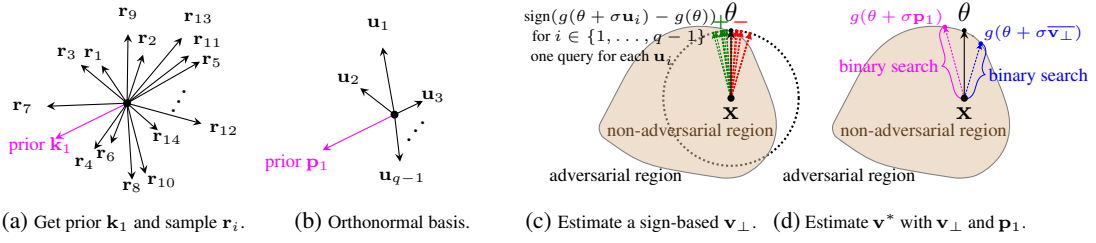

\begin{algorithm}[t]
	\caption{Prior-Sign-OPT and Prior-OPT attack}
	\label{alg:Prior-OPT}
	\begin{algorithmic}
		\STATE {\bfseries{Input:}} benign image $\mathbf{x}$, objective function $g(\cdot)$, attack success indicator $\Phi(\cdot)$ defined in Eq.~\eqref{eq:phi}, iteration $T$, method $m \in$ \{ \texttt{Prior-OPT}, \texttt{Prior-Sign-OPT}\}, the initialization strategy of untargeted attacks $\texttt{init} \in \{\theta_0^\text{PGD}, \theta_0^\text{RND}\}$, the maximum gradient norm $\mathbf{g}_{\max}$, attack norm $p \in \{2,\infty\}$, surrogate models $\mathbb{S} = \{\hat{f}_1,\dots,\hat{f}_s\}$.
		\STATE {\bfseries{Output:}} adversarial example $\mathbf{x}^*$ that satisfies $\Phi(\mathbf{x}^*) = 1$.
		\STATE $\tilde{\mathbf{x}}_0 \gets \text{PGD}(\mathbf{x}, \hat{f}_1)$ if $\texttt{init}=\theta_0^\text{PGD}$, otherwise a random $\tilde{\mathbf{x}}_0$ that satisfies $\Phi(\tilde{\mathbf{x}}_0) = 1$ is selected for the $\theta_0^\text{RND}$ strategy; \COMMENT{the targeted attack selects an image from the target class as $\tilde{\mathbf{x}}_0$.}
		\STATE $\theta_0 \gets \frac{\tilde{\mathbf{x}}_0 - \mathbf{x}}{\| \tilde{\mathbf{x}}_0 - \mathbf{x} \|_2}$,
		\quad $d_0 \gets \| \tilde{\mathbf{x}}_0 - \mathbf{x} \|_p$;
		\FOR{$t$ {\bfseries{in}} $1,\dots,T$}
		\FOR{$\hat{f}_i$ {\bfseries{in}} $\mathbb{S}$}
		\STATE $\lambda_{t-1} \gets \text{BinarySearch}(\mathbf{x}, \theta_{t-1}, \hat{f}_i, \Phi)$;
		\STATE $\mathbf{k}_i\gets$ $\nabla_\theta h(\theta_{t-1}, \lambda_{t-1})$ on $\hat{f}_i$ with $\lambda_{t-1}$ treated as a constant in differentiation; \COMMENT{obtain $s$ transfer-based priors.}
		\ENDFOR
		\STATE $\mathbf{r}_{i} \sim \mathcal{N}(\mathbf{0}, \mathbf{I})$ for $i=1,\dots,q-s$;
		\STATE $\mathbf{p}_1,\dots,\mathbf{p}_s,\mathbf{u}_1,\dots,\mathbf{u}_{q-s} \gets \text{Gram-Schmidt orthonormalization}(\{\mathbf{k}_1,\dots,\mathbf{k}_s, \mathbf{r}_{1},\dots,\mathbf{r}_{q-s}\})$;
		\STATE Estimate a gradient $\mathbf{v}^*$ using Eq.~\eqref{eq:Prior-Sign-OPT} if $m = \texttt{Prior-Sign-OPT}$, otherwise using Eq. \eqref{eq:Prior-OPT};
		\STATE $\mathbf{v}^* \gets$ ClipGradNorm($\mathbf{v}^*$, $\mathbf{g}_{\max}$);
		\STATE $\eta^* \gets$ LineSearch($\mathbf{x}$, $\mathbf{v}^*$, $\Phi$, $d_{t-1}$, $\theta_{t-1}$); \COMMENT{search step size.}
		\STATE $\theta_t \gets \theta_{t-1} - \eta^*  \mathbf{v}^*$,\quad $\theta_t \gets \frac{\theta_t}{\| \theta_t \|_2}$;
		\STATE $d_t \gets \| g(\theta_t) \cdot \theta_t \|_p$;
		\ENDFOR
		\STATE \textbf{return} $\mathbf{x}^* \gets \mathbf{x} + g(\theta_T) \frac{\theta_T}{\| \theta_T \|_2}$;
	\end{algorithmic}
\end{algorithm}

The proof of Theorem \ref{theorem:prior-opt-multi-priors} is included in Appendix \ref{sec:analysis_Prior-OPT}.
$\mathbb{E}[\gamma^2]$ is the second-order moment, which reflects the magnitude of $\gamma$ in a statistical sense. Under certain assumptions, $\mathbb{E}[\gamma^2]$ directly affects the convergence rate of optimization algorithms \citep{cheng2021ontheconvergence}. Intuitively, a larger $\gamma$ indicates more accurate gradient estimation, leading to faster optimization and improved query efficiency. 
Next, we compare $\E[\gamma^2]$ for Prior-OPT (Eq. \eqref{eq:prior-opt-expectation_gamma_square-s>1}) and Sign-OPT (Eq. \eqref{eq:square-sign-opt}). 
Under the reasonable assumption that $q\ll d$, Prior-OPT outperforms Sign-OPT if $\sum_{i=1}^s \alpha_i^2 > \frac{2s}{\pi d}$ (where $\frac{2s}{\pi d}$ is an approximate value), which is easily satisfied for large $d$.
See Appendix \ref{sec:derivation_comparison_Sign-OPT_and_Prior-OPT} for details.

Algorithm \ref{alg:Prior-OPT} summarizes our attack procedure. The initialization of $\theta_0$ has two options in \textit{untargeted attacks}: (1) $\theta_0^\text{RND}$: we select the best direction with the smallest distortion from $100$ random directions as $\theta_0$; (2) $\theta_0^\text{PGD}$: we apply PGD \citep{madry2018towards} to attack a surrogate model $\hat{f}_1$ to initialize $\theta_0$, which uses the transfer-based attack as initialization. In \textit{targeted attacks}, we initialize $\theta_0$ with an image $\tilde{\mathbf{x}}_0$ selected from the target class in the training set.
In each iteration, the algorithm first calculates the gradient of Eq.~\eqref{eq:surrogate_loss} on each surrogate model $\hat{f}_i$ in $\mathbb{S}$ to obtain the priors $\mathbf{k}_1,\dots,\mathbf{k}_s$. Then, we combine these priors and the randomly sampled vectors $\mathbf{r}_{1},\dots,\mathbf{r}_{q-s}$ into a list $\mathbb{L}$, where the priors are positioned ahead of the random vectors. After performing Gram-Schmidt orthonormalization on $\mathbb{L}$, the orthonormal vectors $\mathbf{p}_1,\dots,\mathbf{p}_s,\mathbf{u}_1,\dots,\mathbf{u}_{q-s}$ are obtained, representing an orthonormal basis of the subspace. With these orthonormal vectors, we estimate the gradient $\mathbf{v}^*$ using Eq. \eqref{eq:Prior-Sign-OPT} for Prior-Sign-OPT or Eq. \eqref{eq:Prior-OPT} for Prior-OPT, respectively. Then, we employ the gradient clipping technique to address the large-norm gradient problem caused by finite difference. Finally, we use line search to find the optimal step size $\eta^*$ and perform a gradient descent step to minimize $g(\theta)$.

\section{Experiments}
\subsection{Experimental Setting}
\label{sec:experimental_setting_main_text}
\textbf{Datasets.} All experiments are conducted on two datasets, i.e., CIFAR-10 \citep{krizhevsky2009learning} and ImageNet \citep{deng2009imagenet}. The image sizes are $32\times32\times3$ for CIFAR-10, and either $299\times299\times3$ or $224\times224\times3$ for ImageNet.
We randomly select \nn{1000} images from the validation sets for experiments. In targeted attacks, for the same target class, we use the same image $\tilde{\bx}_0$ as the initialization for all methods. We set the target label as $y_{adv} = (y + 1) \mod C$, where $y$ is the true label and $C$ is the number of classes.
Results of the CIFAR-10 dataset are presented in Appendix \ref{sec:additional_experimental_results}.

\textbf{Method Setting.} The hyperparameter settings of all methods are listed in Appendix \ref{sec:appendix_experimental_setting}. In the experiments, surrogate models are denoted as subscripts in the method names. For instance, Prior-OPT\textsubscript{\tiny ResNet50\&ConViT} means using ResNet-50 and ConViT as the surrogate models for Prior-OPT, and Prior-OPT\textsubscript{\tiny $\theta_0^\text{PGD}$ + ResNet50} applies PGD attack on the surrogate model ResNet-50 to initialize $\theta_0$.

\textbf{Compared Methods.}
To provide a comprehensive comparison, we select 11 state-of-the-art hard-label attacks, including Sign-OPT, SVM-OPT \citep{cheng2019sign}, HSJA \citep{chen2019hopskipjumpattack}, Triangle Attack \citep{wang2022triangle}, TA, G-TA \citep{ma2021finding}, SurFree \citep{maho2021surfree}, GeoDA \citep{rahmati2020geoda}, Evolutionary \citep{dong2019efficient}, BBA \citep{brunner2019guessing}, and SQBA \citep{park2024sqba}. SQBA, Triangle Attack, GeoDA, and our $\theta_0^\text{PGD}$ initialization strategy (e.g., Prior-OPT\textsubscript{\tiny $\theta_0^\text{PGD}$ + ResNet50}) only support untargeted attacks. Both BBA and SQBA use a single surrogate model, denoted as a subscript in the method name (e.g., SQBA\textsubscript{\tiny ResNet50}).

\textbf{Target Models and Surrogate Models.}
In the ImageNet dataset, we select 8 neural network architectures as the target models, including Convolutional Neural Networks (CNNs) and Vision Transformers (ViTs). The selected target models are Inception-v3 \citep{szegedy2016rethinking}, Inception-v4 \citep{szegedy2017inceptionv4}, ResNet-101 \citep{he2016deep}, SENet-154 \citep{hu2018squeeze}, ResNeXt-101 ($64\times4$d) \citep{xie2017aggregated}, Vision Transformer (ViT) \citep{dosovitskiy2021an}, Swin Transformer \citep{liu2021swin}, and Global Context Vision Transformer (GC ViT) \citep{hatamizadeh2023global}.
The Inception-v3 and Inception-v4 require a resolution of $299\times299$ for the input images, and we select Inception-ResNet-v2 (IncResV2) and Xception as the surrogate models. ResNet-50 \citep{he2016deep} and ConViT \citep{d2021convit} are selected as the surrogate models for the remaining target models.
In the attacks on defense models, we use the adversarially trained (AT) surrogate models (e.g., AT(ResNet110)), which are marked as subscripts in the method names, such as Prior-OPT\textsubscript{\tiny AT(ResNet110)}.

\textbf{Evaluation Metric.} All methods are evaluated using the mean distortion over \nn{1000} images as $\frac{1}{|\mathbf{X}|}\sum_{\mathbf{x}\in \mathbf{X}}(\|\mathbf{x}_\text{adv} - \mathbf{x}\|_p)$ under different query budgets, where $\mathbf{X}$ is the test set and $p\in \{2, \infty\}$ is the attack norm.
We also report the attack success rate (ASR) under the specific query budget, which is defined as the percentage of samples with distortions below a threshold $\epsilon$. In $\ell_2$-norm attacks, we set the threshold $\epsilon = \sqrt{0.001 \times d}$ on the ImageNet dataset, where $d$ is the image dimension. Following \citet{li2021aha}, we calculate the area under the curve (AUC) of $\ell_2$ distortions versus queries.

\subsection{Comparison with State-of-the-Art Attacks}

\begin{table}[!h]
	\caption{Mean $\ell_2$ distortions of different query budgets on the ImageNet dataset.} 
\small
\tabcolsep=0.1cm
\centering   
\scalebox{0.77}{
	\begin{tabular}{clccccc@{\hspace{1.5em}}ccccccc}
		\toprule
		Target Model  & \multicolumn{1}{c}{Method}  & \multicolumn{5}{c}{Untargeted Attack} & \multicolumn{7}{c}{Targeted Attack} \\
		& & @1K & @2K & @5K & @8K & @10K &  @1K & @2K & @5K & @8K & @10K & @15K & @20K \\
		\midrule
		\multirow{17}{*}{Inception-v4}& HSJA \citep{chen2019hopskipjumpattack} & 75.392 & 44.530 & 20.567 & 14.194 & 11.645  & 95.876 & 79.001 & 52.176 & 39.190 & 32.951 & 24.546 & 19.522\\
		& TA \citep{ma2021finding} & 67.496 & 42.233 & 20.352 & 14.175 & 11.694  & 78.883 & 61.990 & 40.669 & 31.506 & 27.111& 21.079 & 17.319\\
		& G-TA \citep{ma2021finding}  & 67.842 & 41.946 & 19.962 & 13.865 & 11.448  & 79.297 & 62.291 & 40.529 & 30.941 & 26.427 & 20.268 & 16.569\\
		& Sign-OPT \citep{cheng2019sign} & 86.716 & 48.233 & 18.258 & 11.067 & 8.786  & 80.366 & 65.200 & 42.866 & 32.104 & 27.526 & 20.394 & 16.281\\
		& SVM-OPT \citep{cheng2019sign} & 89.863 & 47.914 & 18.297 & 11.091 & 8.839  & 79.807 & 65.590 & 43.426 & 33.090 & 28.797& 22.354 & 18.795\\
		& GeoDA \citep{rahmati2020geoda} & 29.157 & 20.119 & 12.487 & 11.010 & 9.688  & - & - & - & - & - & - & -\\
		& Evolutionary \citep{dong2019efficient} & 61.966 & 42.665 & 20.815 & 13.382 & 10.839  & 81.761 & 65.060 & 43.021 & 32.120 & 27.385  & 19.942 & 15.610\\
		& SurFree \citep{maho2021surfree}  & 51.685 & 38.482 & 22.845 & 16.374 & 13.818  & 84.925 & 74.887 & 55.991 & 44.475 & 39.004  & 29.354 & 23.153\\
		& Triangle Attack \citep{wang2022triangle} & 27.217 & 25.853 & 23.743 & 22.581 & 22.132  & - & - & - & - & - & - & -\\
		& SQBA\textsubscript{\tiny IncResV2} \citep{park2024sqba} & 26.134 & 19.035 & 11.189 & 8.432 & 7.417  & - & - & - & - & -  & - & -\\
		& SQBA\textsubscript{\tiny Xception} \citep{park2024sqba} & 23.672 & 17.424 & 10.502 & 8.036 & 7.115  & - & - & - & - & -  & - & -\\
		& BBA\textsubscript{\tiny IncResV2} \citep{brunner2019guessing}  & 38.782 & 28.437 & 18.757 & 15.474 & 14.191  & 66.746 & 56.283 & 41.324 & 34.066 & 30.942  & 25.757 & 22.630\\
		& BBA\textsubscript{\tiny Xception} \citep{brunner2019guessing}  & 43.317 & 31.519 & 20.504 & 16.712 & 15.282  & 63.069 & 53.363 & 39.740 & 33.166 & 30.221  & 25.438 & 22.561\\
		\cdashline{2-14} 
		\addlinespace[0.1em] 
		& Prior-Sign-OPT\textsubscript{\tiny IncResV2} & 81.991 & 42.403 & 12.835 & 7.365 & 5.842  & 74.597 & 55.421 & 31.856 & 22.958 & 19.513 & 14.361 & 11.665 \\
		& Prior-Sign-OPT\textsubscript{\tiny IncResV2\&Xception}   & 77.683 & 37.099 & 9.058 & 5.195 & 4.199  & 69.526 & 49.368 & \B 26.882 & \B 19.324 &\B 16.697 & \B 12.821 & \B 10.769 \\
		& Prior-Sign-OPT\textsubscript{\tiny $\theta_0^\text{PGD}$ + IncResV2}  & 23.596 & 15.347 & 8.074 & 5.729 & 4.863  & - & - & - & - & -  & - & - \\
		& Prior-OPT\textsubscript{\tiny IncResV2} & 49.279 & 18.135 & 5.718 & 4.451 & 4.027  & 67.300 & 49.842 & 33.477 & 27.602 & 25.281 & 21.837 & 19.800\\
		& Prior-OPT\textsubscript{\tiny IncResV2\&Xception} & 42.541 & 13.418 & \B 3.919 & \B 3.321 & \B 3.119  & \B 60.211 & \B 42.631 & 27.547 & 23.011 & 21.441 & 19.193 & 17.983 \\
		& Prior-OPT\textsubscript{\tiny $\theta_0^\text{PGD}$ + IncResV2} & \B 22.852 & \B 12.194 & 6.568 & 5.114 & 4.548  & - & - & - & - & - & - & - \\
		\midrule
		\multirow{17}{*}{ViT} & HSJA \citep{chen2019hopskipjumpattack} & 37.813 & 19.386 & 9.031 & 6.604 & 5.637  & 61.491 & 44.853 & 23.947 & 16.926 & 14.152 & 10.791 & 8.922\\
		& TA \citep{ma2021finding} & 37.923 & 19.867 & 9.078 & 6.636 & 5.674  & 52.110 & 36.455 & \B 20.536 & 15.145 & 12.885& 10.158 & 8.609\\
		& G-TA \citep{ma2021finding}  & 37.425 & 19.347 & 8.948 & 6.496 & 5.643  & 52.550 & 36.720 & 20.857 & 15.436 & 13.255 & 10.490 & 8.933\\
		& Sign-OPT \citep{cheng2019sign} & 51.120 & 25.290 & 8.559 & 5.482 & 4.572  & 55.941 & 41.867 & 23.784 & 16.541 & 13.873 & 10.129 & 8.267\\
		& SVM-OPT \citep{cheng2019sign} & 55.802 & 26.580 & 9.242 & 5.988 & 5.070  & 56.002 & 41.899 & 23.909 & 17.273 & 14.848& 11.739 & 10.320\\
		& GeoDA \citep{rahmati2020geoda} & 18.880 & 12.904 & 8.039 & 7.153 & 6.313  & - & - & - & - & - & - & -\\
		& Evolutionary \citep{dong2019efficient} & 40.382 & 25.709 & 11.925 & 7.974 & 6.719  & 57.141 & 40.187 & 21.782 & 15.191 & 12.795  & 9.677 & 8.311\\
		& SurFree \citep{maho2021surfree}  & 28.228 & 19.016 & 10.194 & 7.321 & 6.303  & 70.337 & 53.129 & 30.054 & 20.595 & 16.908  & 11.794 & 9.204\\
		& Triangle Attack \citep{wang2022triangle} & \B 12.789 & 12.144 & 11.064 & 10.411 & 10.097  & - & - & - & - & - & - & -\\
		& SQBA\textsubscript{\tiny ResNet50} \citep{park2024sqba} & 21.741 & 14.004 & 7.738 & 5.861 & 5.201  & - & - & - & - & -  & - & -\\
		& SQBA\textsubscript{\tiny ConViT} \citep{park2024sqba} & 12.886 & 9.762 & 6.240 & 4.947 & 4.452  & - & - & - & - & -  & - & -\\
		& BBA\textsubscript{\tiny ResNet50} \citep{brunner2019guessing}  & 29.755 & 20.053 & 12.580 & 10.375 & 9.567  & \B 43.231 & \B 33.365 & 21.889 & 17.635 & 16.046  & 13.726 & 12.463\\
		& BBA\textsubscript{\tiny ConViT} \citep{brunner2019guessing}  & 22.716 & 16.153 & 10.893 & 9.193 & 8.595  & 45.588 & 35.227 & 22.865 & 18.325 & 16.614  & 14.028 & 12.623\\
		\cdashline{2-14} 
		\addlinespace[0.1em] 
		& Prior-Sign-OPT\textsubscript{\tiny ResNet50} & 50.161 & 27.953 & 9.474 & 5.872 & 4.850  & 55.095 & 40.480 & 22.354 & 15.626 & 13.201 & 9.789 & 8.048 \\
		& Prior-Sign-OPT\textsubscript{\tiny ResNet50\&ConViT}   & 46.196 & 23.869 & 7.327 & 4.694 & 3.967  & 53.925 & 38.418 & 20.673 & \B 14.422 & \B 12.153 & \B 9.090 & \B 7.544 \\
		& Prior-Sign-OPT\textsubscript{\tiny $\theta_0^\text{PGD}$ + ResNet50}  & 29.912 & 18.425 & 7.848 & 5.175 & 4.331  & - & - & - & - & -  & - & - \\
		& Prior-OPT\textsubscript{\tiny ResNet50} & 42.838 & 22.704 & 8.848 & 6.024 & 5.195  & 54.348 & 40.930 & 24.408 & 18.117 & 15.803 & 12.638 & 11.070\\
		& Prior-OPT\textsubscript{\tiny ResNet50\&ConViT} & 26.495 & \B 11.287 & \B 4.929 & \B 3.937 & \B 3.609  & 53.369 & 40.002 & 24.706 & 19.148 & 17.116 & 14.114 & 12.650 \\
		& Prior-OPT\textsubscript{\tiny $\theta_0^\text{PGD}$ + ResNet50} & 29.099 & 17.754 & 8.208 & 5.782 & 5.009  & - & - & - & - & - & - & - \\
		\bottomrule
\end{tabular}}
\label{tab:ImageNet_normal_models_result}
\end{table}

\begin{table}[h]
\centering
\caption{Mean $\ell_2$ distortions of the different numbers of priors on the ImageNet dataset.}
\small
\tabcolsep=0.1cm
\scalebox{0.78}{
	\begin{threeparttable}
		
		\begin{tabular}{llccccc@{\hspace{1.5em}}ccccc@{\hspace{1.5em}}ccccc}
			\toprule
			Method  & Priors & \multicolumn{5}{c}{Target Model: ResNet-101\tnote{1}} & \multicolumn{5}{c}{Target Model: Swin Transformer\tnote{2}}  & \multicolumn{5}{c}{Target Model: GC ViT\tnote{2}} \\
			& &  @1K & @2K & @5K & @8K & @10K &  @1K & @2K & @5K & @8K & @10K  &  @1K & @2K & @5K & @8K & @10K \\
			\midrule
			Sign-OPT & no prior  & 37.248 & 21.235 & 8.982 & 5.811 & 4.754  & 86.373 & 53.399 & 20.686 & 12.406 & 9.899  & 57.903 & 35.762 & 14.763 & 9.047 & 7.185 \\
			\cdashline{1-17}
			\multirow{5}{*}{Prior-Sign-OPT}  & 1 prior & 34.150 & 18.733 & 6.111 & 3.718 & 3.019  & 84.124 & 52.882 & 20.344 & 11.880 & 9.254                   & 57.171 & 36.949 & 14.963 & 8.931 & 6.899 \\
			& 2 priors  & 32.848 & 17.548 & 5.121 & 3.136 & 2.593  & 77.459 & 43.062 & 13.614 & 7.903 & 6.331                & 54.896 & 32.418 & 11.012 & 6.651 & 5.342 \\
			& 3 priors& 31.156 & 15.455 & 4.074 & 2.527 & 2.122  & 73.110 & 37.852 & 10.264 & 5.939 & 4.778                   & 52.744 & 28.939 & 8.707 & 5.245 & 4.215 \\
			& 4 priors  & 29.984 & 14.707 & 3.698 & 2.333 & 1.989  & 70.246 & 34.470 & 8.526 & 5.066 & 4.169          & 50.256 & 26.027 & 6.435 & 3.804 & 3.212 \\
			& 5 priors & \B 29.601 & \B 14.195 & \B 3.573 & \B 2.275 & \B 1.951  &\B  67.616 &\B 32.225 & \B 7.321 &\B 4.219 & \B 3.467            & \B 48.935 & \B 24.821 & \B 6.123 & \B 3.601 & \B 2.893 \\
			\cdashline{1-17} 
			\multirow{5}{*}{Prior-OPT} & 1 prior& 18.355 & 7.100 & 2.840 & 2.324 & 2.158  & 69.432 & 39.447 & 16.536 & 11.241 & 9.625           & 50.467 & 29.091 & 11.537 & 7.311 & 5.948        \\
			& 2 priors& 17.373 & 6.465 & 2.454 & 2.096 & 1.979  & 41.152 & 17.977 & 7.289 & 5.453 & 4.896      & 36.055 & 16.176 & 6.094 & 4.413 & 3.747 \\
			& 3 priors & 15.373 & 5.350 & 1.919 & 1.714 & 1.653  & \B 36.636 & 13.877 & 5.166 & 4.008 & 3.687           & \B 33.181 & 13.005 & 4.702 & 3.644 & 3.264 \\
			& 4 priors  & \B 15.422 & \B 5.220 & \B 1.849 & \B 1.654 & \B 1.596  & 38.343 & 12.650 & 3.784 & 3.027 & 2.850      &  34.396 & 10.994 & 3.047 & 2.356 & 2.171         \\
			& 5 priors & 15.556 & 5.395 & 1.881 & 1.672 & 1.605  & 37.712 & \B 12.070 & \B 3.488 & \B 2.747 & \B 2.577          & 33.351 & \B 10.369 & \B 2.921 & \B 2.329 & \B 2.159  \\
			\bottomrule
		\end{tabular}
		\begin{tablenotes}\footnotesize
			\item[1] Five surrogate models: ResNet-50, SENet-154, ResNeXt-101 ($64\times4$d), VGG-13, SqueezeNet v1.1
			\item[2] Five surrogate models: ResNet-50, ConViT, CrossViT, MaxViT, ViT
		\end{tablenotes}
	\end{threeparttable}
}
\label{tab:different_numbers_of_priors}
\end{table}

\begin{figure}[h]
	\centering
	\begin{subfigure}{0.329\textwidth}
		\centering
		\includegraphics[width=\linewidth]{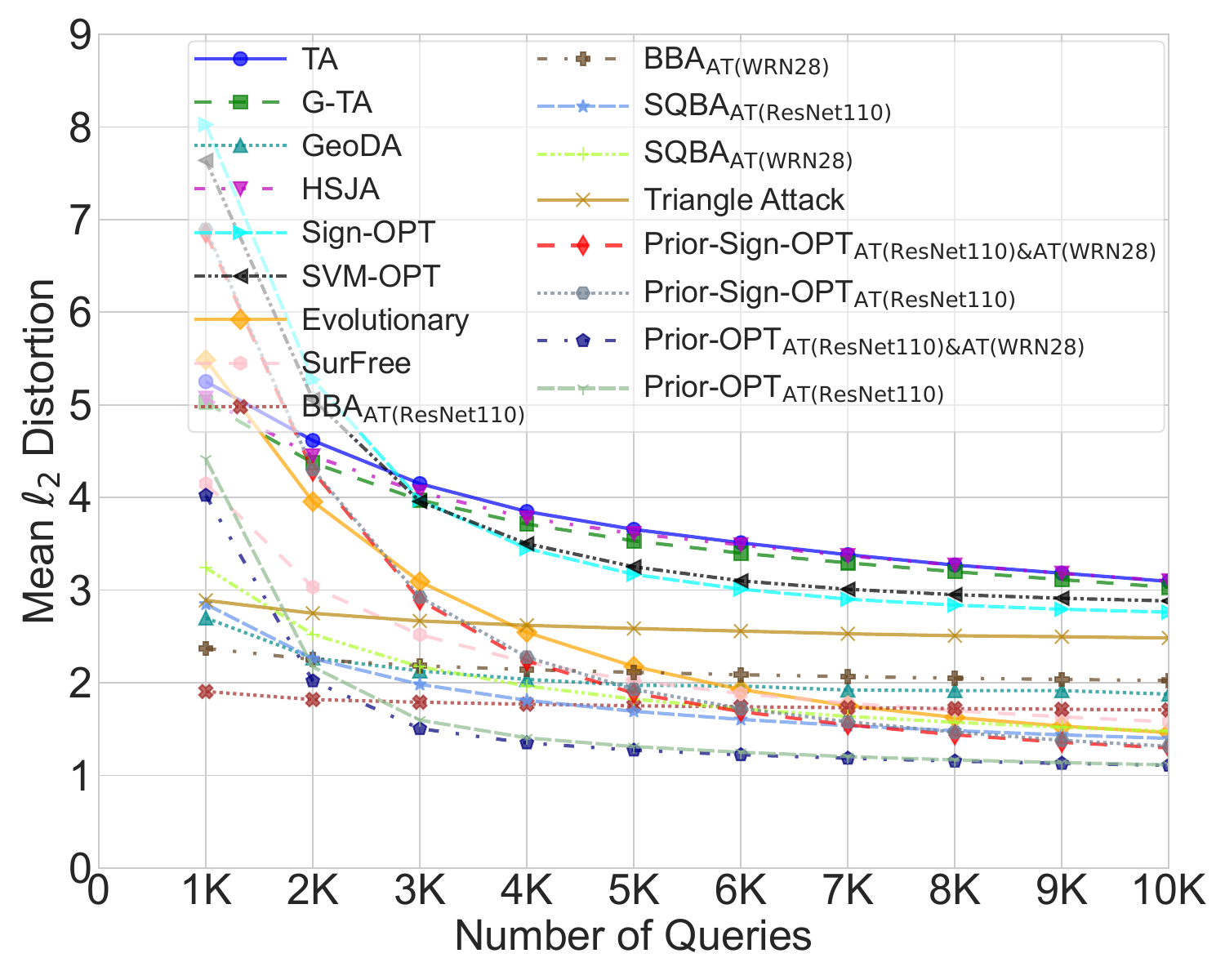}
		\caption{\scriptsize TRADES$_{\epsilon_\infty = 8/255}$ (CIFAR-10)}
		\label{subfig:TRADES_cifar10}
	\end{subfigure}
	\begin{subfigure}{0.329\textwidth}
		\includegraphics[width=\linewidth]{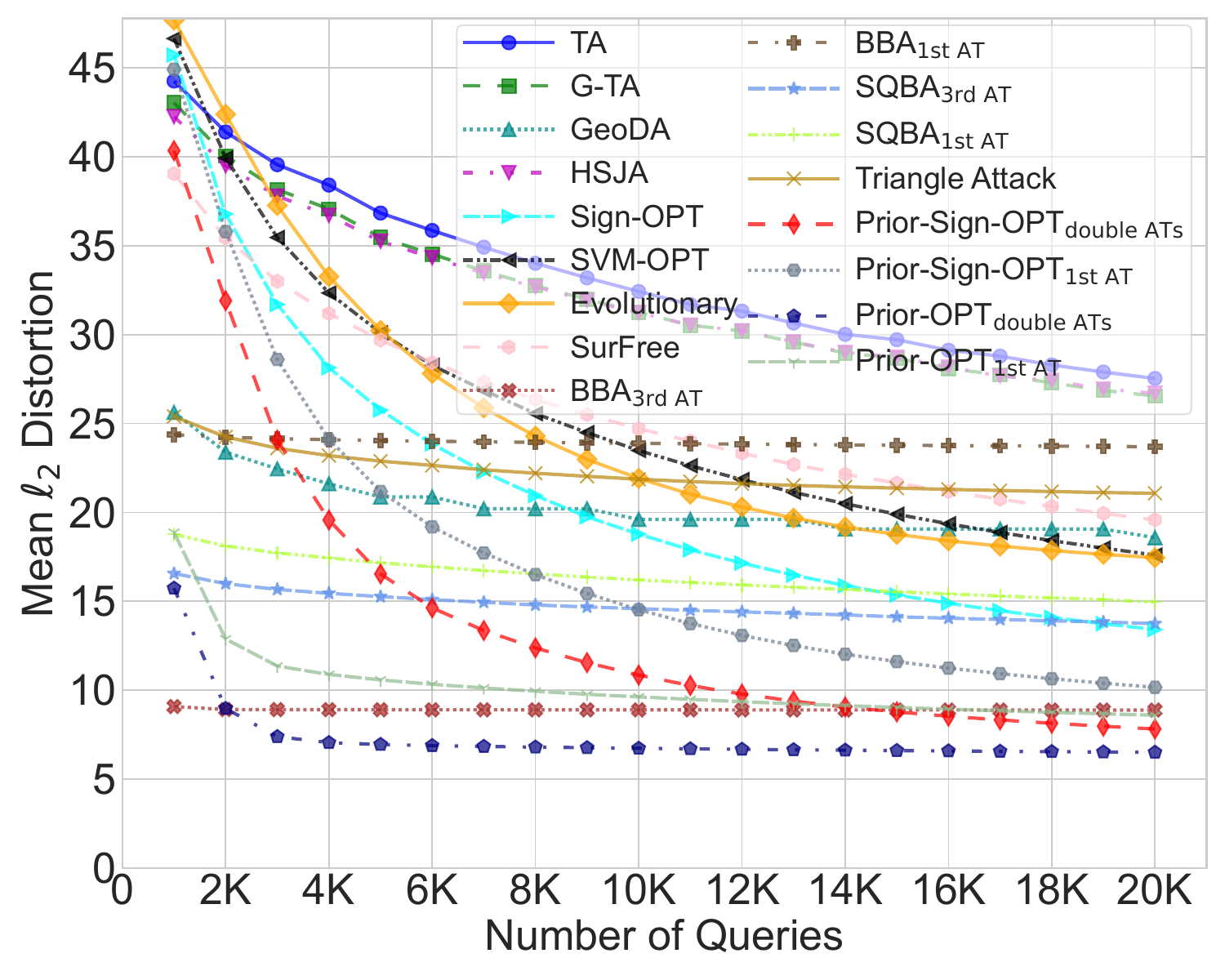}
		\caption{\scriptsize AT$_{\epsilon_\infty = 4/255}$
			(ImageNet)\protect\hyperref[fn:AT_4_255]{\footnotemark[1]}}
		\label{subfig:trades_cifar10}
	\end{subfigure}
	\begin{subfigure}{0.329\textwidth}
		\includegraphics[width=\linewidth]{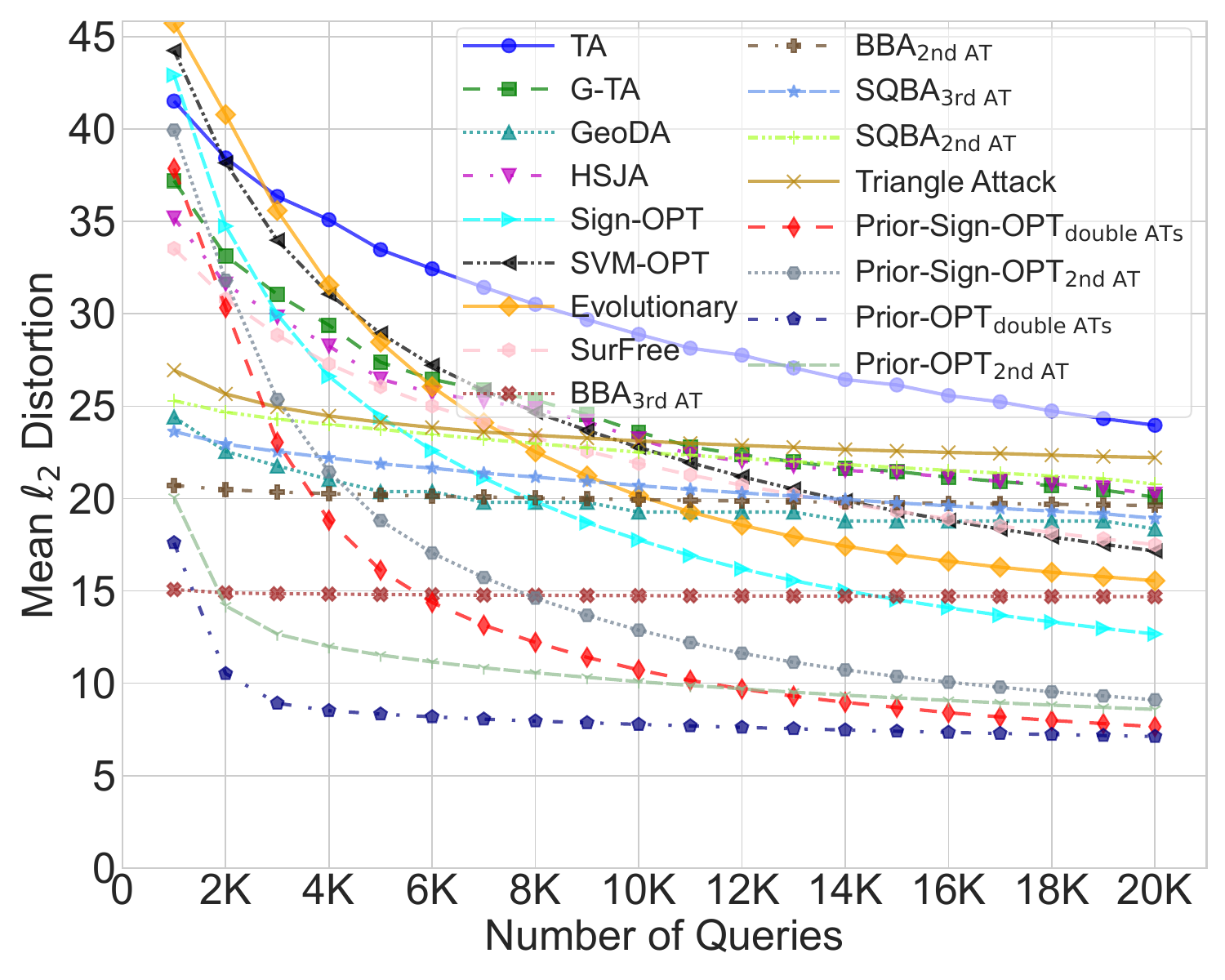}
		\caption{\scriptsize AT$_{\epsilon_\infty = 8/255}$ (ImageNet)\protect\hyperref[fn:AT_8_255]{\footnotemark[2]}}
		\label{subfig:adv_train_8_div_255_imagenet}
	\end{subfigure}
	\caption{Mean distortions of untargeted attacks on the defense models equipped with the ResNet-50.}
	\label{fig:imagenet_l2_untargeted_attack_defense_methods}
\end{figure}

\begin{figure}[!h]
	\centering
	\begin{subfigure}{0.329\textwidth}
		\includegraphics[width=\linewidth]{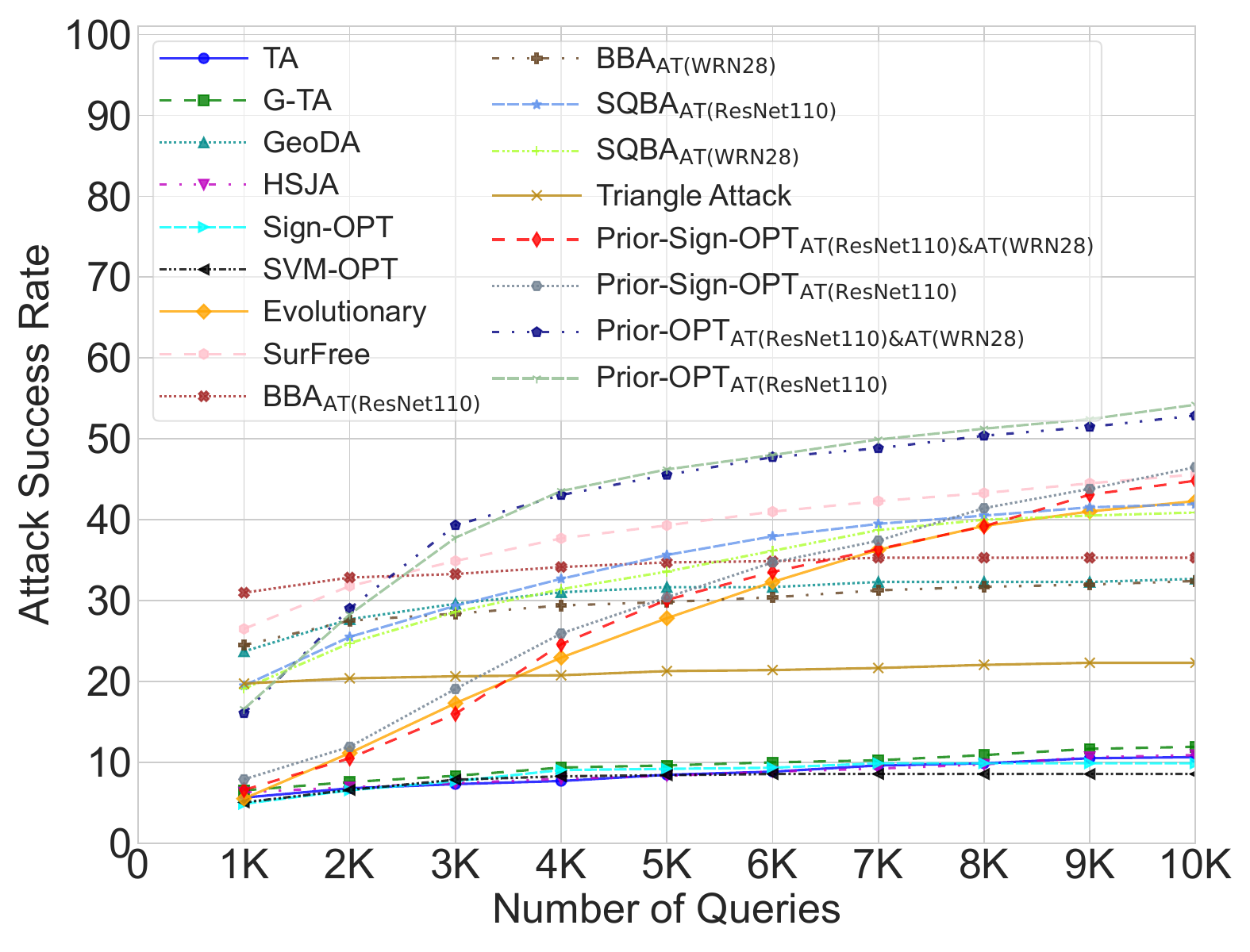}
		\caption{\scriptsize TRADES (CIFAR-10)}
		\label{subfig:asr_CIFAR10_TRADES}
	\end{subfigure}
	\begin{subfigure}{0.329\textwidth}
		\includegraphics[width=\linewidth]{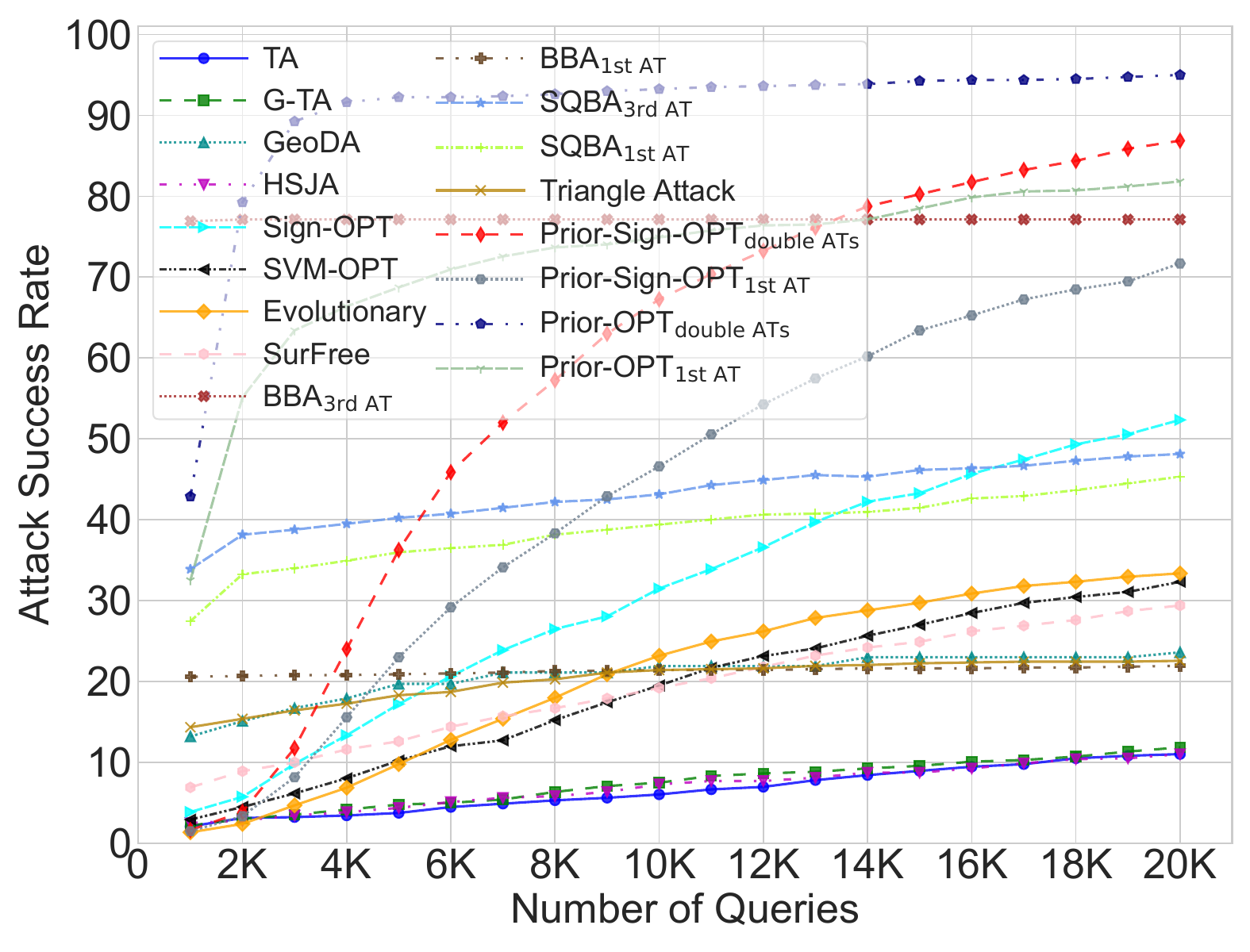}
		\caption{\scriptsize AT$_{\epsilon_\infty = 4/255}$
		(ImageNet)\protect\hyperref[fn:AT_4_255]{\footnotemark[1]}}
		\label{subfig:asr_ImageNet_AT_4/255}
	\end{subfigure}
	\begin{subfigure}{0.329\textwidth}
		\includegraphics[width=\linewidth]{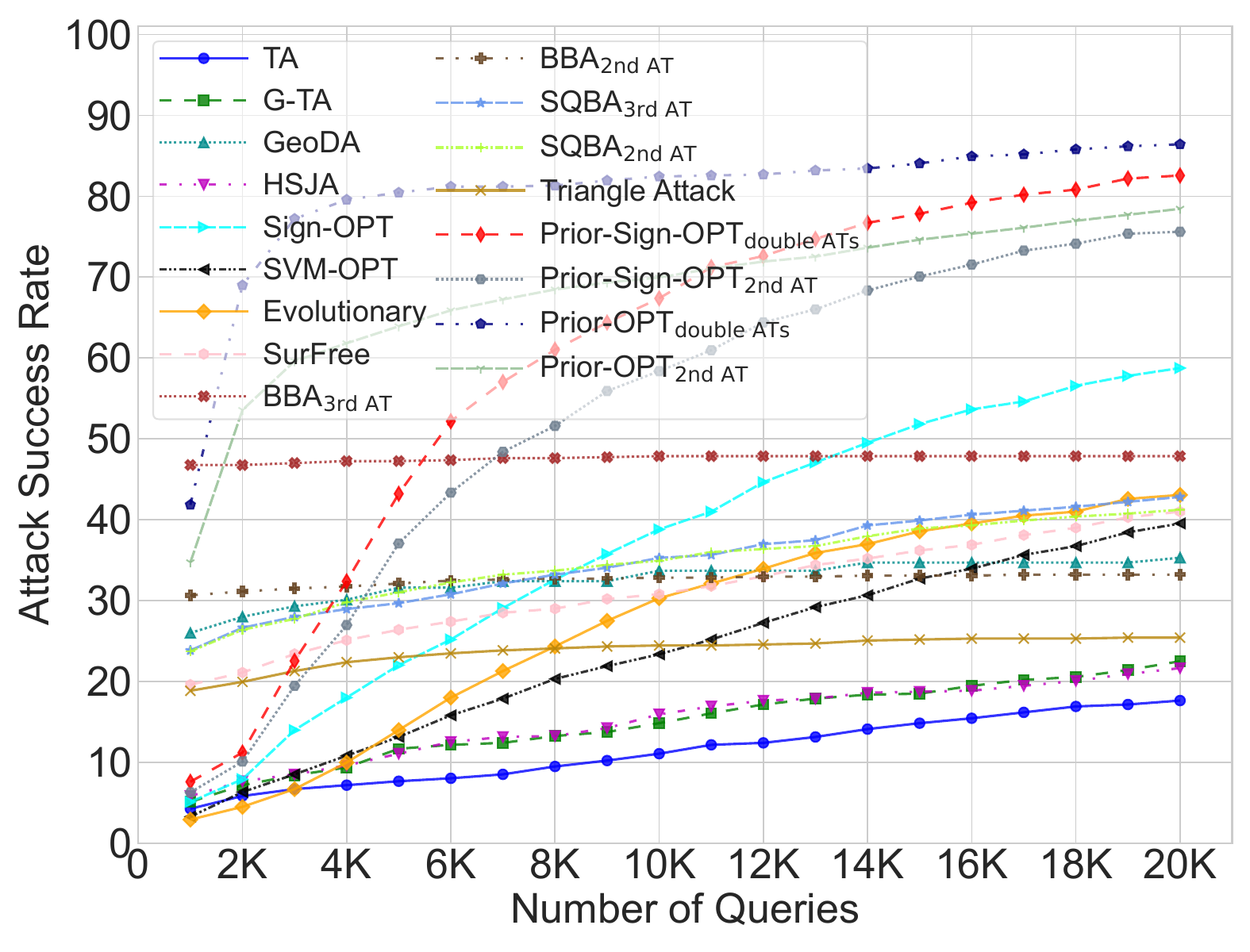}
		\caption{\scriptsize AT$_{\epsilon_\infty = 8/255}$ (ImageNet)\protect\hyperref[fn:AT_8_255]{\footnotemark[2]}}
		\label{subfig:asr_ImageNet_AT_8/255}
	\end{subfigure}
	\caption{Attack success rates of untargeted attacks with $\ell_2$-norm constraint against defense models.}
	\label{fig:asr_defense_methods}
\end{figure}
\textbf{Results of Attacks against Undefended Models.} Table \ref{tab:ImageNet_normal_models_result} shows the results of attacks against undefended models on the ImageNet dataset. Additional results are in Appendix \ref{sec:additional_experimental_results}. In summary:

(1) In untargeted attacks (Table \ref{tab:ImageNet_normal_models_result} and Fig. \ref{fig:imagenet_l2_untargeted_attack}), the performance of Prior-OPT significantly surpasses that of all methods, and using multiple surrogate models yields better performance than using a single surrogate model. In addition, the PGD initialization ($\theta_0^\text{PGD}$) proves effective in early iterations, because it establishes a high-quality initial attack direction  $\theta_0$ through transfer-based attacks.

(2) In targeted attacks, Prior-OPT outperforms Prior-Sign-OPT when the query budget is below \nn{5000}, while Prior-Sign-OPT performs better in later iterations with more queries.

(3) Table \ref{tab:different_numbers_of_priors} and Fig. \ref{subfig:ablation_study_prior_number_E_gamma} demonstrate that using more surrogate models (priors) can boost performance.

\footnotetext[1]{1st AT: AT(ResNet-50, $\epsilon_{\ell_\infty}=8/255$), 3rd AT: AT(ResNet-50, $\epsilon_{\ell_2}=3$), double ATs: combination of both \label{fn:AT_4_255}}
\footnotetext[2]{2nd AT: AT(ResNet-50, $\epsilon_{\ell_\infty}=4/255$), 3rd AT: AT(ResNet-50, $\epsilon_{\ell_2}=3$), double ATs: combination of both \label{fn:AT_8_255}}
\textbf{Results of Attacks against Defense Models.}
We conduct untargeted attack experiments against two types of defense models, i.e., adversarial training (AT) \citep{madry2018towards} and TRADES \citep{zhang2019theoretically}.
Figs. \ref{fig:imagenet_l2_untargeted_attack_defense_methods} and \ref{fig:asr_defense_methods} show that Prior-OPT with two surrogate models (Prior-OPT\textsubscript{\tiny double ATs}) achieves the best performance on the ImageNet dataset and the CIFAR-10 dataset.

\subsection{Comprehensive Understanding of Prior-OPT}
In the ablation studies, we conduct control experiments based on theoretical analysis results and attacks on real images (Fig. \ref{fig:ablation_study_E_gamma}). In Figs. \ref{subfig:ablation_study_alpha_E_gamma}, \ref{subfig:ablation_study_q_E_gamma}, and \ref{subfig:ablation_study_prior_number_E_gamma}, we set the image dimension to $d=\nn{3072}$ and use $\mathbb{E}[\gamma]$ (Eq. \eqref{eq:mean-sign-opt} for Sign-OPT, Eq. \eqref{eq:prior-sign-opt-mean-s>1} for Prior-Sign-OPT, Eq. \eqref{eq:prior-opt-expectation_gamma_lower_bound} and Eq.~\eqref{eq:prior-opt-expectation_gamma_upper_bound} for the lower and upper bound of Prior-OPT) as the metric for gradient estimation accuracy, where $\gamma=\overline{\bv^*}^\top \overline{\nabla g(\theta)}$.
Figs. \ref{subfig:ablation_study_alpha_E_gamma} and \ref{subfig:ablation_study_prior_number_E_gamma} are based on $q=50$. Fig. \ref{subfig:ablation_study_alpha_E_gamma} uses one prior and shows that Prior-OPT and Prior-Sign-OPT outperform Sign-OPT with different values of $\alpha$. Fig. \ref{subfig:ablation_study_alpha_E_gamma} also shows that Prior-Sign-OPT performs well when $\alpha$ is small and $\mathbb{E}[\gamma]$ decreases when $\alpha$ is close to $1$. This is because when we set $\alpha=1$, $\E[\gamma] = 1/\sqrt{q}$ in Eq. \eqref{eq:prior-sign-opt-mean-s>1}.
Fig. \ref{subfig:ablation_study_q_E_gamma} shows that $\mathbb{E}[\gamma]$ monotonically increases with $q$ for each method, and Prior-Sign-OPT performs worse than Sign-OPT when $q>500$. Fig. \ref{subfig:ablation_study_prior_number_E_gamma} validates that the performance can be improved when more priors are available, and prioritizing surrogate models with larger $\alpha$ values outperforms random selection. Fig. \ref{subfig:ablation_study_prior_number_E_gamma} is consistent with the conclusion of the experimental results in Table \ref{tab:different_numbers_of_priors}. Fig.~\ref{subfig:ablation_study_prior_opt} shows the untargeted attack results of Prior-OPT against Swin Transformer with varying $q$ on ImageNet. A smaller $q$ achieves better performance in the early iterations, but becomes less effective in the late stage of iterations with a higher number of queries.
\begin{figure}[h]
\centering
\begin{subfigure}{0.38\textwidth}
	\centering
	\includegraphics[width=\linewidth]{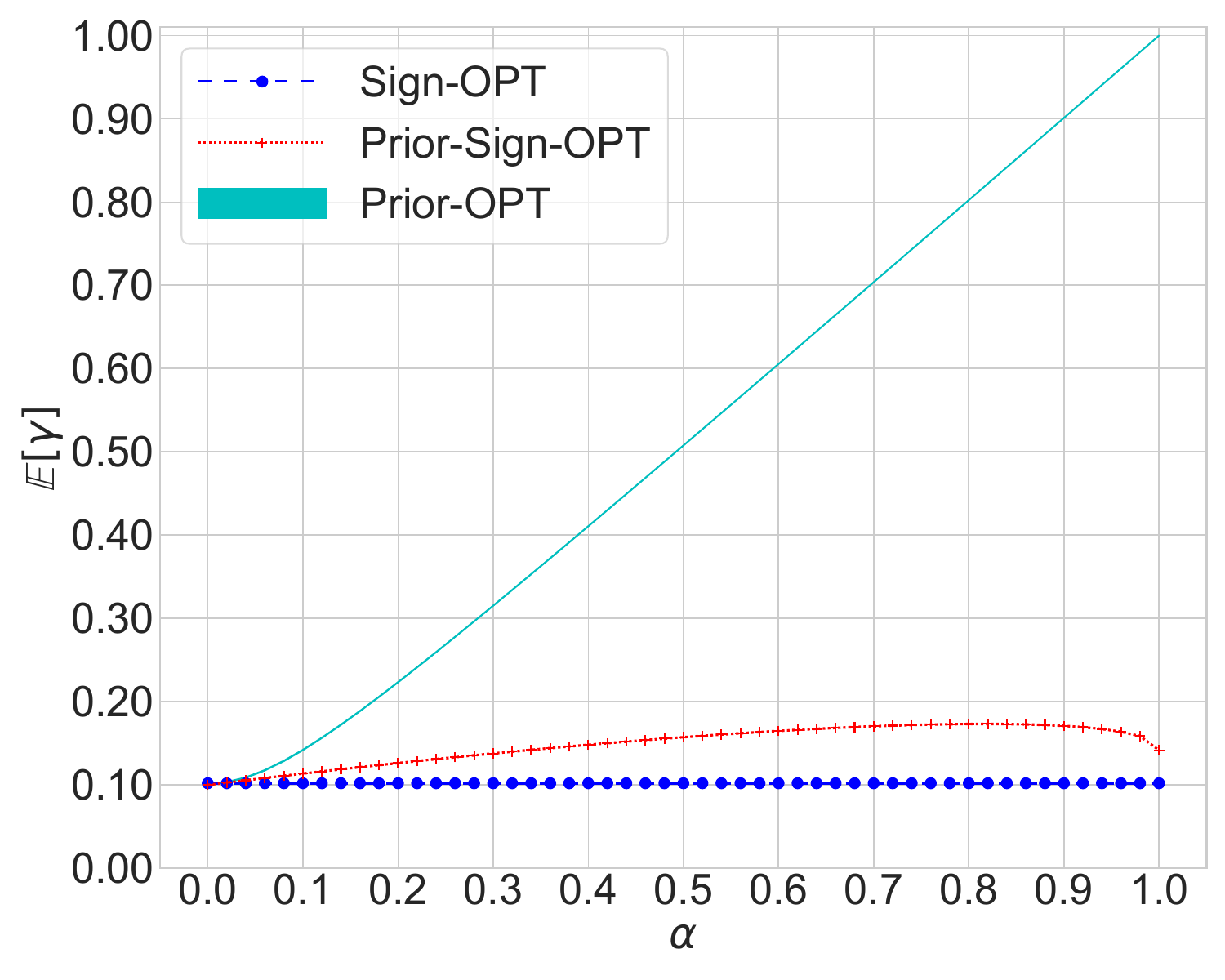}
	\caption{Effect of prior's accuracy $\alpha$}
	\label{subfig:ablation_study_alpha_E_gamma}
\end{subfigure}
\begin{subfigure}{0.38\textwidth}
	\includegraphics[width=\linewidth]{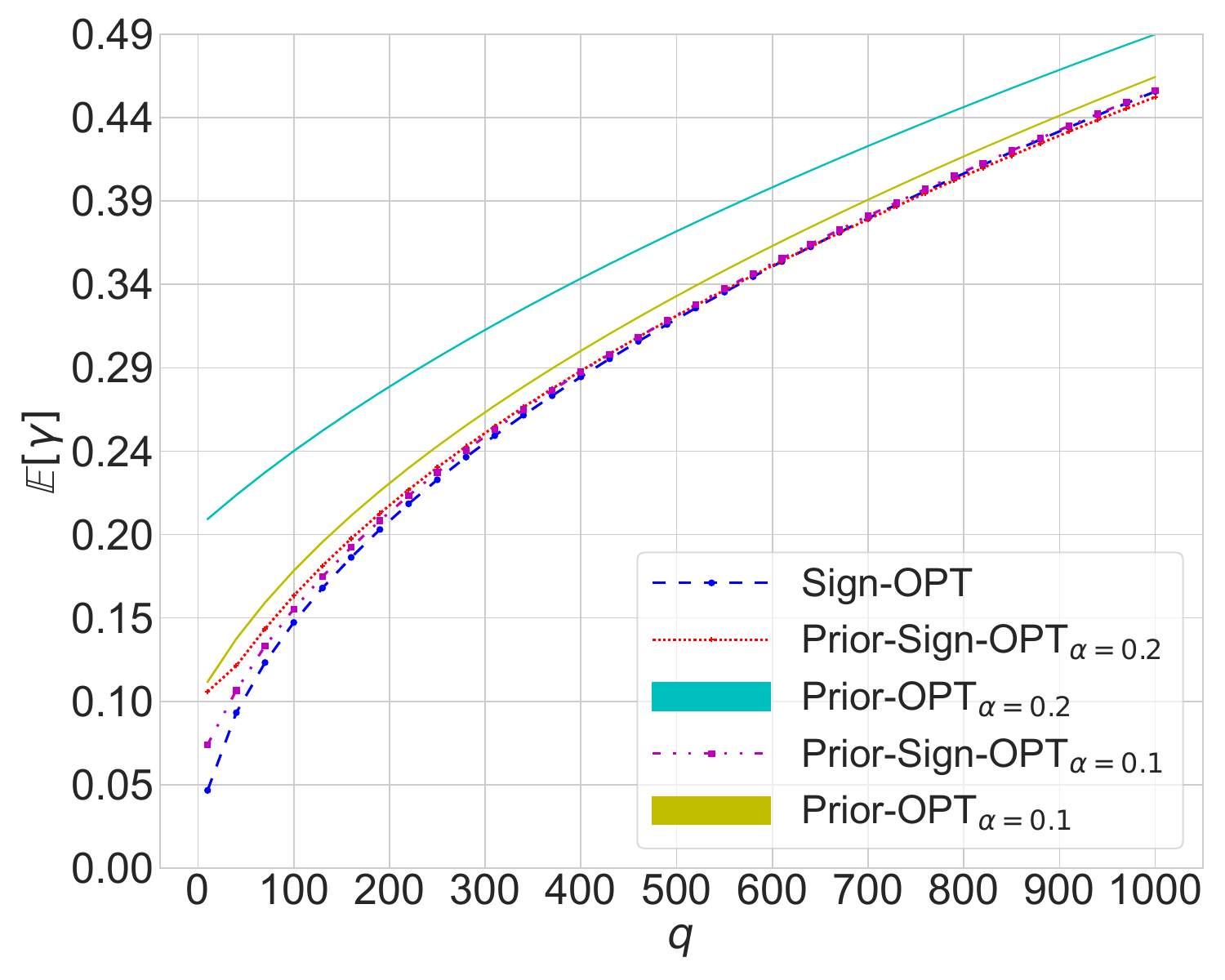}
	\caption{Effect of number of vectors $q$}
	\label{subfig:ablation_study_q_E_gamma}
\end{subfigure}
\begin{subfigure}{0.38\textwidth}
	\includegraphics[width=\linewidth]{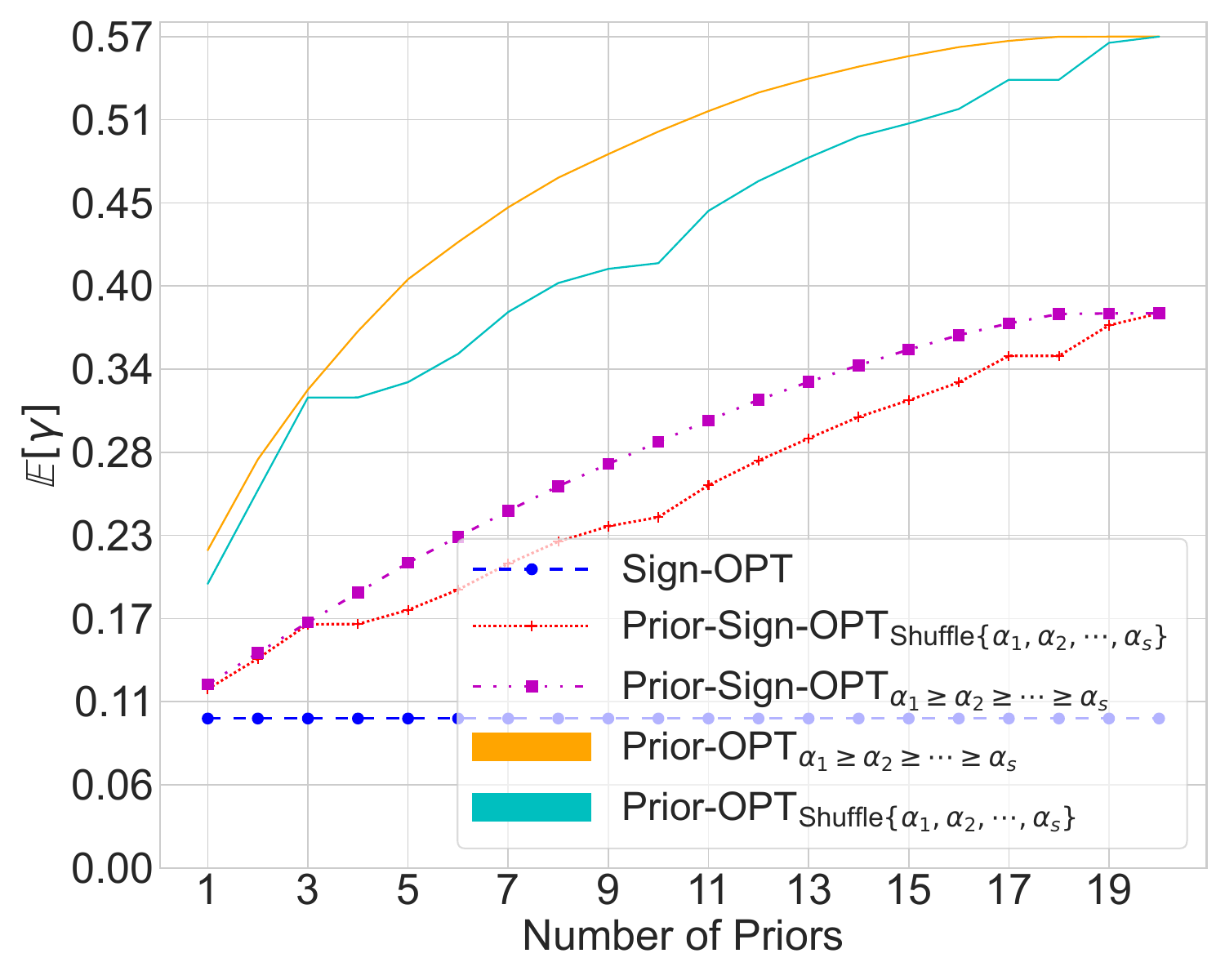}
	\caption{Effect of number of priors}
	\label{subfig:ablation_study_prior_number_E_gamma}
\end{subfigure}
\begin{subfigure}{0.38\textwidth}
	\includegraphics[width=\linewidth]{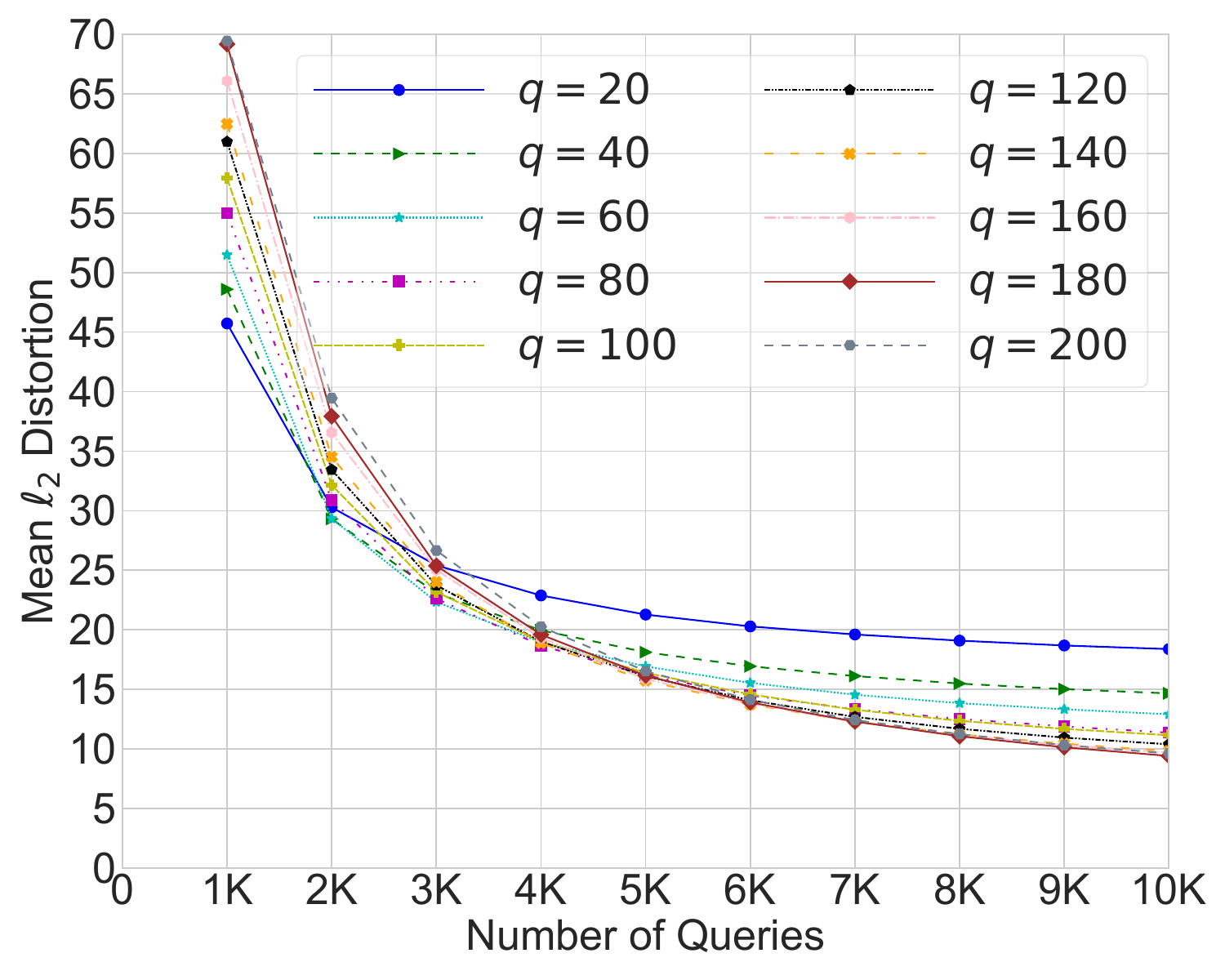}
	\caption{Prior-OPT with different $q$}
	\label{subfig:ablation_study_prior_opt}
\end{subfigure}
\caption{Results of ablation studies. Figs. \ref{subfig:ablation_study_alpha_E_gamma}, \ref{subfig:ablation_study_q_E_gamma}, and \ref{subfig:ablation_study_prior_number_E_gamma} are based on Eqs. \eqref{eq:mean-sign-opt}, \eqref{eq:prior-sign-opt-mean-s>1}, \eqref{eq:prior-opt-expectation_gamma_lower_bound}, and \eqref{eq:prior-opt-expectation_gamma_upper_bound}.}
\label{fig:ablation_study_E_gamma}
\vspace{-0.48cm}
\end{figure}
\section{Conclusion}
In this paper, we propose novel hard-label attacks (i.e., Prior-OPT and Prior-Sign-OPT) that incorporate transfer-based priors into the gradient estimation of the ray direction and significantly boost the attack performance. Through theoretical analysis, we prove the effectiveness of our approach: we derive expressions for the expected cosine similarities between the estimated and true gradients, enabling theoretical comparison against the baseline. Therefore, our analysis offers a comprehensive understanding of Prior-OPT and Prior-Sign-OPT.
Lastly, we evaluate our approach through extensive experiments, demonstrating superior performance compared to state-of-the-art methods.
\section*{Acknowledgments}
This work was supported by Key R\&D Program of Zhejiang Province under Grant No. 2024C01164, by Zhejiang Provincial Natural Science Foundation of China under Grant No. LMS25F020005, and by the National Natural Science Foundation of China under Grant No. U21B2001.

\section*{Ethics Statement}

We affirm our commitment to the ICLR Code of Ethics to ensure that our research on adversarial examples and AI security adheres to the highest ethical standards. Our work aims to identify potential vulnerabilities in AI systems with the intention of enhancing their security and resilience. Specifically, our approach can be integrated into the evaluation process for a model's robustness, enabling the study and implementation of targeted defense strategies and providing effective support to strengthen the security specifications of artificial intelligence models. We acknowledge the dual-use nature of this research. We have taken steps to share our findings responsibly, encouraging their use in developing robust defense mechanisms. We remain open to discussions about any ethical concerns that may arise and are dedicated to contributing positively to the field of AI security.

\section*{Reproducibility Statement}
We have taken several steps to ensure the reproducibility of our research. The appendix provides comprehensive details on the algorithm settings, computational resources, and theoretical proofs.
To further support reproducibility, we provide the complete attack code for our approach and all baseline methods at \url{https://github.com/machanic/hard_label_attacks}. These resources are intended to enable others to replicate our experiments.

\bibliography{main}
\bibliographystyle{iclr2025_conference}

\appendix
\newpage
\section*{\centering Appendix}
\section{Proof for Surrogate Gradient Computation}
\label{sec:proof_surrogate_gradient}
For a surrogate model $\hat{f}$, we define $f$ to be the negative C\&W loss function as follows.
\begin{align}
	f(\mathbf{x}') \coloneqq \begin{cases}
		\hat{f}(\mathbf{x}')_y - \max_{j\neq y} \hat{f}(\mathbf{x}')_j, & \text{\small{if untargeted attack,}} \\
		\max_{j\neq \hat{y}_\text{adv}} \hat{f}(\mathbf{x}')_j - \hat{f}(\mathbf{x}')_{\hat{y}_\text{adv}},  & \text{\small{if targeted attack,}} \\
	\end{cases}
\end{align}
where $\hat{f}(\mathbf{x}')_i$ denotes the $i$-th element of the output of $\hat{f}(\mathbf{x}')$. Note that $h(\theta, \lambda)$ defined in Eq. \eqref{eq:surrogate_loss} of the main text equals $f\left(\mathbf{x}+\lambda \cdot \frac{\theta}{\|\theta\|}\right)$.
Now we consider $g(\theta)$, the distance from the benign image $\mathbf{x}$ to the adversarial region along the ray direction $\theta$, as defined in Eq.~\eqref{eq:goal_OPT}. For any $\mathbf{x}'$, $\Phi(\mathbf{x}') = 1 \Leftrightarrow f(\mathbf{x}') \leq 0$, so we have
\begin{equation}
	g(\theta) = \inf\left\{\lambda: \lambda > 0, f\big(\mathbf{x}+\lambda \frac{\theta}{\|\theta\|}\big)\leq 0\right\}, \label{eq:theta-defn}
\end{equation}
where $\inf(\cdot)$ denotes the infimum of a subset of $\mathbb{R}$. We define $g(\theta)=+\infty$ when no valid $\lambda$ exists, since $\inf \emptyset = +\infty$ by convention.
Now we can prove the following proposition.
\begin{proposition}
	\label{prop:boundary}
	If $f$ is continuous, $f(\mathbf{x})>0$, then given any $\theta\neq \mathbf{0}$ s.t. $g(\theta)<+\infty$, we have $g(\theta)>0$ and $f\left(\mathbf{x}+g(\theta) \frac{\theta}{\|\theta\|}\right)= 0$.
\end{proposition}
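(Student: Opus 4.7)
The plan is to establish the two claims separately using the continuity of $f$, treating $g(\theta)>0$ as a preliminary step and then exploiting the infimum characterization of $g(\theta)$ together with sequential continuity to pin down the value of $f$ at the boundary.

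First I would show that $g(\theta) > 0$. Since $f$ is continuous at $\mathbf{x}$ and $f(\mathbf{x}) > 0$, there exists an open ball $B(\mathbf{x}, \delta)$ on which $f$ remains strictly positive. Hence for every $\lambda \in (0, \delta)$ we have $f\bigl(\mathbf{x} + \lambda \tfrac{\theta}{\|\theta\|}\bigr) > 0$, which means such $\lambda$ is \emph{not} in the set whose infimum defines $g(\theta)$. Therefore $g(\theta) \geq \delta > 0$.

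Next I would prove $f\bigl(\mathbf{x} + g(\theta)\tfrac{\theta}{\|\theta\|}\bigr) \leq 0$. By the definition of infimum in \eqref{eq:theta-defn} and the assumption $g(\theta) < +\infty$, there is a sequence $\lambda_n \downarrow g(\theta)$ with $\lambda_n > 0$ and $f\bigl(\mathbf{x}+\lambda_n\tfrac{\theta}{\|\theta\|}\bigr)\leq 0$. Passing to the limit and using continuity of the map $\lambda \mapsto f\bigl(\mathbf{x}+\lambda\tfrac{\theta}{\|\theta\|}\bigr)$ yields $f\bigl(\mathbf{x}+g(\theta)\tfrac{\theta}{\|\theta\|}\bigr)\leq 0$. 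For the reverse inequality, note that since $g(\theta)$ is the infimum of the set in \eqref{eq:theta-defn}, for every $\lambda$ with $0<\lambda<g(\theta)$ we must have $f\bigl(\mathbf{x}+\lambda\tfrac{\theta}{\|\theta\|}\bigr) > 0$ (otherwise $\lambda$ would belong to the set, contradicting that $g(\theta)$ is a lower bound). Choosing any sequence $\mu_n \uparrow g(\theta)$ with $\mu_n \in (0, g(\theta))$, which is possible because $g(\theta)>0$ by the previous paragraph, continuity gives $f\bigl(\mathbf{x}+g(\theta)\tfrac{\theta}{\|\theta\|}\bigr) = \lim_n f\bigl(\mathbf{x}+\mu_n\tfrac{\theta}{\|\theta\|}\bigr) \geq 0$. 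Combining both inequalities yields equality to zero.

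The argument is essentially a textbook infimum-with-continuity manipulation, so the main obstacle is not any deep technicality but rather the need to justify carefully that $g(\theta)>0$ before we can squeeze the boundary value from below; without that, the approach-from-below sequence would not be well-defined. Once positivity of $g(\theta)$ is secured via the open-ball argument, the two-sided squeeze using monotone sequences and continuity of $\lambda \mapsto f(\mathbf{x}+\lambda\tfrac{\theta}{\|\theta\|})$ closes the proof.
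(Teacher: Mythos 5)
Your proof is correct and follows essentially the same route as the paper's: positivity of $g(\theta)$ from continuity of $f$ at $\mathbf{x}$, then a two-sided squeeze at the infimum using that points of the defining set accumulate at $g(\theta)$ from above while $f_\theta>0$ strictly below $g(\theta)$. The only difference is stylistic — you phrase the squeeze with monotone sequences and limits where the paper argues by contradiction with $\epsilon$-neighborhoods — and both steps are justified correctly.
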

\begin{proof}
	We first prove that $g(\theta)>0$. We begin by defining the function $f_\theta(\lambda) \coloneqq f\left(\mathbf{x}+\lambda \cdot \frac{\theta}{\|\theta\|}\right)$. Note that $f_\theta$ is continuous (w.r.t. $\lambda$) because $f$ is continuous. Since $f_\theta(0)=f(\mathbf{x})>0$, there exists $\delta>0$ such that $\forall 0<\lambda<\delta$, $f_\theta(\lambda)>0$. Since $g(\theta)<+\infty$, by the definition we have $g(\theta)=\inf(\{\lambda: \lambda>0, f_\theta(\lambda)\leq 0\})\geq \delta>0$.
	
	Next, we want to prove $f\left(\mathbf{x}+g(\theta) \frac{\theta}{\|\theta\|}\right)= 0$, i.e., $f_\theta(g(\theta))=0$. To simplify notation in the proof, let us denote $A_\theta \coloneqq \{\lambda: \lambda>0, f_\theta(\lambda)\leq 0\}$.
	
	If $f_\theta(g(\theta))<0$, then since $f_\theta$ is continuous and $g(\theta)>0$, there exists $\epsilon>0$ such that $f_\theta(g(\theta)-\epsilon)<0$ and $g(\theta)-\epsilon > 0$. Therefore, we have $g(\theta)-\epsilon\in A_\theta$, which implies that $g(\theta)>\inf (A_\theta)$. This contradicts the definition $g(\theta)=\inf(A_\theta)$.
	
	If $f_\theta(g(\theta))>0$, then there exists $\epsilon>0$ such that $f_\theta(\lambda)>0$ holds for all $g(\theta)\leq\lambda\leq g(\theta)+\epsilon$. This means that $[g(\theta), g(\theta)+\epsilon]\cap A_\theta=\emptyset$. Noting that $g(\theta)$ is a lower bound of $A_\theta$, this implies that $g(\theta)+\epsilon$ is also a lower bound of $A_\theta$, which contradicts the definition $g(\theta)=\inf(A_\theta)$.
	
	Therefore $f_\theta(g(\theta))=0$.
\end{proof}
Next, we show how to calculate $\nabla g(\theta)$ based on some weak assumptions.
\begin{theorem}
	Suppose $f$ is continuously differentiable\footnote{A function $f$ is said to be continuously differentiable if all partial derivatives of $f$ exist and are continuous.} and $f(\mathbf{x})>0$. Let $h(\theta, \lambda) = f\left(\mathbf{x}+\lambda\frac{\theta}{\|\theta\|}\right)$. For any $\theta_0\neq \mathbf{0}$ s.t. $g(\theta_0)<+\infty$, let $\lambda_0 = g(\theta_0)$, and assume that $\frac{\partial h}{\partial \lambda}(\theta_0, \lambda_0) \neq 0$, then we conclude that $g$ is differentiable at $\theta_0$, and
	\begin{align}
		\nabla g(\theta_0) = -\frac{1}{\frac{\partial h}{\partial \lambda}(\theta_0, \lambda_0)}\nabla_\theta h(\theta_0, \lambda_0).
	\end{align}
\end{theorem}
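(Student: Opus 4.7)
The plan is to recognize this theorem as a direct application of the implicit function theorem (IFT) applied to the level set $\{h = 0\}$, combined with an argument ensuring that the locally defined implicit solution actually coincides with the infimum $g$ near $\theta_0$.

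First, I would collect the ingredients needed to invoke the IFT. Since $f$ is continuously differentiable and the map $\theta \mapsto \theta/\|\theta\|$ is $C^1$ away from the origin, the composite function $h(\theta, \lambda) = f(\mathbf{x} + \lambda \theta / \|\theta\|)$ is continuously differentiable on $(\mathbb{R}^d \setminus \{\mathbf{0}\}) \times \mathbb{R}$. By Proposition \ref{prop:boundary}, we have $h(\theta_0, \lambda_0) = 0$, and by hypothesis $\partial h / \partial \lambda (\theta_0, \lambda_0) \neq 0$. The IFT then yields an open neighborhood $U$ of $\theta_0$ and a unique continuously differentiable function $\psi \colon U \to \mathbb{R}$ with $\psi(\theta_0) = \lambda_0$ and $h(\theta, \psi(\theta)) = 0$ on $U$. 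Differentiating this identity with respect to $\theta$ via the chain rule gives the claimed formula for $\nabla \psi(\theta_0)$.

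The remaining and genuinely delicate step is to verify $g \equiv \psi$ on some neighborhood of $\theta_0$, since IFT only guarantees local uniqueness of a root and nothing about the infimum structure that defines $g$. I would first argue about the sign of $\partial h/\partial \lambda(\theta_0, \lambda_0)$: by the definition of $g$ as an infimum, $h(\theta_0, \lambda) > 0$ for every $\lambda \in [0, \lambda_0)$ (otherwise $\lambda_0$ would not be the infimum), while $h(\theta_0, \lambda_0) = 0$. Together with the assumption that the derivative is nonzero, this forces $\partial h / \partial \lambda(\theta_0, \lambda_0) < 0$. By continuity of the partial derivative, there is a neighborhood of $(\theta_0, \lambda_0)$ on which $\partial h/\partial \lambda < 0$, hence $h(\theta, \cdot)$ is strictly decreasing in $\lambda$ there; in particular, for $\theta$ near $\theta_0$, the implicit root $\psi(\theta)$ is the unique zero of $h(\theta, \cdot)$ in a small interval around $\lambda_0$, and $h(\theta, \lambda) > 0$ for $\lambda$ slightly below $\psi(\theta)$.

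To exclude spurious smaller roots outside that interval, I would use a compactness argument: fix a small $\delta > 0$. Since $h(\theta_0, \cdot)$ is continuous and strictly positive on the compact set $[0, \lambda_0 - \delta]$, it attains a positive minimum $m > 0$ there. Uniform continuity of $h$ on a compact neighborhood of $\{\theta_0\} \times [0, \lambda_0 - \delta]$ then guarantees an $\varepsilon > 0$ such that $h(\theta, \lambda) \geq m/2 > 0$ for all $\theta$ with $\|\theta - \theta_0\| \leq \varepsilon$ and all $\lambda \in [0, \lambda_0 - \delta]$. Therefore no adversarial $\lambda$ exists below $\lambda_0 - \delta$ for such $\theta$, and combined with the local monotonicity above, $g(\theta) = \psi(\theta)$ on a neighborhood of $\theta_0$. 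This identification then transfers the differentiability of $\psi$ and its gradient formula from IFT directly to $g$, completing the proof. The main obstacle is precisely this identification of $g$ with the IFT branch, since $g$ is defined by an infimum over a global set rather than as a local implicit solution.
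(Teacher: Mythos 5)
Your proposal is correct and follows essentially the same route as the paper's proof: invoke the Implicit Function Theorem at $(\theta_0,\lambda_0)$ to obtain a local $C^1$ branch with the stated gradient formula, then use an Extreme Value Theorem plus uniform-continuity (compactness) argument to rule out zeros of $h(\theta,\cdot)$ on $[0,\lambda_0-\delta]$ for $\theta$ near $\theta_0$, and finally identify $g$ with the implicit branch. The only cosmetic difference is that you close the gap on $[\lambda_0-\delta,\psi(\theta))$ via the sign $\frac{\partial h}{\partial\lambda}(\theta_0,\lambda_0)<0$ and local strict monotonicity, whereas the paper instead uses the IFT uniqueness clause together with $h(\theta,g(\theta))=0$ from Proposition~\ref{prop:boundary}; both are valid.
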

\begin{remark}
	The assumptions in the theorem are rather weak. $f(\mathbf{x})>0$ (the unperturbed sample can be successfully classified) is a standard assumption; $g(\theta_0)<+\infty$ is a common assumption, necessary for the ray search procedure to work; $f$ is continuously differentiable almost everywhere under common network architectures. 
	The only special condition required here is that $\frac{\partial h}{\partial \lambda}(\theta_0, \lambda_0) \neq 0$, which is generally satisfied unless a specific function $f$ is explicitly constructed to violate it.
	Intuitively, as $\lambda$ increases, the function value decreases from a positive value to a non-positive value, and the derivative w.r.t. $\lambda$ is typically non-zero when the function value crosses zero.
\end{remark}
\begin{proof}
	Since $(\theta, \lambda) \mapsto \mathbf{x}+\lambda\frac{\theta}{\|\theta\|}$ is continuously differentiable at $\{(\theta, \lambda): \theta\in\mathbb{R}^d, \lambda\in\mathbb{R}, \theta\neq \mathbf{0}\}$ and $f$ is continuously differentiable everywhere, $h$ is continuously differentiable when $\theta\neq \mathbf{0}$ by the chain rule. By Proposition~\ref{prop:boundary}, $h(\theta_0, \lambda_0)=0$. Since $\frac{\partial h}{\partial \lambda}(\theta_0, \lambda_0) \neq 0$, by the Implicit Function Theorem (see Theorem 1 in \citet{rio2012implicit}), there exists a neighborhood $\Theta\subseteq \mathbb{R}^d$ of $\theta_0$ and an open interval $\Lambda \coloneqq (\lambda_0-\eta, \lambda_0+\eta)$ such that for each $\theta\in\Theta$, there is a unique $\lambda\in\Lambda$ satisfying $h(\theta, \lambda)=0$. Since $\theta$ uniquely determines $\lambda$, we define $\tilde{g} \vcentcolon \Theta\to \Lambda$ satisfying $h(\theta, \tilde{g}(\theta))=0$ for all $\theta\in\Theta$. Moreover, the Implicit Function Theorem tells us that $\tilde{g}$ is continuously differentiable, and
	\begin{equation}
		\nabla \tilde{g}(\theta_0) = -\frac{1}{\frac{\partial h}{\partial \lambda}(\theta_0, \lambda_0)}\nabla_\theta h(\theta_0, \lambda_0). \label{eq:gradient-implicit}
	\end{equation}
	Now it suffices to prove $g$ is differentiable at $\theta_0$ and $\nabla g(\theta_0)=\nabla \tilde{g}(\theta_0)$. We shall prove that there exists a neighborhood of $\theta_0$ in which $g$ and $\tilde{g}$ are equal. Since $h(\theta, \tilde{g}(\theta))=0$, from the definition of $g$, we have $g(\theta)\leq \tilde{g}(\theta)<+\infty$ for all $\theta\in\Theta$. By Proposition~\ref{prop:boundary}, $h(\theta, g(\theta))=0$, so the uniqueness in Implicit Function Theorem tells us that $\forall \theta\in\Theta$, if $\lambda_0 - \eta < g(\theta) < \lambda_0 + \eta$, then $g(\theta)=\tilde{g}(\theta)$. Since $g(\theta)\leq \tilde{g}(\theta)<\lambda_0+\eta$, it suffices to prove that $g(\theta)>\lambda_0-\eta$.
	
	Now we prove that there exists a neighborhood $\Theta'$ of $\theta_0$ such that $\forall \theta\in\Theta'$, $\forall \lambda\in [0, \lambda_0-\eta]$, $h(\theta, \lambda)>0$ (this would imply that $\forall \theta\in\Theta', g(\theta)>\lambda_0-\eta$, since $h(\theta, g(\theta))=0$ by Proposition~\ref{prop:boundary}). To prove that, we first note that $\forall \lambda\in [0, \lambda_0-\eta]$, $h(\theta_0, \lambda)>0$ since $g(\theta_0)=\lambda_0>\lambda_0-\eta$. Since $h(\theta_0, \lambda)$ is continuous w.r.t. $\lambda$, by the Extreme Value Theorem, $h(\theta_0, \lambda)$ on $\lambda\in [0, \lambda_0-\eta]$ could attain the minimum $h(\theta_0, \lambda^*)$ which is positive, so there exists $\epsilon>0$ such that $\forall \lambda\in [0, \lambda_0-\eta], h(\theta_0, \lambda)\geq \epsilon$. We pick a bounded closed neighborhood of $\theta_0$, denoted by $\Theta''$ such that $\mathbf{0}\notin \Theta''$. $h$ is continuous on the compact set $\{(\theta, \lambda): \theta\in\Theta'', \lambda\in [0, \lambda_0-\eta]\}$, so by Heine-Cantor Theorem, $h$ is uniformly continuous on the same set. This implies that there exists $\delta>0$ such that for all $\theta \in \Theta''$ satisfying $\|\theta-\theta_0\|<\delta$, we have $|h(\theta, \lambda)-h(\theta_0, \lambda)| < \epsilon$ and hence $h(\theta, \lambda)>0$ for all $\lambda\in [0, \lambda_0-\eta]$. Setting $\Theta'=\Theta'' \cap \{\theta: \|\theta-\theta_0\|<\delta\}$, 
	we have $\forall \theta\in\Theta'$, $\forall \lambda\in [0, \lambda_0-\eta]$, $h(\theta, \lambda)>0$, and thus the proposition at the beginning of this paragraph is proven, i.e., $\forall \theta\in\Theta', g(\theta) > \lambda_0 - \eta$.

	Therefore, we have proven that there exists a neighborhood of $\theta_0$, $\Theta\cap\Theta'$, in which $g$ and $\tilde{g}$ are equal.
	Since the differentiability at $\theta_0$ and the gradient only rely on the function value in a neighborhood of $\theta_0$, $g$ is differentiable at $\theta_0$ and $\nabla g(\theta_0)=\nabla \tilde{g}(\theta_0)$. By Eq.~\eqref{eq:gradient-implicit}, the proof is completed.
\end{proof}

\section{Acquisition of Transfer-based Priors in Targeted Attacks}
\label{sec:acquisition_priors_in_targeted_attacks}

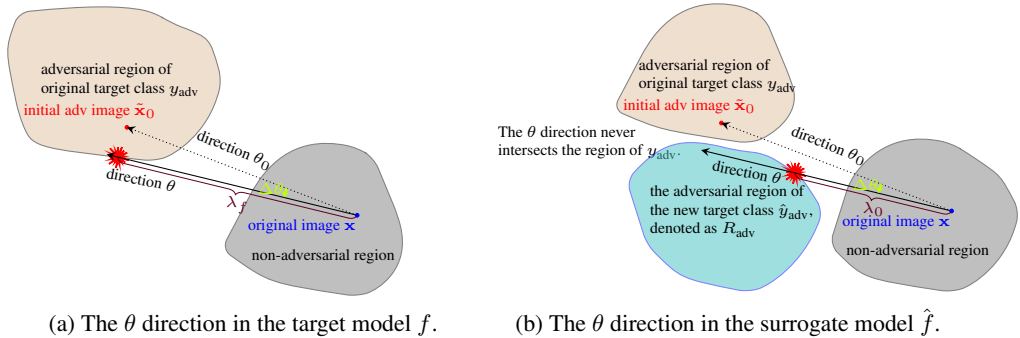
\begin{figure}[h]
	\centering
	\def\r{1.0}
	\begin{subfigure}{0.455\textwidth}
		
		\begin{tikzpicture}[scale=0.9,myarrow/.style={-{Stealth[length=0.8mm, width=0.8mm]}, line width=0.5pt}]
			\coordinate (O') at  (0,2);
			\coordinate (CURVE') at ($(O')+(0.25*\r,0.84422603)$);
			\coordinate (L') at ($(O')+(\r,0)$);
			\coordinate (G') at ($(O')+(0,\r)$);
			\coordinate (Q') at ($(O')-(0,1.2*\r)$);
			\coordinate (Z') at ($(O')-(2*\r, 0.5*\r)$);  
			\coordinate (K') at ($(O')-(2*\r, -\r)$);  
			\coordinate (C1') at ($(G')-(\r, -0.1*\r)$);  
			\coordinate (C2') at ($(G')-(0.5*\r, -0.1*\r)$);  
			\draw[draw=gray, draw opacity=1,fill=brown!50,fill opacity=0.5,rounded corners]  (CURVE') ..controls ($0.5*(CURVE') + 0.5*(L') + (0.2,0.2)$) .. (L') ..controls ($(Q')+(0.8,0.4)$) and ($(Q')+(0.3,0.1)$) .. (Q') .. controls ($(Q')-(1.8*\r, -0.2)$) .. (Z') ..controls ($0.5*(Z') + 0.5*(K') + (-0.2,0.2)$) ..  (K') .. controls (C1') and (C2') ..  (G') --  cycle;
			\node at ($(O')+(-0.5*\r,0)$) [align=left,font=\tiny]{adversarial region of \\ original target class $y_\text{adv}$};
			\coordinate (adv_O) at ($(O')+(-0.4*\r,-0.7*\r)$);
			\fill[red] (adv_O) circle[radius=1pt] node[anchor=south,font=\tiny,xshift=-14pt, yshift=0pt] {initial adv image $\tilde{\mathbf{x}}_0$};
			\coordinate (new_adv_O') at ($(O')+(-0.7*\r,-1.1*\r)$);
			\coordinate (x) at  (1.5,2);
			\coordinate (O) at  (3,0);
			\coordinate (CURVE) at ($(O)+(0.25*\r,0.84422603)$);
			\coordinate (L) at ($(O)+(0.6*\r,0)$);
			\coordinate (G) at ($(O)+(0,\r)$);
			\coordinate (Q) at ($(O)-(0,1.2*\r)$);
			\coordinate (Z) at ($(O)-(2*\r, 0.5*\r)$);  
			\coordinate (K) at ($(O)-(1.5*\r, -0.45*\r)$);  
			\coordinate (C1) at ($(G)-(\r, -0.1*\r)$);  
			\coordinate (C2) at ($(G)-(0.5*\r, -0.1*\r)$);  
			\draw[draw=black, draw opacity=1,fill=gray,opacity=0.5,rounded corners]  (CURVE) -- (L) ..controls ($(Q)+(0.8,0.4)$) and ($(Q)+(0.3,0.1)$) .. (Q) .. controls ($(Q)-(1.3*\r, -0.2)$) .. (Z) -- (K) .. controls (C1) and (C2) ..  (G) --  cycle;
			\node at ($(O)+(-0.5*\r,-0.6*\r)$) [align=left,font=\tiny]{non-adversarial region};
			\draw[-stealth,densely dotted] (O) -- (adv_O); 
			
			\path[name path=original_adv_region,rounded corners]   (CURVE') ..controls ($0.5*(CURVE') + 0.5*(L') + (0.2,0.2)$) .. (L') ..controls ($(Q')+(0.8,0.4)$) and ($(Q')+(0.3,0.1)$) .. (Q') .. controls ($(Q')-(1.8*\r, -0.2)$) .. (Z') ..controls ($0.5*(Z') + 0.5*(K') + (-0.2,0.2)$) ..  (K') .. controls (C1') and (C2') ..  (G') --  cycle;
			\path [name path=theta] (O) -- (new_adv_O');
			\path [name intersections={of=original_adv_region and theta, by=collision_point}]; 
			\node[starburst, draw, minimum width=0.1pt, minimum height=0.1pt, red,fill=red,line width=0.6pt,scale=0.25] at (collision_point) {};
			
			\draw[-stealth,solid] (O) -- (new_adv_O');
			\node[anchor=south, font=\tiny, rotate around={atan2(-0.4, 1):($0.5*(O) + 0.5*(O')$)},xshift=-7pt, yshift=-10pt] at ($0.5*(O) + 0.5*(O')$) {direction $\theta_0$};
			\node[anchor=north, font=\tiny, rotate around={atan2(-0.2, 1):($0.5*(O) + 0.5*(new_adv_O')$)},xshift=-34pt,yshift=1pt] at ($0.5*(O) + 0.5*(new_adv_O')$) {direction $\theta$};
			\pic [draw=lime,text=lime, "\tiny $\Delta\theta$", angle eccentricity=1.2,angle radius=1cm,line width=1pt,-{Stealth[length=0.8mm, width=1mm]}] {angle = adv_O--O--new_adv_O'};
			\fill[blue] (O) circle[radius=1pt] node[anchor=north east,font=\tiny,xshift=3pt, yshift=2pt] {original image $\mathbf{x}$};

			\draw[line width=0.3pt,decorate,decoration={brace,raise=0.3pt,amplitude=3pt,mirror},draw=purple!50!black] (collision_point) --  node[anchor=north,yshift=1.5pt,text=purple!50!black,inner sep=5pt,font=\tiny] {$\lambda_f$} (O);
			
		\end{tikzpicture}
		\caption{The $\theta$ direction in the target model $f$.}
		\label{subfig:new_direction_on_target_model}
	\end{subfigure}
	\begin{subfigure}{0.455\textwidth}
		
		\begin{tikzpicture}[scale=0.9,myarrow/.style={-{Stealth[length=0.8mm, width=0.8mm]}, line width=0.5pt},tinyarrow2/.style={-{Stealth[length=2mm, width=0.8mm]}}]
			\coordinate (O') at  (0,2);
			\coordinate (CURVE') at ($(O')+(0.25*\r,0.84422603)$);
			
			\coordinate (L') at ($(O')+(0.6*\r,0)$);
			\coordinate (G') at ($(O')+(0,\r)$);
			\coordinate (Q') at ($(O')-(0,\r)$);
			\coordinate (Z') at ($(O')-(2*\r, 0.5*\r)$);  
			\coordinate (K') at ($(O')-(1.5*\r, -0.45*\r)$);  
			\coordinate (C1') at ($(G')-(\r, -0.1*\r)$);  
			\coordinate (C2') at ($(G')-(0.5*\r, -0.1*\r)$);  
			\draw[draw=gray, draw opacity=1,fill=brown!50,fill opacity=0.5,rounded corners]  (CURVE') -- (L') ..controls ($(Q')+(0.8,0.4)$) and ($(Q')+(0.3,0.1)$) .. (Q') .. controls ($(Q')-(1.3*\r, -0.2)$) .. (Z') -- (K') .. controls (C1') and (C2') ..  (G') --  cycle;
			\node at ($(O')+(-0.45*\r,0)$) [align=left,font=\tiny]{adversarial region of \\original target class $y_\text{adv}$};
			\coordinate (adv_O') at ($(O')+(-0.4*\r,-0.7*\r)$);
			\fill[red] (adv_O') circle[radius=1pt] node[anchor=south,font=\tiny,xshift=-12pt, yshift=0pt] {initial adv image $\tilde{\mathbf{x}}_0$};
			\coordinate (new_adv_O') at ($(O')+(-0.7*\r,-1.1*\r)$);
			
			\node at (new_adv_O')  [anchor=east,align=left,font=\tiny,xshift=-4pt,yshift=3pt]{The $\theta$ direction never\\intersects the region of $y_\text{adv}$.};
			\coordinate (O'') at  (0,0);
			\coordinate (CURVE'') at ($(O'')+(0.25*\r,0.84422603)$);
			\coordinate (L'') at ($(O'')+(1.1*\r,0)$);
			\coordinate (G'') at ($(O'')+(-0.2*\r,0.9*\r)$);
			\coordinate (Q'') at ($(O'')-(0,1.2*\r)$);
			\coordinate (Z'') at ($(O'')-(1.5*\r, 0.5*\r)$);  
			\coordinate (K'') at ($(O'')-(1.8*\r, -0.45*\r)$);  
			\coordinate (C1'') at ($(G'')-(1.1*\r, -0.1*\r)$);  
			\coordinate (C2'') at ($(G'')-(0.8*\r, -0.2*\r)$);  
			
			\draw[draw=blue, draw opacity=1,fill=cyan!50!gray,opacity=0.5,rounded corners]  (CURVE'') ..controls ($0.5*(CURVE'') + 0.5*(L'') + (0.1,0.1)$) .. (L'') ..controls ($(Q'')+(0.8,0.4)$) and ($(Q'')+(0.3,0.1)$) .. (Q'') .. controls ($(Q'')-(\r, -0.2)$) .. (Z'') -- (K'') .. controls (C1'') and (C2'') ..  (G'') --  cycle;

			\path[name path=new_adv_region,rounded corners] (CURVE'') ..controls ($0.5*(CURVE'') + 0.5*(L'') + (0.1,0.1)$) .. (L'') ..controls ($(Q'')+(0.8,0.4)$) and ($(Q'')+(0.3,0.1)$) .. (Q'') .. controls ($(Q'')-(\r, -0.2)$) .. (Z'') -- (K'') .. controls (C1'') and (C2'') ..  (G'') --  cycle;
			\node at ($(O'')-(0.25*\r,0)$) [align=left,font=\tiny]{the adversarial region of\\the new target class $\hat{y}_\text{adv}$,\\denoted as $R_\text{adv}$};
			\path [name path=theta] (O) -- (new_adv_O');
			\path [name intersections={of=new_adv_region and theta, by=collision_point}];

			\coordinate (x) at  (1.5,2);
			\coordinate (O) at  (3,0);
			
			\node[starburst, draw, minimum width=0.1pt, minimum height=0.1pt, red,fill=red,line width=0.6pt,scale=0.25] at (collision_point) {};
			\coordinate (CURVE) at ($(O)+(0.25*\r,0.84422603)$);
			\coordinate (L) at ($(O)+(\r,0)$);
			\coordinate (G) at ($(O)+(0,\r)$);
			\coordinate (Q) at ($(O)-(0,1.2*\r)$);
			\coordinate (Z) at ($(O)-(1.8*\r, 0.5*\r)$);  
			\coordinate (K) at ($(O)-(1.5*\r, -0.45*\r)$);  
			\coordinate (C1) at ($(G)-(\r, -0.1*\r)$);  
			\coordinate (C2) at ($(G)-(0.5*\r, -0.1*\r)$);  
			\draw[draw=black, draw opacity=1,fill=gray,opacity=0.5,rounded corners]  (CURVE) ..controls ($0.5*(CURVE) + 0.5*(L) + (0.1,0.1)$) .. (L) ..controls ($(Q)+(0.8,0.4)$) and ($(Q)+(0.3,0.1)$) .. (Q) .. controls ($(Q)-(1.3*\r, -0.2)$) .. (Z) -- (K) .. controls (C1) and (C2) ..  (G) --  cycle;
			\node at ($(O)+(-0.5*\r,-0.6*\r)$) [align=left,font=\tiny]{non-adversarial region};
			\draw[-stealth,densely dotted] (O) -- (adv_O'); 
			\draw[-stealth,solid] (O) -- (new_adv_O'); 
			\node[anchor=south, font=\tiny, rotate around={atan2(-0.4, 1):($0.5*(O) + 0.5*(O')$)},xshift=-7pt, yshift=-10pt] at ($0.5*(O) + 0.5*(O')$) {direction $\theta_0$};
			\node[anchor=north, font=\tiny, rotate around={atan2(-0.2, 1):($0.5*(O) + 0.5*(new_adv_O')$)},xshift=-30pt,yshift=3pt] at ($0.5*(O) + 0.5*(new_adv_O')$) {direction $\theta$};
			
			\fill[blue] (O) circle[radius=1pt] node[anchor=north east,font=\tiny,xshift=3pt, yshift=2pt] {original image $\mathbf{x}$};	
			
			\pic [draw=lime,text=lime, "\tiny $\Delta\theta$", angle eccentricity=1.2,angle radius=1cm,line width=1pt,-{Stealth[length=0.8mm, width=1mm]}] {angle = adv_O'--O--new_adv_O'};
			
			\draw[line width=0.3pt,decorate,decoration={brace,raise=0.3pt,amplitude=3pt,mirror},draw=purple!50!black] (collision_point) --  node[anchor=north,yshift=1.5pt,text=purple!50!black,inner sep=5pt,font=\tiny] {$\lambda_0$} (O);
		\end{tikzpicture}
		\caption{The $\theta$ direction in the surrogate model $\hat{f}$.}
		\label{subfig:new_direction_on_surrogate_model}
	\end{subfigure}
	\caption{Illustration of setting the new target class $\hat{y}_\text{adv}$ and $\lambda_0$ before obtaining priors.}
	\label{fig:targeted_attacks_priors}
\end{figure}

In targeted attacks, obtaining transfer-based priors is more challenging. 
Fig. \ref{fig:targeted_attacks_priors} illustrates that $\theta_0$ is initialized as the direction from the original image $\mathbf{x}$ to an initial adversarial image $\tilde{\mathbf{x}}_0$, which is selected from the target class $y_\text{adv}$. The $\theta_0$ direction is used as the initial direction in both the surrogate model and the target model. If both models classify $\tilde{\mathbf{x}}_0$ correctly, the $\theta_0$ direction leads to the region of the target class $y_\text{adv}$ in both models.
During the optimization process, a small perturbation $\Delta \theta$ is added to $\theta_0$, resulting in a new direction $\theta$. 
Although the $\theta$ direction may successfully guide the attack towards the adversarial region of the target class $y_\text{adv}$ in the target model $f$, it may not lead to the same region in the surrogate model $\hat{f}$.
This is a result of the varying decision boundaries between the two models, which differ in both shape and extent. Therefore, $g_{\hat{f}}(\theta)$ becomes infinitely large in this case, as shown in Fig. \ref{subfig:new_direction_on_surrogate_model}. To address this issue, we set a new target class $\hat{y}_\text{adv}$ and $\lambda_0$ before computing the transfer-based priors. The procedure is as follows:

(1) Starting at the original image $\mathbf{x}$, we move along the $\theta$ direction in the surrogate model $\hat{f}$. The region located at a distance of $\lambda_{f} + 1$ along $\theta$ is defined as the new adversarial region $R_\text{adv}$ with label $\hat{y}_\text{adv}$, where $\lambda_{f}$ represents the shortest distance from $\mathbf{x}$ along $\theta$ to the adversarial region of the original target class $y_\text{adv}$ in the target model $f$. The shortest distance from $\mathbf{x}$ to $R_\text{adv}$ along the $\theta$ direction is denoted as $\lambda_0$, as shown in Fig. \ref{subfig:new_direction_on_surrogate_model}.

(2) If no adversarial region with the label $\hat{y}_\text{adv}$ different from the true label $y$ is found in the previous steps, we then search for the first adversarial region $R_\text{adv}$ along the $\theta$ direction in the surrogate model $\hat{f}$, within a distance range of 0 to 200. The label of $R_\text{adv}$ is denoted as $\hat{y}_\text{adv}$, and the shortest distance from $\mathbf{x}$ to $R_\text{adv}$ along the $\theta$ direction is denoted as $\lambda_0$.

\section{Theoretical Analysis of Sign-OPT, Prior-Sign-OPT, and Prior-OPT}
\label{sec:theoretical_analysis_of_all}
\begin{lemma}
	\label{lemma:expect-alpha}
	Suppose $\bu\sim \mathcal{U}(\mathbb{S}_{d-1})$ where $\mathcal{U}(\mathbb{S}_{d-1})$ denotes the uniform distribution on the unit hypersphere in $\mathbb{R}^d$. Suppose $\bg$ is a fixed vector in $\mathbb{R}^d$ with $\|\bg\|=1$. Let $\beta \coloneqq \bu^\top \bg$. Then
	\begin{align}
		\E[|\beta|] &= \frac{\Gamma(\frac{d}{2})}{\Gamma(\frac{d+1}{2})\sqrt{\pi}}, \label{eq:expect-abs}\\
		\E[\beta^2] &= \frac{1}{d}, \label{eq:expect-square}
	\end{align}
	where $\Gamma$ is the gamma function.
\end{lemma}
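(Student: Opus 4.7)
The plan is to exploit rotational invariance: since $\bu$ is uniform on $\mathbb{S}_{d-1}$ and $\|\bg\|=1$, the distribution of $\beta=\bu^\top\bg$ is the same as the distribution of the first coordinate $u_1$, obtained by rotating $\bg$ to the standard basis vector $e_1$. So both computations reduce to questions about the marginal of a single coordinate of a uniformly random unit vector.

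For the second moment, I would avoid any integration entirely. Writing $\sum_{i=1}^d u_i^2 = 1$ identically, taking expectations, and observing that $\E[u_i^2]$ does not depend on $i$ by symmetry under coordinate permutations, I immediately get $\E[u_1^2]=1/d$, hence $\E[\beta^2]=1/d$.

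For the first moment, I would use the well-known fact that the marginal density of $u_1$ on $[-1,1]$ is
\begin{equation*}
f(t) \;=\; \frac{\Gamma(d/2)}{\Gamma((d-1)/2)\sqrt{\pi}}\,(1-t^2)^{(d-3)/2},
\end{equation*}
which can be derived either from the surface-area element of $\mathbb{S}_{d-1}$ in spherical coordinates, or by writing $\bu = \mathbf{z}/\|\mathbf{z}\|$ with $\mathbf{z}\sim\mathcal{N}(\mathbf{0},\mathbf{I}_d)$ and noting that $u_1^2 \sim \mathrm{Beta}(1/2,(d-1)/2)$. Then I would compute
\begin{equation*}
\E[|\beta|] \;=\; 2\int_0^1 t\,f(t)\,dt \;=\; \frac{2\,\Gamma(d/2)}{\Gamma((d-1)/2)\sqrt{\pi}}\int_0^1 t(1-t^2)^{(d-3)/2}\,dt,
\end{equation*}
and evaluate the inner integral via the substitution $s=1-t^2$, giving $1/(d-1)$. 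Finally, I would use the recursion $\Gamma((d+1)/2) = \tfrac{d-1}{2}\,\Gamma((d-1)/2)$ to rewrite $\tfrac{2}{(d-1)\Gamma((d-1)/2)} = \tfrac{1}{\Gamma((d+1)/2)}$, which collapses the expression to the claimed form $\Gamma(d/2)/(\Gamma((d+1)/2)\sqrt{\pi})$.

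There is no real obstacle; both identities are classical and the only subtlety is bookkeeping with the gamma function recursion in the final simplification. If I wanted to avoid invoking the marginal density as a fact, the main technical step would be deriving it, which is a standard Jacobian computation on the sphere (or equivalently, computing the distribution of $z_1^2/\|\mathbf{z}\|^2$ using independence of $\mathbf{z}/\|\mathbf{z}\|$ and $\|\mathbf{z}\|$ for isotropic Gaussians). Everything else is routine.
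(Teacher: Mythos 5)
Your proof is correct, but it follows a different route from the paper's. For $\E[|\beta|]$, the paper avoids the marginal density altogether: it writes $\bu=\ba/\|\ba\|$ with $\ba\sim\mathcal{N}(\mathbf{0},\mathbf{I})$, uses the independence of $\ba/\|\ba\|$ and $\|\ba\|$ to factor $\E[|\ba^\top\bg|]=\E[|\beta|]\,\E[\|\ba\|]$, and then reads off both factors from the half-normal mean $\sqrt{2/\pi}$ and the chi-distribution mean $\sqrt{2}\,\Gamma(\tfrac{d+1}{2})/\Gamma(\tfrac{d}{2})$. You instead integrate $|t|$ directly against the marginal density of one coordinate, $f(t)\propto(1-t^2)^{(d-3)/2}$, and your substitution $s=1-t^2$ together with the recursion $\Gamma(\tfrac{d+1}{2})=\tfrac{d-1}{2}\Gamma(\tfrac{d-1}{2})$ does land exactly on the claimed constant. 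The two are essentially dual: the paper's factorization trick trades the need for the density against the need for two named distributional facts, while your route requires the density but is self-contained once you have it. Note that the paper itself derives this density only in a \emph{later} lemma (via hyperspherical cap areas), so if you rely on it here you should either cite it as standard or include the short Beta/Jacobian derivation you sketch. For $\E[\beta^2]$ your symmetry argument ($\sum_i u_i^2=1$ plus exchangeability of coordinates) is more elementary than the paper's Gaussian trace computation $\E[\|\ba\|^2]=d$, and is arguably the cleaner of the two.
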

\begin{proof}
	Let $\ba\sim\mathcal{N}(\mathbf{0}, \mathbf{I})\in \mathbb{R}^d$, then we let $\bu=\frac{\ba}{\|\ba\|}$. Hence $\beta=\frac{\ba^\top \bg}{\|\ba\|}$. Note that $\frac{\ba}{\|\ba\|}$ and $\|\ba\|$ are independent because the distribution of $\frac{\ba}{\|\ba\|}$ is always the uniform distribution on the unit hypersphere given any restriction to the value of $\|\ba\|$. Therefore, $\beta=\frac{\ba^\top \bg}{\|\ba\|}$ and $\|\ba\|$ are also independent, so $|\beta|$ and $\|\ba\|$ are independent. Noting that $|\beta|\|\ba\|=|\ba^\top \bg|$, we have
	\begin{equation}
		\E[|\ba^\top \bg|] = \E[|\beta|] \E[\|\ba\|]. \label{eq:independ-divide}
	\end{equation}
	Since $\ba^\top \bg$ is an affine transformation of the multivariate Gaussian variable $\ba$, $\ba^\top \bg$ also has a Gaussian distribution with the mean 0 and the variance $\bg^\top \mathbf{I} \bg = 1$, so $\ba^\top \bg\sim \mathcal{N}(0, 1)$. Therefore, $|\ba^\top \bg|$ follows the folded normal distribution (actually its special case: the half-normal distribution), and by the formula in \citet{tsagris2014folded}, \begin{equation}
		\E[|\ba^\top \bg|]=\frac{\sqrt{2}}{\sqrt{\pi}}. \label{eq:half-normal}
	\end{equation}
	$\|\ba\|$ follows the chi distribution with $d$ degrees of freedom, so by the formula of its mean
	\begin{equation}
		\E[\|\ba\|]=\sqrt{2}\frac{\Gamma(\frac{d+1}{2})}{\Gamma(\frac{d}{2})}. \label{eq:chi}
	\end{equation}
	Substituting Eq.~\eqref{eq:half-normal} and Eq.~\eqref{eq:chi} into Eq.~\eqref{eq:independ-divide}, we have proved Eq.~\eqref{eq:expect-abs}.
	
	Since $\beta$ and $\|\ba\|$ are independent, similarly to Eq.~\eqref{eq:independ-divide}, we have
	\begin{equation}
		\E[(\ba^\top \bg)^2] = \E[\beta^2] \E[\|\ba\|^2]. \label{eq:independ-divide-square}
	\end{equation}
	Since $\ba\sim \mathcal{N}(\mathbf{0}, \mathbf{I})$, $\E[\ba \ba^\top] = \mathbf{I}$. Hence $\E[(\ba^\top \bg)^2] = \E[\bg^\top \ba \cdot \ba^\top \bg] = \bg^\top \E[\ba \ba^\top] \bg = \|\bg\|^2 = 1$, and $\E[\|\ba\|^2] = \E[\text{Tr}(\ba \ba^\top)] = \text{Tr}(\E[\ba \ba^\top]) = d$. By Eq.~\eqref{eq:independ-divide-square}, Eq.~\eqref{eq:expect-square} has been proved.
	
\end{proof}
\begin{lemma}
	\label{lemma:cond-abs-expect}
	Suppose $\bg$ is a fixed vector in $\mathbb{R}^d$ with $\|\bg\|=1$. Suppose $\bp$ is another fixed vector in $\mathbb{R}^d$ with $\|\bp\|=1$, and let $\beta_p \coloneqq \bg^\top\bp$. Let $\bu$ be a random vector uniformly sampled from the unit hypersphere in the $(d-1)$-dimensional subspace orthogonal to $\bp$. Specifically, $\bu$ can be constructed as $\bu=\overline{\bm{\xi}-\bm{\xi}^\top \bp\cdot \bp}$ where $\bm{\xi}\sim \mathcal{U}(\mathbb{S}_{d-1})$. Let $\beta_\perp \coloneqq \bg^\top \bu$, then
	\begin{equation}
		\E[|\beta_\perp|]=\frac{\Gamma(\frac{d-1}{2})}{\Gamma(\frac{d}{2})\sqrt{\pi}}\sqrt{1-\beta_p^2}. \label{eq:cond-expect}
	\end{equation}
\end{lemma}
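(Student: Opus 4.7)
The plan is to reduce this to the previous Lemma~\ref{lemma:expect-alpha} by projecting $\bg$ onto the subspace $V\coloneqq\{\bv\in\mathbb{R}^d : \bv^\top\bp=0\}$, which is the $(d-1)$-dimensional subspace on whose unit sphere $\bu$ is uniformly distributed. First I would decompose $\bg = \beta_p\bp + \bg_\perp$, where $\bg_\perp \coloneqq \bg - \beta_p\bp$ lies in $V$ and has squared norm $\|\bg\|^2 - \beta_p^2 = 1-\beta_p^2$. Since $\bu\in V$ (i.e., $\bu^\top\bp=0$), the inner product collapses to $\beta_\perp = \bu^\top\bg = \bu^\top\bg_\perp$.

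Next, assuming for the moment that $|\beta_p|<1$, I would normalize: define $\bg'_\perp \coloneqq \bg_\perp/\sqrt{1-\beta_p^2}$, a fixed unit vector inside the $(d-1)$-dimensional space $V$. Then
\begin{equation}
|\beta_\perp| = \sqrt{1-\beta_p^2}\,|\bu^\top \bg'_\perp|,
\end{equation}
so linearity of expectation gives $\E[|\beta_\perp|] = \sqrt{1-\beta_p^2}\,\E[|\bu^\top\bg'_\perp|]$. Because $\bu$ is uniform on the unit sphere of $V$, which is isometric to $\mathbb{S}_{d-2}\subset\mathbb{R}^{d-1}$, I can invoke Eq.~\eqref{eq:expect-abs} of Lemma~\ref{lemma:expect-alpha} with dimension $d-1$ in place of $d$, yielding $\E[|\bu^\top\bg'_\perp|]=\Gamma(\tfrac{d-1}{2})/(\Gamma(\tfrac{d}{2})\sqrt{\pi})$. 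Combining these gives the claimed identity.

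To justify applying the previous lemma in the subspace $V$, I would point out that the construction $\bu=\overline{\bm\xi-\bm\xi^\top\bp\cdot\bp}$ with $\bm\xi\sim\mathcal{U}(\mathbb{S}_{d-1})$ is rotationally invariant within $V$: any orthogonal transformation fixing $\bp$ permutes the law of $\bm\xi$, hence fixes the law of $\bu$. The only rotationally-invariant distribution on a sphere is uniform, confirming that $\bu\sim\mathcal{U}(V\cap\mathbb{S}_{d-1})$, so Lemma~\ref{lemma:expect-alpha} applies verbatim on this $(d-1)$-dimensional subspace.

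Finally I would dispose of the boundary case $|\beta_p|=1$: then $\bg=\pm\bp$, so $\bg_\perp=\mathbf 0$ and $\beta_\perp=0$ almost surely, while $\sqrt{1-\beta_p^2}=0$, so both sides of Eq.~\eqref{eq:cond-expect} vanish and the identity still holds. I do not expect a major obstacle: the heart of the argument is simply that the ``perpendicular component'' of $\bg$ is the only part that matters once $\bu\perp\bp$, and the previous lemma then handles everything. The one place to be careful is the dimension bookkeeping when invoking Eq.~\eqref{eq:expect-abs} with $d$ replaced by $d-1$, and the verification that $\bu$ really is uniform on the $(d-1)$-dimensional sphere in $V$.
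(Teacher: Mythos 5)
Your proposal is correct and follows essentially the same route as the paper: decompose $\bg = \beta_p\bp + \bg_\perp$, note that $\beta_\perp = \bg_\perp^\top\bu$ since $\bu\perp\bp$, and reduce to Lemma~\ref{lemma:expect-alpha} in the $(d-1)$-dimensional subspace with $\|\bg_\perp\| = \sqrt{1-\beta_p^2}$. Your added justifications (rotational invariance of the law of $\bu$ within the subspace, and the degenerate case $|\beta_p|=1$) are sound extra rigor that the paper leaves implicit, but they do not change the argument.
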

\begin{proof}
	To observe the property of $\beta_\perp$, since $\bu$ is orthogonal to $\bp$, we do the following decomposition for $\bg$:
	\begin{equation}
		\bg = \bg^\top \bp\cdot \bp + \bg_\perp = \beta_p \bp + \bg_\perp,
	\end{equation}
	where $\bg_\perp \coloneqq \bg - \bg^\top \bp\cdot \bp$ denotes the projection of $\bg$ to the $(d-1)$-dimensional subspace orthogonal to $\bp$. By expanding the inner product, we have
	\begin{equation}
		\|\bg_\perp\|^2 = 1 - 2\beta_p ^2 + \beta_p ^2 = 1 - \beta_p^2, \label{eq:norm-subspace}
	\end{equation}
	so $\|\bg_\perp\|=\sqrt{1 - \beta_p^2}$. Meanwhile,
	\begin{equation}
		\beta_\perp = \bg^\top \bu = (\bg^\top \bp\cdot \bp + \bg_\perp)^\top \bu = \bg_\perp^\top \bu. \label{eq:beta_perp}
	\end{equation}
	Therefore, $\beta_\perp$ is essentially the inner product between a random vector uniformly sampled from the unit hypersphere in a $(d-1)$-dimensional subspace and a fixed vector with norm $\sqrt{1-\beta_p^2}$ in this subspace. 
	Taking the expectation of the absolute values on both sides of Eq. \eqref{eq:beta_perp}, we have
	\begin{equation}
		\E[|\beta_\perp|] = \E[|\bg_\perp^\top \bu|] = \|\bg_\perp\| \E[ |\overline{\bg_\perp}^\top \bu| ].
	\end{equation}
	Since both $\overline{\bg_\perp}$ and $\bu$ reside in the $(d-1)$-dimensional subspace orthogonal to $\bp$, it follows that $\E[|\overline{\bg_\perp}^\top \bu|]$ corresponds to $\E[|\beta|]$ in Eq. \eqref{eq:expect-abs}, with $d$ replaced by $d-1$. Therefore,
	\begin{equation}
		\E[|\beta_\perp|] = \|\bg_\perp\| \E[ |\overline{\bg_\perp}^\top \bu| ] =  \frac{\Gamma(\frac{d-1}{2})}{\Gamma(\frac{d}{2})\sqrt{\pi}} \sqrt{1 - \beta_p^2},
	\end{equation} 
	and the proof is completed.
\end{proof}

\begin{lemma}
	Let $\beta$ be as defined in Lemma~\ref{lemma:expect-alpha}, then the probability density function of $\beta$ is (note that $-1\leq\beta\leq 1$)
	\begin{align}
		p(\beta)=\frac{\left(\sqrt{1-\beta^2}\right)^{d-3}}{\mathrm{B}\left(\frac{d-1}{2}, \frac{1}{2}\right)}, \label{eq:density-alpha}
	\end{align}
	where $\mathrm{B}(\cdot, \cdot)$ is the beta function.
\end{lemma}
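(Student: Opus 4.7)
The plan is to exploit the rotational invariance of the uniform distribution on $\mathbb{S}_{d-1}$ to reduce the problem to finding the marginal density of a single coordinate of $\bu$. Since the distribution of $\bu$ is invariant under orthogonal transformations, we can rotate so that $\bg = \mathbf{e}_1$, the first standard basis vector. Then $\beta = \bu^\top \bg = u_1$, and the task becomes computing the density of the first coordinate of a uniform random point on the unit sphere.

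The approach I would take is geometric: compute the $(d-2)$-dimensional area of the spherical slice $\{\bu \in \mathbb{S}_{d-1} : u_1 = \beta\}$ and then account for the Jacobian of the projection $\bu \mapsto u_1$. Parametrizing with the polar angle $\theta$ measured from $\mathbf{e}_1$, so that $u_1 = \cos\theta$, the infinitesimal surface area of the band at angle $\theta$ on $\mathbb{S}_{d-1}$ is proportional to $(\sin\theta)^{d-2}\,d\theta$. Substituting $\beta = \cos\theta$, we have $\sin\theta = \sqrt{1-\beta^2}$ and $d\beta = -\sin\theta\,d\theta$, which gives a band measure proportional to
\begin{equation*}
(\sin\theta)^{d-2}\,d\theta = (1-\beta^2)^{(d-2)/2} \cdot \frac{d\beta}{\sqrt{1-\beta^2}} = (1-\beta^2)^{(d-3)/2}\,d\beta.
\end{equation*}
Hence $p(\beta) \propto \bigl(\sqrt{1-\beta^2}\bigr)^{d-3}$ on $[-1,1]$.

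To identify the normalizing constant, I would compute
\begin{equation*}
\int_{-1}^{1} (1-\beta^2)^{(d-3)/2}\,d\beta
\end{equation*}
by the substitution $\beta = \sin\phi$, which converts the integral into $\int_{-\pi/2}^{\pi/2}(\cos\phi)^{d-2}\,d\phi$. A standard beta integral then yields $\mathrm{B}\!\left(\tfrac{d-1}{2}, \tfrac{1}{2}\right)$, matching the denominator in the claimed formula. Alternatively, one may write $\bu = \ba/\|\ba\|$ for $\ba \sim \mathcal{N}(\mathbf{0}, \mathbf{I})$, observe that $u_1^2 = a_1^2/(a_1^2 + \cdots + a_d^2)$ follows a $\mathrm{Beta}(1/2,(d-1)/2)$ distribution, and deduce the density of $\beta = u_1$ by a change of variables; this gives the same expression as a sanity check.

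The main (minor) obstacle is getting the coarea/Jacobian factor right: the density is not simply proportional to the area of the slice $(\sqrt{1-\beta^2})^{d-2}$, but carries an extra $1/\sqrt{1-\beta^2}$ from the tilt of the slice relative to the $u_1$-axis, producing the exponent $d-3$ rather than $d-2$. Once this factor is handled carefully, and the normalization is identified with the beta function via the substitution above, the result follows directly.
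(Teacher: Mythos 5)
Your proof is correct, and it takes a genuinely different route from the paper's. The paper works with the cumulative distribution function: it cites the closed-form surface area of a hyperspherical cap, $\tfrac{1}{2} A_d I_{2h-h^2}\bigl(\tfrac{d-1}{2}, \tfrac{1}{2}\bigr)$, writes $P(\beta\leq x)=\tfrac{1}{2}I_{1-x^2}\bigl(\tfrac{d-1}{2},\tfrac{1}{2}\bigr)$ for $x\leq 0$, differentiates to obtain the density, and invokes symmetry for $\beta>0$. You instead derive the density directly from the surface measure in polar coordinates, $(\sin\theta)^{d-2}\,d\theta$, carry the coarea Jacobian $1/\sqrt{1-\beta^2}$ through the substitution $\beta=\cos\theta$ to get the exponent $d-3$, and then compute the normalizing constant $\int_{-1}^{1}(1-\beta^2)^{(d-3)/2}\,d\beta=\mathrm{B}\bigl(\tfrac{d-1}{2},\tfrac{1}{2}\bigr)$ explicitly. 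The trade-off: the paper's route sidesteps the Jacobian subtlety entirely (the normalization comes for free from the cap-area formula) but leans on an external identity; yours is self-contained and makes the geometric origin of the $d-3$ exponent transparent, at the cost of having to handle the tilt factor carefully, which you do correctly. Your Gaussian sanity check via $u_1^2\sim\mathrm{Beta}\bigl(\tfrac{1}{2},\tfrac{d-1}{2}\bigr)$ is also valid and fits naturally with the construction $\bu=\ba/\|\ba\|$ already used in Lemma~\ref{lemma:expect-alpha}.
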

\begin{proof}
	We note that when $-1\leq x\leq 0$, $P(\beta\leq x)$ is equal to the ratio of the surface area of the hyperspherical cap of a hypersphere in $\mathbb{R}^d$ to the surface area of the hypersphere. For a hyperspherical cap with height $h$ on a unit hypersphere, its surface area is $\frac{1}{2} A_d I_{2h-h^2}\left(\frac{d-1}{2}, \frac{1}{2}\right)$, where $A_d$ is the surface area of the unit hypersphere in $\mathbb{R}^d$ and $I(\cdot, \cdot)$ is the regularized incomplete beta function. To compute $P(\beta\leq x)$ for $-1\leq x\leq 0$, substituting $h=x+1$ and dividing the area by $A_d$, we have
	\begin{align}
		P(\beta\leq x) = \frac{1}{2} I_{1-x^2}\left(\frac{d-1}{2}, \frac{1}{2}\right),
	\end{align}
	where $I$ is the regularized incomplete beta function, defined as
	\begin{align}
		I_x(a,b)=\frac{\int_0^x t^{a-1} (1-t)^{b-1} \mathrm{d}t}{\mathrm{B}(a, b)}.
	\end{align}
	Hence, the probability density function is
	\begin{align}
		p_\beta(x) &= \frac{\partial}{\partial x} P(\beta\leq x) \\
		&= \frac{1}{2} \cdot \frac{-2x}{\mathrm{B}\left(\frac{d-1}{2}, \frac{1}{2}\right)} \left(1-x^2\right)^{\frac{d-1}{2}-1} \left(1-(1-x^2)\right)^{\frac{1}{2}-1} \\
		&= \frac{-x}{|x|} \cdot \frac{\left(\sqrt{1-x^2}\right)^{d-3}}{\mathrm{B}\left(\frac{d-1}{2}, \frac{1}{2}\right)} \\
		&= \frac{\left(\sqrt{1-x^2}\right)^{d-3}}{\mathrm{B}\left(\frac{d-1}{2}, \frac{1}{2}\right)}.
	\end{align}
	Note that the last equality holds because $x\leq 0$.
	
	Therefore, we have proven Eq.~\eqref{eq:density-alpha} for $\beta\leq 0$. When $\beta>0$, the formula is the same due to the symmetry. The proof is completed. 
\end{proof}

\subsection{Analysis for Sign-OPT}
\label{sec:analysis_Sign-OPT}
We can compute the projection of $\nabla g(\theta)$ onto $S$ with $s=0$ by summing over all its projections onto the orthonormal basis:
\begin{equation}
	\mathbf{v} \coloneqq \sum_{i=1}^{q} \frac{g(\theta + \sigma \mathbf{u}_i) - g(\theta)}{\sigma} \cdot \mathbf{u}_i,
\end{equation}
where $\{\bu_1,\dots,\bu_q\}$ is a uniformly random orthonormal set of $q$ vectors in $\mathbb{R}^d$, so $\bu_i\sim \mathcal{U}(\mathbb{S}_{d-1})$ for any $i\leq q$.
However, in hard-label attacks, estimating the coefficient of each basis vector incurs high query cost. In the Sign-OPT estimator, each coefficient is replaced by its sign, which is much easier to obtain using hard-label queries:
\begin{align}
	\mathbf{v} &= \sum_{i=1}^{q} \text{sign}(g(\theta + \sigma \mathbf{u}_i) - g(\theta))\cdot \mathbf{u}_i.
\end{align}
In the following analysis, we assume that $g$ is differentiable at $\theta$ so that we have
$
g(\theta + \sigma \mathbf{u}) - g(\theta) = \sigma \cdot \nabla g(\theta)^\top \bu + o(\sigma)
$
where $\lim_{\sigma\to 0} \frac{o(\sigma)}{\sigma} = 0$ for any unit vector $\bu$. We further assume that $\sigma$ is sufficiently small so that we can omit $o(\sigma)$. In practice, if the target model is deterministic, picking a small $\sigma$ is feasible until the numerical error dominates. Therefore, in the following analysis, we assume that
\begin{align}
	\text{sign}(g(\theta + \sigma \mathbf{u}) - g(\theta)) \approx \text{sign}(\nabla g(\theta)^\top \bu), \label{eq:directional-approx}
\end{align}
where $\bu$ is a unit vector in $\mathbb{R}^d$. Now we can rewrite Sign-OPT in the following form:
\begin{align}
	\label{eq:simp-sign-opt}
	\mathbf{v} = \sum_{i=1}^{q} \text{sign}(\nabla g(\theta)^\top \bu_i)\cdot \bu_i.
\end{align}
Now, we present the proof of Theorem \ref{theorem:sign-opt} for the Sign-OPT estimator defined in Eq. \eqref{eq:simp-sign-opt}.
\begin{proof}
	Since $\{\bu_i\}_{i=1}^q$ are orthonormal, we have $\|\bv\|=\sqrt{\sum_{i=1}^q (\text{sign}(\nabla g(\theta)^\top \bu_i))^2}=\sqrt{q}$. We note that $\text{sign}(\nabla g(\theta)^\top \bu_i)=\text{sign}(\overline{\nabla g(\theta)}^\top \bu_i)$, so
	\begin{align}
		\mathbf{v} = \sum_{i=1}^{q} \text{sign}(\overline{\nabla g(\theta)}^\top \bu_i)\cdot \bu_i.
	\end{align}
	Hence
	\begin{align}
		\gamma=\frac{\bv^\top \overline{\nabla g(\theta)}}{\|\bv\|}&=\frac{1}{\sqrt{q}}\sum_{i=1}^q \text{sign}(\overline{\nabla g(\theta)}^\top \bu_i)\cdot (\overline{\nabla g(\theta)}^\top \bu_i) \\
		&=\frac{1}{\sqrt{q}}\sum_{i=1}^q \left|\overline{\nabla g(\theta)}^\top \bu_i\right|.
	\end{align}
	Since $\nabla g(\theta)$ is a fixed vector w.r.t. the randomness of $\{\bu_i\}_{i=1}^q$, and the marginal distribution of $\bu_i$ is $\mathcal{U}(\mathbb{S}_{d-1})$ for any $i$, by Eq.~\eqref{eq:expect-abs} we have 
	\begin{align}
		\E[\gamma] = \frac{1}{\sqrt{q}} q \frac{\Gamma(\frac{d}{2})}{\Gamma(\frac{d+1}{2})\sqrt{\pi}} = \sqrt{q} \frac{\Gamma(\frac{d}{2})}{\Gamma(\frac{d+1}{2})\sqrt{\pi}}.
	\end{align}
	Computing $\E[\gamma^2]$ is more complicated. First we have
	\begin{align}
		\gamma^2&=\frac{1}{q}\left(\sum_{i=1}^q \left|\overline{\nabla g(\theta)}^\top \bu_i\right|\right)^2 \\
		&=\frac{1}{q} \sum_{i=1}^q \left(\overline{\nabla g(\theta)}^\top \bu_i\right)^2 + \frac{1}{q} \sum_{i\neq j} \left|\overline{\nabla g(\theta)}^\top \bu_i\right| \cdot \left|\overline{\nabla g(\theta)}^\top \bu_j\right|. \label{eq:expression-alpha}
	\end{align}
	For the first part, by Eq.~\eqref{eq:expect-square} we have
	\begin{align}
		\forall i, \E[(\overline{\nabla g(\theta)}^\top \bu_i)^2] = \frac{1}{d}. \label{eq:expect-same}
	\end{align}
	For the second part, let us denote $\beta_i \coloneqq \overline{\nabla g(\theta)}^\top \bu_i$ and $\beta_j \coloneqq \overline{\nabla g(\theta)}^\top \bu_j$. Then we need to compute for $i \neq j$:
	\begin{align}
		\E[|\beta_i|\cdot |\beta_j|] &= \E_{\bu_i}[\E[(|\beta_i|\cdot |\beta_j|) \big| \bu_i]] \\
		&= \E_{\bu_i}[|\beta_i|\E[|\beta_j| \big| \bu_i]]. \label{eq:total-expect}
	\end{align}
	Next, we aim to compute $\E[|\beta_j| \big| \bu_i]$. Since $\bu_i$ and $\bu_j$ are orthonormal, conditioned on $\bu_i$, the vector $\bu_j$ is uniformly distributed on the unit hypersphere in the $(d-1)$-dimensional subspace orthogonal to $\bu_i$. When calculating the conditional expectation, we consider $\bu_i$ to be fixed and use Lemma~\ref{lemma:cond-abs-expect}. Specifically, in Lemma~\ref{lemma:cond-abs-expect} we let $\bg$ be $\overline{\nabla g(\theta)}$ and let $\bp$ be $\bu_i$. Then we have
	\begin{align}
		\E[|\beta_j| \big| \bu_i] = \frac{\Gamma(\frac{d-1}{2})}{\Gamma(\frac{d}{2})\sqrt{\pi}} \sqrt{1-\beta_i^2}. \label{eq:cond-expect-concrete}
	\end{align}
	Substituting Eq.~\eqref{eq:cond-expect-concrete} into Eq.~\eqref{eq:total-expect}, we have
	\begin{align}
		\E[|\beta_i|\cdot |\beta_j|] = \frac{\Gamma(\frac{d-1}{2})}{\Gamma(\frac{d}{2})\sqrt{\pi}}\E\left[|\beta_i|\sqrt{1-\beta_i^2}\right]. \label{eq:expect-prod-ij}
	\end{align}
	Here the distribution of $\beta_i$ is the same as that of $\beta$ in Lemma~\ref{lemma:expect-alpha}, and we need to compute $\E[|\beta|\sqrt{1-\beta^2}]$. By Eq.~\eqref{eq:expect-abs} and Eq.~\eqref{eq:density-alpha}, we have
	\begin{align}
		\frac{\Gamma(\frac{d}{2})}{\Gamma(\frac{d+1}{2})\sqrt{\pi}} &= \E[|\beta|] = \int_{-1}^1 p(\beta)|\beta| \mathrm{d}\beta \\
		&= \int_{-1}^1 |\beta| \frac{\left(\sqrt{1-\beta^2}\right)^{d-3}}{\mathrm{B}\left(\frac{d-1}{2}, \frac{1}{2}\right)} \mathrm{d}\beta,
	\end{align}
	so
	\begin{align}
		\int_{-1}^1 |\beta| \left(\sqrt{1-\beta^2}\right)^{d-3} \mathrm{d}\beta = \frac{\Gamma(\frac{d}{2})}{\Gamma(\frac{d+1}{2})\sqrt{\pi}}\mathrm{B}\left(\frac{d-1}{2}, \frac{1}{2}\right). \label{eq:int-multi}
	\end{align}
	Hence
	\begin{align}
		\E\left[|\beta|\sqrt{1-\beta^2}\right] &= \int_{-1}^1 |\beta|\sqrt{1-\beta^2}p(\beta) \mathrm{d}\beta \\
		&= \frac{1}{\mathrm{B}\left(\frac{d-1}{2}, \frac{1}{2}\right)}\int_{-1}^1 |\beta|\left(\sqrt{1-\beta^2}\right)^{d-2} \mathrm{d}\beta \\
		&= \frac{1}{\mathrm{B}\left(\frac{d-1}{2}, \frac{1}{2}\right)} \frac{\Gamma(\frac{d+1}{2})}{\Gamma(\frac{d+2}{2})\sqrt{\pi}}\mathrm{B}\left(\frac{d}{2}, \frac{1}{2}\right),
	\end{align}
	where the last equality is obtained by setting $d$ in Eq.~\eqref{eq:int-multi} to $d+1$. Therefore, by Eq.~\eqref{eq:expect-prod-ij} we have
	\begin{align}
		\E[|\beta_i|\cdot |\beta_j|] &= \frac{1}{\pi} \frac{\mathrm{B}\left(\frac{d}{2}, \frac{1}{2}\right)}{\mathrm{B}\left(\frac{d-1}{2}, \frac{1}{2}\right)}\frac{\Gamma(\frac{d-1}{2})}{\Gamma(\frac{d}{2})}\frac{\Gamma(\frac{d+1}{2})}{\Gamma(\frac{d+2}{2})} \\
		&= \frac{1}{\pi} \frac{\Gamma(\frac{d}{2})\Gamma(\frac{d}{2})}{\Gamma(\frac{d+1}{2})\Gamma(\frac{d-1}{2})}\frac{\Gamma(\frac{d-1}{2})}{\Gamma(\frac{d}{2})}\frac{\Gamma(\frac{d+1}{2})}{\Gamma(\frac{d+2}{2})} \\
		&= \frac{1}{\pi} \frac{\Gamma(\frac{d}{2})}{\Gamma(\frac{d+2}{2})} \\
		&= \frac{2}{\pi d}. \label{eq:expect-diff}
	\end{align}
	Here, the second equality is due to the identity $\mathrm{B}(a, b)=\frac{\Gamma(a)\Gamma(b)}{\Gamma(a+b)}$, and the last equality is due to the identity $\Gamma(a+1)=a\Gamma(a)$.
	
	Taking the expectation on both sides of Eq.~\eqref{eq:expression-alpha} and using Eq.~\eqref{eq:expect-same} and Eq.~\eqref{eq:expect-diff}, we have
	\begin{align}
		\E[\gamma^2] &= \frac{1}{q}\cdot q\cdot \frac{1}{d} + \frac{1}{q}\cdot q(q-1) \cdot \frac{2}{\pi d} \\
		&= \frac{1}{d} + \frac{2(q-1)}{\pi d} = \frac{1}{d}\left(\frac{2}{\pi}(q-1)+1\right).
	\end{align}
	The proof is completed.
\end{proof}

\subsection{Analysis for Prior-Sign-OPT}
\label{sec:analysis_Prior-Sign-OPT}
The Prior-Sign-OPT estimator is defined in Eq.~\eqref{eq:Prior-Sign-OPT}. Note that there are $s$ priors $\{\bp_1,\dots,\bp_s\}$ (we assume that they are normalized to have unit norm), and $\{\bu_{1}, \bu_2,\dots,\bu_{q-s}\}$ is a uniformly random orthonormal set of $q-s$ vectors in the $(d-s)$-dimensional subspace orthogonal to $\{\bp_1,\dots,\bp_s\}$. For convenience, we first consider the case of $s=1$, and the analysis can easily be generalized to the case of $s>1$.

\subsubsection{The Case of \ensuremath{s = 1}}
\label{sec:derivation_Prior-Sign-OPT_s=1}
When $s=1$, we rewrite the Prior-Sign-OPT estimator in the following form:
\begin{equation}
	\label{eq:simp-prior-sign-opt}
	\mathbf{v}^* = \text{sign}(\nabla g(\theta)^\top \bp)\cdot \bp + \sum_{i=1}^{q-1} \text{sign}(\nabla g(\theta)^\top \bu_i)\cdot \bu_i,
\end{equation}
where $\bp$ is the prior vector with $\|\bp\|=1$, and $\{\bu_i\}_{i=1}^{q-1}$ is the random orthonormal basis of the $(d-1)$-dimensional subspace orthogonal to $\bp$. Note that the directional derivative approximation is also employed, as in Eq.~\eqref{eq:directional-approx}.

\begin{theorem}
	\label{theorem:prior-sign-opt}
	For the Prior-Sign-OPT estimator defined in Eq.~\eqref{eq:simp-prior-sign-opt}, we let $\gamma \coloneqq \overline{\bv^*}^\top \overline{\nabla g(\theta)}$ be its cosine similarity to the true gradient, where the notation $\overline{\bv^*} \coloneqq \frac{\bv^*}{\|\bv^*\|}$ is defined to be the $\ell_2$ normalization of the corresponding vector, then
	\begin{align}
		\E[\gamma] &= \frac{1}{\sqrt{q}}\left[|\alpha|+(q-1)\sqrt{1-\alpha^2} \frac{\Gamma(\frac{d-1}{2})}{\Gamma(\frac{d}{2})\sqrt{\pi}}\right], \label{eq:prior-sign-opt-mean} \\
		\E[\gamma^2] &= \frac{1}{q}\Biggl[\alpha^2 + \frac{q-1}{d-1}\left(\frac{2}{\pi}(q-2)+1\right)(1-\alpha^2) + 2|\alpha|(q-1)\sqrt{1-\alpha^2}\frac{\Gamma(\frac{d-1}{2})}{\Gamma(\frac{d}{2})\sqrt{\pi}}\Biggr], \label{eq:prior-sign-opt-square}
	\end{align}
	where $\alpha \coloneqq \bp^\top \overline{\nabla g(\theta)}$ is the cosine similarity between the prior and the true gradient.
\end{theorem}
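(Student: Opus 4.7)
The plan is to mirror the structure of the Sign-OPT analysis (Theorem \ref{theorem:sign-opt}) while exploiting the observation that, once conditioned on the prior $\bp$, the random basis $\{\bu_1,\dots,\bu_{q-1}\}$ behaves like a uniformly random orthonormal set in the $(d-1)$-dimensional subspace $\bp^\perp$.

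First I would compute the normalization. Since $\bp,\bu_1,\dots,\bu_{q-1}$ are orthonormal and each coefficient in Eq.~\eqref{eq:simp-prior-sign-opt} is $\pm 1$, we immediately get $\|\bv^*\|=\sqrt{q}$. Writing $\alpha=\bp^\top\overline{\nabla g(\theta)}$ and $\beta_i=\overline{\nabla g(\theta)}^\top\bu_i$, the identity $\mathrm{sign}(x)\cdot x=|x|$ yields
\begin{equation}
\gamma = \frac{1}{\sqrt{q}}\Bigl[|\alpha|+\sum_{i=1}^{q-1}|\beta_i|\Bigr],
\end{equation}
which reduces both moments to computations involving the $|\beta_i|$'s. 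The crucial structural fact I would use throughout is the orthogonal decomposition $\overline{\nabla g(\theta)}=\alpha\bp+\bg_\perp$ with $\|\bg_\perp\|=\sqrt{1-\alpha^2}$ (cf. Eq.~\eqref{eq:norm-subspace}); since $\bu_i\perp\bp$, we have $\beta_i=\bg_\perp^\top\bu_i$, so $|\beta_i|=\sqrt{1-\alpha^2}\,|\overline{\bg_\perp}^\top\bu_i|$, and $\{\bu_i\}$ is a uniformly random orthonormal set in $\bp^\perp\cong\mathbb{R}^{d-1}$.

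For the first moment, Lemma \ref{lemma:cond-abs-expect} directly gives $\E[|\beta_i|]=\frac{\Gamma((d-1)/2)}{\Gamma(d/2)\sqrt{\pi}}\sqrt{1-\alpha^2}$, and linearity of expectation yields Eq.~\eqref{eq:prior-sign-opt-mean}. For the second moment, I would expand
\begin{equation}
q\gamma^2=\alpha^2+2|\alpha|\sum_{i=1}^{q-1}|\beta_i|+\sum_{i=1}^{q-1}\beta_i^2+\sum_{i\neq j}|\beta_i||\beta_j|,
\end{equation}
and handle the four groups of terms separately. The cross term with $|\alpha|$ is immediate from the previous paragraph. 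For $\E[\beta_i^2]$, applying Eq.~\eqref{eq:expect-square} inside the $(d-1)$-dimensional subspace gives $\E[\beta_i^2]=\frac{1-\alpha^2}{d-1}$.

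The main obstacle, and essentially the only nontrivial step, is evaluating $\E[|\beta_i||\beta_j|]$ for $i\neq j$. Here I would re-use the two-step conditioning argument from the Sign-OPT proof (Eqs.~\eqref{eq:total-expect}--\eqref{eq:expect-diff}), but applied to the unit vector $\overline{\bg_\perp}$ within the $(d-1)$-dimensional subspace $\bp^\perp$: conditioned on $\bu_i$, the vector $\bu_j$ is uniform on the unit sphere of the $(d-2)$-dimensional subspace of $\bp^\perp$ orthogonal to $\bu_i$, so Lemma \ref{lemma:cond-abs-expect} applies with dimension $d-1$. Factoring out $\|\bg_\perp\|^2=1-\alpha^2$ and then reproducing the beta-function manipulations of Eq.~\eqref{eq:expect-diff} with $d$ replaced by $d-1$ gives
\begin{equation}
\E[|\beta_i||\beta_j|]=\frac{2(1-\alpha^2)}{\pi(d-1)}.
\end{equation}
Substituting the four pieces back, combining the two $(1-\alpha^2)/(d-1)$ contributions, and multiplying out yields exactly Eq.~\eqref{eq:prior-sign-opt-square}. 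The extension to general $s>1$ is then a routine repetition: the same decomposition works relative to the span of $\{\bp_1,\dots,\bp_s\}$, the prior part contributes $(\sum_i|\alpha_i|)^2$ after squaring, and the random part contributes the Sign-OPT formula in the $(d-s)$-dimensional orthogonal complement scaled by $1-\sum_i\alpha_i^2$.
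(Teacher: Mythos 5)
Your proposal is correct and follows essentially the same route as the paper: the same decomposition $\overline{\nabla g(\theta)}=\alpha\bp+\bg_\perp$ with $\|\bg_\perp\|=\sqrt{1-\alpha^2}$, the same reduction of the random part to a uniformly random orthonormal set in $\bp^\perp\cong\mathbb{R}^{d-1}$, and the same conditioning/beta-function computation for $\E[|\beta_i||\beta_j|]$. The only difference is presentational: the paper packages $\sum_i|\beta_i|=\bv_\perp^\top\bg$ as a Sign-OPT estimator for $\bg_\perp$ and cites Theorem \ref{theorem:sign-opt} with $q\to q-1$, $d\to d-1$, whereas you inline that same term-by-term calculation.
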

\begin{proof}
	Note that the property of $\text{sign}$ function (e.g., $\text{sign}(\nabla g(\theta)^\top \bu)=\text{sign}(\overline{\nabla g(\theta)}^\top \bu)$), in Eq.~\eqref{eq:simp-prior-sign-opt}, we denote
	\begin{align}
		\bv_\perp \coloneqq \sum_{i=1}^{q-1} \text{sign}(\nabla g(\theta)^\top \bu_i)\cdot \bu_i = \sum_{i=1}^{q-1} \text{sign}(\overline{\nabla g(\theta)}^\top \bu_i)\cdot \bu_i, \label{eq:v_perp}
	\end{align}
	then $\bv^*=\text{sign}(\alpha)\bp+\bv_\perp$. Now
	\begin{align}
		\gamma&=\frac{(\bv^*)^\top \overline{\nabla g(\theta)}}{\|\bv^*\|} \\
		&=\frac{1}{\sqrt{q}}(|\alpha|+\bv_\perp^\top \overline{\nabla g(\theta)}). \label{eq:prior-sign-opt-decomp}
	\end{align}
	The following argument is similar to that in the proof of Lemma~\ref{lemma:cond-abs-expect}. Let $\bg \coloneqq \overline{\nabla g(\theta)}$, and let $\bg_\perp \coloneqq \bg-\bg^\top \bp\cdot \bp$ denote the projection of $\bg$ to the $(d-1)$-dimensional subspace orthogonal to $\bp$. It follows that $\bv_\perp^\top \bg=\bv_\perp^\top \bg_\perp$. Moreover, since $\{\bu_i\}_{i=1}^{q-1}$ are orthogonal to $\bp$, we have $\bv_\perp=\sum_{i=1}^{q-1} \text{sign}(\bg_\perp^\top \bu_i)\cdot \bu_i$. Since $\{\bu_i\}_{i=1}^{q-1}$ are uniformly distributed on the unit hypersphere in the $(d-1)$-dimensional subspace orthogonal to $\bp$, and $\bg_\perp$ also resides in this subspace, $\bv_\perp$ can be considered as the Sign-OPT estimator for $\bg_\perp$ as in Eq.~\eqref{eq:simp-sign-opt}, with $q$ replaced by $q-1$ and the effective dimension being $d-1$ rather than $d$. By Eq.~\eqref{eq:mean-sign-opt} we have
	\begin{align}
		\E[\overline{\bv_\perp}^\top \overline{\bg_\perp}] = \sqrt{q-1}\frac{\Gamma(\frac{d-1}{2})}{\Gamma(\frac{d}{2})\sqrt{\pi}}. \label{eq:expect-inner-production-v_perp_and_g_perp}
	\end{align}
	Noting that $\|\bg_\perp\|=\sqrt{1-\alpha^2}$ by Eq.~\eqref{eq:norm-subspace} and $\|\bv_\perp\|=\sqrt{q-1}$, we have
	\begin{align}
		\E[\bv_\perp^\top \bg] &= \E[\bv_\perp^\top \bg_\perp] = \E[\overline{\bv_\perp}^\top \overline{\bg_\perp} \|\bv_\perp\|\|\bg_\perp\|] \\
		&= (q-1)\sqrt{1-\alpha^2}\frac{\Gamma(\frac{d-1}{2})}{\Gamma(\frac{d}{2})\sqrt{\pi}}. \label{eq:prior-sign-opt-expect-decomp}
	\end{align}
	Taking the expectation on both sides of Eq.~\eqref{eq:prior-sign-opt-decomp} and substituting Eq.~\eqref{eq:prior-sign-opt-expect-decomp}, Eq.~\eqref{eq:prior-sign-opt-mean} has been proved.
	
	Next we derive $\E[\gamma^2]$. By Eq.~\eqref{eq:prior-sign-opt-decomp} we have
	\begin{align}
		\gamma^2&=\frac{1}{q}(|\alpha|+\bv_\perp^\top \bg)^2 \\
		&=\frac{1}{q}(\alpha^2+(\bv_\perp^\top \bg)^2+2|\alpha|\bv_\perp^\top \bg). \label{eq:prior-sign-opt-decomp-square}
	\end{align}
	As discussed in the paragraph preceding Eq.~\eqref{eq:expect-inner-production-v_perp_and_g_perp}, $\bv_\perp$ can be considered as the Sign-OPT estimator for $\bg_\perp$ as in Eq.~\eqref{eq:simp-sign-opt}, with $q$ replaced by $q-1$ and the effective dimension being $d-1$ rather than $d$. By Eq.~\eqref{eq:square-sign-opt} we have
	\begin{align}
		\E[(\overline{\bv_\perp}^\top \overline{\bg_\perp})^2] = \frac{1}{d-1}\left(\frac{2}{\pi}(q-2)+1\right).
	\end{align}
	Noting that $\|\bg_\perp\|=\sqrt{1-\alpha^2}$ by Eq.~\eqref{eq:norm-subspace} and $\|\bv_\perp\|=\sqrt{q-1}$, we have
	\begin{align}
		\E[(\bv_\perp^\top \bg)^2] &= \E[(\bv_\perp^\top \bg_\perp)^2] = \E[(\overline{\bv_\perp}^\top \overline{\bg_\perp})^2 \|\bv_\perp\|^2\|\bg_\perp\|^2] \\
		&= \frac{q-1}{d-1}\left(\frac{2}{\pi}(q-2)+1\right)(1-\alpha^2). \label{eq:prior-sign-opt-expect-decomp-square}
	\end{align}
	Taking the expectation on both sides of Eq.~\eqref{eq:prior-sign-opt-decomp-square} and substituting Eq.~\eqref{eq:prior-sign-opt-expect-decomp} and Eq.~\eqref{eq:prior-sign-opt-expect-decomp-square}, Eq.~\eqref{eq:prior-sign-opt-square} has been proved.
\end{proof}

\subsubsection{The Case of \ensuremath{s > 1}}
\label{sec:derivation_Prior-Sign-OPT_s>1}
In the case of $s>1$, we rewrite the Prior-Sign-OPT estimator in the following form:
\begin{align}
	\label{eq:simp-prior-sign-opt-s>1}
	\mathbf{\bv^*} = \sum_{i=1}^{s} \text{sign}(\nabla g(\theta)^\top \bp_i)\cdot \bp_i + \sum_{i=1}^{q-s} \text{sign}(\nabla g(\theta)^\top \bu_i)\cdot \bu_i.
\end{align}
Now, we present the proof of Theorem \ref{theorem:prior-sign-opt-multi-priors} for the Prior-Sign-OPT estimator defined in Eq.~\eqref{eq:simp-prior-sign-opt-s>1}.
\begin{proof}
	The following argument is similar to that in the proof of Theorem~\ref{theorem:prior-sign-opt}. Now we have
	\begin{align}
		\bv_\perp \coloneqq \sum_{i=1}^{q-s} \text{sign}(\nabla g(\theta)^\top \bu_i)\cdot \bu_i = \sum_{i=1}^{q-s} \text{sign}(\overline{\nabla g(\theta)}^\top \bu_i)\cdot \bu_i, \label{eq:v_perp-s>1}
	\end{align}
	then $\bv^*=\sum_{i=1}^{s} \text{sign}(\alpha_i)\bp_i+\bv_\perp$, and
	\begin{align}
		\gamma&=\frac{(\bv^*)^\top \overline{\nabla g(\theta)}}{\|\bv^*\|} \\
		&=\frac{1}{\sqrt{q}}\left(\sum_{i=1}^{s} |\alpha_i|+\bv_\perp^\top \overline{\nabla g(\theta)}\right). \label{eq:prior-sign-opt-decomp-s>1}
	\end{align}
	Let $\bg \coloneqq \overline{\nabla g(\theta)}$, and let $\bg_\perp \coloneqq \bg- \sum_{i=1}^{s} \bg^\top\bp_i\cdot \bp_i$ denote the projection of $\bg$ to the $(d-s)$-dimensional subspace orthogonal to $\{\bp_i\}_{i=1}^{s}$. It follows that $\bv_\perp^\top \bg=\bv_\perp^\top \bg_\perp$. Using a similar analysis to that in the case of $s=1$, when $s>1$ we have
	\begin{align}
		\E[\overline{\bv_\perp}^\top \overline{\bg_\perp}] = \sqrt{q-s}\frac{\Gamma(\frac{d-s}{2})}{\Gamma(\frac{d-s+1}{2})\sqrt{\pi}}.
		\label{eq:expect-inner-production-v_perp_and_g_perp-s>1}
	\end{align}
	Similar to the derivation of Eq.~\eqref{eq:norm-subspace}, we can derive that $\|\bg_\perp\|=\sqrt{1-\sum_{i=1}^{s}\alpha_i^2}$. Since $\|\bv_\perp\|=\sqrt{q-s}$, we have
	\begin{align}
		\E[\bv_\perp^\top \bg] &= \E[\bv_\perp^\top \bg_\perp] = \E[\overline{\bv_\perp}^\top \overline{\bg_\perp} \|\bv_\perp\|\|\bg_\perp\|] \\
		&= (q-s)\sqrt{1-\sum_{i=1}^{s} \alpha_i^2}\frac{\Gamma(\frac{d-s}{2})}{\Gamma(\frac{d-s+1}{2})\sqrt{\pi}}. \label{eq:prior-sign-opt-expect-decomp-s>1}
	\end{align}
	Taking the expectation on both sides of Eq.~\eqref{eq:prior-sign-opt-decomp-s>1} and substituting Eq.~\eqref{eq:prior-sign-opt-expect-decomp-s>1}, Eq.~\eqref{eq:prior-sign-opt-mean-s>1} has been proved.
	
	Next we derive $\E[\gamma^2]$. By Eq.~\eqref{eq:prior-sign-opt-decomp-s>1} we have
	\begin{align}
		\gamma^2&=\frac{1}{q}\left(\sum_{i=1}^{s} |\alpha_i|+\bv_\perp^\top \bg\right)^2 \\
		&=\frac{1}{q}\left(\left(\sum_{i=1}^{s} |\alpha_i|\right)^2+\left(\bv_\perp^\top \bg\right)^2+2\cdot \left(\sum_{i=1}^{s} |\alpha_i|\right)\cdot \bv_\perp^\top \bg\right). \label{eq:prior-sign-opt-decomp-square-s>1}
	\end{align}
	Similar to the case of $s=1$, when $s>1$ we have
	\begin{align}
		\E[(\overline{\bv_\perp}^\top \overline{\bg_\perp})^2] = \frac{1}{d-s}\left(\frac{2}{\pi}(q-s-1)+1\right).
	\end{align}
	Noting that $\|\bg_\perp\|=\sqrt{1-\sum_{i=1}^{s} \alpha_i^2}$ and $\|\bv_\perp\|=\sqrt{q-s}$, we have
	\begin{align}
		\E[(\bv_\perp^\top \bg)^2] &= \E[(\bv_\perp^\top \bg_\perp)^2] = \E[(\overline{\bv_\perp}^\top \overline{\bg_\perp})^2 \|\bv_\perp\|^2\|\bg_\perp\|^2] \\
		&= \frac{q-s}{d-s}\left(\frac{2}{\pi}(q-s-1)+1\right)\left(1-\sum_{i=1}^{s} \alpha_i^2\right). \label{eq:prior-sign-opt-expect-decomp-square-s>1}
	\end{align}
	Taking the expectation on both sides of Eq.~\eqref{eq:prior-sign-opt-decomp-square-s>1} and substituting Eq.~\eqref{eq:prior-sign-opt-expect-decomp-s>1} and Eq.~\eqref{eq:prior-sign-opt-expect-decomp-square-s>1}, Eq.~\eqref{eq:prior-sign-opt-square-s>1} has been proved.
\end{proof}

\subsection{Analysis for Prior-OPT}
\label{sec:analysis_Prior-OPT}
The Prior-OPT estimator is defined in Eq.~\eqref{eq:Prior-OPT}. Note that there are $s$ priors $\{\bp_1, \dots, \bp_s\}$ (we assume that they have been normalized so that they have unit norm), and $\{\bu_{1}, \bu_2, \dots, \bu_{q-s}\}$ is a uniformly random orthonormal set of $q-s$ vectors in the $(d-s)$-dimensional subspace orthogonal to $\{\bp_1, \dots, \bp_s\}$. 
For convenience, we first consider the case of $s=1$, and the analysis can easily be generalized to the case of $s>1$.
\subsubsection{The Case of \ensuremath{s = 1}}
When $s=1$, we rewrite the Prior-OPT estimator in the following form:
\begin{align}
	\bv^* = \nabla g(\theta)^\top \bp\cdot \bp + \nabla g(\theta)^\top \overline{\bv_\perp}\cdot \overline{\bv_\perp}, \label{eq:simp-prior-opt}
\end{align}
where
\begin{align}
	\bv_\perp \coloneqq \sum_{i=1}^{q-1} \text{sign}(\nabla g(\theta)^\top \bu_i)\cdot \bu_i.
	\label{eq:v_perp_Prior-OPT_s=1}
\end{align}
Here $\bp$ is the prior vector with $\|\bp\|=1$, and $\{\bu_i\}_{i=1}^{q-1}$ is the random orthonormal basis of the $(d-1)$-dimensional subspace orthogonal to $\bp$. Note that we employ the directional derivative approximation as in Eq.~\eqref{eq:directional-approx}. Furthermore, it should also be noted that $\bv_\perp$ defined in Eq. \eqref{eq:v_perp_Prior-OPT_s=1} is consistent with that in Eq. \eqref{eq:v_perp}, and thus the conclusions regarding $\bv_\perp$ derived in Appendix \ref{sec:derivation_Prior-Sign-OPT_s=1} remain valid in this section (e.g., Eq. \eqref{eq:prior-sign-opt-expect-decomp}). 

\begin{theorem}
	For the Prior-OPT estimator defined in Eq.~\eqref{eq:simp-prior-opt}, we let $\gamma \coloneqq \overline{\bv^*}^\top \overline{\nabla g(\theta)}$ be its cosine similarity to the true gradient, where the notation $\overline{\bv^*} \coloneqq \frac{\bv^*}{\|\bv^*\|}$ is defined to be the $\ell_2$ normalization of the corresponding vector, then
	\begin{align}
		\E[\gamma] &\geq \sqrt{\alpha^2 + \frac{(q-1)(1-\alpha^2)}{\pi} \left (\frac{\Gamma(\frac{d-1}{2})}{\Gamma(\frac{d}{2})} \right )^2}, \label{eq:prior-opt-lower-bound} \\
		\E[\gamma] &\leq \sqrt{\alpha^2+\frac{1}{d-1}\left(\frac{2}{\pi}(q-2)+1\right)(1-\alpha^2)}, \label{eq:prior-opt-upper-bound} \\
		\E[\gamma^2] &= \alpha^2+\frac{1}{d-1}\left(\frac{2}{\pi}(q-2)+1\right)(1-\alpha^2), \label{eq:prior-opt-square}
	\end{align}
	where $\alpha \coloneqq \bp^\top \overline{\nabla g(\theta)}$ is the cosine similarity between the prior and the true gradient.
\end{theorem}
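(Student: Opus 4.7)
The plan is to first obtain a simple closed-form expression for $\gamma$ by exploiting the orthogonality between the two directions appearing in Eq.~\eqref{eq:simp-prior-opt}, then reduce the statistics of the orthogonal-subspace coefficient to the Sign-OPT result already proved in Theorem~\ref{theorem:sign-opt}, and finally obtain the three claims via elementary Jensen-type inequalities. Setting $\bg \coloneqq \overline{\nabla g(\theta)}$ and $\beta \coloneqq \bg^\top \overline{\bv_\perp}$, I first note that $\bv_\perp$ is a linear combination of the $\bu_i$'s, which are orthogonal to $\bp$, so $\bp$ and $\overline{\bv_\perp}$ form an orthonormal pair. Rewriting Eq.~\eqref{eq:simp-prior-opt} as $\bv^* = \|\nabla g(\theta)\|(\alpha\bp + \beta\overline{\bv_\perp})$ gives $\|\bv^*\| = \|\nabla g(\theta)\|\sqrt{\alpha^2+\beta^2}$ and $(\bv^*)^\top \bg = \|\nabla g(\theta)\|(\alpha^2+\beta^2)$, so a direct calculation yields
\begin{equation*}
\gamma = \sqrt{\alpha^2+\beta^2}.
\end{equation*}
This Pythagorean identity is the crux of the argument and reduces the rest of the proof to an analysis of the single scalar $\beta$.

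Next I would analyze $\beta$ by decomposing $\bg = \alpha\bp + \bg_\perp$ with $\|\bg_\perp\| = \sqrt{1-\alpha^2}$, exactly as in the proof of Theorem~\ref{theorem:prior-sign-opt}. Since $\bv_\perp \perp \bp$, this gives $\beta = \sqrt{1-\alpha^2}\cdot(\overline{\bg_\perp}^\top \overline{\bv_\perp})$. Crucially, $\bv_\perp$ in Eq.~\eqref{eq:v_perp_Prior-OPT_s=1} is precisely the Sign-OPT estimator for the fixed vector $\bg_\perp$ inside the $(d-1)$-dimensional subspace orthogonal to $\bp$, built from the $q-1$ uniformly random orthonormal vectors $\bu_1,\dots,\bu_{q-1}$ in that subspace. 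Applying Theorem~\ref{theorem:sign-opt} with $d$ replaced by $d-1$ and $q$ replaced by $q-1$ therefore yields
\begin{align*}
\E\!\left[\overline{\bv_\perp}^\top \overline{\bg_\perp}\right] &= \sqrt{q-1}\,\frac{\Gamma(\tfrac{d-1}{2})}{\Gamma(\tfrac{d}{2})\sqrt{\pi}}, \\
\E\!\left[\left(\overline{\bv_\perp}^\top \overline{\bg_\perp}\right)^2\right] &= \frac{1}{d-1}\left(\frac{2}{\pi}(q-2)+1\right).
\end{align*}
The identity $\overline{\bv_\perp}^\top \overline{\bg_\perp} = \tfrac{1}{\sqrt{q-1}}\sum_i |\overline{\bg_\perp}^\top \bu_i|$, already used in the Sign-OPT proof, additionally shows that $\beta \geq 0$ almost surely, which will be needed so that $\E[\beta]$ (rather than $\E[|\beta|]$) appears in the lower bound.

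The equality for $\E[\gamma^2]$ then follows immediately from $\gamma^2 = \alpha^2 + \beta^2$ together with $\E[\beta^2] = (1-\alpha^2)\,\E[(\overline{\bv_\perp}^\top \overline{\bg_\perp})^2]$, reproducing Eq.~\eqref{eq:prior-opt-square}. For the bounds on $\E[\gamma]$, I would apply Jensen's inequality to $\gamma = \sqrt{\alpha^2+\beta^2}$ in two different ways. The concavity of $x \mapsto \sqrt{x}$ gives $\E[\gamma] \leq \sqrt{\E[\gamma^2]}$, reproducing Eq.~\eqref{eq:prior-opt-upper-bound}. The map $\beta \mapsto \sqrt{\alpha^2+\beta^2}$ is convex (its second derivative $\alpha^2/(\alpha^2+\beta^2)^{3/2}$ is non-negative), so Jensen gives $\E[\gamma] \geq \sqrt{\alpha^2 + (\E[\beta])^2}$; substituting the value of $\E[\beta]$ obtained above reproduces Eq.~\eqref{eq:prior-opt-lower-bound}. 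The main obstacle is not technical depth but recognizing the structural reduction $\gamma = \sqrt{\alpha^2+\beta^2}$ coming from the 2D projection structure of $\bv^*$; once this is in place, the nonlinearity of $\gamma$ in $\beta$ is exactly what forces us to sandwich $\E[\gamma]$ between the two Jensen bounds instead of giving it in closed form.
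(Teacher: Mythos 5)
Your proposal is correct, and the core of it — the Pythagorean identity $\gamma=\sqrt{\alpha^2+\beta^2}$ with $\beta\coloneqq\overline{\bv_\perp}^\top\bg$, the reduction of $\bv_\perp$ to a Sign-OPT estimator for $\bg_\perp$ in the $(d-1)$-dimensional subspace, the resulting formula for $\E[\gamma^2]$, and the upper bound via $\E[\gamma]\le\sqrt{\E[\gamma^2]}$ — coincides exactly with the paper's proof. The one place you genuinely diverge is the lower bound. The paper introduces an auxiliary estimator $\widetilde{\bv^*}$ in which the random coefficient of $\overline{\bv_\perp}$ is replaced by its expectation, computes $\E[\widetilde{\gamma}]=\sqrt{\alpha^2+\E[\beta]^2}$, and then invokes the optimality of the subspace projection (Proposition 1 of \citet{meier2019improving}) to get the pointwise inequality $\gamma\ge\widetilde{\gamma}$. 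You instead apply Jensen's inequality directly to the convex map $\beta\mapsto\sqrt{\alpha^2+\beta^2}$, obtaining $\E[\gamma]\ge\sqrt{\alpha^2+(\E[\beta])^2}$ in one line. These are really the same inequality in two costumes: the paper's $\widetilde{\gamma}=\bigl(\alpha^2+\E[\beta]\,\beta\bigr)/\sqrt{\alpha^2+\E[\beta]^2}$ is precisely the tangent line of $\beta\mapsto\sqrt{\alpha^2+\beta^2}$ at $\beta=\E[\beta]$, so the projection-optimality step is just the supporting-hyperplane characterization of convexity. Your phrasing is more elementary and self-contained (no auxiliary estimator, no external citation), while the paper's phrasing generalizes more transparently to the $s>1$ case where the argument lives in an $(s+1)$-dimensional subspace; your convexity argument also extends there (convexity of $\beta\mapsto\sqrt{c+\beta^2}$ with $c=\sum_i\alpha_i^2$), so nothing is lost. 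Your side remark that $\beta\ge 0$ almost surely is a correct observation, though it is not strictly needed since the bound involves $(\E[\beta])^2$ and $\E[\beta]$ is computed exactly from Eq.~\eqref{eq:prior-sign-opt-expect-decomp}.
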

\begin{proof}
	Let $\bg \coloneqq \overline{\nabla g(\theta)}$. Then $\bv^*=\|\nabla g(\theta)\| (\bg^\top \bp\cdot \bp + \bg^\top \overline{\bv_\perp}\cdot \overline{\bv_\perp})$. We also note that $\bv_\perp$ is a linear combination of $\bu_1$ to $\bu_{q-1}$, all of which are orthogonal to $\bp$, so $\bv_\perp$ is also orthogonal to $\bp$. Therefore, $\|\bv^*\|=\|\nabla g(\theta)\|\sqrt{(\bp^\top \bg)^2+(\overline{\bv_\perp}^\top \bg)^2}$. Hence
	\begin{align}
		\gamma &= \frac{(\bv^*)^\top \overline{\nabla g(\theta)}}{\|\bv^*\|} \label{eq:prior-opt-decomp}\\
		&= \frac{(\bp^\top \bg)^2+(\overline{\bv_\perp}^\top \bg)^2}{\sqrt{(\bp^\top \bg)^2+(\overline{\bv_\perp}^\top \bg)^2}} \\
		&= \sqrt{(\bp^\top \bg)^2+(\overline{\bv_\perp}^\top \bg)^2}.
	\end{align}
	We define a new estimator
	\begin{align}
		\widetilde{\bv^*} &\coloneqq \nabla g(\theta)^\top \bp\cdot \bp + \E[\overline{\bv_\perp}^\top \nabla g(\theta)]\cdot \overline{\bv_\perp} \\
		&=\|\nabla g(\theta)\|\left(\bg^\top \bp\cdot \bp + \E[\overline{\bv_\perp}^\top \bg]\cdot \overline{\bv_\perp}\right).
	\end{align}
	Let $\widetilde{\gamma}$ be the cosine similarity between $\widetilde{\bv^*}$ and $\nabla g(\theta)$. Then
	\begin{align}
		\widetilde{\gamma} &\coloneqq \frac{(\widetilde{\bv^*})^\top \overline{\nabla g(\theta)}}{\|\widetilde{\bv^*}\|} \\
		&= \frac{(\bp^\top \bg)^2 + \E[\overline{\bv_\perp}^\top \bg]\overline{\bv_\perp}^\top \bg}{\sqrt{(\bp^\top \bg)^2 + \E[\overline{\bv_\perp}^\top \bg]^2}}.
	\end{align}
	Therefore,
	\begin{align}
		\E[\widetilde{\gamma}] &= \E \left [\frac{(\bp^\top \bg)^2 + \E[\overline{\bv_\perp}^\top \bg]\overline{\bv_\perp}^\top \bg}{\sqrt{(\bp^\top \bg)^2 + \E[\overline{\bv_\perp}^\top \bg]^2}} \right ] \\
		&= \frac{\alpha^2 + \E[\overline{\bv_\perp}^\top \bg]^2}{\sqrt{\alpha^2 + \E[\overline{\bv_\perp}^\top \bg]^2}} \\
		&= \sqrt{\alpha^2 + \E[\overline{\bv_\perp}^\top \bg]^2}.
	\end{align}
	Since $\E[\overline{\bv_\perp}^\top \bg] = \frac{1}{\|\bv_\perp\|}\E[\bv_\perp^\top \bg]$, we substitute Eq. \eqref{eq:prior-sign-opt-expect-decomp} and $\| \bv_\perp \| = \sqrt{q-1}$ into this expression, yielding $\E[\overline{\bv_\perp}^\top \bg] = \sqrt{q-1} \sqrt{1 - \alpha^2} \frac{\Gamma(\frac{d-1}{2})}{\Gamma(\frac{d}{2})\sqrt{\pi}}$. Hence,
	\begin{align}
		\E[\widetilde{\gamma}] = \sqrt{\alpha^2 + \frac{(q-1)(1-\alpha^2)}{\pi} \left (\frac{\Gamma(\frac{d-1}{2})}{\Gamma(\frac{d}{2})} \right )^2}.
	\end{align}
	The remaining part is to show the relationship between $\E[\gamma]$ and $\E[\widetilde{\gamma}]$. Note that $\bv^*$ is the projection of $\nabla g(\theta)$ on the $2$-dimensional subspace spanned by $\bp$ and $\overline{\bv_\perp}$. By Proposition 1 in~\citet{meier2019improving}, among all the vectors in the subspace spanned by $\bp$ and $\overline{\bv_\perp}$, $\bv^*$ has the largest cosine similarity with $\nabla g(\theta)$. Since $\widetilde{\bv^*}$ is a linear combination of $\bp$ and $\overline{\bv_\perp}$, $\gamma\geq \widetilde{\gamma}$ always holds. Therefore, $\E[\gamma] \geq \E[\widetilde{\gamma}]$, which directly proves the lower bound given in Eq. \eqref{eq:prior-opt-lower-bound}.
	
	Next, we derive $\E[\gamma^2]$. By Eq.~\eqref{eq:prior-opt-decomp} we have
	\begin{align}
		\E[\gamma^2] = \alpha^2 + \E[(\overline{\bv_\perp}^\top \bg)^2]. \label{eq:prior-opt-decomp-square}
	\end{align}
	Since $\|\bv_\perp\|=\sqrt{q-1}$, using Eq.~\eqref{eq:prior-sign-opt-expect-decomp-square} we have
	\begin{align}
		\E[(\overline{\bv_\perp}^\top \bg)^2] &= \frac{1}{\|\bv_\perp\|^2}\E[(\bv_\perp^\top \bg)^2] \\
		&= \frac{1}{d-1}\left(\frac{2}{\pi}(q-2)+1\right)(1-\alpha^2). \label{eq:prior-opt-expect-decomp-square}
	\end{align}
	Plugging Eq.~\eqref{eq:prior-opt-expect-decomp-square} into Eq.~\eqref{eq:prior-opt-decomp-square}, we obtain Eq. \eqref{eq:prior-opt-square}.
	
	Finally, by applying Jensen's inequality $(\E[\gamma])^2 \leq \E[\gamma^2]$, we derive the upper bound for $\E[\gamma]$, which leads to the following result:
	\begin{equation}
		\E[\gamma] \leq \sqrt{\E[\gamma^2]} = \sqrt{\alpha^2+\frac{1}{d-1}\left(\frac{2}{\pi}(q-2)+1\right)(1-\alpha^2)}.
	\end{equation}
	 This establishes Eq. \eqref{eq:prior-opt-upper-bound}, thereby completing the proof.
\end{proof}

\subsubsection{The Case of \ensuremath{s > 1}}
In the case of $s>1$, we rewrite the Prior-OPT estimator in the following form:
\begin{equation}
	\bv^* = \sum_{i=1}^{s} \nabla g(\theta)^\top \bp_i\cdot \bp_i + \nabla g(\theta)^\top \overline{\bv_\perp}\cdot \overline{\bv_\perp}, \label{eq:simp-prior-opt-s>1}
\end{equation}
where
\begin{equation}
	\bv_\perp \coloneqq \sum_{i=1}^{q-s} \text{sign}(\nabla g(\theta)^\top \bu_i)\cdot \bu_i. \label{eq:v_perp_Prior-OPT_s>1}
\end{equation}
Note that Eq. \eqref{eq:v_perp_Prior-OPT_s>1} approximates Eq.~\eqref{eq:Sign-OPT} under the directional derivative approximation. Furthermore, it should also be noted that $\bv_\perp$ defined in Eq. \eqref{eq:v_perp_Prior-OPT_s>1} is consistent with that in Eq. \eqref{eq:v_perp-s>1}, and thus the conclusions regarding $\bv_\perp$ derived in Appendix \ref{sec:derivation_Prior-Sign-OPT_s>1} remain valid in this section (e.g., Eq. \eqref{eq:prior-sign-opt-expect-decomp-s>1}). 

Now, we present the proof of Theorem \ref{theorem:prior-opt-multi-priors} for the Prior-OPT estimator defined in Eq.~\eqref{eq:simp-prior-opt-s>1}.
\begin{proof}
	Let $\bg \coloneqq \overline{\nabla g(\theta)}$. Then 
	\begin{align}
		\bv^*=\|\nabla g(\theta)\| \left(\sum_{i=1}^{s} \bg^\top \bp_i\cdot \bp_i + \bg^\top \overline{\bv_\perp}\cdot \overline{\bv_\perp}\right).
	\end{align}
	We also note that $\bv_\perp$ is a linear combination of $\bu_1$ to $\bu_{q-s}$, all of which are orthogonal to $\{\bp_i\}_{i=1}^{s}$, so $\bv_\perp$ is also orthogonal to $\{\bp_i\}_{i=1}^{s}$. Therefore, $\|\bv^*\|=\|\nabla g(\theta)\|\sqrt{\sum_{i=1}^{s}(\bp_i^\top \bg)^2+(\overline{\bv_\perp}^\top \bg)^2}$. Hence
	\begin{align}
		\gamma &= \frac{(\bv^*)^\top \overline{\nabla g(\theta)}}{\|\bv^*\|} \label{eq:prior-opt-decomp-s>1}\\
		&= \sqrt{\sum_{i=1}^{s} (\bp_i^\top \bg)^2+(\overline{\bv_\perp}^\top \bg)^2}.
	\end{align}
	We define a new estimator
	\begin{align}
		\widetilde{\bv^*} &\coloneqq \sum_{i=1}^{s}\nabla g(\theta)^\top \bp_i\cdot \bp_i + \E[\overline{\bv_\perp}^\top \nabla g(\theta)]\cdot \overline{\bv_\perp} \\
		&=\|\nabla g(\theta)\|\left(\sum_{i=1}^{s}\bg^\top \bp_i\cdot \bp_i + \E[\overline{\bv_\perp}^\top \bg]\cdot \overline{\bv_\perp}\right).
	\end{align}
	Let $\widetilde{\gamma}$ be the cosine similarity between $\widetilde{\bv^*}$ and $\nabla g(\theta)$. Then
	\begin{align}
		\widetilde{\gamma} &\coloneqq \frac{(\widetilde{\bv^*})^\top \overline{\nabla g(\theta)}}{\|\widetilde{\bv^*}\|} \\
		&= \frac{\sum_{i=1}^{s}(\bp_i^\top \bg)^2 + \E[\overline{\bv_\perp}^\top \bg]\overline{\bv_\perp}^\top \bg}{\sqrt{\sum_{i=1}^{s}(\bp_i^\top \bg)^2 + \E[\overline{\bv_\perp}^\top \bg]^2}}.
	\end{align}
	Therefore,
	\begin{align}
		\E[\widetilde{\gamma}] &= \E \left [\frac{\sum_{i=1}^{s}(\bp_i^\top \bg)^2 + \E[\overline{\bv_\perp}^\top \bg]\overline{\bv_\perp}^\top \bg}{\sqrt{\sum_{i=1}^{s}(\bp_i^\top \bg)^2 + \E[\overline{\bv_\perp}^\top \bg]^2}} \right ] \\
		&= \frac{\sum_{i=1}^{s}\alpha_i^2 + \E[\overline{\bv_\perp}^\top \bg]^2}{\sqrt{\sum_{i=1}^{s}\alpha_i^2 + \E[\overline{\bv_\perp}^\top \bg]^2}} \\
		&= \sqrt{\sum_{i=1}^{s}\alpha_i^2 + \E[\overline{\bv_\perp}^\top \bg]^2}.
	\end{align}
	
	Since $\E[\overline{\bv_\perp}^\top \bg] = \frac{1}{\|\bv_\perp\|}\E[\bv_\perp^\top \bg]$, we substitute Eq. \eqref{eq:prior-sign-opt-expect-decomp-s>1} and $\| \bv_\perp \| = \sqrt{q-s}$ into this expression, yielding $\E[\overline{\bv_\perp}^\top \bg] = \sqrt{q-s} \sqrt{1 - \sum_{i=1}^{s}\alpha_i^2} \frac{\Gamma(\frac{d-s}{2})}{\Gamma(\frac{d-s+1}{2})\sqrt{\pi}}$. Hence,
	\begin{align}
		\E[\widetilde{\gamma}] = \sqrt{\sum_{i=1}^{s}\alpha_i^2 + \frac{(q-s)\left(1-\sum_{i=1}^{s}\alpha_i^2\right)}{\pi} \left (\frac{\Gamma(\frac{d-s}{2})}{\Gamma(\frac{d-s+1}{2})} \right )^2}.
	\end{align}
	The remaining part is to show the relationship between $\E[\gamma]$ and $\E[\widetilde{\gamma}]$. We note that $\bv^*$ is the projection of $\nabla g(\theta)$ on the $(s+1)$-dimensional subspace spanned by $\{\bp_1, \bp_2, \dots, \bp_s, \overline{\bv_\perp}\}$. By Proposition 1 in \citet{meier2019improving}, among all the vectors in the subspace spanned by $\{\bp_1, \bp_2, \dots, \bp_s, \overline{\bv_\perp}\}$, $\bv^*$ has the largest cosine similarity with $\nabla g(\theta)$. Since $\widetilde{\bv^*}$ also lies in this subspace, $\gamma\geq \widetilde{\gamma}$ always holds. Therefore, $\E[\gamma] \geq \E[\widetilde{\gamma}]$, which directly proves the lower bound given in Eq. \eqref{eq:prior-opt-expectation_gamma_lower_bound}.
	
	Next, we derive $\E[\gamma^2]$. By Eq.~\eqref{eq:prior-opt-decomp-s>1} we have
	\begin{align}
		\E[\gamma^2] = \sum_{i=1}^{s} \alpha_i^2 + \E[(\overline{\bv_\perp}^\top \bg)^2]. \label{eq:prior-opt-decomp-square-s>1}
	\end{align}
	Since $\|\bv_\perp\|=\sqrt{q-s}$, using Eq.~\eqref{eq:prior-sign-opt-expect-decomp-square-s>1} we have
	\begin{align}
		\E[(\overline{\bv_\perp}^\top \bg)^2] &= \frac{1}{\|\bv_\perp\|^2}\E[(\bv_\perp^\top \bg)^2] \\
		&= \frac{1}{d-s}\left(\frac{2}{\pi}(q-s-1)+1\right)\left(1-\sum_{i=1}^{s} \alpha_i^2\right). \label{eq:prior-opt-expect-decomp-square-s>1}
	\end{align}
	Plugging Eq.~\eqref{eq:prior-opt-expect-decomp-square-s>1} into Eq.~\eqref{eq:prior-opt-decomp-square-s>1}, we obtain Eq. \eqref{eq:prior-opt-expectation_gamma_square-s>1}.

	Finally, by applying Jensen's inequality $(\E[\gamma])^2 \leq \E[\gamma^2]$, we derive the upper bound for $\E[\gamma]$, which leads to the following result:
	\begin{equation}
		\E[\gamma] \leq \sqrt{\E[\gamma^2]} = \sqrt{\sum_{i=1}^{s} \alpha_i^2+\frac{1}{d-s}\left(\frac{2}{\pi}(q-s-1)+1\right)\left(1-\sum_{i=1}^{s} \alpha_i^2\right)}.
	\end{equation}
	This establishes Eq. \eqref{eq:prior-opt-expectation_gamma_upper_bound}, thereby completing the proof.
\end{proof}

\section{Derivation of the Condition for Prior-OPT to Outperform Sign-OPT}
\label{sec:derivation_comparison_Sign-OPT_and_Prior-OPT}

With the formulas of $\mathbb{E}[\gamma^2]$ of Sign-OPT (Eq. \eqref{eq:square-sign-opt}) and Prior-OPT (Eq. \eqref{eq:prior-opt-expectation_gamma_square-s>1}), we now derive the exact value of $\alpha_i$ for which Prior-OPT can outperform Sign-OPT on gradient estimation.

Now, we rewrite the formulas of $\mathbb{E}[\gamma^2]$ of Sign-OPT and Prior-OPT as follows:
\begin{align}
		\E[\gamma^2]_\text{Sign-OPT} &= \frac{1}{d}\left(\frac{2}{\pi} (q-1) + 1\right),\\
		\E[\gamma^2]_\text{Prior-OPT} &= \sum_{i=1}^{s} \alpha_i^2+\frac{1}{d-s}\left(\frac{2}{\pi}(q-s-1)+1\right)\left(1-\sum_{i=1}^{s} \alpha_i^2\right).
\end{align}
We need to find the value of $\alpha_i$ such that $\E[\gamma^2]_\text{Prior-OPT} > \E[\gamma^2]_\text{Sign-OPT}$.

Let $A \coloneqq \sum_{i=1}^s \alpha_i^2$, and the inequality becomes:

\begin{equation}
	A + (1-A)\cdot\frac{1}{d-s} \left(\frac{2}{\pi}(q-s-1)+1\right) > \frac{1}{d} \left(\frac{2}{\pi} (q-1) + 1\right). \label{eq:A}
\end{equation}
Now let us simplify the left side of Eq. \eqref{eq:A} to $\E[\gamma^2]_\text{Prior-OPT} = A+(1-A)C_2$, where $C_2 \coloneqq \frac{1}{d-s}\left(\frac{2}{\pi}(q-s-1)+1\right)$. 

Then, let us simplify the right side of Eq. \eqref{eq:A} to $\E[\gamma^2]_\text{Sign-OPT} = C_1$, where $C_1 \coloneqq \frac{1}{d}\left(\frac{2}{\pi}(q-1)+1\right)$.

The inequality of Eq. \eqref{eq:A} becomes:

\begin{equation}
	A+(1-A)C_2 > C_1.
\end{equation}
We rearrange the above inequality as $A(1-C_2)+C_2>C_1$, and then we solve for $A$:
\begin{equation}
 A>\frac{C_1 - C_2}{1-C_2}. \label{eq:AC}
\end{equation}

Substituting the formulas of $A$, $C_1$, and $C_2$ into Eq. \eqref{eq:AC}, we have:

\begin{equation}
	\sum _ {i=1} ^ s \alpha _ i^2  > \frac{\frac{1}{d} \left(\frac{2}{\pi}(q-1)+1\right) - \frac{1}{d-s}\left(\frac{2}{\pi}(q-s-1)+1\right)}{1-\frac{1}{d-s}\left( \frac{2}{\pi} (q-s-1)+1\right)}. \label{eq:alpha_advantage}
\end{equation}

This is the condition of $\sum_{i=1}^s \alpha_i^2$ for Prior-OPT to outperform Sign-OPT. But this inequality is complex. Next, we show how to further simplify this inequality.
Under the reasonable assumptions that $q\ll d$, which implies that the input dimension is much larger than the total number of vectors (and consequently $s \ll d$ since $s<q$), the above inequality can be simplified.

We first approximate the denominator of Eq. \eqref{eq:alpha_advantage}, note that when $s \ll d$, we have $d-s \approx d$.  Therefore, the denominator simplifies to:

\begin{equation}
	D \coloneqq 1-\frac{1}{d-s}\left( \frac{2}{\pi} (q-s-1)+1\right) \approx 1-\frac{1}{d}\left(\frac{2}{\pi}(q-s-1)+1\right).
\end{equation}

Since $\frac{1}{d}\left(\frac{2}{\pi}(q-s-1)+1\right)$ is a small number because $q\ll d$ (denote it as $\epsilon$), the denominator becomes $D \approx 1-\epsilon \approx 1$.
Next, we simplify the numerator as:

\begin{align}
	N &\coloneqq \frac{1}{d} \left(\frac{2}{\pi}(q-1)+1\right) - \frac{1}{d-s}\left(\frac{2}{\pi}(q-s-1)+1\right) \\
	&\approx \frac{1}{d}\left(\frac{2}{\pi}(q-1)+1\right) - \frac{1}{d}\left(\frac{2}{\pi}(q-s-1)+1\right)  \\
	&= \frac{1}{d}\left(\frac{2}{\pi}(q-1)+1-\left(\frac{2}{\pi}(q-s-1)+1\right)\right) \\
	&= 	 \frac{2s}{\pi d}.
\end{align}

Now, let us substitute the simplified $N$ and $D$ into the right side of Eq. \eqref{eq:alpha_advantage}, we have

\begin{equation}
	\sum _ {i=1} ^ s \alpha _ i^2  > \frac{N}{D} \approx \frac{2s}{\pi d}.
\end{equation}

This is the simplified condition of $\sum_{i=1}^s \alpha_i^2$ for Prior-OPT to outperform Sign-OPT.

Dividing both sides by $s$, we get the condition for the average squared cosine similarity  $\overline{\alpha^2} \coloneqq \frac{1}{s}\sum_{i=1}^s \alpha_i^2 > \frac{2}{\pi d}$.
Since $\frac{2}{\pi d}$ is typically a very small value due to the large input dimension $d$, this threshold is relatively easy to satisfy.
Therefore, Prior-OPT generally outperforms Sign-OPT when the priors have even a minimal level of informativeness (non-zero $\alpha_i$).

\section{Discussions}
\subsection{\texorpdfstring{Prior Accuracy $\alpha_i$}{Prior Accuracy \textalpha\_i}}
$\alpha_i = \mathbf{p}_i^\top \overline{\nabla g(\theta)}$ is the cosine similarity between the $i$-th surrogate model's gradient (the $i$-th prior) and the true gradient. The value of $\alpha_i$ is only used in the theoretical analysis and is not required for a practical algorithm. Algorithm \ref{alg:Prior-OPT} does not require any $\alpha_i$ or the true gradient to run. We assume that $\alpha_i$ is known in the theoretical analysis so that we can analyze its impact on the expectation of the final estimated gradient's cosine similarity $\gamma$ to the true gradient, which derives the solutions of $\mathbb{E}[\gamma]$ and $\mathbb{E}[\gamma^2]$. Figs. \ref{fig:ablation_study_E_gamma} and \ref{fig:ablation_study_E_gamma_square} demonstrate the quantitative analysis for $\mathbb{E}[\gamma]$ and $\mathbb{E}[\gamma^2]$, respectively.

\subsection{Differences between OPT and Prior-OPT}
Although the gradient estimation formulas in OPT \citep{cheng2018queryefficient} and Prior-OPT (Eq. \eqref{eq:Prior-OPT}) exhibit some similarities, they differ in two key aspects.

First, the formula of Prior-OPT (Eq. \eqref{eq:Prior-OPT}) is not identical to that of OPT. In Eq. \eqref{eq:Prior-OPT}, the last term involves the $\ell_2$ normalization of $\bv_\perp$, where $\bv_\perp = \sum_{i=1}^{q-s} \text{sign}({g(\theta + \sigma \mathbf{u}_i) - g(\theta)}) \cdot \mathbf{u}_i.$ and $\bu_1,\dots,\bu_{q-s}$ are orthonormal random vectors. Consequently, Prior-OPT employs more precise finite-difference estimation for the priors (the first term), while relying on sign-based estimation for the random vector components. This distinction arises because random vectors $\bu_1,\dots,\bu_{q-s}$ are identically distributed, leading to a relatively consistent cosine similarity with the true gradient. This observation enables efficient sign-based estimation for random vectors. In contrast, the cosine similarities between the prior directions $\bp_1,\dots,\bp_s$ and the true gradient $\nabla g(\theta)$ are unknown and may differ significantly. Thus, the coefficients for priors require more precise estimation, necessitating a separate binary search procedure. Therefore, Prior-OPT is not merely a simple extension of OPT that incorporates priors, as it handles priors and random directions differently to address these challenges.

Second, OPT does not require its random directions to be orthogonal, while Prior-OPT explicitly does. Although a small number of randomly sampled vectors are approximately orthogonal in the high-dimensional space, this is not always the case for \textit{multiple priors}. Priors derived from potentially correlated models are less likely to be orthogonal to each other. If the Gram-Schmidt orthonormalization is omitted, the estimated gradient obtained using Eq. \eqref{eq:Prior-Sign-OPT} and Eq. \eqref{eq:Prior-OPT} may become less accurate, potentially degrading performance. Furthermore, the formulas of $\E[\gamma]$ and $\E[\gamma^2]$ derived from our theoretical analysis would no longer hold in such scenarios.

\subsection{Practicality of Theory}
No matter how complex a real-world situation is, the generation of adversarial examples mainly relies on gradient vectors that increase the classification loss to cause misclassification. Our theory focuses on the similarity between the estimated gradient and the true gradient, and it is applicable to all image classifiers. One of the most common challenges in real-world scenarios is the significant difference between models, leading to discrepancies in their gradients. We address this issue by introducing the variable $\alpha_i$ in Eqs. \eqref{eq:prior-sign-opt-mean-s>1}, \eqref{eq:prior-sign-opt-square-s>1}, \eqref{eq:prior-opt-expectation_gamma_lower_bound}, \eqref{eq:prior-opt-expectation_gamma_upper_bound}, and \eqref{eq:prior-opt-expectation_gamma_square-s>1}, where $\alpha_i$ represents the cosine similarity between the $i$-th prior and the true gradient and is assumed to be known in our theoretical analysis. In summary, our theory is universally applicable to real-world scenarios.

\section{Experimental Settings}
\label{sec:appendix_experimental_setting}
In this section, we provide the hyperparameter settings for our approach and the compared methods, which include HSJA, TA, G-TA, GeoDA, Evolutionary, Triangle Attack, SurFree, Sign-OPT, SVM-OPT, SQBA, and BBA.

\textbf{Experimental Equipment.} The experiments of all methods are conducted using PyTorch 1.7.1 framework on NVIDIA V100 and A100 GPUs. 
NVIDIA A100 GPU has TensorFloat-32 (TF32) tensor cores to improve computation speed, and enabling TF32 tensor cores causes a large relative error compared to double precision, especially in attacks on ViTs. Therefore, in all experiments, we set \texttt{torch.backends.cuda.matmul.allow\_tf32 = False} and \texttt{torch.backends.cudnn.allow\_tf32 = False} to obtain higher precision.

\textbf{CIFAR-10 dataset.} In the CIFAR-10 dataset, we select four networks as target models, including a 272-layer PyramidNet+ShakeDrop network (PyramidNet-272) \citep{han2017deep,yamada2019shakedrop}, two wide residual networks with 28 and 40 layers (WRN-28 and WRN-40) \citep{sergey2016WRN}, and DenseNet-BC-190 ($k=40$) \citep{huang2017densenet}. We use ResNet-110 as the surrogate model in the CIFAR-10 dataset. 

\textbf{Prior-OPT and Prior-Sign-OPT.} Table \ref{tab:prior-opt-setting} lists the hyperparameters of Prior-OPT and Prior-Sign-OPT.
Our implementation is based on the PyTorch framework.
In the targeted attack experiments on a given target class, we initialize the direction $\theta_0$ for both Prior-OPT and Prior-Sign-OPT towards the same reference image from that class, consistent with all baseline methods.
\begin{table}[h]
	\small
	\tabcolsep=0.1cm
	\setlength{\abovecaptionskip}{0pt}%
	\setlength{\belowcaptionskip}{0pt}%
	\caption{The hyperparameters of Prior-OPT and Prior-Sign-OPT.}
	\label{tab:prior-opt-setting}
	\begin{center}
		\begin{tabular}{cp{11cm}c}
			\toprule
			Dataset & \multicolumn{1}{c}{Hyperparameter} & Value \\
			\midrule
			\multirow{4}{*}{CIFAR-10} &$q$, total number of vectors for estimating a gradient, including priors and random vectors  & 200 \\
			&the binary search's stopping threshold & $\frac{\beta}{500}$ \\
			& the number of iterations & \nn{1000} \\
			& $\mathbf{g}_\text{max}$, the maximum gradient norm for the gradient clipping operation & 0.1 \\ 
			\midrule
			\multirow{4}{*}{ImageNet} &$q$, total number of vectors for estimating a gradient, including priors and random vectors & 200 \\
			& the binary search's stopping threshold & $1\times 10^{-4}$ \\
			& the number of iterations & \nn{1000} \\
			& $\mathbf{g}_\text{max}$, the maximum gradient norm for the gradient clipping operation & 1.0 \\ 
			\bottomrule
		\end{tabular}
	\end{center}
\end{table}
\begin{table}[h]
	\small
	\tabcolsep=0.1cm
	\setlength{\abovecaptionskip}{0pt}%
	\setlength{\belowcaptionskip}{0pt}%
	\caption{The hyperparameters of Sign-OPT and SVM-OPT.}
	\label{tab:SignOPT-setting}
	\begin{center}
		\begin{tabular}{p{12.5cm}c}
			\toprule
			\multicolumn{1}{c}{Hyperparameter} & Value \\
			\midrule
			$q$, the number of queries for estimating an approximate gradient & 200 \\
			the number of iterations & \nn{1000} \\
			the binary search's stopping threshold of the CIFAR-10 dataset & $\frac{\beta}{500}$ \\
			the binary search's stopping threshold of the ImageNet dataset & $1 \times 10^{-4}$ \\
			\bottomrule
		\end{tabular}
	\end{center}
\end{table}
\begin{table}[!h]
	\small
	\tabcolsep=0.1cm
	\setlength{\abovecaptionskip}{0pt}%
	\setlength{\belowcaptionskip}{0pt}%
	\caption{The hyperparameters of HSJA, TA and G-TA.}
	\label{tab:HSJA-setting}
	\begin{center}
		\begin{tabular}{p{11cm}c}
			\toprule
			\multicolumn{1}{c}{Hyperparameter} & Value \\
			\midrule
			$\gamma$, threshold of the binary search & $1.0$ \\
			$B_0$, the initial batch size for gradient estimation & $100$ \\
			$B_\text{max}$, the maximum batch size for gradient estimation  & \nn{10000} \\
			the search method for step size  & geometric progression \\
			the number of iterations & $64$ \\
			radius ratio $r$ for the ImageNet dataset in G-TA & $1.1$ \\
			radius ratio $r$ for the CIFAR-10 dataset in G-TA & $1.5$ \\
			\bottomrule
		\end{tabular}
	\end{center}
\end{table}
\begin{table}[!h]
	\small
	\tabcolsep=0.1cm
	\setlength{\abovecaptionskip}{0pt}%
	\setlength{\belowcaptionskip}{0pt}%
	\caption{The hyperparameters of Evolutionary.}
	\label{tab:Evolutionary-setting}
	\begin{center}
		\begin{tabular}{p{13cm}c}
			\toprule
			\multicolumn{1}{c}{Hyperparameter} & Value \\
			\midrule
			$c_{cov}$, the hyperparameter of updating the diagonal covariance matrix $\mathbf{C}$ & $0.001$ \\
			$\sigma$, the deviation for bias  & $0.03$ \\
			$\mu$, a critical hyperparameter controlling the strength of going towards the original image & $0.01$ \\
			\texttt{maxlen}, the maximum length of successful attacks for calculating $\mu$ & $30$ \\
			\bottomrule
		\end{tabular}
	\end{center}
\end{table}
\begin{table}[!h]
	\small
	\tabcolsep=0.1cm
	\setlength{\abovecaptionskip}{0pt}%
	\setlength{\belowcaptionskip}{0pt}%
	\caption{The hyperparameters of GeoDA.}
	\label{tab:GeoDA-setting}
	\begin{center}
		\begin{tabular}{cp{11cm}c}
			\toprule
			Dataset & \multicolumn{1}{c}{Hyperparameter} & Value \\
			\midrule
			\multirow{2}{*}{CIFAR-10} &subspace dimension, the dimension of 2D DCT basis's subspace & $10$ \\ 
			&$\epsilon$, the step size of searching the decision boundary  & $0.5$ \\ 
			\midrule
			\multirow{2}{*}{ImageNet} &subspace dimension, the dimension of 2D DCT basis's subspace  & $75$ \\ 
			&$\epsilon$, the step size of searching the decision boundary & $5$ \\ 
			\bottomrule
		\end{tabular}
	\end{center}
\end{table}
\begin{table}[!h]
	\small
	\tabcolsep=0.1cm
	\setlength{\abovecaptionskip}{0pt}%
	\setlength{\belowcaptionskip}{0pt}%
	\caption{The hyperparameters of Triangle Attack.}
	\label{tab:triangle-attack-setting}
	\begin{center}
		\begin{tabular}{cp{11cm}c}
			\toprule
			Dataset & \multicolumn{1}{c}{Hyperparameter} & Value \\
			\midrule
			\multirow{5}{*}{CIFAR-10} &$d$, the number of picked dimensions  & $3$ \\ 
			& \texttt{ratio mask}, the ratio of the mask size for obtaining the low-frequency mask  & $0.3$ \\ 
			& $\theta_\text{init}$, the initial angle of the subspace equals $\theta_\text{init} \times \pi / 32$ & $2$\\
			& $\alpha_\text{init}$, the initial angle of alpha  & $\pi/2$ \\
			& the maximum iteration number of the attack algorithm in 2D subspace & $2$ \\
			\midrule
			\multirow{5}{*}{ImageNet} &$d$, the number of picked dimensions  & $3$ \\ 
			&\texttt{ratio mask}, the ratio of the mask size for obtaining the low-frequency mask & $0.1$ \\ 
			& $\theta_\text{init}$, the initial angle of the subspace equals $\theta_\text{init} \times \pi / 32$ & $2$\\
			& $\alpha_\text{init}$, the initial angle of alpha  & $\pi/2$ \\
			& the maximum iteration number of the attack algorithm in 2D subspace & $2$ \\
			\bottomrule
		\end{tabular}
	\end{center}
\end{table}
\begin{table}[!h]
	\small
	\tabcolsep=0.1cm
	\setlength{\abovecaptionskip}{0pt}%
	\setlength{\belowcaptionskip}{0pt}%
	\caption{The hyperparameters of SurFree.}
	\label{tab:SurFree-setting}
	\begin{center}
		\begin{tabular}{p{12cm}c}
			\toprule
			\multicolumn{1}{c}{Hyperparameter} & Value \\
			\midrule
			\texttt{BS\_gamma}, the stopping threshold in the binary search of $\alpha$ & $0.01$ \\
			\texttt{BS\_max\_iteration}, the maximum iterations in the binary search for $\alpha$ & $10$ \\
			$\rho$, the parameter for determining $\theta_\text{max}$ & $0.98$ \\
			$\texttt{T}$, the parameter for determining the range of $\alpha$ and the best $\theta$ & $3$ \\
			$\theta_\text{max}$, the parameter for determining the range of $\alpha$ & $30$ \\
			\texttt{n\_ortho}, the parameter for finding the direction of the lowest $\epsilon$ in \texttt{\_get\_candidates} & $100$ \\
			the binary search's stopping threshold of the ImageNet dataset & $1 \times 10^{-4}$ \\
			\texttt{frequence\_range}, the parameter used in constructing \texttt{dct\_mask} & $0\sim0.5$ \\
			\texttt{with\_distance\_line\_search}, the parameter used in \texttt{\_get\_candidates} & False \\
			\texttt{with\_interpolation}, the parameter used in \texttt{\_get\_candidates} & False \\
			\texttt{with\_alpha\_line\_search}, the parameter used in \texttt{\_get\_best\_theta}  & True \\
			\bottomrule
		\end{tabular}
	\end{center}
\end{table}

\begin{table}[!h]
	\small
	\tabcolsep=0.1cm
	\setlength{\abovecaptionskip}{0pt}%
	\setlength{\belowcaptionskip}{0pt}%
	\caption{The hyperparameters of SQBA.}
	\label{tab:SQBA-setting}
	\begin{center}
		\begin{tabular}{p{13cm}c}
			\toprule
			\multicolumn{1}{c}{Hyperparameter} & Value \\
			\midrule
			\texttt{threshold}, the stopping threshold in the binary search & $0.001$ \\
			\texttt{min\_randoms}, the value indirectly determines the number of queries in each gradient estimation & $10$ \\
			\bottomrule
		\end{tabular}
	\end{center}
\end{table}

\begin{table}[!h]
	\small
	\tabcolsep=0.1cm
	\setlength{\abovecaptionskip}{0pt}%
	\setlength{\belowcaptionskip}{0pt}%
	\caption{The hyperparameters of BBA.}
	\label{tab:BBA-setting}
	\begin{center}
		\begin{tabular}{p{13cm}c}
			\toprule
			\multicolumn{1}{c}{Hyperparameter} & Value \\
			\midrule
			\texttt{use\_surrogate\_bias}, whether to use a surrogate model as the bias & True \\
			\texttt{use\_mask\_bias}, whether to use regional masks as the bias & False \\
			\texttt{use\_perlin\_bias}, whether to use Perlin Noise as the bias & False \\
			\texttt{pg\_factor}, the hyperparameter that controls the strength of the bias & $0.3$ \\
			\bottomrule
		\end{tabular}
	\end{center}
\end{table}
\textbf{Sign-OPT and SVM-OPT.} Table \ref{tab:SignOPT-setting} lists the hyperparameters of Sign-OPT and SVM-OPT. For fair comparison, we set the hyperparameters of Prior-OPT and Prior-Sign-OPT to be the same as those of Sign-OPT and SVM-OPT, e.g., using the same number of vectors for the gradient estimation.

\textbf{HSJA, TA and G-TA.} Table \ref{tab:HSJA-setting} lists the hyperparameters of HSJA, TA, and G-TA. TA has no additional hyperparameters. G-TA has an additional hyperparameter, the radius ratio $r$, to control the shape of the virtual semi-ellipsoid. Specifically, $r$ is set to $1.1$ for ImageNet and $1.5$ for CIFAR-10.

\textbf{Evolutionary.} We follow the official source code of Evolutionary to set its hyperparameters, as shown in Table \ref{tab:Evolutionary-setting}.

\textbf{GeoDA.} GeoDA only supports untargeted attacks, and the convergence of $\ell_2$-norm attacks of GeoDA is theoretically guaranteed. Thus, we conduct untargeted $\ell_2$-norm attack experiments using GeoDA, and the hyperparameters of GeoDA are shown in Table \ref{tab:GeoDA-setting}.

\textbf{Triangle Attack.} We set the hyperparameter ``ratio mask'' to $0.1$ for ImageNet and $0.3$ for CIFAR-10, respectively. All hyperparameters of Triangle Attack are shown in Table \ref{tab:triangle-attack-setting}.

\textbf{SurFree.} SurFree only supports the $\ell_2$-norm attacks. We adapt the official version of SurFree's code to PyTorch for our experiments, and its hyperparameters are detailed in Table \ref{tab:SurFree-setting}.

\textbf{SQBA.} SQBA only supports the untargeted $\ell_2$-norm attacks, and its hyperparameters are shown in Table \ref{tab:SQBA-setting}.

\textbf{BBA.} BBA only supports the $\ell_2$-norm attacks, and the hyperparameters of BBA are shown in Table~\ref{tab:BBA-setting}. We only use the bias of the surrogate model, and the hyperparameter $\texttt{pg\_factor}$ controls the strength of this bias. When $\texttt{pg\_factor} = 1$, the orthogonal step is equivalent to one iteration of the PGD attack. \citet{brunner2019guessing} suggest that $\texttt{pg\_factor} = 0.3$.

\section{Additional Experimental Results}
In this section, we present the results of the computational overhead tests and additional experiments.

\subsection{Computational Overhead}
\label{sec:computational_overhead}

The primary additional computational cost of Prior-OPT over Sign-OPT stems from: (1) the binary search procedure during gradient estimation, and (2) the time required to obtain priors. 
Let $d$ denote the dimension of the input image, $q$ denote the number of vectors used in gradient estimation, $f(d)$ denote the inference time of the target model for an input of dimension $d$, and $\hat{f}(d)$ denote the gradient computation time (i.e., the time of the forward and backward passes) of the surrogate model on an input of dimension $d$. The time complexity of gradient estimation in Sign-OPT is $O(q \cdot f(d))$.
In Prior-OPT, $s$ priors are introduced. Each prior requires a binary search procedure, which involves approximately $k$ inference steps. While $k$ may vary slightly depending on the specific prior or the input configuration, its value remains bounded and logarithmic in scale, given the nature of binary search. Consequently, the time complexity of Prior-OPT's gradient estimation can be expressed as:
\begin{equation}
O\left((q-s+(s+1)\cdot k)\cdot f(d) + s \cdot \hat{f}(d)\right),
\end{equation}
where $s \cdot \hat{f}(d)$ denotes the time to obtain $s$ priors, $(q-s)\cdot f(d)$ indicates the time for computing $\bv_\perp$ in Eq. \eqref{eq:Sign-OPT}, and $(s+1)\cdot k\cdot f(d)$ represents the time of performing the binary search over $s$ priors and $\overline{\bv_\perp}$ in Eq. \eqref{eq:Prior-OPT}.
When $q$ is large, $s$ and $k$ are relatively small (i.e., the number of priors is small, and $k$ typically ranges in the tens), the additional overhead introduced by Prior-OPT is limited compared to Sign-OPT. While Prior-OPT introduces extra computation due to the binary search, the increase in time complexity is relatively modest, especially when $s$ remains much smaller than $q$. This shows that Prior-OPT strikes a balance between computational efficiency and gradient estimation quality.

Table \ref{tab:time_consumption} demonstrates the time consumption of Sign-OPT, SVM-OPT, Prior-Sign-OPT, and Prior-OPT, measured by performing untargeted attacks on the ImageNet dataset.
We use a ResNet-50 surrogate model and an NVIDIA Tesla V100 GPU.
The additional time overhead of Prior-Sign-OPT is mainly the time of obtaining priors on surrogate models. Prior-OPT uses Eq. \eqref{eq:Prior-OPT} to estimate the gradient, invoking binary search $s+1$ times, where $s$ is the number of surrogate models. This will result in additional time consumption compared to Prior-Sign-OPT. 
Note that for black-box attacks, the primary metrics are the number of queries and the attack success rate rather than runtime. In real-world scenarios, the number of queries is the main limitation. Thus, we need to use as few queries as possible to achieve the highest success rate.
Table~\ref{tab:GPU_memory_consumption} shows the GPU memory allocations of Sign-OPT, Prior-Sign-OPT, and Prior-OPT. Prior-OPT and Prior-Sign-OPT require the transfer-based priors, and thus the additional memory allocation is mainly consumed in the forward and backward passes of the surrogate models. After obtaining a prior, GPU memory is promptly released, thus minimizing additional memory usage of our approach.
\begin{table}[h]
	\setlength{\abovecaptionskip}{0pt}%
	\setlength{\belowcaptionskip}{0pt}%
	\caption{The time consumption of attacking one image with \nn{10000} queries, which are measured by \texttt{seconds} on a NVIDIA Tesla V100 GPU.}
	\label{tab:time_consumption}
	\small
	\begin{center}
		\scalebox{0.93}{
		\begin{tabular}{lccccc}
			\toprule
			Method & ResNet-101 & SENet-154 & ResNeXt-101 & GC ViT & Swin Transformer \\
			\midrule
			Sign-OPT \citep{cheng2019sign} & \nn{112} & \nn{197} & \nn{91} & \nn{131} & \nn{88} \\
			SVM-OPT \citep{cheng2019sign} & \nn{119} & \nn{189} & \nn{102} & \nn{158} & \nn{98} \\
			Prior-Sign-OPT\textsubscript{\tiny ResNet50} & \nn{240} & \nn{372} & \nn{195} & \nn{203} & \nn{183} \\
			Prior-OPT\textsubscript{\tiny ResNet50} & \nn{342} & \nn{476} & \nn{321} & \nn{357} & \nn{203} \\
			\bottomrule
		\end{tabular}}
	\end{center}
\end{table}
\begin{table}[h]
	\setlength{\abovecaptionskip}{0pt}%
	\setlength{\belowcaptionskip}{0pt}%
	\caption{The GPU memory allocations of attacks against different target models, which are measured by \texttt{MiB} on a NVIDIA Tesla V100 GPU.}
	\label{tab:GPU_memory_consumption}
	\small
	\begin{center}
		\scalebox{0.93}{
		\begin{tabular}{lccccc}
			\toprule
			Method & ResNet-101 & SENet-154 & ResNeXt-101 & GC ViT & Swin Transformer \\
			\midrule
			Sign-OPT \citep{cheng2019sign} & \nn{4686} & \nn{6244} & \nn{7272} & \nn{7352} & \nn{8854} \\
			SVM-OPT \citep{cheng2019sign} & \nn{4688} & \nn{6246} & \nn{7274} & \nn{7354} & \nn{8856} \\
			Prior-Sign-OPT\textsubscript{\tiny ResNet50} & \nn{5222} & \nn{6750} & \nn{7828} & \nn{7856} & \nn{9410} \\
			Prior-OPT\textsubscript{\tiny ResNet50} & \nn{5222} & \nn{6746} & \nn{7816} & \nn{7846} & \nn{9390} \\
			\bottomrule
		\end{tabular}}
	\end{center}
\end{table}

\subsection{Experimental Results of Large Vision-Language Model}
\label{sec:CLIP_experimental_results}

To evaluate the scalability of the proposed approach, we conduct experiments of attacking a CLIP model \citep{radford2021CLIP} with the ViT-L/14 backbone \citep{dosovitskiy2021an}, and the surrogate models include ImageNet pretrained ResNet-50, ConViT, CrossViT, MaxViT, and ViT. Here, ViT-L/14 refers to a large variant of the Vision Transformer architecture with the patch size of $14 \times 14$. It is worth noting that these surrogate models are pretrained on ImageNet, and their training paradigms are entirely different from that of CLIP. The CLIP model, which stands for Contrastive Language-Image Pretraining, is trained on millions of image-text pairs from the internet using a contrastive learning approach, enabling it to generalize effectively through natural language supervision. By aligning images and text in a shared embedding space, the CLIP model functions as a zero-shot image classifier. We encapsulate it as a \nn{1000}-class classifier by constructing a set of text prompts that correspond to the class names. These prompts are then embedded into the same space as the images. 
 
In the experiments, due to the differences between CLIP and standard classification models, the tested images used in previous experiments may not be correctly classified by CLIP. Therefore, we select a new set of \nn{1000} images that are correctly classified by both CLIP and five surrogate models (ResNet-50, ConViT, CrossViT, MaxViT, and ViT) for evaluation. The results are shown in Table~\ref{tab:CLIP_results}, which demonstrate that incorporating more surrogate models (priors) significantly enhances attack performance. Notably, despite being pretrained on ImageNet with fundamentally different training methods than those used by CLIP, the surrogate models still improve performance.
\begin{table}[h]
	\centering
	\caption{The experimental results of attacking against CLIP with the backbone of ViT-L/14, and the surrogate models include ImageNet pretrained ResNet-50, ConViT, CrossViT, MaxViT, and ViT.}
	\small
	\tabcolsep=0.1cm
	\scalebox{0.77}{
		\begin{threeparttable}
			
			\begin{tabular}{ll@{\hspace{1.5em}}ccccc@{\hspace{1.5em}}ccccc}
				\toprule
				Method  & Priors & \multicolumn{5}{c}{Mean $\ell_2$ distortion} & \multicolumn{5}{c}{Attack Success Rate\tnote{1}} \\
				& &  @1K & @2K & @5K & @8K & @10K &  @1K & @2K & @5K & @8K & @10K  \\
				\midrule
				Sign-OPT \citep{cheng2019sign}& no prior & 58.180 & 49.435 & 40.261 & 37.020 & 35.713  & 13.4\% & 15.2\% & 17.4\% & 19.1\% & 19.4\%  \\
				\cdashline{1-12} 
				Prior-Sign-OPT\textsubscript{\tiny ResNet50} & 1 prior & 56.935 & 47.234 & 35.189 & 30.957 & 29.517  & 13.5\% & 15.5\% & 22.0\% & 25.3\% & 27.3\%     \\
				Prior-Sign-OPT\textsubscript{\tiny ConViT} & 1 prior & 55.036 & 43.327 & 31.387 & 27.577 & 26.250  & 14.3\% & 16.3\% & 24.0\% & 28.8\% & 31.5\% \\
				Prior-Sign-OPT\textsubscript{\tiny ResNet50\&ConViT} & 2 priors  & 53.658 & 40.868 & 27.988 & 23.954 & 22.737  & 14.1\% & 17.6\% & 28.0\% & 34.3\% & 37.4\%  \\
				Prior-Sign-OPT\textsubscript{\tiny ResNet50\&ConViT\&CrossViT} & 3 priors & 50.875 & 36.428 & 23.410 & 19.589 & 18.414  & 15.3\% & 20.5\% & 37.2\% & 44.1\% & 46.7\% \\
				Prior-Sign-OPT\textsubscript{\tiny ResNet50\&ConViT\&CrossViT\&MaxViT} & 4 priors & 49.438 & 33.941 & 20.801 & 17.466 & 16.296  & \B 15.7\% & 23.3\% & 44.3\% & 53.2\% & 57.1\%  \\
				Prior-Sign-OPT\textsubscript{\tiny ResNet50\&ConViT\&CrossViT\&MaxViT\&ViT} & 5 priors & \B 48.214 & \B 32.298 & \B 19.114 & \B 15.790 & \B 14.726  & 15.0\% & \B 24.1\% & \B 48.0\% & \B 56.5\% & \B 59.3\%   \\
				\cdashline{1-12} 
				Prior-OPT\textsubscript{\tiny ResNet50} & 1 prior & 38.934 & 27.384 & 20.184 & 18.520 & 18.153  & 23.4\% & 34.9\% & 47.3\% & 50.6\% & 51.4\%    \\
				Prior-OPT\textsubscript{\tiny ConViT} & 1 prior & 31.822 & 21.267 & 15.568 & 14.623 & 14.362  & 29.2\% & 43.3\% & 56.6\% & 58.8\% & 58.9\%    \\
				Prior-OPT\textsubscript{\tiny ResNet50\&ConViT} & 2 priors & 29.596 & 18.088 & 11.770 & 10.724 & 10.427  & 31.3\% & 50.1\% & 68.3\% & 72.2\% & 73.4\%  \\
				Prior-OPT\textsubscript{\tiny ResNet50\&ConViT\&CrossViT} & 3 priors & 26.355 & 15.251 & 9.953 & 8.834 & 8.625  & 35.0\% & 55.7\% & 75.2\% & 79.1\% & 79.5\%  \\
				Prior-OPT\textsubscript{\tiny ResNet50\&ConViT\&CrossViT\&MaxViT} & 4 priors & 26.433 & 14.261 & 7.899 & 6.807 & 6.562  & 35.6\% & 59.5\% & 82.0\% & 86.5\% & 87.3\% \\
				Prior-OPT\textsubscript{\tiny ResNet50\&ConViT\&CrossViT\&MaxViT\&ViT} & 5 priors & \B 25.170 & \B 13.327 & \B 6.745 & \B 5.931 & \B 5.737  & \B 39.2\% & \B 63.0\% & \B 85.9\% & \B 89.4\% &\B 89.8\% \\
				\bottomrule
			\end{tabular}
			\begin{tablenotes}
				\item[1] The distortion threshold for the attack success rate is $12.26898528811572$, which is calculated as $\sqrt{0.001 \times 224 \times 224 \times 3}$.
			\end{tablenotes}
		\end{threeparttable}
	}
	\label{tab:CLIP_results}
\end{table}
\subsection{Performance of Prior-Only Gradient Estimators}
The experimental results in previous sections demonstrate that incorporating a single transfer-based prior enhances performance. To explore this further, it is valuable to investigate an alternative approach where only prior vectors are used, rather than relying on random vectors. We can examine this approach from both theoretical and empirical perspectives.

If all random vectors are eliminated in gradient estimation, the gradient estimator's performance lacks a lower bound, making it unable to guarantee accuracy in the worst-case scenario. However, when random vectors are included in the gradient estimation, the accuracy of the estimator is guaranteed to have a lower bound. This means that, regardless of how poor the priors are, the estimator maintains a guaranteed minimum level of performance in the worst case. This can be verified by examining the $\E[\gamma]$ derived for Prior-Sign-OPT (Eq. \eqref{eq:prior-sign-opt-mean-s>1}) and Prior-OPT (Eq. \eqref{eq:prior-opt-expectation_gamma_lower_bound}). 
Specifically, when all random vectors are removed in Prior-Sign-OPT and $q$ is set to $s$, Eq. \eqref{eq:prior-sign-opt-mean-s>1} reduces to
\begin{equation}
	\E[\gamma] = \frac{1}{\sqrt{q}} \left(\sum_{i=1}^s | \alpha_i |\right).
\end{equation}
In this case, $\E[\gamma]$ depends solely on $\alpha_i$, which reflects the accuracy of the priors. If $\alpha_i$ is extremely small, the accuracy of the estimated gradient degrades significantly. Similarly, when we remove all random vectors in Prior-OPT and $q$ is set to $s$, Eq. \eqref{eq:prior-opt-expectation_gamma_lower_bound} reduces to
\begin{equation}
\E[\gamma] \geq \sqrt{\sum_{i=1}^s \alpha_i^2}.
\end{equation}

\begin{table}[t]
	\caption{Mean $\ell_2$ distortions of targeted attacks on the ImageNet dataset against GC ViT, where Pure-Prior-Sign-OPT and Pure-Prior-OPT use only priors in the gradient estimation.}
	\small
	\tabcolsep=0.1cm
	\centering   
	\scalebox{0.88}{
		\begin{tabular}{lcccccccccc}
			\toprule
			Method  & Priors & \multicolumn{9}{c}{Targeted Attacks} \\
			&  & @1K & @2K & @5K & @8K & @10K & @12K & @15K & @18K & @20K \\
			\midrule
			Sign-OPT \citep{cheng2019sign}& no prior & 53.026 & 42.049 & 28.210 & 22.156 & 19.557  & 17.661 & 15.556 &  14.018  &  13.194  \\
			\cdashline{1-11} 
			Pure-Prior-Sign-OPT\textsubscript{\tiny ResNet50} & 1 prior & 52.337 & 51.255 & 51.024 & 51.017 & 51.017  & 51.017  &  51.017 &  51.017 &  51.017 \\
			Pure-Prior-Sign-OPT\textsubscript{\tiny ResNet50\&ConViT} & 2 priors  & \B 41.468 & 38.481 & 37.791 & 37.787 & 37.787  & 37.787  &  37.787 &  37.787 &  37.787 \\
			Pure-Prior-OPT\textsubscript{\tiny ResNet50} & 1 prior & 53.673 & 53.416 & 53.385 & 53.385 & 53.385  & 53.385  &  53.385 &  53.385 &  53.385 \\
			Pure-Prior-OPT\textsubscript{\tiny ResNet50\&ConViT} & 2 priors  & 41.631 & 38.687 & 38.256 & 38.250 & 38.250  & 38.250  &  38.250 &  38.250 &  38.250 \\
			\cdashline{1-11} 
			Prior-Sign-OPT\textsubscript{\tiny ResNet50} (\textbf{ours})& 1 prior & 52.491 & 41.333 & 26.857 & 20.829 & 18.427  & 16.681  &  14.741 &  13.337 &  12.593 \\
			Prior-Sign-OPT\textsubscript{\tiny ResNet50\&ConViT} (\textbf{ours})& 2 priors  & 51.465 & 39.537 & 25.124 & 19.216 & 16.841  & \B 15.115  & \B 13.321 &  \B 12.030 & \B 11.377 \\
			Prior-OPT\textsubscript{\tiny ResNet50} (\textbf{ours})& 1 prior & 50.323 & 39.615 & 25.876 & 20.309 & 18.120  & 16.488  &  14.750 &  13.535 &  12.918 \\
			Prior-OPT\textsubscript{\tiny ResNet50\&ConViT} (\textbf{ours})& 2 priors  & 47.739 & \B 36.129 & \B 23.177 & \B 18.528 & \B 16.764  & 15.467  &  14.121 &  13.193 &  12.699 \\
			\bottomrule
	\end{tabular}}
	\label{tab:PurePriorOPT-ImageNet-distortions}
\end{table}
\begin{table}[t]
	\caption{Attack success rates of targeted attacks on the ImageNet dataset against GC ViT, where Pure-Prior-Sign-OPT and Pure-Prior-OPT use only priors in the gradient estimation.}
	\small
	\tabcolsep=0.1cm
	\centering   
	\scalebox{0.88}{
		\begin{tabular}{lcccccccccc}
			\toprule
			Method  & Priors & \multicolumn{9}{c}{Targeted Attacks} \\
			&  & @1K & @2K & @5K & @8K & @10K & @12K & @15K & @18K & @20K \\
			\midrule
			Sign-OPT \citep{cheng2019sign}& no prior & 1.0\% & 1.9\% & 8.5\% & 20.3\% & 30.2\% & 38.7\% & 48.8\% & 57.5\%  & 61.2\%  \\
			\cdashline{1-11} 
			Pure-Prior-Sign-OPT\textsubscript{\tiny ResNet50} & 1 prior & 2.3\% & 2.6\% & 2.7\% & 2.7\% & 2.7\%  & 2.7\% & 2.7\% & 2.7\% & 2.7\% \\
			Pure-Prior-Sign-OPT\textsubscript{\tiny ResNet50\&ConViT} & 2 priors  & 8.6\% & 10.7\% & 11.6\% & 11.6\% & 11.6\%  & 11.6\% & 11.6\% & 11.6\% & 11.6\% \\
			Pure-Prior-OPT\textsubscript{\tiny ResNet50} & 1 prior & 2.1\% & 2.3\% & 2.3\% & 2.3\% & 2.3\% & 2.3\% & 2.3\% & 2.3\% &  2.3\% \\
			Pure-Prior-OPT\textsubscript{\tiny ResNet50\&ConViT} & 2 priors  & \B 9.7\% & \B 12.5\% & 12.7\% & 12.7\% & 12.7\%  & 12.7\% & 12.7\% & 12.7\% & 12.7\% \\
			\cdashline{1-11} 
			Prior-Sign-OPT\textsubscript{\tiny ResNet50} (\textbf{ours})& 1 prior & 0.9\% & 2.6\% & 12.0\% & 25.1\% & 34.4\%  & 41.9\% & 49.6\% & 58.2\% &  62.9\% \\
			Prior-Sign-OPT\textsubscript{\tiny ResNet50\&ConViT} (\textbf{ours})& 2 priors  & 0.9\% & 3.3\% & 15.8\% & 32.2\% & 40.2\% & \B 47.9\% & \B 57.1\% & \B 62.5\% &  \B 66.0\% \\
			Prior-OPT\textsubscript{\tiny ResNet50} (\textbf{ours})& 1 prior & 1.5\% & 4.6\% & 14.2\% & 25.4\% & 32.8\% & 40.4\% & 50.5\% & 56.8\% & 60.5\% \\
			Prior-OPT\textsubscript{\tiny ResNet50\&ConViT} (\textbf{ours})& 2 priors  & 2.6\% & 9.2\% & \B 24.7\% & \B 36.2\% & \B 41.0\% & 46.8\% & 53.5\% & 58.0\% &  60.7\% \\
			\bottomrule
	\end{tabular}}
	\label{tab:PurePriorOPT-ImageNet-attack-success-rates}
\end{table}

This demonstrates that, without random vectors, the gradient estimation is entirely reliant on the quality of the priors (i.e., the $\alpha_i$ values), and poor priors can result in arbitrarily poor performance. 

Conversely, when random vectors are included, the formula incorporating them guarantees a lower bound for $\E[\gamma]$. This lower bound can be derived by setting $\alpha_i=0$ in Eqs. \eqref{eq:prior-sign-opt-mean-s>1} and \eqref{eq:prior-opt-expectation_gamma_lower_bound}. For Prior-Sign-OPT, the lower bound of $\E[\gamma]$ is
\begin{equation}
	\E[\gamma] \geq \frac{q-s}{\sqrt{q}}\cdot \frac{\Gamma(\frac{d-s}{2})}{\Gamma(\frac{d-s+1}{2})\sqrt{\pi}}.
\end{equation}
For Prior-OPT, the lower bound of $\E[\gamma]$ is
\begin{equation}
	\E[\gamma] \geq \sqrt{\frac{q-s}{\pi}}\cdot\frac{\Gamma(\frac{d-s}{2})}{\Gamma(\frac{d-s+1}{2})},
\end{equation}
thereby providing robustness with random vectors when the priors are of low quality.
Furthermore, in Prior-OPT's gradient estimation, each prior requires a binary search, whereas random vectors do not. Random vectors require only a single query per vector, making them more efficient in this regard.

We present experimental results of targeted attacks using variants of the Prior-Sign-OPT and Prior-OPT algorithms, in which random vectors are excluded from the gradient estimation and only priors (i.e., gradients from surrogate models) are used. These variants are referred to as \textbf{Pure-Prior-Sign-OPT} and \textbf{Pure-Prior-OPT}. The experimental results of attacks against GC ViT \citep{hatamizadeh2023global} on the ImageNet dataset are presented in Tables \ref{tab:PurePriorOPT-ImageNet-distortions} and \ref{tab:PurePriorOPT-ImageNet-attack-success-rates}. The results indicate that Pure-Prior-Sign-OPT and Pure-Prior-OPT fail to outperform Sign-OPT when the query budget exceeds \nn{2000}, even though Sign-OPT relies solely on random vectors without incorporating priors. Furthermore, as the query budget increases, the distortions and attack success rates for Pure-Prior-Sign-OPT and Pure-Prior-OPT remain relatively stable, revealing their inefficient use of additional queries.

\subsection{Effect of PGD Initialization on the Performance of SQBA and BBA Methods}

\begin{table}[h]
	\caption{Mean $\ell_2$ distortions of untargeted attacks on the ImageNet dataset against Inception-v4.}
	\small
	\tabcolsep=0.1cm
	\centering   
	\scalebox{0.8}{
		\begin{tabular}{lcccccccccc}
			\toprule
			Method   & \multicolumn{10}{c}{Untargeted Attacks} \\
			& @1K & @2K & @3K & @4K & @5K & @6K & @7K & @8K & @9K & @10K \\
			\midrule
			SQBA\textsubscript{\tiny IncResV2} \citep{park2024sqba} & 26.134 & 19.035 & 15.200 & 12.799 & 11.189  & 10.015  &  9.129 &  8.432 &  7.878 &  7.417 \\
			SQBA\textsubscript{\tiny $\theta_0^\text{PGD}$ + IncResV2} \citep{park2024sqba}& \B 22.698 & 16.882 & 13.731 & 11.699 & 10.314  & 9.301  &  8.543 &  7.931 &  7.426 &  7.017 \\
			\cdashline{1-11}
			BBA\textsubscript{\tiny IncResV2} \citep{brunner2019guessing}& 38.782 & 28.437 & 23.673 & 20.745 & 18.757  & 17.373  &  16.307 &  15.474 &  14.781 &  14.191 \\
			BBA\textsubscript{\tiny $\theta_0^\text{PGD}$ + IncResV2} \citep{brunner2019guessing}& 26.297 & 20.370 & 17.460 & 15.647 & 14.404  & 13.484  &  12.754 &  12.177 &  11.700 &  11.295 \\
			\cdashline{1-11}
			Prior-Sign-OPT\textsubscript{\tiny IncResV2} & 81.991 & 42.403 & 25.355 & 17.163 & 12.835  & 10.191  &  8.508 &  7.365 &  6.508 &  5.842 \\
			Prior-Sign-OPT\textsubscript{\tiny $\theta_0^\text{PGD}$ + IncResV2}& 23.596 & 15.347 & 11.565 & 9.458 & 8.074  & 7.085  &  6.330 &  5.729 &  5.249 &  4.863 \\
			\cdashline{1-11}
			Prior-OPT\textsubscript{\tiny IncResV2} & 49.279 & 18.135 & 9.426 & \B 6.798 & \B 5.718  & \B 5.148  & \B 4.747 & \B 4.451 & \B 4.215 &  \B 4.027 \\
			Prior-OPT\textsubscript{\tiny $\theta_0^\text{PGD}$ + IncResV2} & 22.852 & \B 12.194 & \B 8.896 & 7.452 & 6.568  & 5.947  &  5.485 &  5.114 &  4.809 &  4.548 \\
			\bottomrule
	\end{tabular}}
	\label{tab:inceptionv4-PGD-ImageNet-distortions}
\end{table}

\begin{table}[h]
	\caption{Mean $\ell_2$ distortions of untargeted attacks on the ImageNet dataset against ViT.}
	\small
	\tabcolsep=0.1cm
	\centering   
	\scalebox{0.8}{
		\begin{tabular}{lcccccccccc}
			\toprule
			Method   & \multicolumn{10}{c}{Untargeted Attacks} \\
			& @1K & @2K & @3K & @4K & @5K & @6K & @7K & @8K & @9K & @10K \\
			\midrule
			SQBA\textsubscript{\tiny ConViT} \citep{park2024sqba}& 12.886 & 9.762 & 8.045 & 6.972 & 6.240  & 5.702  &  5.278 &  4.947 &  4.681 &  4.452 \\
			SQBA\textsubscript{\tiny $\theta_0^\text{PGD}$ + ConViT} \citep{park2024sqba}& 10.794 & 8.424 & 7.094 & 6.227 & 5.647  & 5.204  &  4.856 &  4.572 &  4.337 &  4.143 \\
			\cdashline{1-11}
			BBA\textsubscript{\tiny ConViT} \citep{brunner2019guessing}& 22.716 & 16.153 & 13.409 & 11.886 & 10.893  & 10.155  &  9.614 &  9.193 &  8.868 &  8.595 \\
			BBA\textsubscript{\tiny $\theta_0^\text{PGD}$ + ConViT} \citep{brunner2019guessing}& 11.163 & 9.431 & 8.535 & 7.958 & 7.534  & 7.227  &  6.982 &  6.783 &  6.615 &  6.477 \\
			\cdashline{1-11}
			Prior-Sign-OPT\textsubscript{\tiny ConViT} & 46.883 & 24.551 & 14.592 & 10.329 & 8.057  & 6.669  &  5.755 &  5.142 &  4.688 &  4.313 \\
			Prior-Sign-OPT\textsubscript{\tiny $\theta_0^\text{PGD}$ + ConViT}& 9.011 & 6.935 & 5.752 & 5.025 & \B 4.504  & \B 4.108  & \B 3.803 & \B 3.549 & \B 3.345 & \B 3.174 \\
			\cdashline{1-11}
			Prior-OPT\textsubscript{\tiny ConViT} & 26.649 & 11.706 & 7.632 & 6.025 & 5.228  & 4.728  &  4.380 &  4.117 &  3.909 &  3.754 \\
			Prior-OPT\textsubscript{\tiny $\theta_0^\text{PGD}$ + ConViT} & \B 8.688 & \B 6.646 & \B 5.595 & \B 4.962 & 4.551  & 4.245  &  4.003 &  3.808 &  3.640 &  3.511 \\
			\bottomrule
	\end{tabular}}
	\label{tab:ViT-PGD-ImageNet-distortions}
\end{table}
\begin{table}[!h]
	\caption{Mean $\ell_2$ distortions of untargeted attacks on the ImageNet dataset against GC ViT.}
	\small
	\tabcolsep=0.1cm
	\centering   
	\scalebox{0.8}{
		\begin{tabular}{lcccccccccc}
			\toprule
			Method   & \multicolumn{10}{c}{Untargeted Attacks} \\
			& @1K & @2K & @3K & @4K & @5K & @6K & @7K & @8K & @9K & @10K \\
			\midrule
			SQBA\textsubscript{\tiny ConViT} \citep{park2024sqba}& 19.307 & 14.049 & 11.170 & 9.327 & 8.072  & 7.135  &  6.434 &  5.877 &  5.426 &  5.056 \\
			SQBA\textsubscript{\tiny $\theta_0^\text{PGD}$ + ConViT} \citep{park2024sqba}& \B 15.652 & 11.520 & 9.197 & 7.752 & 6.754  & 6.033  &  5.479 &  5.034 &  4.673 &  4.370 \\
			\cdashline{1-11}
			BBA\textsubscript{\tiny ConViT} \citep{brunner2019guessing}& 29.928 & 21.095 & 17.061 & 14.680 & 13.103  & 11.954  &  11.020 &  10.302 &  9.694 &  9.188 \\
			BBA\textsubscript{\tiny $\theta_0^\text{PGD}$ + ConViT} \citep{brunner2019guessing}& 15.959 & 12.688 & 11.054 & 9.997 & 9.230  & 8.627  &  8.164 &  7.766 &  7.430 &  7.131 \\
			\cdashline{1-11}
			Prior-Sign-OPT\textsubscript{\tiny ConViT} & 55.864 & 34.707 & 22.793 & 16.584 & 12.893  & 10.546  &  8.895 &  7.678 &  6.712 &  5.972 \\
			Prior-Sign-OPT\textsubscript{\tiny $\theta_0^\text{PGD}$ + ConViT}& 17.159 & 11.230 & 8.642 & 7.209 & 6.250  & 5.551  &  5.009 &  4.560 & \B 4.205 & \B 3.916 \\
			\cdashline{1-11}
			Prior-OPT\textsubscript{\tiny ConViT} & 39.497 & 18.955 & 12.320 & 9.275 & 7.641  & 6.599  &  5.828 &  5.251 &  4.817 &  4.453 \\
			Prior-OPT\textsubscript{\tiny $\theta_0^\text{PGD}$ + ConViT} & 16.949 & \B 10.708 & \B 8.251 & \B 6.937 & \B 6.031  & \B 5.391  & \B 4.913 & \B 4.530 &  4.219 &  3.961 \\
			\bottomrule
	\end{tabular}}
	\label{tab:GC-ViT-PGD-ImageNet-distortions}
\end{table}

\begin{table}[!h]
	\caption{Mean $\ell_2$ distortions of untargeted attacks on the ImageNet dataset against ResNet-101.}
	\small
	\tabcolsep=0.1cm
	\centering   
	\scalebox{0.8}{
		\begin{tabular}{lcccccccccc}
			\toprule
			Method   & \multicolumn{10}{c}{Untargeted Attacks} \\
			& @1K & @2K & @3K & @4K & @5K & @6K & @7K & @8K & @9K & @10K \\
			\midrule
			SQBA\textsubscript{\tiny ResNet50} \citep{park2024sqba}& 8.873 & 7.229 & 6.172 & 5.449 & 4.934  & 4.531  &  4.215 &  3.957 &  3.745 &  3.563 \\
			SQBA\textsubscript{\tiny $\theta_0^\text{PGD}$ + ResNet50} \citep{park2024sqba}& 6.882 & 5.675 & 4.894 & 4.364 & 3.985  & 3.689  &  3.456 &  3.264 &  3.101 &  2.961 \\
			\cdashline{1-11}
			BBA\textsubscript{\tiny ResNet50} \citep{brunner2019guessing}& 14.935 & 11.764 & 10.346 & 9.484 & 8.870  & 8.421  &  8.051 &  7.754 &  7.511 &  7.295 \\
			BBA\textsubscript{\tiny $\theta_0^\text{PGD}$ + ResNet50} \citep{brunner2019guessing}& 6.281 & 5.488 & 5.051 & 4.779 & 4.577  & 4.425  &  4.302 &  4.196 &  4.109 &  4.029 \\
			\cdashline{1-11}
			Prior-Sign-OPT\textsubscript{\tiny ResNet50} & 34.150 & 18.733 & 11.452 & 7.977 & 6.111  & 4.982  &  4.247 &  3.718 &  3.323 &  3.019 \\
			Prior-Sign-OPT\textsubscript{\tiny $\theta_0^\text{PGD}$ + ResNet50}& 5.423 & 4.303 & 3.632 & 3.182 & 2.859  & 2.615  &  2.414 & \B 2.267 & \B 2.142 & \B 2.045 \\
			\cdashline{1-11}
			Prior-OPT\textsubscript{\tiny ResNet50} & 18.355 & 7.100 & 4.190 & 3.214 & 2.840  & 2.612  &  2.450 &  2.324 &  2.233 &  2.158 \\
			Prior-OPT\textsubscript{\tiny $\theta_0^\text{PGD}$ + ResNet50} &\B 4.932 & \B 3.807 & \B 3.273 & \B 2.940 & \B 2.710  & \B 2.532  & \B 2.390 &  2.275 &  2.181 &  2.107 \\
			\bottomrule
	\end{tabular}}
	\label{tab:ResNet101-PGD-ImageNet-distortions}
\end{table}

\begin{table}[!h]
	\caption{Mean $\ell_2$ distortions of untargeted attacks on the ImageNet dataset against SENet-154.}
	\small
	\tabcolsep=0.1cm
	\centering   
	\scalebox{0.8}{
		\begin{tabular}{lcccccccccc}
			\toprule
			Method   & \multicolumn{10}{c}{Untargeted Attacks} \\
			& @1K & @2K & @3K & @4K & @5K & @6K & @7K & @8K & @9K & @10K \\
			\midrule
			SQBA\textsubscript{\tiny ResNet50} \citep{park2024sqba}& 16.332 & 11.802 & 9.335 & 7.788 & 6.765  & 6.016  &  5.445 &  4.994 &  4.630 &  4.332 \\
			SQBA\textsubscript{\tiny $\theta_0^\text{PGD}$ + ResNet50} \citep{park2024sqba}& 13.342 & 9.871 & 7.944 & 6.707 & 5.863  & 5.246  &  4.779 &  4.410 &  4.115 &  3.860 \\
			\cdashline{1-11}
			BBA\textsubscript{\tiny ResNet50} \citep{brunner2019guessing}& 24.402 & 17.863 & 14.923 & 13.134 & 11.915  & 11.009  &  10.330 &  9.796 &  9.348 &  8.976 \\
			BBA\textsubscript{\tiny $\theta_0^\text{PGD}$ + ResNet50} \citep{brunner2019guessing}& 13.074 & 10.435 & 9.080 & 8.221 & 7.626  & 7.187  &  6.843 &  6.559 &  6.320 &  6.112 \\
			\cdashline{1-11}
			Prior-Sign-OPT\textsubscript{\tiny ResNet50} & 45.340 & 26.404 & 17.200 & 12.317 & 9.412  & 7.551  &  6.285 &  5.400 &  4.740 &  4.223 \\
			Prior-Sign-OPT\textsubscript{\tiny $\theta_0^\text{PGD}$ + ResNet50}& 12.375 & 8.859 & 6.900 & 5.684 & 4.865  & \B 4.272  &  \B 3.817 & \B 3.461 & \B 3.184 & \B 2.958 \\
			\cdashline{1-11}
			Prior-OPT\textsubscript{\tiny ResNet50} & 29.578 & 14.233 & 8.955 & 6.677 & 5.542  & 4.823  &  4.316 &  3.947 &  3.630 &  3.394 \\
			Prior-OPT\textsubscript{\tiny $\theta_0^\text{PGD}$ + ResNet50} & \B 11.952 & \B 8.431 & \B 6.580 & \B 5.542 & \B 4.863  & 4.368  &  3.980 &  3.680 &  3.420 &  3.215 \\
			\bottomrule
	\end{tabular}}
	\label{tab:SENet154-PGD-ImageNet-distortions}
\end{table}
\begin{table}[!h]
	\caption{Success rates of untargeted attacks on ImageNet against Inception-v4.}
	\small
	\tabcolsep=0.1cm
	\centering   
	\scalebox{0.8}{
		\begin{tabular}{lcccccccccc}
			\toprule
			Method   & \multicolumn{10}{c}{Untargeted Attacks} \\
			& @1K & @2K & @3K & @4K & @5K & @6K & @7K & @8K & @9K & @10K \\
			\midrule
			SQBA\textsubscript{\tiny IncResV2} \citep{park2024sqba}& 41.9\% & 55.8\% & 65.9\% & 74.7\% & 79.2\%  & 85.6\% &  87.7\% &  89.9\% &  91.4\% &  93.0\% \\
			SQBA\textsubscript{\tiny $\theta_0^\text{PGD}$ + IncResV2} \citep{park2024sqba}&  51.5\% & 63.7\% & 71.3\% & 79.4\% & 82.8\%  & 87.4\%  &  90.2\% &  91.2\% &  92.6\% &  93.3\% \\
			\cdashline{1-11}
			BBA\textsubscript{\tiny IncResV2} \citep{brunner2019guessing}& 16.3\% & 29.0\% & 39.0\% & 46.9\% & 54.3\%  & 57.5\% &  60.6\% &  63.9\% &  66.6\% &  68.8\% \\
			BBA\textsubscript{\tiny $\theta_0^\text{PGD}$ + IncResV2} \citep{brunner2019guessing}& 43.9\% & 53.4\% & 60.1\% & 65.6\% & 69.0\%  & 71.7\% &  74.8\% &  77.0\% &  78.8\% &  79.6\% \\
			\cdashline{1-11}
			Prior-Sign-OPT\textsubscript{\tiny IncResV2} & 3.8\% & 16.6\% & 38.9\% & 60.4\% & 75.7\% & 85.0\% &  89.4\% &  91.8\% &  94.2\% &  96.3\% \\
			Prior-Sign-OPT\textsubscript{\tiny $\theta_0^\text{PGD}$ + IncResV2}& 62.6\% & 72.2\% & 77.8\% & 84.4\% & 88.3\%  & 90.7\%  &  93.4\% &  94.5\% &  95.8\% &  96.9\% \\
			\cdashline{1-11}
			Prior-OPT\textsubscript{\tiny IncResV2} & 17.8\% & 63.4\% & \B 86.6\% & \B 94.2\% & \B 96.4\% & \B 97.4\% &  \B 98.4\% &  \B 98.8\% &  \B 99.0\% & \B 99.1\% \\
			Prior-OPT\textsubscript{\tiny $\theta_0^\text{PGD}$ + IncResV2} & \B 64.7\% & \B 76.6\% & 84.1\% & 89.2\% & 92.5\% & 95.1\%  &  96.1\% &  96.7\% &  97.6\% &  98.1\% \\
			\bottomrule
	\end{tabular}}
	\label{tab:inceptionv4-PGD-ImageNet-success-rates}
\end{table}
\begin{table}[!h]
	\caption{Success rates of untargeted attacks on ImageNet against ViT.}
	\small
	\tabcolsep=0.1cm
	\centering   
	\scalebox{0.8}{
		\begin{tabular}{lcccccccccc}
			\toprule
			Method   & \multicolumn{10}{c}{Untargeted Attacks} \\
			& @1K & @2K & @3K & @4K & @5K & @6K & @7K & @8K & @9K & @10K \\
			\midrule
			SQBA\textsubscript{\tiny ConViT} \citep{park2024sqba}& 55.9\% & 71.7\% & 81.1\% & 88.4\% & 91.9\%  & 94.4\% &  96.7\% &  97.4\% &  97.8\% &  98.3\% \\
			SQBA\textsubscript{\tiny $\theta_0^\text{PGD}$ + ConViT} \citep{park2024sqba}& 67.1\% & 80.7\% & 87.8\% & 91.6\% & 94.6\%  & 96.2\%  &  96.9\% &  97.6\% &  98.2\% &  98.3\% \\
			\cdashline{1-11}
			BBA\textsubscript{\tiny ConViT} \citep{brunner2019guessing} & 19.0\% & 39.6\% & 53.0\% & 61.6\% & 66.2\%  & 70.2\% &  73.9\% &  75.2\% &  77.7\% &  79.3\% \\
			BBA\textsubscript{\tiny $\theta_0^\text{PGD}$ + ConViT} \citep{brunner2019guessing} & 69.9\% & 76.4\% & 80.3\% & 84.8\% & 86.5\%  & 87.5\% &  89.1\% &  89.5\% &  90.6\% &  91.3\% \\
			\cdashline{1-11}
			Prior-Sign-OPT\textsubscript{\tiny ConViT} & 8.4\% & 23.8\% & 49.2\% & 71.4\% & 83.6\% & 90.4\% &  94.0\% &  96.0\% &  97.1\% &  98.1\% \\
			Prior-Sign-OPT\textsubscript{\tiny $\theta_0^\text{PGD}$ + ConViT}& \B 83.8\% & \B 89.7\% & \B 93.6\% & \B 95.2\% & \B 96.4\%  & \B 97.5\%  & \B 97.9\% & \B 98.3\% &  98.7\% &  \B 99.2\% \\
			\cdashline{1-11}
			Prior-OPT\textsubscript{\tiny ConViT} & 26.2\% & 65.7\% & 83.6\% & 91.3\% & 94.7\% & 96.2\% &  97.1\% &  97.7\% &  \B 98.9\% & \B 99.2\% \\
			Prior-OPT\textsubscript{\tiny $\theta_0^\text{PGD}$ + ConViT} & 83.6\% & \B 89.7\% & 93.4\% & 94.9\% & 96.3\% & 97.3\%  &  \B 97.9\% &  98.1\% &  98.6\% &  98.9\% \\
			\bottomrule
	\end{tabular}}
	\label{tab:ViT-PGD-ImageNet-success-rates}
\end{table}
\begin{table}[!h]
	\caption{Success rates of untargeted attacks on ImageNet against GC ViT.}
	\small
	\tabcolsep=0.1cm
	\centering   
	\scalebox{0.8}{
		\begin{tabular}{lcccccccccc}
			\toprule
			Method   & \multicolumn{10}{c}{Untargeted Attacks} \\
			& @1K & @2K & @3K & @4K & @5K & @6K & @7K & @8K & @9K & @10K \\
			\midrule
			SQBA\textsubscript{\tiny ConViT} \citep{park2024sqba}& 38.1\% & 53.3\% & 64.4\% & 73.8\% & 79.6\%  & 84.7\% &  87.7\% &  90.9\% &  93.1\% &  94.4\% \\
			SQBA\textsubscript{\tiny $\theta_0^\text{PGD}$ + ConViT} \citep{park2024sqba}&  50.2\% & 65.5\% & 74.6\% & 81.9\% & 86.8\%  & 89.7\%  &  92.1\% &  94.1\% &  95.4\% &  96.2\% \\
			\cdashline{1-11}
			BBA\textsubscript{\tiny ConViT} \citep{brunner2019guessing}& 10.7\% & 28.2\% & 42.4\% & 50.4\% & 56.6\%  & 62.9\% &  67.8\% &  71.5\% &  74.0\% &  77.6\% \\
			BBA\textsubscript{\tiny $\theta_0^\text{PGD}$ + ConViT} \citep{brunner2019guessing} & 51.7\% & 60.3\% & 66.0\% & 69.4\% & 74.2\%  & 78.1\% &  79.7\% &  82.7\% &  85.1\% &  86.5\% \\
			\cdashline{1-11}
			Prior-Sign-OPT\textsubscript{\tiny ConViT} & 1.7\% & 9.2\% & 27.5\% & 48.2\% & 62.7\% & 72.0\% &  78.7\% &  82.7\% &  86.3\% &  89.0\% \\
			Prior-Sign-OPT\textsubscript{\tiny $\theta_0^\text{PGD}$ + ConViT}& \B 63.4\% & 73.6\% & 81.1\% & 85.4\% & 89.4\%  & 91.2\%  &  92.8\% &  94.3\% &  95.4\% &  96.4\% \\
			\cdashline{1-11}
			Prior-OPT\textsubscript{\tiny ConViT} & 10.0\% & 45.3\% & 68.1\% & 82.2\% & 87.9\% & 90.1\% &  92.4\% &  93.5\% &  94.8\% &  95.7\% \\
			Prior-OPT\textsubscript{\tiny $\theta_0^\text{PGD}$ + ConViT} & 63.2\% & \B 74.1\% & \B 81.4\% & \B 86.4\% & \B 89.9\% & \B 92.0\%  & \B 93.5\% & \B 94.4\% & \B 95.7\% & \B 96.7\% \\
			\bottomrule
	\end{tabular}}
	\label{tab:GC-ViT-PGD-ImageNet-success-rates}
\end{table}

\begin{table}[!h]
	\caption{Success rates of untargeted attacks on ImageNet against ResNet-101.}
	\small
	\tabcolsep=0.1cm
	\centering   
	\scalebox{0.8}{
		\begin{tabular}{lcccccccccc}
			\toprule
			Method   & \multicolumn{10}{c}{Untargeted Attacks} \\
			& @1K & @2K & @3K & @4K & @5K & @6K & @7K & @8K & @9K & @10K \\
			\midrule
			SQBA\textsubscript{\tiny ResNet50} \citep{park2024sqba} & 78.7\% & 85.3\% & 89.6\% & 92.6\% & 94.6\%  & 96.2\% &  97.0\% &  98.1\% &  98.3\% &  98.9\% \\
			SQBA\textsubscript{\tiny $\theta_0^\text{PGD}$ + ResNet50} \citep{park2024sqba}&  88.5\% & 93.2\% & 95.2\% & 96.0\% & 97.1\%  & 97.5\%  &  98.1\% &  98.7\% &  98.9\% &  99.3\% \\
			\cdashline{1-11}
			BBA\textsubscript{\tiny ResNet50} \citep{brunner2019guessing} & 47.6\% & 62.3\% & 70.2\% & 75.2\% & 79.4\%  & 81.1\% &  83.7\% &  84.6\% &  85.7\% &  86.9\% \\
			BBA\textsubscript{\tiny $\theta_0^\text{PGD}$ + ResNet50} \citep{brunner2019guessing} & 89.8\% & 92.2\% & 93.9\% & 95.1\% & 95.9\%  & 96.0\% &  96.4\% &  96.8\% &  97.1\% &  97.1\% \\
			\cdashline{1-11}
			Prior-Sign-OPT\textsubscript{\tiny ResNet50} & 10.9\% & 35.4\% & 65.0\% & 83.0\% & 91.1\% & 94.0\% &  95.9\% &  97.3\% &  98.6\% &  99.4\% \\
			Prior-Sign-OPT\textsubscript{\tiny $\theta_0^\text{PGD}$ + ResNet50}& 94.5\% & \B 96.1\% & \B 97.2\% & 97.9\% & 98.2\%  & 98.7\%  &  99.0\% &  99.3\% &  99.4\% &  99.7\% \\
			\cdashline{1-11}
			Prior-OPT\textsubscript{\tiny ResNet50} & 42.0\% & 87.4\% & 96.2\% & \B 98.3\% & \B 99.2\% & \B 99.3\% & \B 99.7\% & \B 99.7\% & \B 99.8\% & \B 100.0\% \\
			Prior-OPT\textsubscript{\tiny $\theta_0^\text{PGD}$ + ResNet50} & \B 94.6\% & 95.9\% & 96.9\% & 97.8\% & 98.3\% & 98.7\%  &  98.7\% &  99.0\% &  99.2\% &  99.5\% \\
			\bottomrule
	\end{tabular}}
	\label{tab:ResNet101-PGD-ImageNet-success-rates}
\end{table}

\begin{table}[!h]
	\caption{Success rates of untargeted attacks on ImageNet against SENet-154.}
	\small
	\tabcolsep=0.1cm
	\centering   
	\scalebox{0.8}{
		\begin{tabular}{lcccccccccc}
			\toprule
			Method   & \multicolumn{10}{c}{Untargeted Attacks} \\
			& @1K & @2K & @3K & @4K & @5K & @6K & @7K & @8K & @9K & @10K \\
			\midrule
			SQBA\textsubscript{\tiny ResNet50} \citep{park2024sqba}& 45.1\% & 61.5\% & 72.9\% & 81.7\% & 85.7\%  & 89.0\% &  91.7\% &  94.6\% &  96.4\% &  97.9\% \\
			SQBA\textsubscript{\tiny $\theta_0^\text{PGD}$ + ResNet50} \citep{park2024sqba}&  59.1\% & 72.3\% & 81.1\% & 86.7\% & 90.2\%  & 92.4\%  &  94.4\% &  96.5\% &  97.5\% &  98.4\% \\
			\cdashline{1-11}
			BBA\textsubscript{\tiny ResNet50} \citep{brunner2019guessing} & 16.6\% & 33.9\% & 45.6\% & 54.9\% & 61.2\%  & 66.3\% &  69.7\% &  72.7\% &  74.2\% &  76.9\% \\
			BBA\textsubscript{\tiny $\theta_0^\text{PGD}$ + ResNet50} \citep{brunner2019guessing} & 60.8\% & 69.7\% & 75.4\% & 79.9\% & 83.2\%  & 85.6\% &  87.0\% &  88.6\% &  89.9\% &  90.6\% \\
			\cdashline{1-11}
			Prior-Sign-OPT\textsubscript{\tiny ResNet50} & 3.7\% & 16.2\% & 36.6\% & 58.3\% & 75.2\% & 83.1\% &  88.3\% &  92.1\% &  95.0\% &  96.7\% \\
			Prior-Sign-OPT\textsubscript{\tiny $\theta_0^\text{PGD}$ + ResNet50}& 71.9\% & 79.6\% & 87.2\% & 91.3\% & 94.2\%  & \B 96.3\%  & \B 97.2\% & \B 97.6\% &  97.9\% &  98.3\% \\
			\cdashline{1-11}
			Prior-OPT\textsubscript{\tiny ResNet50} & 16.3\% & 55.9\% & 77.5\% & 88.3\% & 93.0\% & 95.4\% &  97.0\% & \B 97.6\% & \B 98.6\% & \B 98.9\% \\
			Prior-OPT\textsubscript{\tiny $\theta_0^\text{PGD}$ + ResNet50} & \B 72.5\% & \B 81.3\% & \B 88.5\% & \B 91.6\% & \B 94.3\% & 96.2\%  &  97.1\% &  97.5\% &  98.0\% &  98.6\% \\
			\bottomrule
	\end{tabular}}
	\label{tab:SENet154-PGD-ImageNet-success-rates}
\end{table}

Previous experiments have demonstrated that applying PGD (Projected Gradient Descent) initialization, denoted as Prior-OPT\textsubscript{\tiny $\theta_0^\text{PGD}$} and Prior-Sign-OPT\textsubscript{\tiny $\theta_0^\text{PGD}$}, significantly enhances the performance of adversarial attacks. This raises the question: How would other baseline methods perform if PGD initialization were applied to them as well? To investigate this, we propose variants of the SQBA \citep{park2024sqba} and BBA \citep{brunner2019guessing} methods, labeled as SQBA\textsubscript{\tiny $\theta_0^\text{PGD}$} and BBA\textsubscript{\tiny $\theta_0^\text{PGD}$}, in which PGD initialization is utilized on a surrogate model to generate the initial adversarial examples.

The mean $\ell_2$ distortions of the experimental results on the ImageNet dataset are presented in Tables~\ref{tab:inceptionv4-PGD-ImageNet-distortions}, \ref{tab:ViT-PGD-ImageNet-distortions}, \ref{tab:GC-ViT-PGD-ImageNet-distortions}, \ref{tab:ResNet101-PGD-ImageNet-distortions}, and \ref{tab:SENet154-PGD-ImageNet-distortions}, while the corresponding attack success rates are shown in Tables \ref{tab:inceptionv4-PGD-ImageNet-success-rates}, \ref{tab:ViT-PGD-ImageNet-success-rates}, \ref{tab:GC-ViT-PGD-ImageNet-success-rates}, \ref{tab:ResNet101-PGD-ImageNet-success-rates}, and \ref{tab:SENet154-PGD-ImageNet-success-rates}.
The distortion threshold for attack success rates is $16.3769$ for attacks on Inception-v4 and $12.2689$ for attacks on other networks, calculated as $\sqrt{0.001 \times d}$, where $d$ is the dimension of the input image.
As shown in these tables, the PGD initialization improves the performance of both SQBA and BBA, resulting in reduced mean $\ell_2$ distortions and higher attack success rates. Furthermore, our approach with PGD initialization outperforms both SQBA and BBA.

\subsection{Comparison with State-Of-The-Art Methods}
\label{sec:additional_experimental_results}
\begin{table}[h]
	\caption{Untargeted attack results of ViTs on the ImageNet dataset, where ``Mean $\ell_2$'' denotes the average $\ell_2$ distortion of the final adversarial examples, ``AUC'' denotes the area under the curve of mean $\ell_2$ distortions versus the number of queries (lower is better), and ``ASR'' denotes the attack success rate of the final adversarial examples.}\label{tab:AUC_ASR_ViTs}
	\small
	\centering   
	\scalebox{0.75}{
		\begin{tabular}{lccc@{\hspace{3em}}ccc@{\hspace{3em}}ccc}
			\toprule
			\multirow{2}{*}{Method}  & \multicolumn{3}{c}{ViT} & \multicolumn{3}{c}{GC ViT} & \multicolumn{3}{c}{Swin Transformer}  \\
			& Mean $\ell_2$ & AUC & ASR & Mean $\ell_2$ & AUC & ASR & Mean $\ell_2$ & AUC & ASR \\
			\midrule
			HSJA \citep{chen2019hopskipjumpattack}& 5.637 & 102956.7 & 96.7\% & 7.955 & 163915.1 & 82.8\% & 10.635 & 228806.0 & 70.3\% \\
			TA \citep{ma2021finding}& 5.674 & 104023.3 & 96.6\% & 9.102 & 176063.8 & 76.7\% & 10.513 & 230351.0 & 68.4\% \\
			G-TA \citep{ma2021finding}& 5.643 & 102013.4 & 96.4\% & 8.671 & 170511.6 & 77.6\% & 9.929 & 219877.9 & 72.6\% \\
			GeoDA \citep{rahmati2020geoda} & 6.313 & 83176.7 & 91.0\% & 12.998 & 172173.2 & 54.3\% & 19.120 & 245094.6 & 31.5\% \\
			Evolutionary \citep{dong2019efficient} & 6.719 & 128659.9 & 89.8\% & 8.615 & 174592.0 & 79.1\% & 15.738 & 266695.9 & 52.6\% \\		
			SurFree \citep{maho2021surfree}& 6.303 & 104053.9 & 91.6\% & 10.967 & 193400.6 & 65.4\% & 13.059 & 200688.5 & 58.3\% \\
			Triangle Attack \citep{wang2022triangle}& 10.097 & 99746.4 & 69.0\% & 30.119 & 298578.9 & 21.2\% & 29.005 & 288358.2 & 23.1\% \\
			BBA\textsubscript{\tiny ResNet50} \citep{brunner2019guessing}& 9.567 & 125221.1 & 74.0\% & 11.294 & 161711.4 & 67.5\% & 14.084 & 185551.7 & 59.9\% \\
			BBA\textsubscript{\tiny ConViT} \citep{brunner2019guessing}& 8.595 & 105826.6 & 79.3\% & 9.188 & 128468.5 & 77.6\% & 12.375 & 156081.9 & 59.5\% \\
			SQBA\textsubscript{\tiny ResNet50} \citep{park2024sqba}& 5.201 & 79423.0 & 95.7\% & 6.186 & 100435.8 & 89.2\% & 7.557 & 115845.7 & 83.2\% \\
			SQBA\textsubscript{\tiny ConViT} \citep{park2024sqba}& 4.452 & \B 60295.8 & 98.3\% & 5.056 & 79670.8 & 94.4\% & 5.883 & 82141.3 & 91.0\% \\
			Sign-OPT \citep{cheng2019sign}& 4.572 & 111439.9 & 98.3\% & 7.185 & 166001.9 & 85.9\% & 9.899 & 238907.0 & 74.7\% \\
			SVM-OPT \citep{cheng2019sign}& 5.070 & 120008.9 & 97.1\% & 7.325 & 171869.8 & 83.9\% & 10.526 & 249491.3 & 72.1\% \\	
			\cdashline{1-10}
			\addlinespace[0.2em]			
			Prior-Sign-OPT\textsubscript{\tiny ResNet50} & 4.850 & 119961.6 & 97.4\% & 6.723 & 165586.5 & 87.1\% & 9.254 & 234462.4 & 75.7\% \\
			Prior-Sign-OPT\textsubscript{\tiny ConViT}& 4.313 & 105379.9 & 98.1\% & 5.972 & 151725.9 & 89.0\% & 7.622 & 198431.3 & 84.2\% \\
			Prior-Sign-OPT\textsubscript{\tiny ResNet50\&ConViT} & 3.967 & 99940.6 & 98.6\% & 5.286 & 137011.9 & 92.9\% & 6.331 & 177589.1 & 89.2\% \\
			Prior-Sign-OPT\textsubscript{\tiny $\theta_0^\text{PGD}$ + ResNet50} & 4.331 & 88120.1 & \U{97.7\%} & 5.243 & 98749.0 & 92.6\% & 8.112 & 128167.1 & 80.3\% \\
			Prior-OPT\textsubscript{\tiny ResNet50} & 5.195 & 106791.0 & 97.3\% & 6.066 & 134255.9 & 90.7\% & 9.625 & 190534.1 & 73.0\% \\
			Prior-OPT\textsubscript{\tiny ConViT} & \U{3.754} &  62928.1 & \B 99.2\% &\U{4.453} & \U{92662.3} & \U{95.7\%} & \U{5.558} & \U{102428.7} & \U{91.8\%} \\
			Prior-OPT\textsubscript{\tiny ResNet50\&ConViT} & \B 3.609 & \U{60449.0} & \B 99.2\% & \B 3.700 & \B 76896.1 & \B 98.3\% & \B 4.896 & \B 91211.7 & \B 94.5\% \\
			Prior-OPT\textsubscript{\tiny $\theta_0^\text{PGD}$ + ResNet50} & 5.009 & 90005.1 & 96.4\% & 5.502 & 98555.8 & 92.8\% & 8.552 & 128766.0 & 76.4\% \\
			\bottomrule
	\end{tabular}}
\end{table}
\begin{table}[h]
	\caption{Untargeted attack results of CNNs on the ImageNet dataset, where ``Mean $\ell_2$'' denotes the average $\ell_2$ distortion of the final adversarial examples, ``AUC'' denotes the area under the curve of mean $\ell_2$ distortions versus the number of queries (lower is better), and ``ASR'' denotes the attack success rate of the final adversarial examples.}\label{tab:AUC_ASR_CNNs}
	\small
	\centering   
	\scalebox{0.75}{
		\begin{tabular}{lccc@{\hspace{3em}}ccc@{\hspace{3em}}ccc}
			\toprule
			\multirow{2}{*}{Method}  & \multicolumn{3}{c}{ResNet-101} & \multicolumn{3}{c}{ResNeXt-101 ($64\times4$d)} & \multicolumn{3}{c}{SENet-154}  \\
			& Mean $\ell_2$ & AUC & ASR & Mean $\ell_2$ & AUC & ASR & Mean $\ell_2$ & AUC & ASR \\
			\midrule
			HSJA \citep{chen2019hopskipjumpattack} & 5.158 & 96234.2 & 95.8\% & 5.484 & 110376.8 & 95.0\% & 9.385 & 177364.9 & 74.9\% \\
			TA \citep{ma2021finding} & 5.239 & 96858.5 & 95.9\% & 5.565 & 110870.1 & 95.0\% & 9.379 & 172600.0 & 73.8\% \\
			G-TA \citep{ma2021finding} & 5.225 & 95901.1 & 96.3\% & 5.524 & 109990.5 & 95.0\% & 5.430 & 119281.1 & 92.9\% \\
			GeoDA \citep{rahmati2020geoda} & 6.364 & 82320.0 & 91.9\% & 6.898 & 88947.7 & 89.3\% & 8.209 & 107267.4 & 80.9\% \\
			Evolutionary \citep{dong2019efficient} & 5.406 & 107841.6 & 93.2\% & 6.042 & 123706.5 & 91.3\% & 6.111 & 130032.0 & 90.1\% \\
			SurFree \citep{maho2021surfree} & 6.627 & 104285.4 & 88.1\% & 7.550 & 123394.0 & 83.7\% & 8.247 & 131295.4 & 79.5\% \\
			Triangle Attack \citep{wang2022triangle} & 12.123 & 117731.5 & 61.3\% & 11.883 & 116639.5 & 63.7\% & 15.019 & 145508.7 & 48.9\% \\
			BBA\textsubscript{\tiny ResNet50} \citep{brunner2019guessing}& 7.295 & 83314.7 & 86.9\% & 9.393 & 116579.5 & 74.9\% & 8.976 & 115007.9 & 76.9\% \\
			SQBA\textsubscript{\tiny ResNet50} \citep{park2024sqba}& 3.563 & 46450.8 & 98.9\% & 4.058 & 59316.7 & 97.8\% & 4.332 & 67106.3 & 97.9\% \\
			Sign-OPT \citep{cheng2019sign} & 4.754 & 101907.7 & 95.9\% & 5.108 & 120545.5 & 95.4\% & 5.111 & 124730.7 & 93.5\% \\
			SVM-OPT \citep{cheng2019sign} & 4.842 & 105778.8 & 95.8\% & 5.255 & 126799.4 & 95.0\% & 5.125 & 127568.9 & 93.7\% \\
			\cdashline{1-10}
			\addlinespace[0.2em]			
			Prior-Sign-OPT\textsubscript{\tiny ResNet50}& 3.019 & 79126.4 & 99.4\% & 3.518 & 100999.4 & 98.9\% & 4.223 & 114089.3 & 96.7\% \\
			Prior-Sign-OPT\textsubscript{\tiny $\theta_0^\text{PGD}$ + ResNet50} & \B 2.045 & \U{27148.4} & \U{99.7\%} & \B 2.450 & \U{35290.7} & \U{99.4\%} &\B 2.958 & \U{48708.7} & 98.3\% \\
			Prior-OPT\textsubscript{\tiny ResNet50} & 2.158 & 37218.3 & \B 100.0\% & 2.692 & 53085.7 & \B 99.7\% & 3.394 & 68609.1 & \B 98.9\% \\
			Prior-OPT\textsubscript{\tiny $\theta_0^\text{PGD}$ + ResNet50}  & \U{2.107} & \B 25627.6 & 99.5\% & \U{2.486} & \B 33291.5 & \B 99.7\% & \U{3.215} &\B 48447.6 & \U{98.6\%} \\
			\bottomrule
	\end{tabular}}
\end{table}
\begin{table}[!h]
	\caption{Untargeted attack results of Inception networks on the ImageNet dataset, where ``Mean $\ell_2$'' denotes the average $\ell_2$ distortion of the final adversarial examples, ``AUC'' denotes the area under the curve of mean $\ell_2$ distortions versus the number of queries (lower is better), and ``ASR'' denotes the attack success rate of the final adversarial examples.}
	\label{tab:AUC_ASR_Inceptions}
	\small
	\centering   
	\scalebox{0.9}{
		\begin{tabular}{lccc@{\hspace{3em}}ccc}
			\toprule
			\multirow{2}{*}{Method}  & \multicolumn{3}{c}{Inception-V3} & \multicolumn{3}{c}{Inception-V4}  \\
			& Mean $\ell_2$ & AUC & ASR & Mean $\ell_2$ & AUC & ASR \\
			\midrule
			HSJA \citep{chen2019hopskipjumpattack}  & 12.014 & 211938.7 & 81.1\% & 11.645 & 227700.5 & 82.1\% \\
			TA \citep{ma2021finding}& 12.378 & 208706.8 & 79.8\% & 11.694 & 219707.3 & 82.0\% \\
			G-TA \citep{ma2021finding}& 12.076 & 205670.7 & 81.5\% & 11.448 & 216797.7 & 83.3\% \\
			GeoDA \citep{rahmati2020geoda}& 9.437 & 124150.7 & 87.8\% & 9.688 & 128665.4 & 87.7\% \\
			Evolutionary \citep{dong2019efficient}& 9.809 & 192654.1 & 86.4\% & 10.839 & 215405.4 & 81.6\% \\
			SurFree \citep{maho2021surfree} & 11.648 & 186094.3 & 79.1\% & 13.818 & 221197.5 & 69.7\% \\
			Triangle Attack \citep{wang2022triangle} & 20.878 & 205534.2 & 46.5\% & 22.132 & 214723.3 & 42.5\% \\
			BBA\textsubscript{\tiny IncResV2} \citep{brunner2019guessing} & 13.952 & 169881.3 & 69.0\% & 14.191 & 182033.7 & 68.8\% \\
			BBA\textsubscript{\tiny Xception} \citep{brunner2019guessing} & 14.657 & 185798.3 & 67.2\% & 15.282 & 199287.2 & 63.6\% \\
			SQBA\textsubscript{\tiny IncResV2} \citep{park2024sqba}& 7.020 & 98767.0 & 94.3\% & 7.417 & 110451.8 & 93.0\% \\
			SQBA\textsubscript{\tiny Xception} \citep{park2024sqba}& 6.933 & 97022.6 & 94.7\% & 7.115 & 102939.0 & 92.2\% \\
			Sign-OPT \citep{cheng2019sign}& 8.134 & 195118.7 & 91.9\% & 8.786 & 217576.3 & 89.8\% \\
			SVM-OPT \citep{cheng2019sign}& 7.995 & 193289.7 & 92.3\% & 8.839 & 219673.6 & 89.0\% \\
			\cdashline{1-7}
			\addlinespace[0.2em]			
			Prior-Sign-OPT\textsubscript{\tiny IncResV2}& 5.314 & 156261.6 & 97.8\% & 5.842 & 174243.9 & 96.3\% \\
			Prior-Sign-OPT\textsubscript{\tiny Xception}& 5.831 & 163715.5 & 96.8\% & 5.958 & 176950.5 & 95.7\% \\
			Prior-Sign-OPT\textsubscript{\tiny IncResV2\&Xception}& 4.225 & 130427.5 & 98.6\% & 4.199 & 142032.3 & 99.0\% \\
			Prior-Sign-OPT\textsubscript{\tiny $\theta_0^\text{PGD}$ + IncResV2}  & 4.713 & 75088.7 & 97.1\% & 4.863 & 83066.8 & 96.9\% \\
			Prior-OPT\textsubscript{\tiny IncResV2}& \U{4.067} & 79685.9 & \U{99.3\%} & \U{4.027} & 85290.8 & 99.1\% \\
			Prior-OPT\textsubscript{\tiny Xception}& 4.539 & 88915.0 & \U{99.3\%} & 4.261 & 90492.0 & \U{99.3\%} \\
			Prior-OPT\textsubscript{\tiny IncResV2\&Xception}& \B 3.387 & \B 64461.8 & \B 99.7\% & \B 3.167 & \B 66110.1 & \B 99.8\% \\
			Prior-OPT\textsubscript{\tiny $\theta_0^\text{PGD}$ + IncResV2}  & 4.496 & \U{65031.0} & 98.4\% & 4.548 & \U{70165.3} & 98.1\% \\
			\bottomrule
	\end{tabular}}
\end{table}
\begin{table}[h]
	\caption{Mean $\ell_\infty$ distortions of untargeted attacks across various query budgets on ImageNet.}
	\small
	\tabcolsep=0.1cm
	\centering   
	\scalebox{0.9}{
		\begin{tabular}{clccccc}
			\toprule
			Target Model  & \multicolumn{1}{c}{Method}  & \multicolumn{5}{c}{Mean $\ell_\infty$ distortions} \\
			& & @1K & @2K & @5K & @8K & @10K  \\
			\midrule
			\multirow{7}{*}{Inception-v3} & TA \citep{ma2021finding}& \B  0.397 & 0.379 & 0.359 & 0.348 & 0.342  \\
			& Sign-OPT \citep{cheng2019sign} & 0.726 & 0.403 & 0.156 & 0.100 & 0.084   \\
			& SVM-OPT \citep{cheng2019sign} & 0.723 & 0.389 & 0.155 & 0.102 & 0.088 \\
			\cdashline{2-7}
			\addlinespace[0.1em]
			& Prior-Sign-OPT\textsubscript{\tiny IncResV2}  & 0.678 & 0.365 & 0.117 & 0.086 & \B 0.080 \\
			& Prior-Sign-OPT\textsubscript{\tiny IncResV2\&Xception}  & 0.640 & 0.318 &\B  0.102 & \B  0.083 & \B  0.080 \\
			& Prior-OPT\textsubscript{\tiny IncResV2}& 0.581 & 0.267 & 0.138 & 0.115 & 0.109  \\
			& Prior-OPT\textsubscript{\tiny IncResV2\&Xception}& 0.502 & \B 0.208 & 0.119 & 0.111 & 0.109 \\
			\midrule
			\multirow{7}{*}{Inception-v4} & TA \citep{ma2021finding} & \B 0.420 & 0.402 & 0.381 & 0.370 & 0.365  \\
			& Sign-OPT \citep{cheng2019sign} & 0.794 & 0.450 & 0.175 & 0.111 & 0.093  \\
			& SVM-OPT \citep{cheng2019sign} & 0.811 & 0.446 & 0.178 & 0.113 & 0.096 \\
			\cdashline{2-7}
			\addlinespace[0.1em]
			& Prior-Sign-OPT\textsubscript{\tiny IncResV2} & 0.756 & 0.408 & 0.136 & 0.097 & 0.089  \\
			& Prior-Sign-OPT\textsubscript{\tiny IncResV2\&Xception}   & 0.700 & 0.348 & \B  0.107 & \B 0.086 & \B 0.082  \\
			& Prior-OPT\textsubscript{\tiny IncResV2} & 0.645 & 0.302 & 0.157 & 0.131 & 0.123  \\
			& Prior-OPT\textsubscript{\tiny IncResV2\&Xception} & 0.558 & \B 0.218 & 0.117 & 0.108 & 0.106  \\
			\midrule
			\multirow{5}{*}{ResNet-101} & TA \citep{ma2021finding}& 0.301 & 0.285 & 0.267 & 0.258 & 0.253\\
			& Sign-OPT \citep{cheng2019sign}& 0.437 & 0.247 & 0.101 & 0.066 & 0.057   \\
			& SVM-OPT \citep{cheng2019sign}& 0.461 & 0.254 & 0.110 & 0.075 & 0.066 \\
			\cdashline{2-7}
			\addlinespace[0.1em]
			& Prior-Sign-OPT\textsubscript{\tiny ResNet50}   & 0.404 & 0.218 &\B  0.074 &\B  0.053 &\B  0.049  \\
			& Prior-OPT\textsubscript{\tiny ResNet50} & \B 0.289 &\B  0.138 & 0.075 & 0.064 & 0.060  \\
			\midrule
			\multirow{5}{*}{ResNeXt-101 ($64 \times 4$d)} & TA \citep{ma2021finding}& \B 0.362 & 0.344 & 0.323 & 0.313 & 0.307 \\
			& Sign-OPT \citep{cheng2019sign} & 0.611 & 0.326 & 0.131 & 0.090 & 0.078   \\
			& SVM-OPT \citep{cheng2019sign}& 0.667 & 0.336 & 0.131 & 0.089 & 0.078 \\
			\cdashline{2-7}
			\addlinespace[0.1em]
			& Prior-Sign-OPT\textsubscript{\tiny ResNet50}   & 0.574 & 0.303 & 0.104 & \B 0.075 & \B 0.069   \\
			& Prior-OPT\textsubscript{\tiny ResNet50}& 0.428 & \B 0.196 & \B 0.097 & 0.080 & 0.075  \\
			\midrule
			\multirow{5}{*}{SENet-154} & TA \citep{ma2021finding}& \B 0.355 & 0.336 & 0.316 & 0.306 & 0.300 \\
			& Sign-OPT \citep{cheng2019sign}& 0.563 & 0.326 & 0.132 & 0.082 & 0.067  \\
			& SVM-OPT \citep{cheng2019sign} & 0.570 & 0.325 & 0.132 & 0.082 & 0.068\\
			\cdashline{2-7}
			\addlinespace[0.1em]
			& Prior-Sign-OPT\textsubscript{\tiny ResNet50}  & 0.536 & 0.314 &\B 0.113 & \B 0.074 & \B 0.065   \\
			& Prior-OPT\textsubscript{\tiny ResNet50} & 0.448 & \B 0.246 & 0.129 & 0.102 & 0.094  \\
			\midrule
			\multirow{7}{*}{ViT} & TA \citep{ma2021finding}& \B 0.399 & 0.379 & 0.358 & 0.348 & 0.342 \\
			& Sign-OPT \citep{cheng2019sign}& 0.602 & 0.302 & 0.105 & 0.072 & \B 0.064 \\
			& SVM-OPT \citep{cheng2019sign} & 0.651 & 0.310 & 0.107 & 0.075 & 0.068 \\
			\cdashline{2-7}
			\addlinespace[0.1em]
			& Prior-Sign-OPT\textsubscript{\tiny ResNet50}  & 0.597 & 0.334 & 0.118 & 0.084 & 0.077   \\
			& Prior-Sign-OPT\textsubscript{\tiny ResNet50\&ConViT} & 0.539 & 0.273 & \B 0.090 & \B 0.069 & 0.065 \\
			& Prior-OPT\textsubscript{\tiny ResNet50} & 0.591 & 0.352 & 0.178 & 0.136 & 0.123 \\
			& Prior-OPT\textsubscript{\tiny ResNet50\&ConViT} & 0.429 & \B 0.217 & 0.124 & 0.110 & 0.106 \\
			\midrule
			\multirow{7}{*}{GC ViT} & TA \citep{ma2021finding} &\B 0.380 & 0.365 & 0.348 & 0.339 & 0.335\\
			& Sign-OPT \citep{cheng2019sign} & 0.680 & 0.434 & 0.186 & 0.119 & 0.098\\
			& SVM-OPT \citep{cheng2019sign}  & 0.678 & 0.427 & 0.183 & 0.116 & 0.097 \\
			\cdashline{2-7}
			\addlinespace[0.1em]
			& Prior-Sign-OPT\textsubscript{\tiny ResNet50}  & 0.670 & 0.445 & 0.183 & 0.116 & 0.097   \\
			& Prior-Sign-OPT\textsubscript{\tiny ResNet50\&ConViT} & 0.642 & 0.389 & \B0.141 & \B 0.092 &\B 0.079\\
			& Prior-OPT\textsubscript{\tiny ResNet50}  & 0.652 & 0.455 & 0.248 & 0.185 & 0.163  \\
			& Prior-OPT\textsubscript{\tiny ResNet50\&ConViT} & 0.538 & \B 0.305 & 0.160 & 0.131 & 0.122  \\
			\midrule
			\multirow{7}{*}{Swin Transformer} & TA \citep{ma2021finding}  & \B 0.536 & 0.515 & 0.491 & 0.479 & 0.472\\
			& Sign-OPT \citep{cheng2019sign}& 1.009 & 0.625 & 0.258 & 0.159 & 0.128\\
			& SVM-OPT \citep{cheng2019sign}  & 1.036 & 0.622 & 0.251 & 0.157 & 0.131 \\
			\cdashline{2-7}
			\addlinespace[0.1em]
			& Prior-Sign-OPT\textsubscript{\tiny ResNet50}  & 1.000 & 0.647 & 0.262 & 0.162 & 0.133  \\
			& Prior-Sign-OPT\textsubscript{\tiny ResNet50\&ConViT}  & 0.909 & 0.513 & 0.169 & \B 0.105 & \B 0.088 \\
			& Prior-OPT\textsubscript{\tiny ResNet50}& 0.942 & 0.619 & 0.309 & 0.226 & 0.198 \\
			& Prior-OPT\textsubscript{\tiny ResNet50\&ConViT}  & 0.662 & \B 0.321 & \B 0.159 & 0.129 & 0.120  \\
			
			\bottomrule
	\end{tabular}}
	\label{tab:ImageNet_linf_normal_models_results}
\end{table}

Tables \ref{tab:AUC_ASR_ViTs}, \ref{tab:AUC_ASR_CNNs}, and \ref{tab:AUC_ASR_Inceptions} show the experimental results of untargeted attacks against ViTs and CNNs on the ImageNet dataset, where ``AUC'' indicates area under the curve of the mean $\ell_2$ distortions versus the number of queries, ``Mean $\ell_2$'' denotes the average $\ell_2$ distortion of the final adversarial examples, and ``ASR'' indicates the attack success rate of the final adversarial examples. Here, the final adversarial examples are generated with the query budget of \nn{10000}. The ASR is defined as the percentage of samples with distortions below a threshold $\epsilon$, which is set to $\epsilon =\sqrt{0.001 \times d}$ in the ImageNet dataset and $\epsilon = 1.0$ in the CIFAR-10 dataset, where $d$ is the image dimension. Tables~\ref{tab:AUC_ASR_ViTs}, \ref{tab:AUC_ASR_CNNs}, and \ref{tab:AUC_ASR_Inceptions} show that the Prior-OPT with two surrogate models performs the best in most cases, and the PGD initialization of $\theta$ (e.g., Prior-OPT\textsubscript{\tiny $\theta_0^\text{PGD}$ + ResNet50}) can effectively reduce the AUC.

Table \ref{tab:ImageNet_linf_normal_models_results} demonstrates that Prior-OPT and Prior-Sign-OPT deliver competitive performance in $\ell_\infty$-norm attacks on ImageNet, surpassing Sign-OPT in average $\ell_\infty$ distortions, further validating our approach's effectiveness across attack types.

Fig. \ref{fig:imagenet_l2_untargeted_attack} shows the experimental results of untargeted $\ell_2$-norm attacks on the ImageNet dataset. The results demonstrate that Prior-OPT significantly outperforms all compared methods, including SQBA and BBA that also use surrogate models. The results also show that using multiple surrogate models can further boost performance. In addition, the PGD initialization ensures the algorithm's initial attack direction $\theta_0$ is already good, which enables it to achieve better untargeted attack performance even with a small number of queries (e.g., the query budget of \nn{1000}).
Fig.~\ref{fig:imagenet_l2_targeted_attack} shows that on ImageNet, Prior-Sign-OPT outperforms Prior-OPT in targeted $\ell_2$-norm attacks on CNN models, especially when using multiple surrogate models compared to a single one.

Figs. \ref{fig:imagenet_l2_untargeted_attack_asr} and \ref{fig:imagenet_l2_targeted_attack_asr} show the attack success rates of untargeted and targeted attacks on the ImageNet dataset. In untargeted $\ell_2$-norm attacks (Fig. \ref{fig:imagenet_l2_untargeted_attack_asr}), Prior-OPT with two surrogate models achieves the highest success rate, and both Prior-OPT and Prior-Sign-OPT outperform the baseline Sign-OPT. For targeted attacks (Fig. \ref{fig:imagenet_l2_targeted_attack_asr}), Prior-Sign-OPT exhibits superior performance compared to Prior-OPT. One plausible explanation is that Prior-Sign-OPT employs a more query-efficient strategy by leveraging the sign of directional derivatives, which requires only a single query per direction. When $\alpha_i$ is small, Prior-OPT, which relies on binary search to fully exploit prior information, becomes less efficient due to its high query cost. Consequently, Prior-Sign-OPT holds a relative advantage in such scenarios.

Figs.~\ref{fig:cifar10_l2_untargeted_attack} and \ref{fig:cifar10_l2_untargeted_attack_asr} present the experimental results on the CIFAR-10 dataset. The results demonstrate that SQBA and Prior-Sign-OPT achieve the highest performance among all evaluated methods. In future work, we aim to further improve the performance of our approach on the CIFAR-10 dataset.

Figs. \ref{subfig:ablation_study_alpha}, \ref{subfig:ablation_study_q}, and \ref{subfig:ablation_study_prior_number} present the ablation studies of $\E[\gamma^2]$ based on our theoretical results, with Eq.~\eqref{eq:square-sign-opt} for Sign-OPT, Eq.~\eqref{eq:prior-sign-opt-square-s>1} for Prior-Sign-OPT, and Eq.~\eqref{eq:prior-opt-expectation_gamma_square-s>1} for Prior-OPT. 
Fig.~\ref{subfig:ablation_study_prior_sign_opt} demonstrates the performance of Prior-Sign-OPT in untargeted $\ell_2$-norm attacks against the Swin Transformer on the ImageNet dataset, evaluated with different values of $q$.

Figs. \ref{fig:sign-opt-adv-images}, \ref{fig:prior-sign-opt-adv-images}, and \ref{fig:prior-opt-adv-images} show examples of adversarial images generated using different numbers of queries in targeted attacks with Sign-OPT, Prior-Sign-OPT, and Prior-OPT methods. Figs. \ref{fig:sign-opt-adv-perturbations}, \ref{fig:prior-sign-opt-adv-perturbations}, and \ref{fig:prior-opt-adv-perturbations} show the corresponding adversarial perturbations for the Sign-OPT, Prior-Sign-OPT, and Prior-OPT methods. 
Initially, all methods start with an image from the target class and iteratively minimize the $\ell_2$-norm distance to the original image, while maintaining the predicted label as the target class. Prior-Sign-OPT and Prior-OPT achieve a faster reduction in perturbation magnitude compared to Sign-OPT, as shown in Figs. \ref{fig:sign-opt-adv-perturbations}, \ref{fig:prior-sign-opt-adv-perturbations}, and \ref{fig:prior-opt-adv-perturbations}.

\begin{figure}[t]
	\centering
	\begin{subfigure}{0.44\textwidth}
		\centering
		\includegraphics[width=\linewidth]{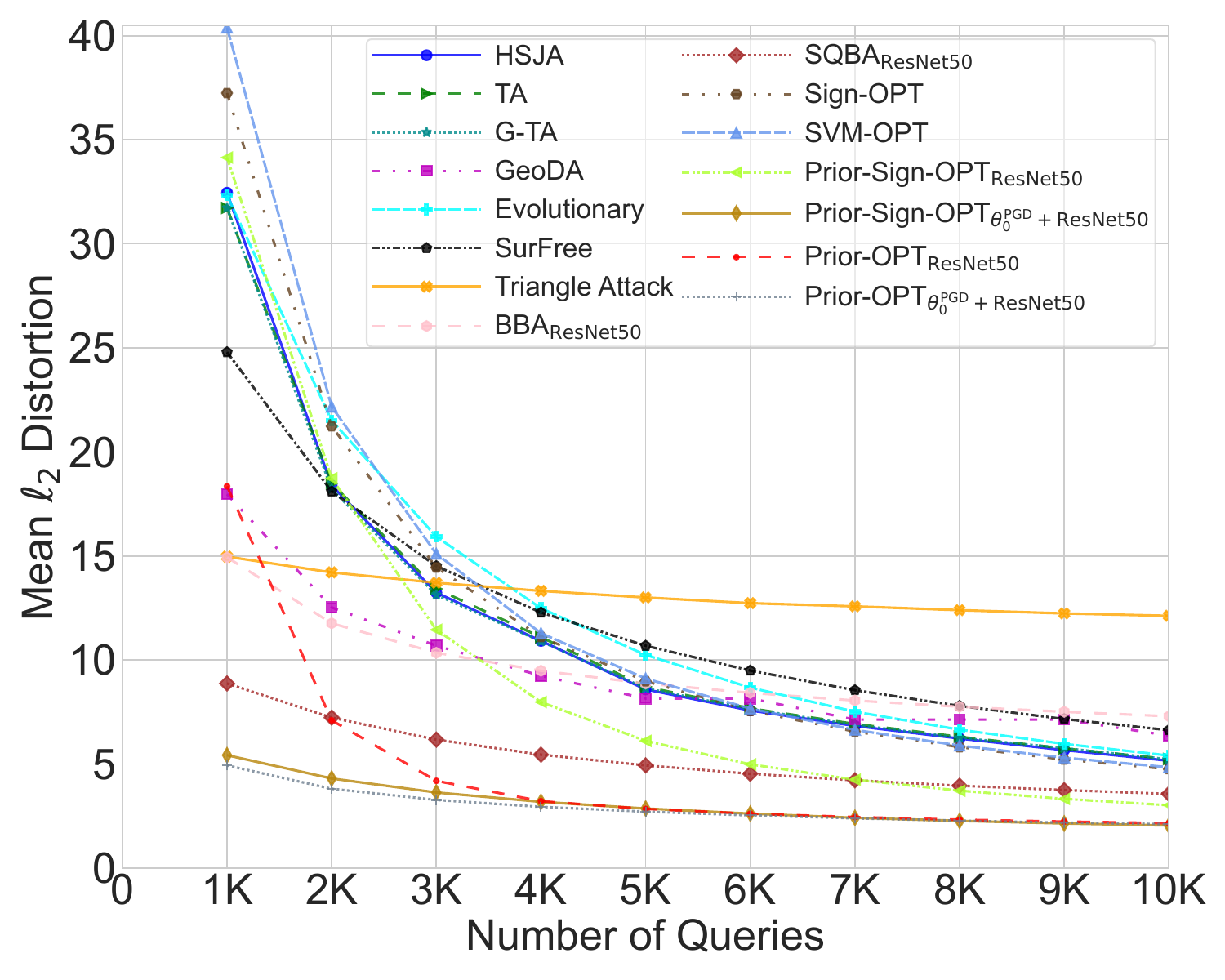}
		\caption{\scriptsize ResNet-101}
		\label{subfig:untargeted_l2_resnet101}
	\end{subfigure}
	\begin{subfigure}{0.44\textwidth}
		\includegraphics[width=\linewidth]{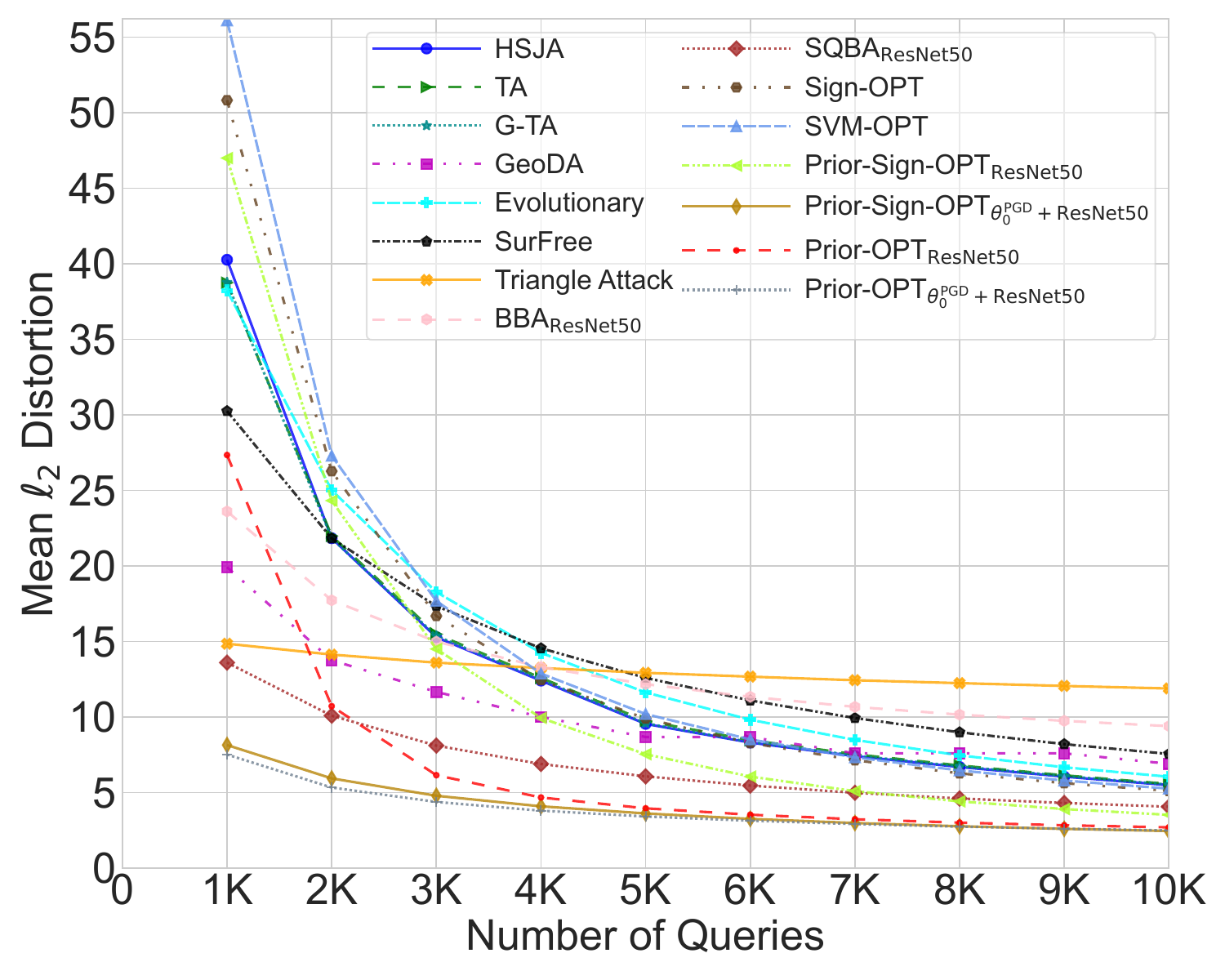}
		\caption{\scriptsize ResNeXt-101 ($64 \times 4$d)}
		\label{subfig:untargeted_l2_resnext101}
	\end{subfigure}
	\begin{subfigure}{0.44\textwidth}
		\includegraphics[width=\linewidth]{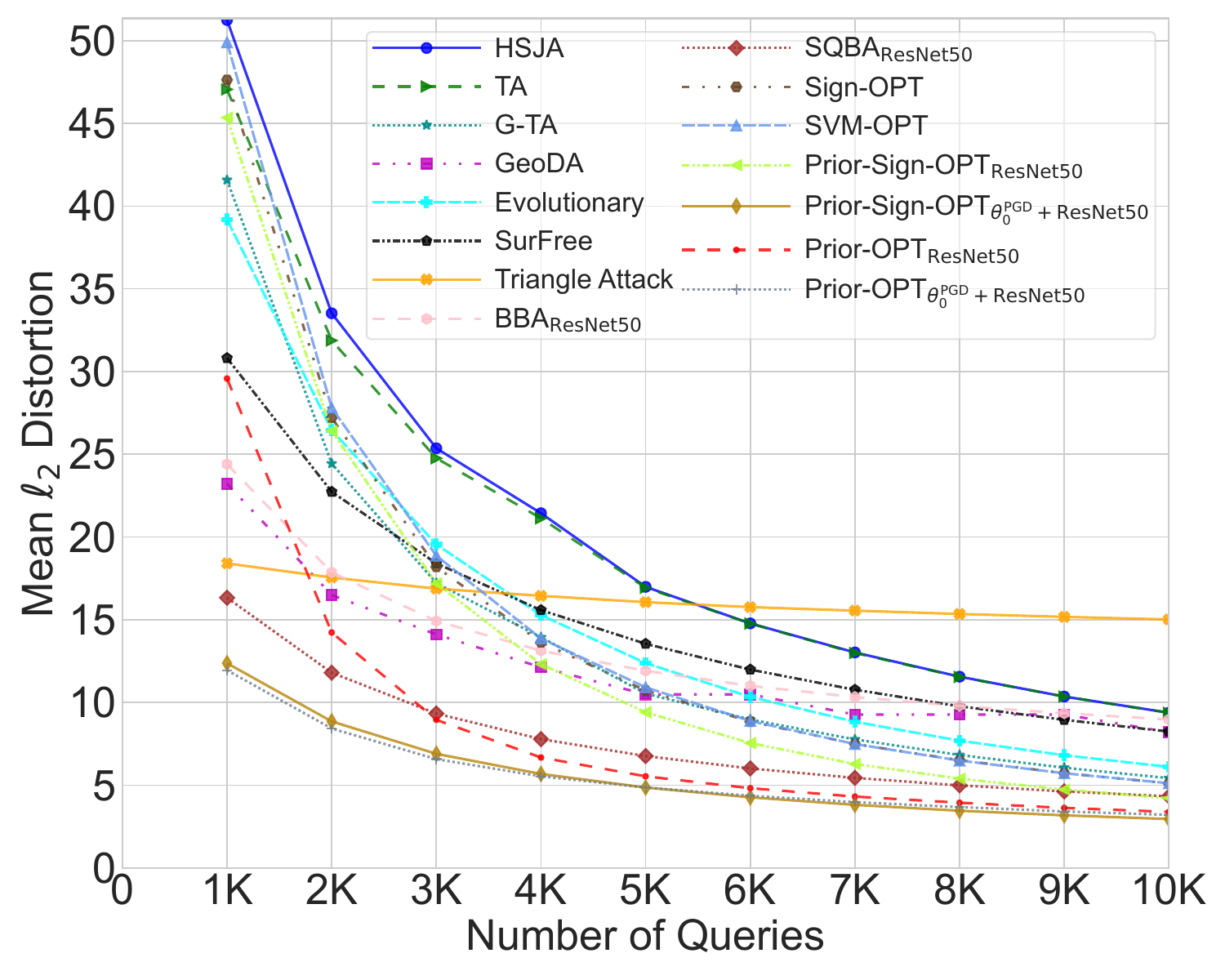}
		\caption{\scriptsize SENet-154}
		\label{subfig:untargeted_l2_senet154}
	\end{subfigure}
	\begin{subfigure}{0.44\textwidth}
		\includegraphics[width=\linewidth]{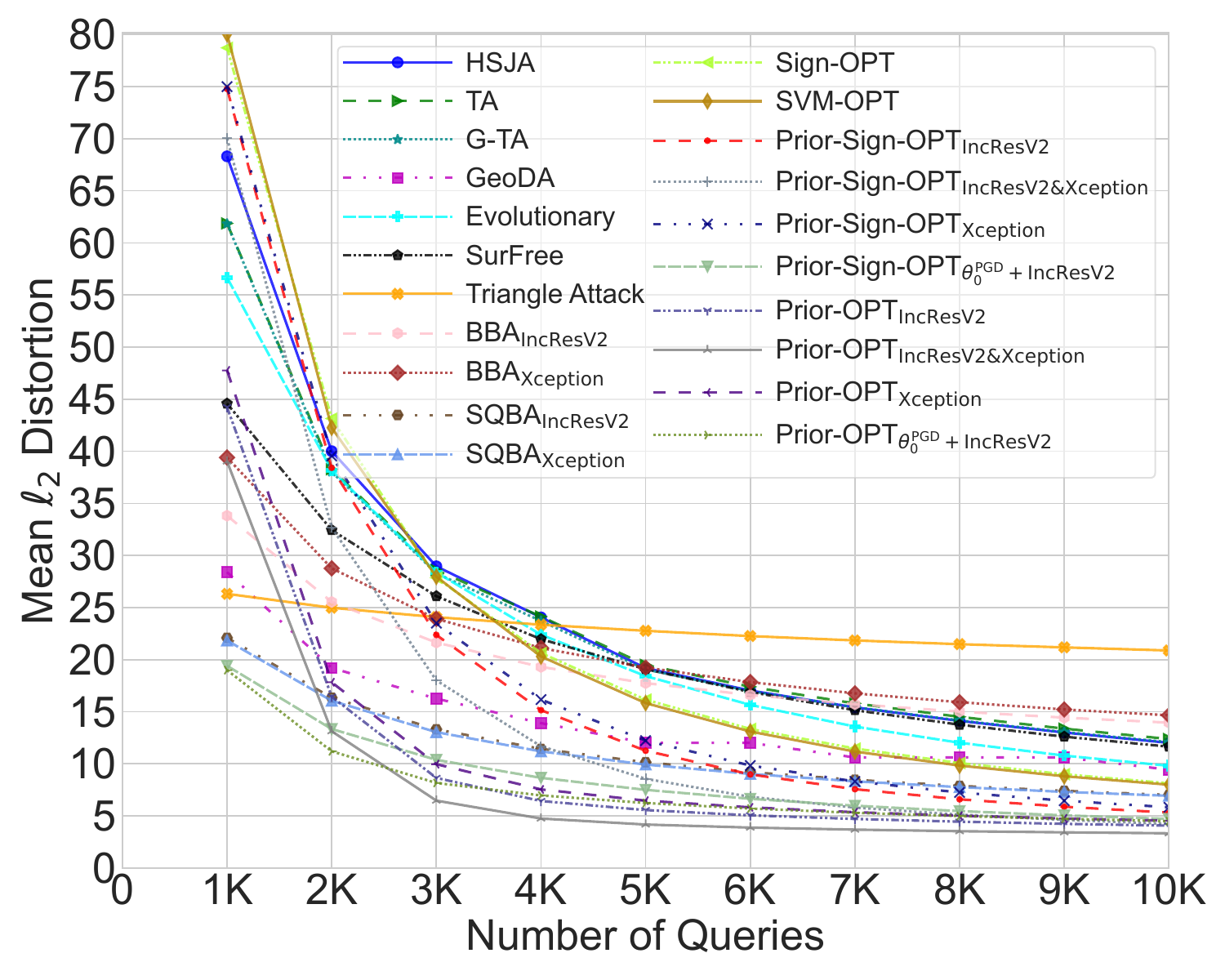}
		\caption{\scriptsize Inception-V3}
		\label{subfig:untargeted_l2_inceptionv3}
	\end{subfigure}
	\begin{subfigure}{0.44\textwidth}
		\centering
		\includegraphics[width=\linewidth]{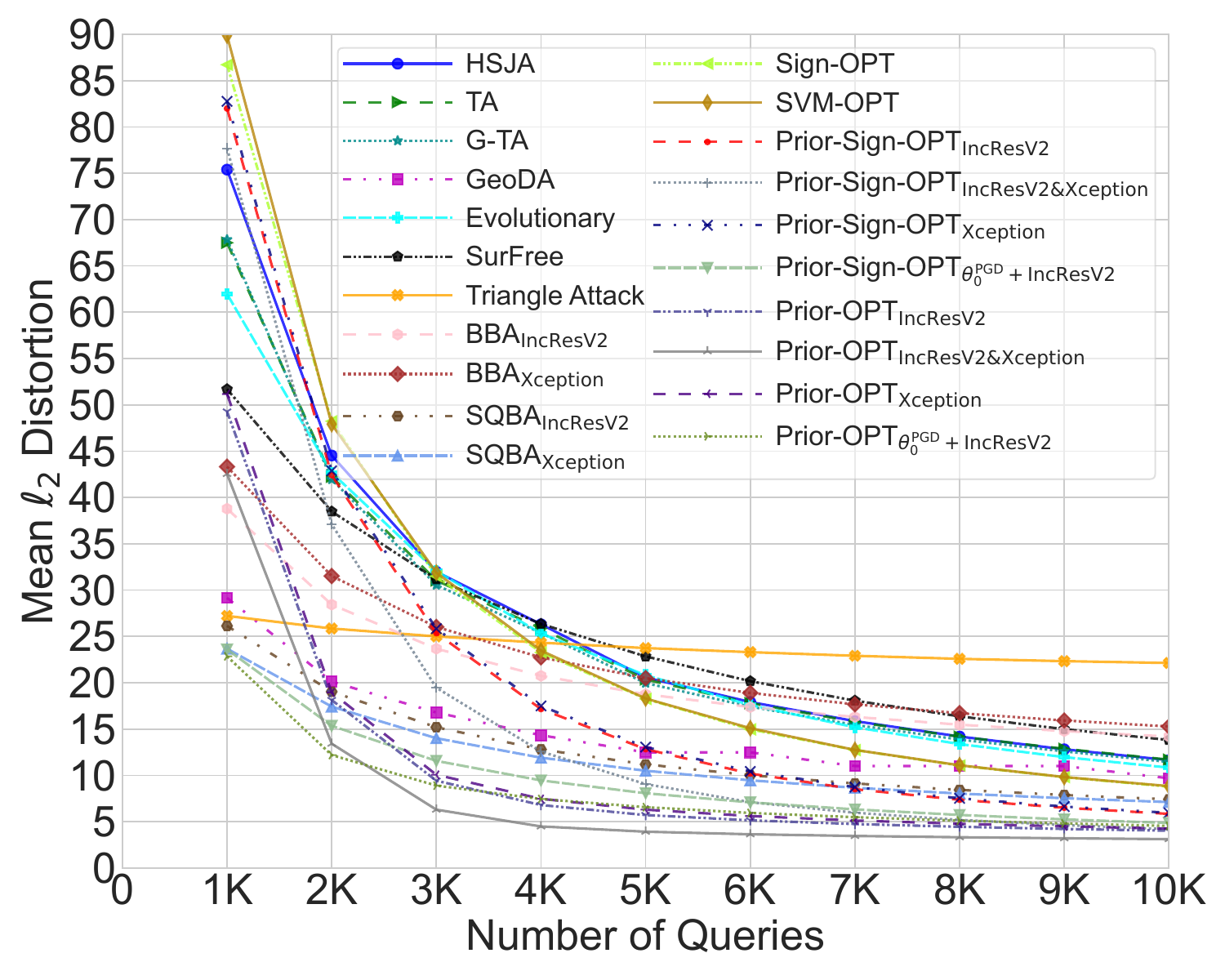}
		\caption{\scriptsize Inception-V4}
		\label{subfig:untargeted_l2_inceptionv4}
	\end{subfigure}
	\begin{subfigure}{0.44\textwidth}
		\includegraphics[width=\linewidth]{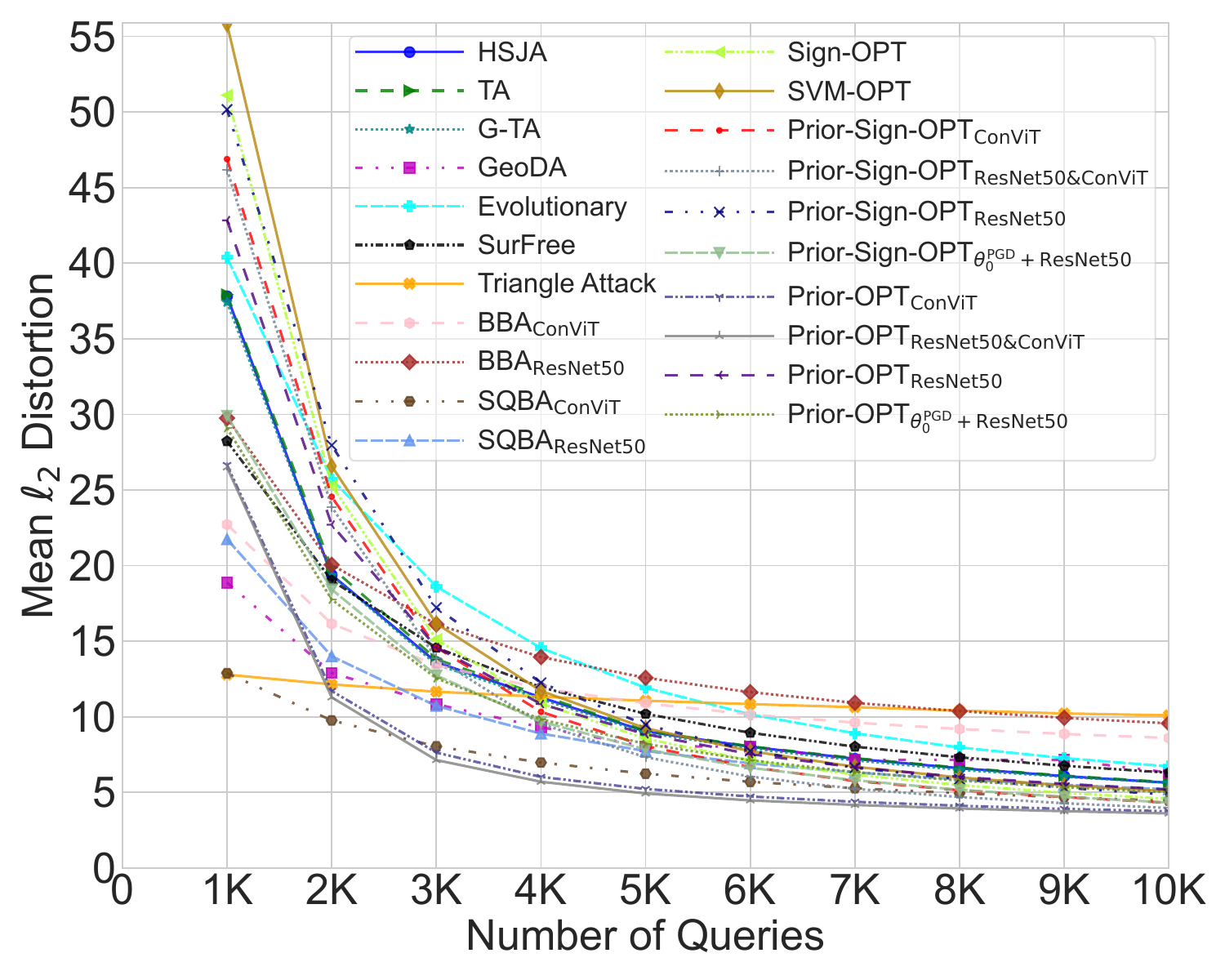}
		\caption{\scriptsize ViT}
		\label{subfig:untargeted_l2_vit}
	\end{subfigure}
	\begin{subfigure}{0.44\textwidth}
		\includegraphics[width=\linewidth]{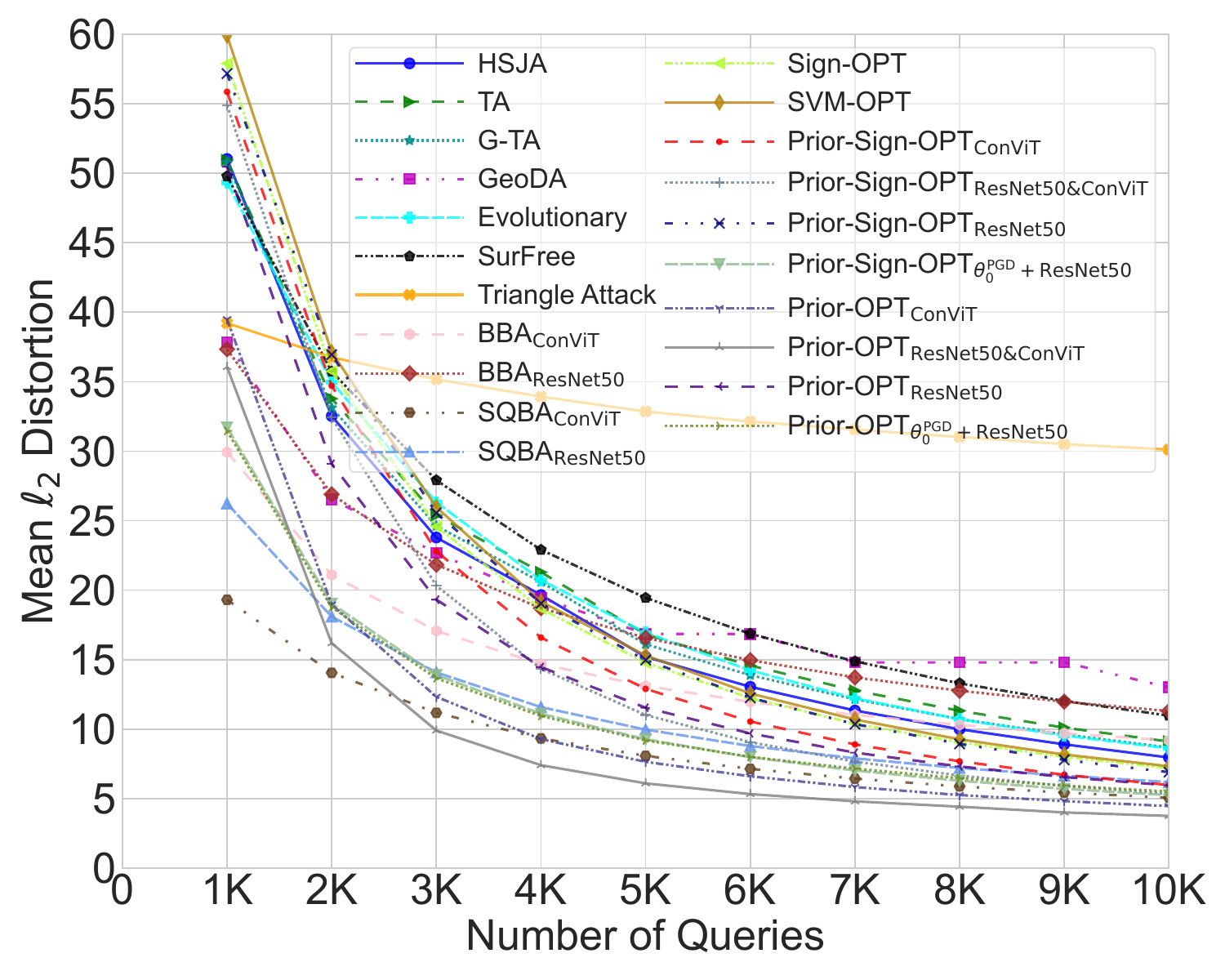}
		\caption{\scriptsize GC ViT}
		\label{subfig:untargeted_l2_gcvit}
	\end{subfigure}
	\begin{subfigure}{0.44\textwidth}
		\includegraphics[width=\linewidth]{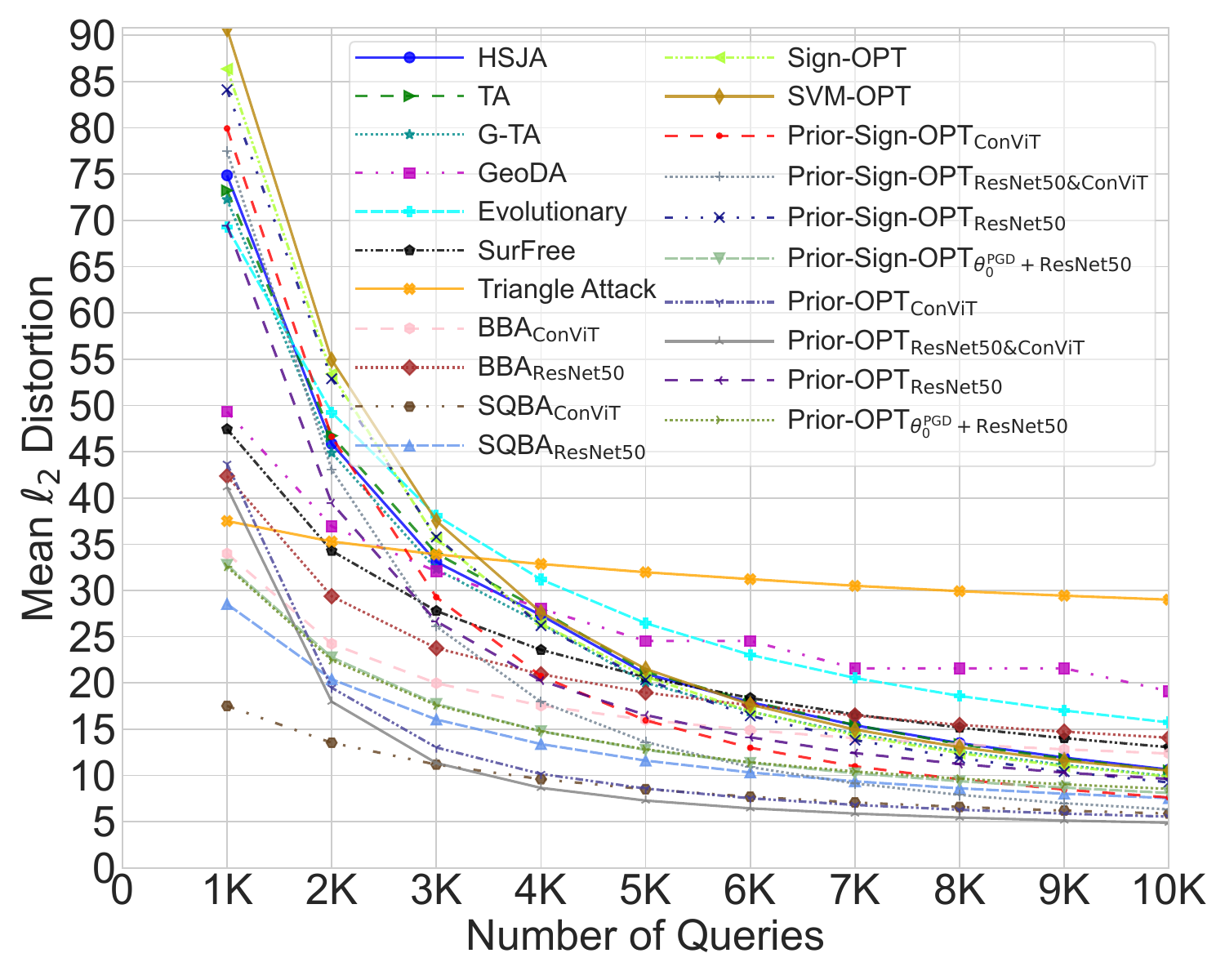}
		\caption{\scriptsize Swin Transformer}
		\label{subfig:untargeted_l2_swin}
	\end{subfigure}
	
	\caption{Mean distortions of untargeted $\ell_2$-norm attack under different query budgets on the ImageNet dataset.}
	\label{fig:imagenet_l2_untargeted_attack}
	
\end{figure}

\begin{figure}[t]
	\centering
	\begin{subfigure}{0.44\textwidth}
		\centering
		\includegraphics[width=\linewidth]{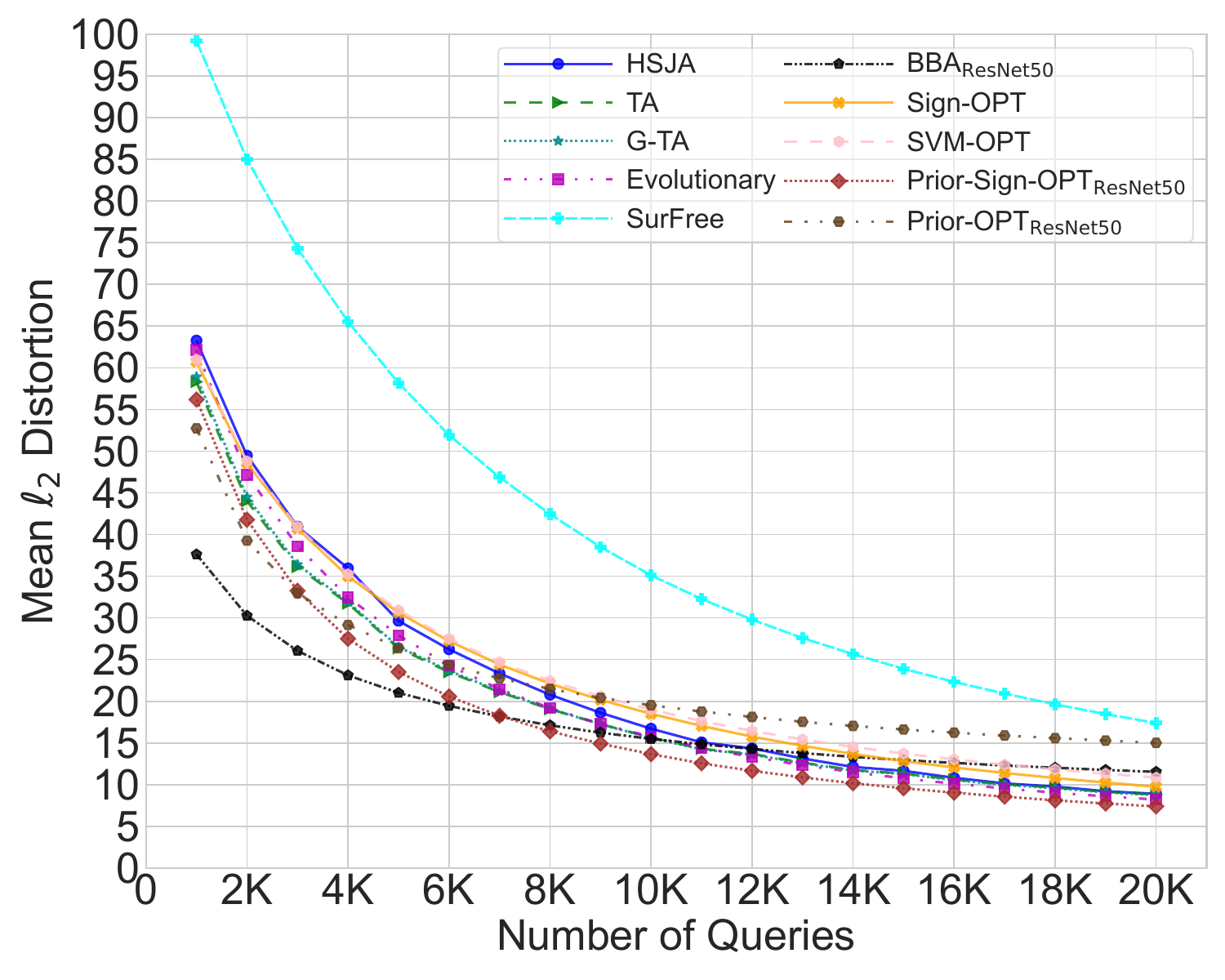}
		\caption{\scriptsize ResNet-101}
		\label{subfig:resnet101_l2_targeted_attack}
	\end{subfigure}
	\begin{subfigure}{0.44\textwidth}
		\includegraphics[width=\linewidth]{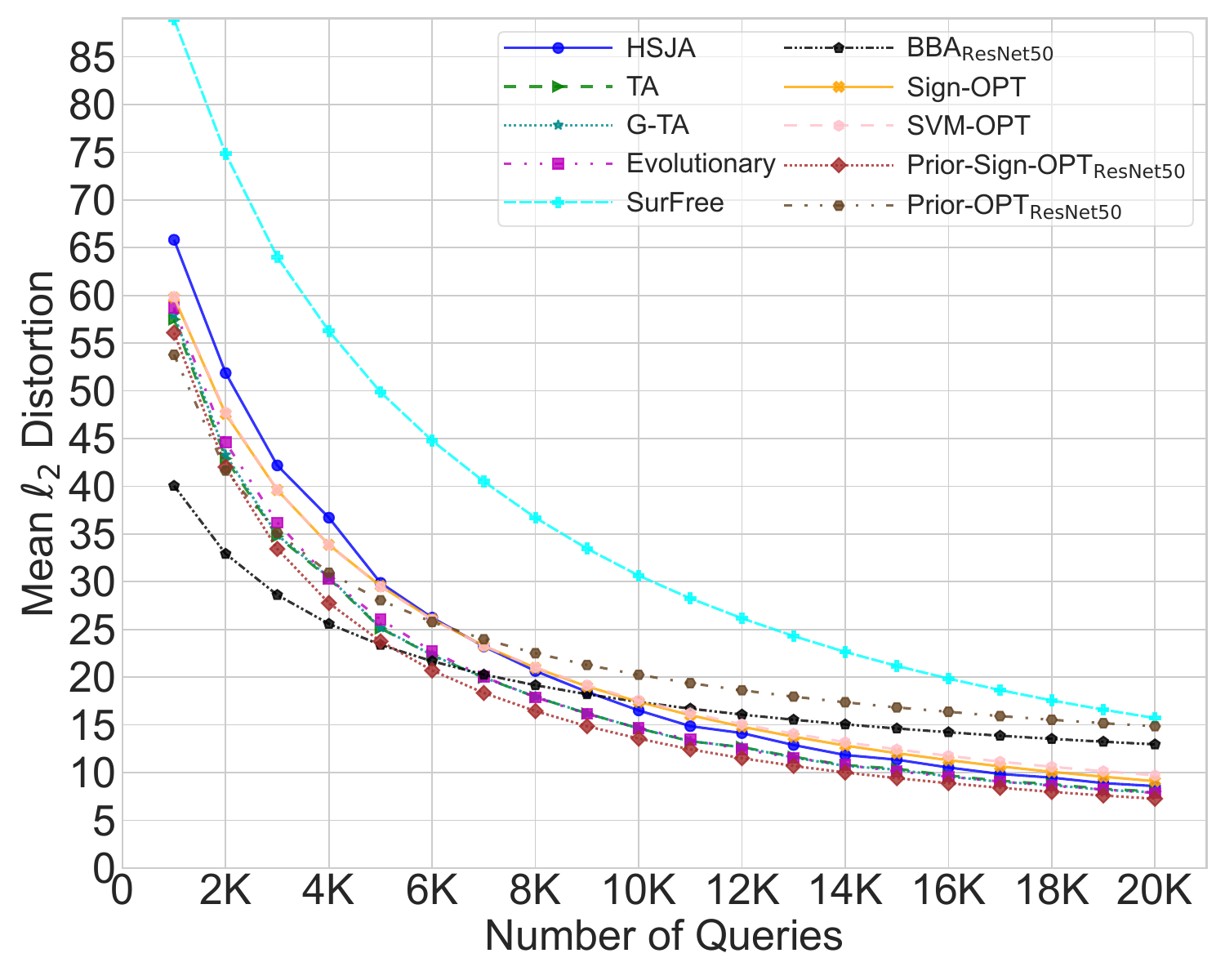}
		\caption{\scriptsize ResNeXt-101 ($64 \times 4$d)}
		\label{subfig:resnext101_l2_targeted_attack}
	\end{subfigure}
	\begin{subfigure}{0.44\textwidth}
		\includegraphics[width=\linewidth]{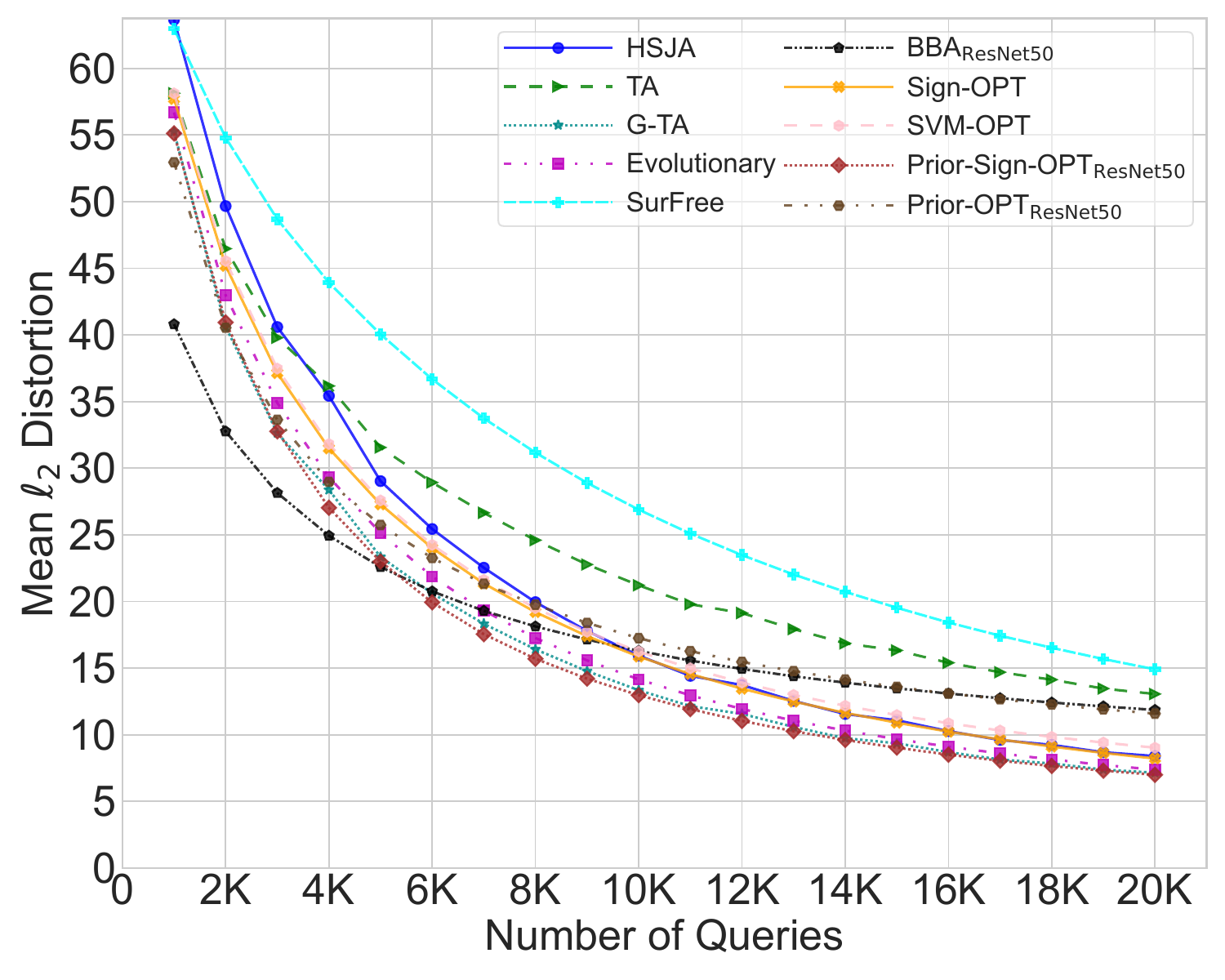}
		\caption{\scriptsize SENet-154}
		\label{subfig:senet154_l2_targeted_attack}
	\end{subfigure}
	\begin{subfigure}{0.44\textwidth}
		\includegraphics[width=\linewidth]{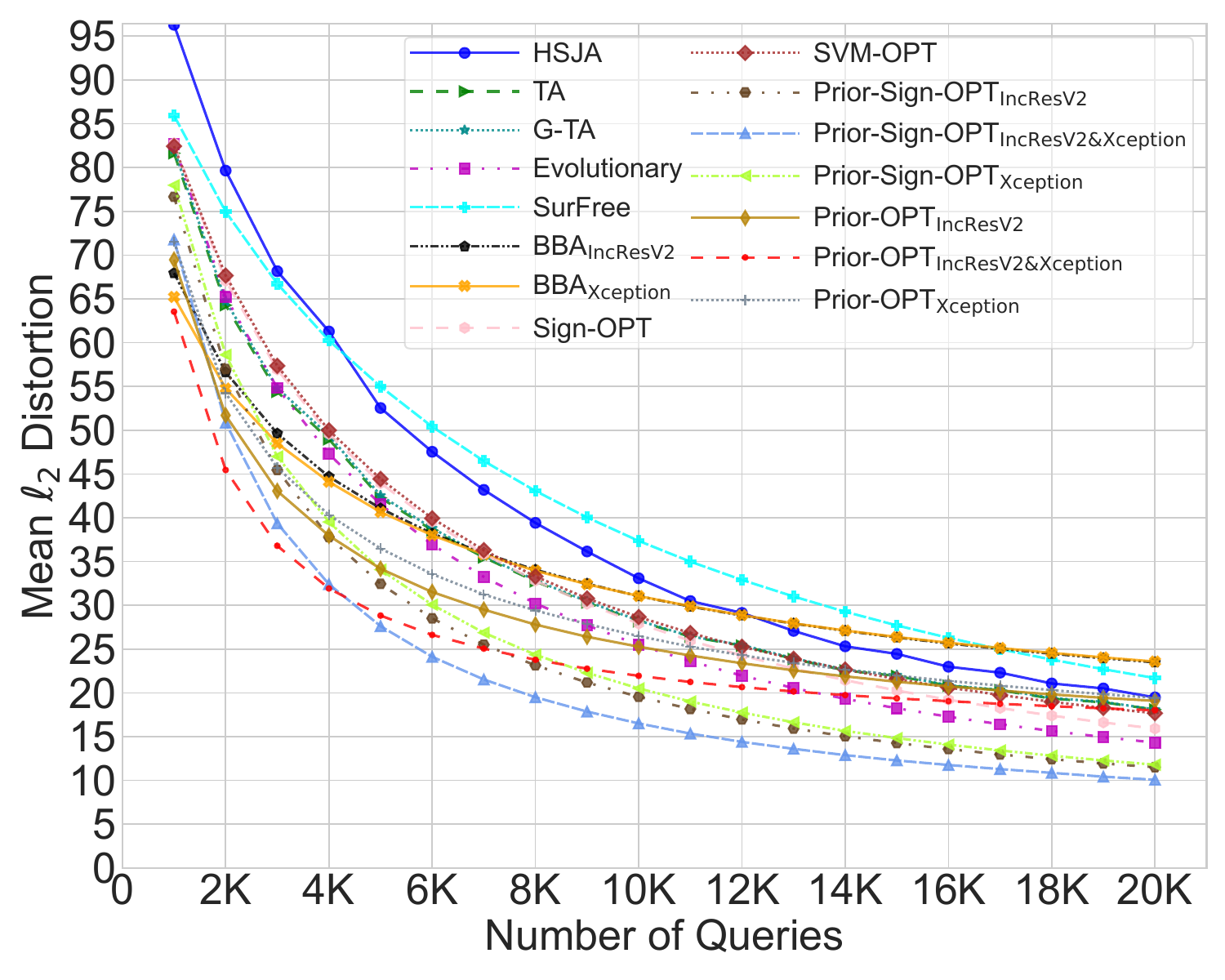}
		\caption{\scriptsize Inception-V3}
		\label{subfig:inceptionv3_l2_targeted_attack}
	\end{subfigure}
	\begin{subfigure}{0.44\textwidth}
		\centering
		\includegraphics[width=\linewidth]{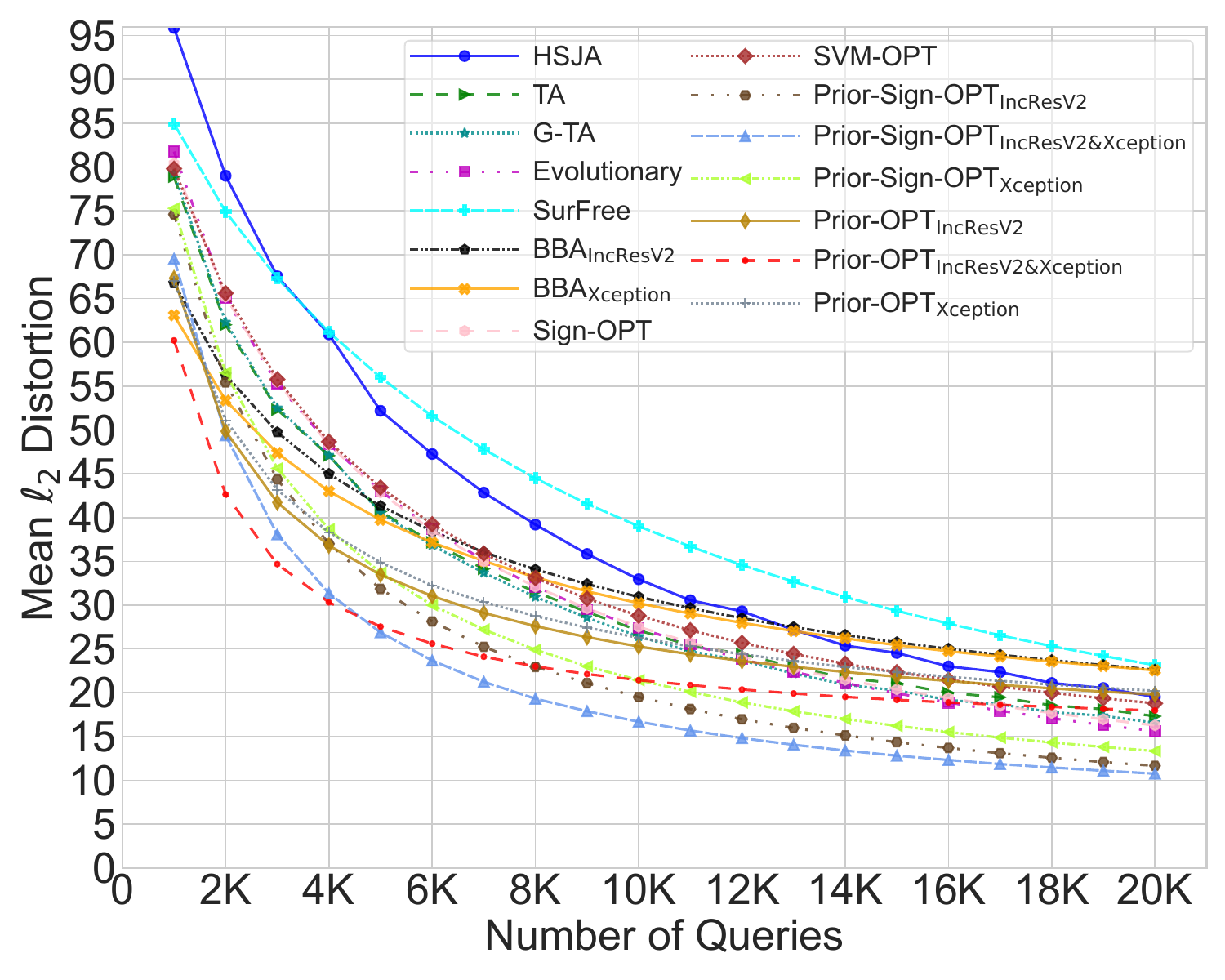}
		\caption{\scriptsize Inception-V4}
		\label{subfig:inceptionv4_l2_targeted_attack}
	\end{subfigure}
	\begin{subfigure}{0.44\textwidth}
		\includegraphics[width=\linewidth]{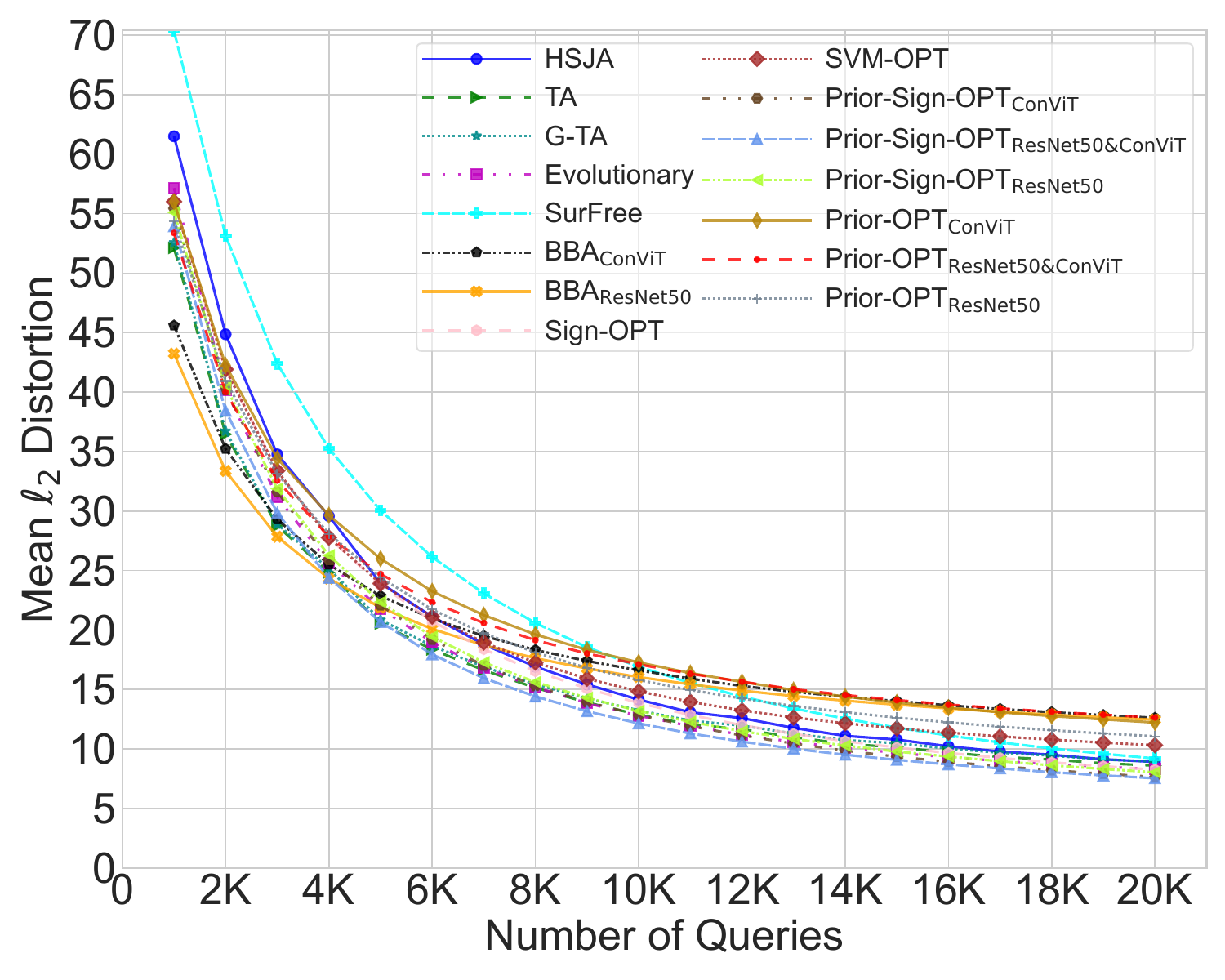}
		\caption{\scriptsize ViT}
		\label{subfig:vit_l2_targeted_attack}
	\end{subfigure}
	\begin{subfigure}{0.44\textwidth}
		\includegraphics[width=\linewidth]{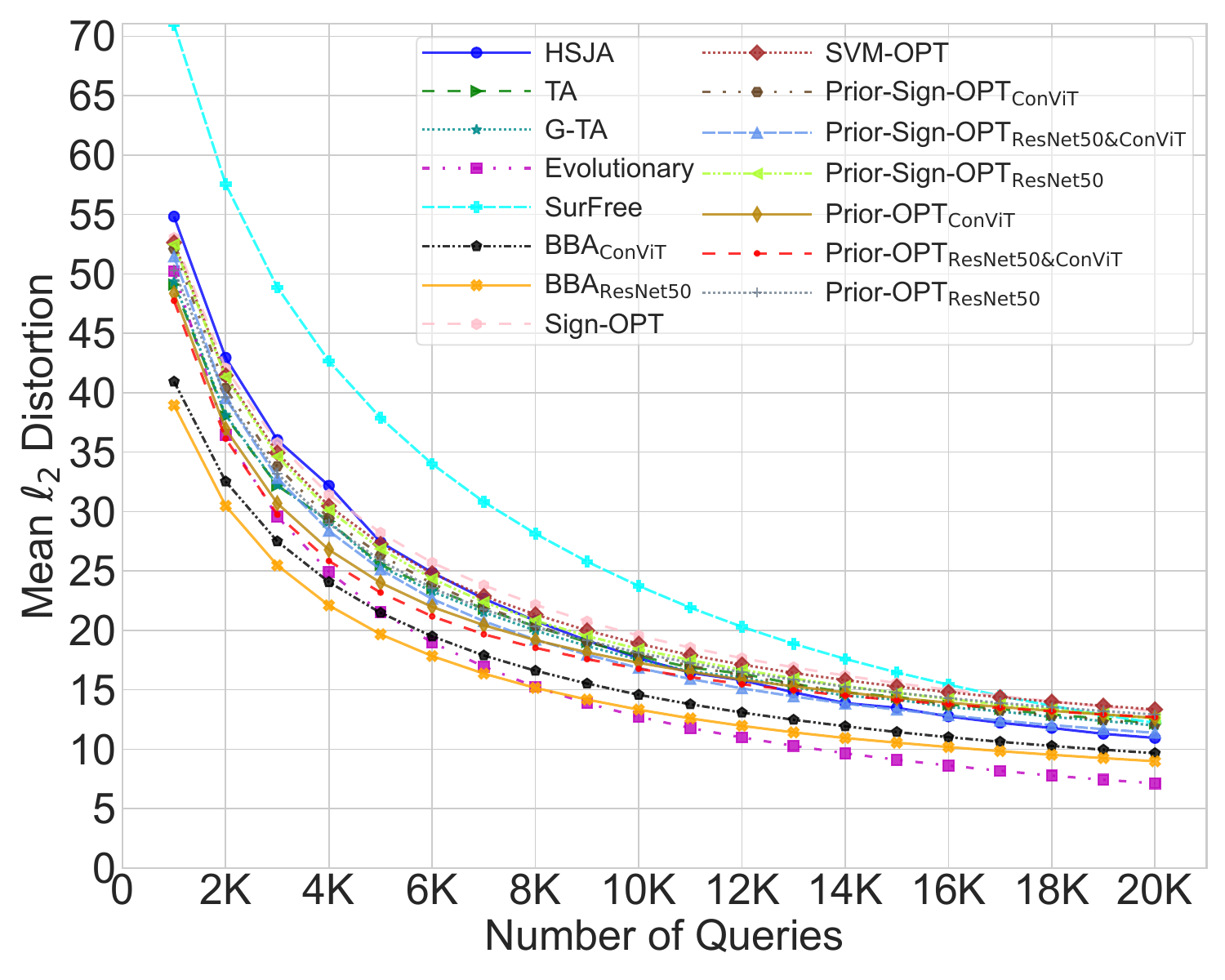}
		\caption{\scriptsize GC ViT}
		\label{subfig:gcvit_l2_targeted_attack}
	\end{subfigure}
	\begin{subfigure}{0.44\textwidth}
		\includegraphics[width=\linewidth]{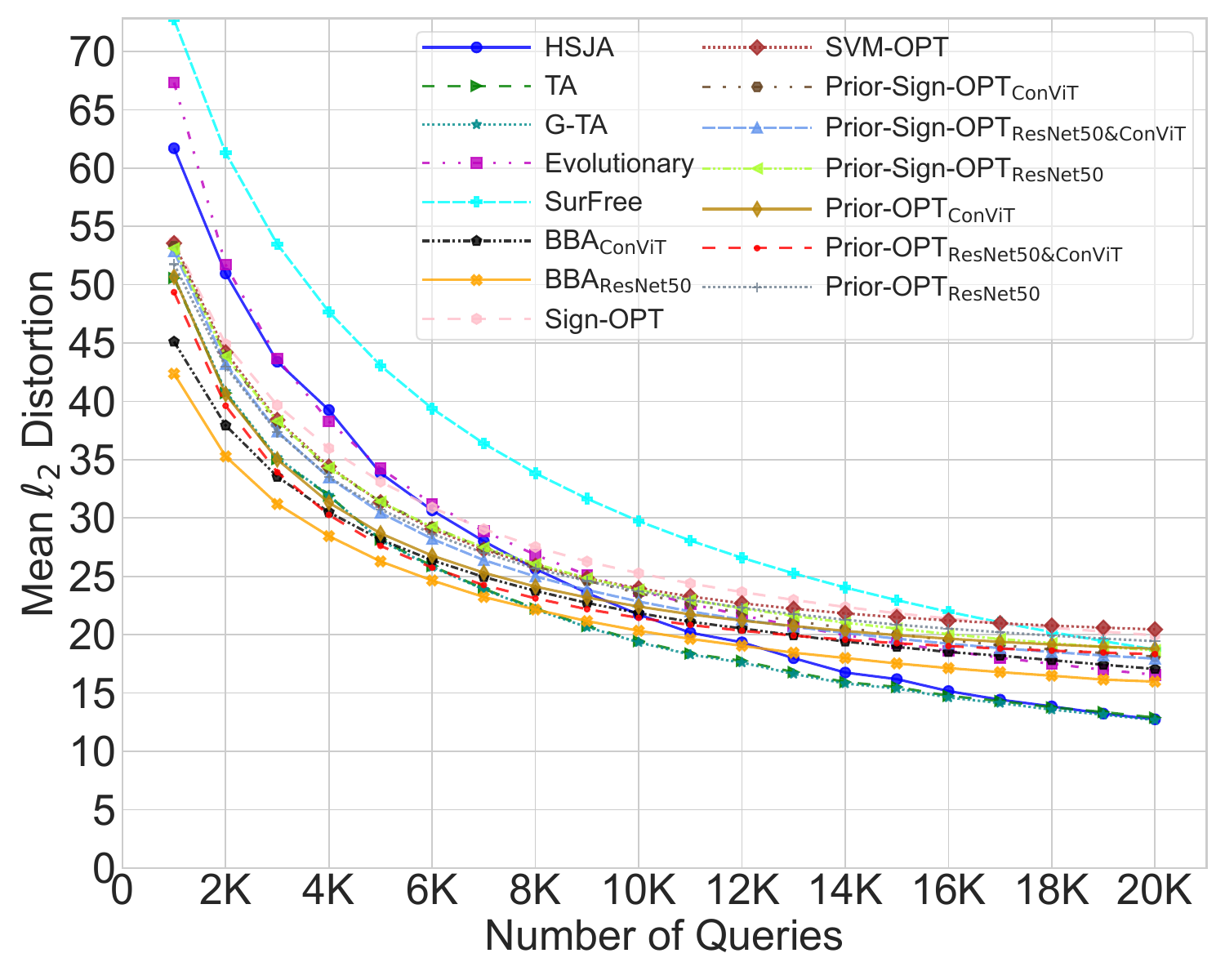}
		\caption{\scriptsize Swin Transformer}
		\label{subfig:swin_l2_targeted_attack}
	\end{subfigure}
	\caption{Mean distortions of targeted $\ell_2$-norm attacks under different query budgets on the ImageNet dataset.}
	\label{fig:imagenet_l2_targeted_attack}
\end{figure}

\begin{figure}[t]
	\centering
	\begin{subfigure}{0.44\textwidth}
		\centering
		\includegraphics[width=\linewidth]{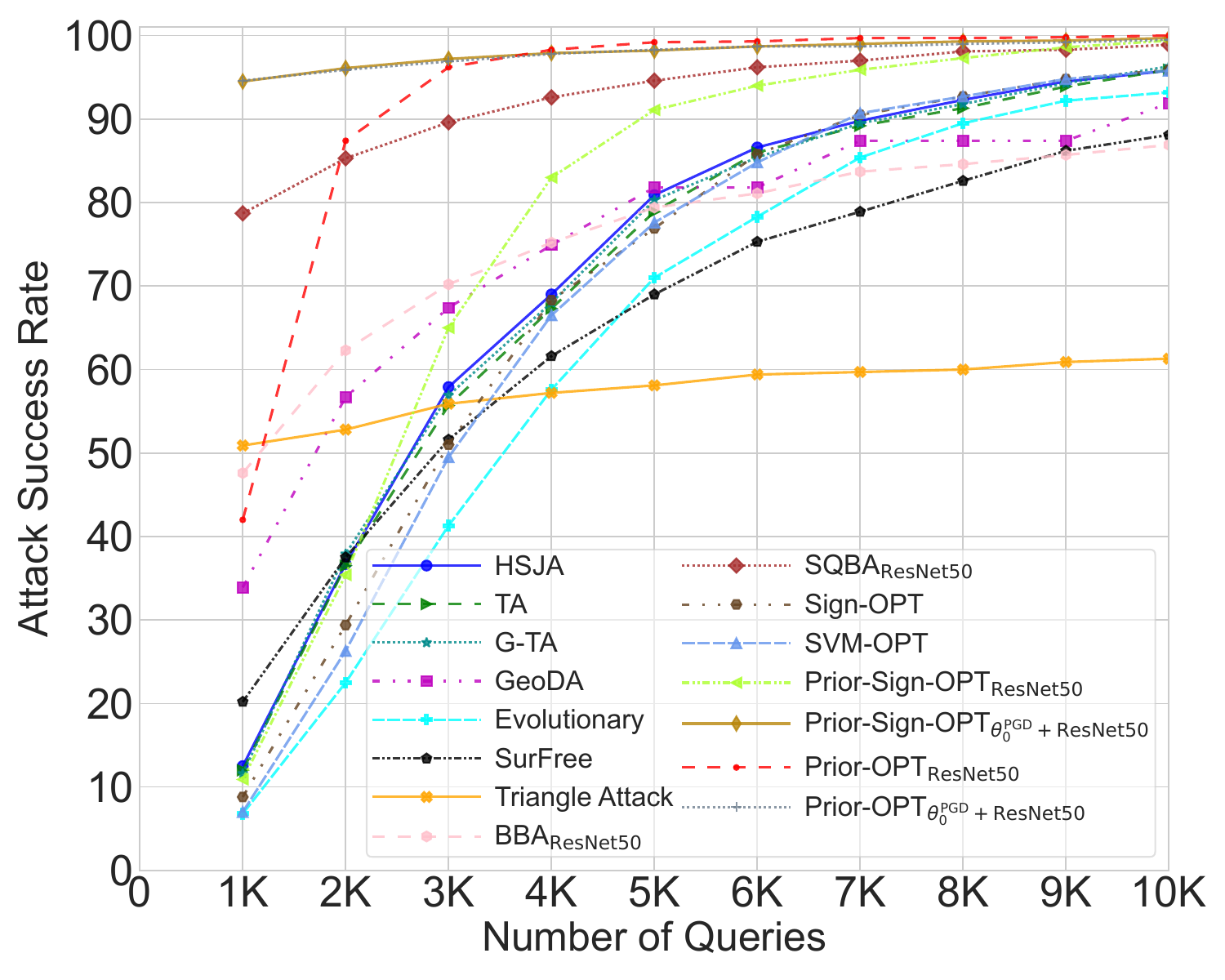}
		\caption{\scriptsize ResNet-101}
		\label{subfig:asr_resnet101_l2_untargeted_attack}
	\end{subfigure}
	\begin{subfigure}{0.44\textwidth}
		\includegraphics[width=\linewidth]{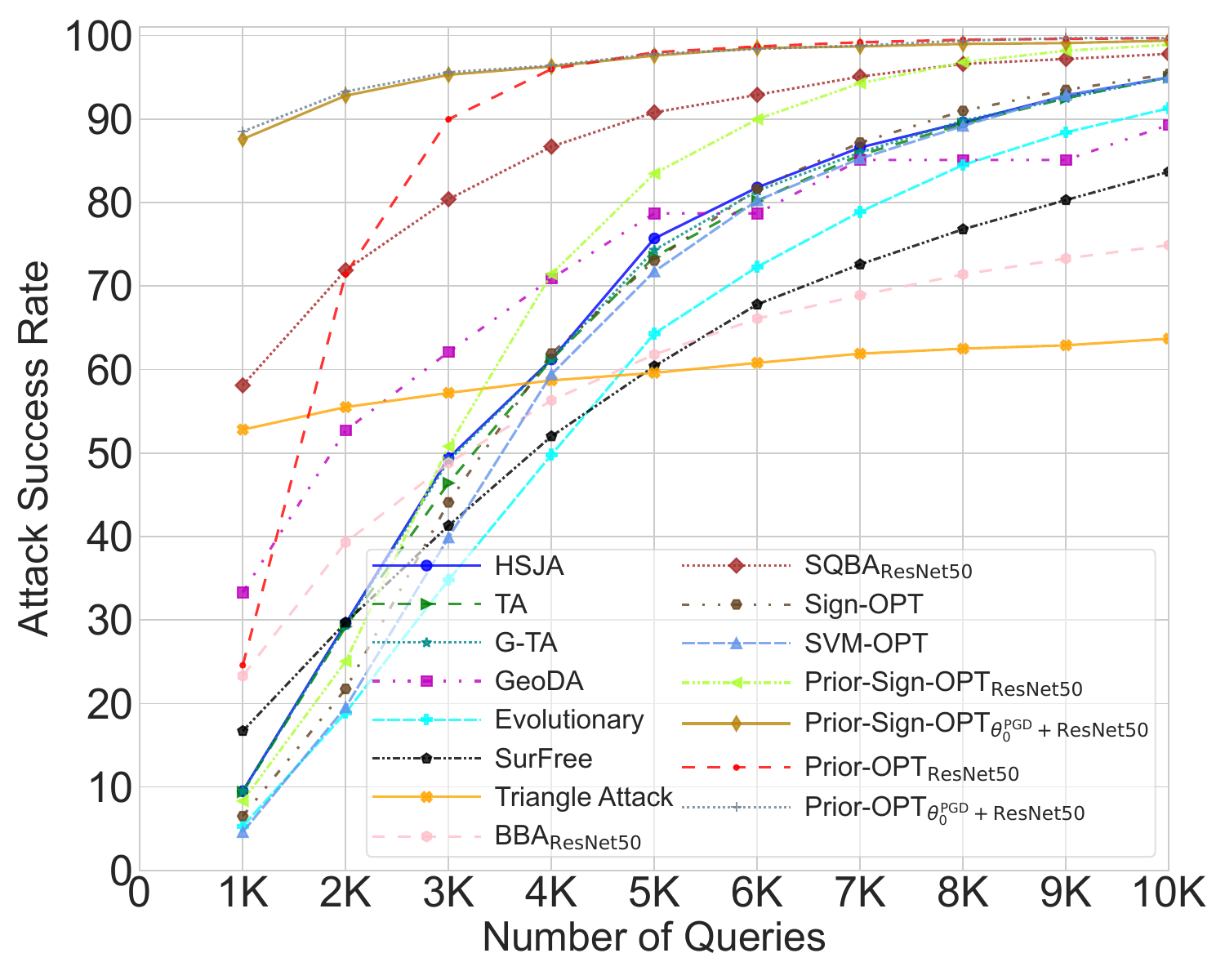}
		\caption{\scriptsize ResNeXt-101 ($64 \times 4$d)}
		\label{subfig:asr_resnext101_l2_untargeted_attack}
	\end{subfigure}
	\begin{subfigure}{0.44\textwidth}
		\includegraphics[width=\linewidth]{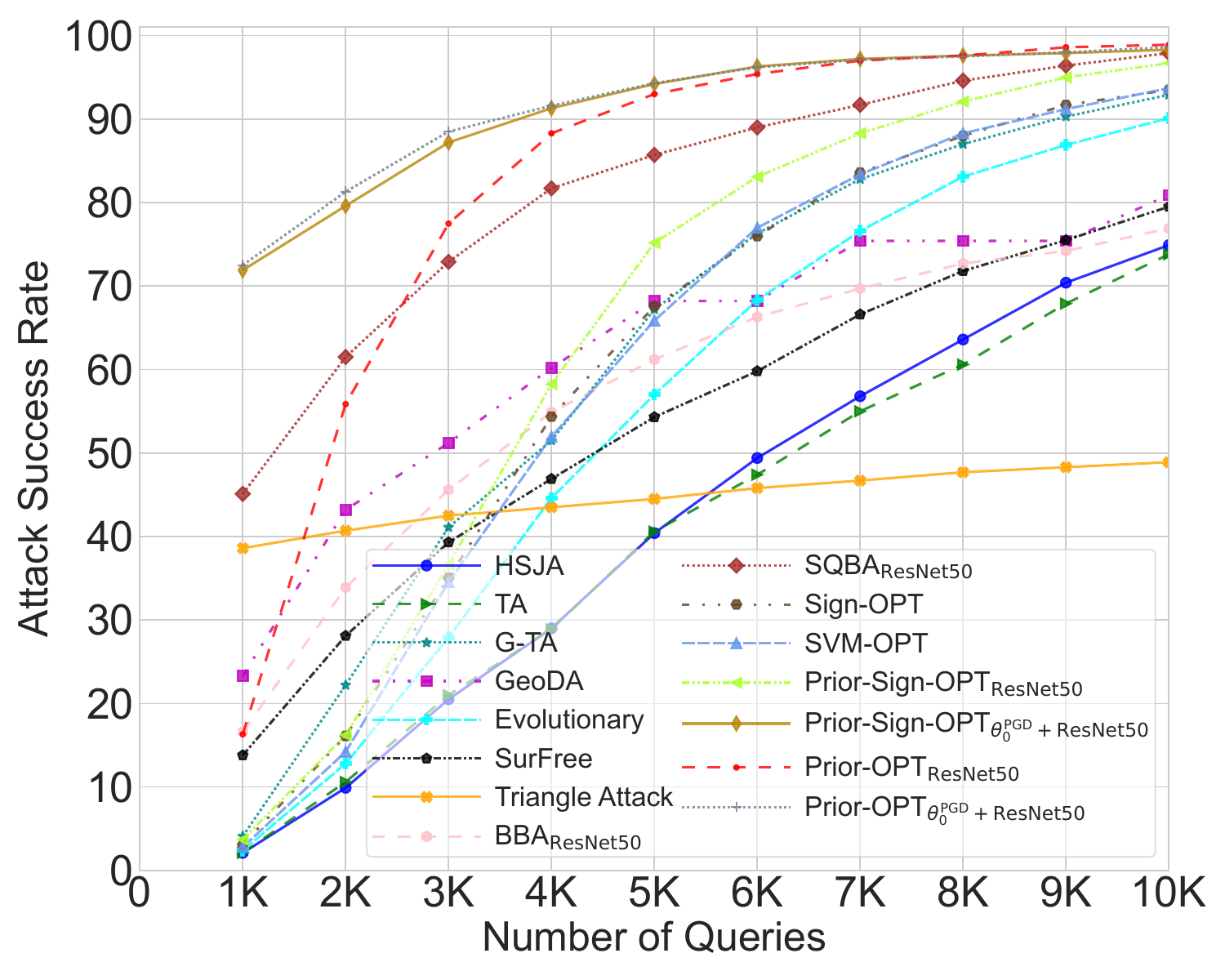}
		\caption{\scriptsize SENet-154}
		\label{subfig:asr_senet154_l2_untargeted_attack}
	\end{subfigure}
	\begin{subfigure}{0.44\textwidth}
		\includegraphics[width=\linewidth]{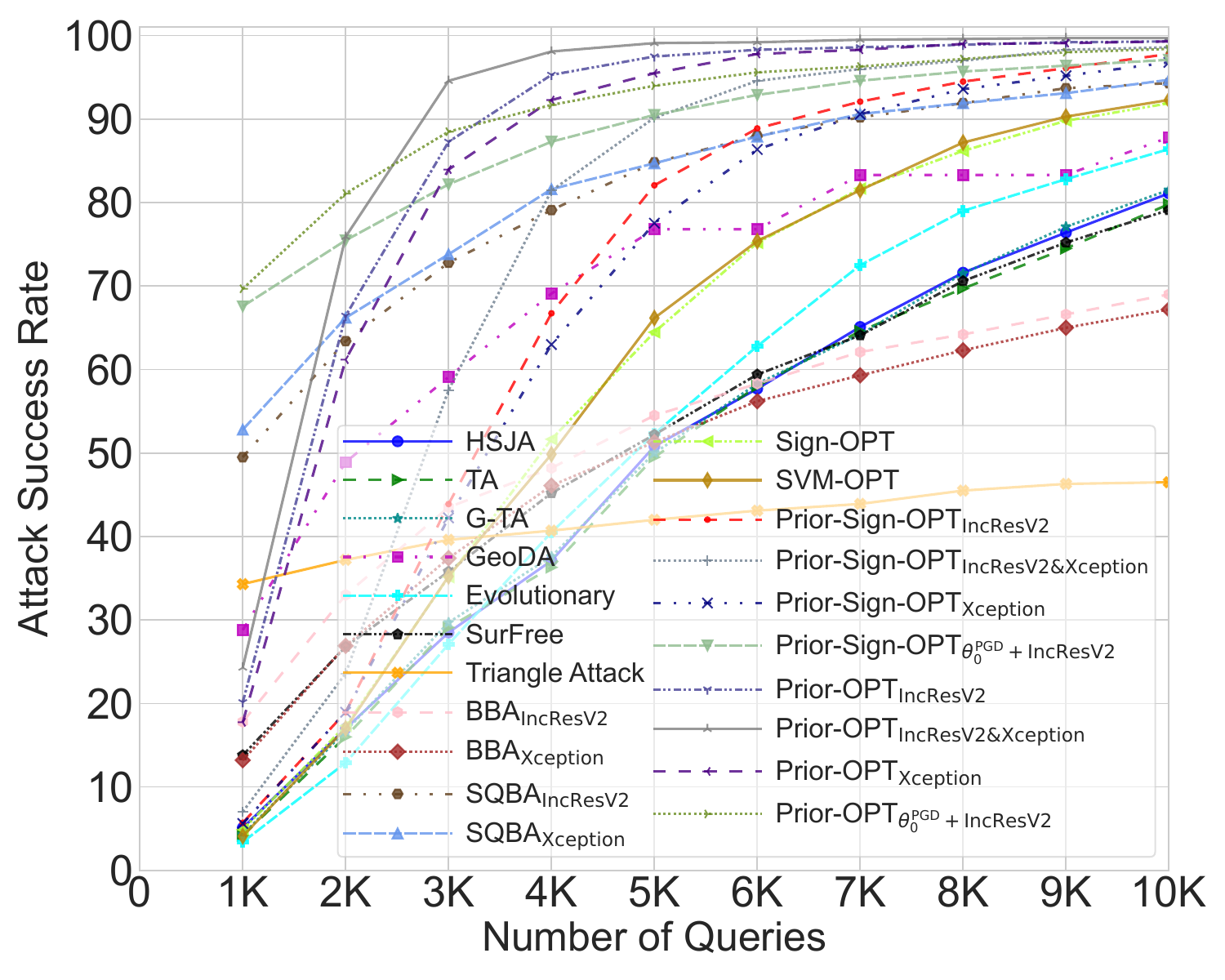}
		\caption{\scriptsize Inception-V3}
		\label{subfig:asr_inceptionv3_l2_untargeted_attack}
	\end{subfigure}
	\begin{subfigure}{0.44\textwidth}
		\centering
		\includegraphics[width=\linewidth]{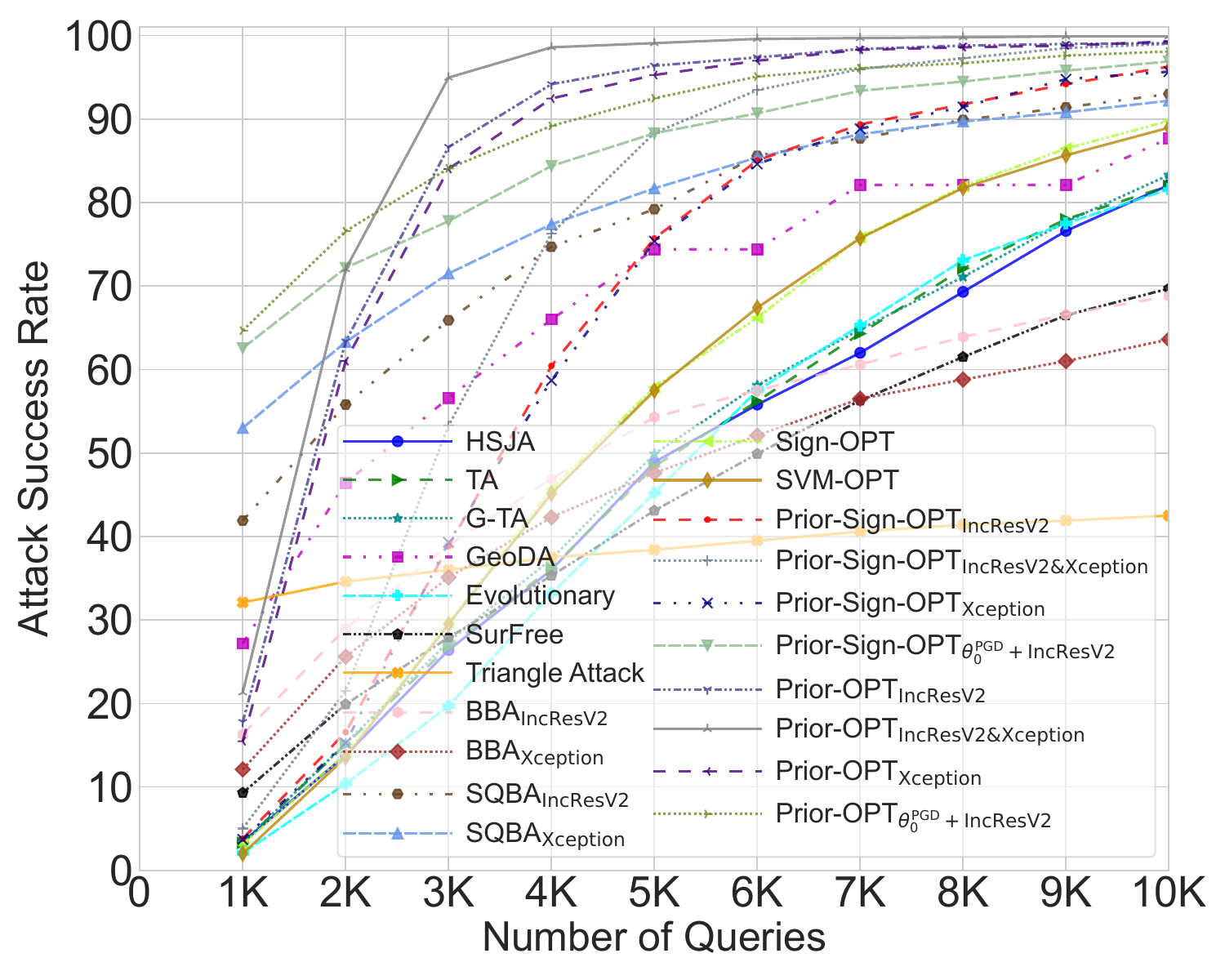}
		\caption{\scriptsize Inception-V4}
		\label{subfig:asr_inceptionv4_l2_untargeted_attack}
	\end{subfigure}
	\begin{subfigure}{0.44\textwidth}
		\includegraphics[width=\linewidth]{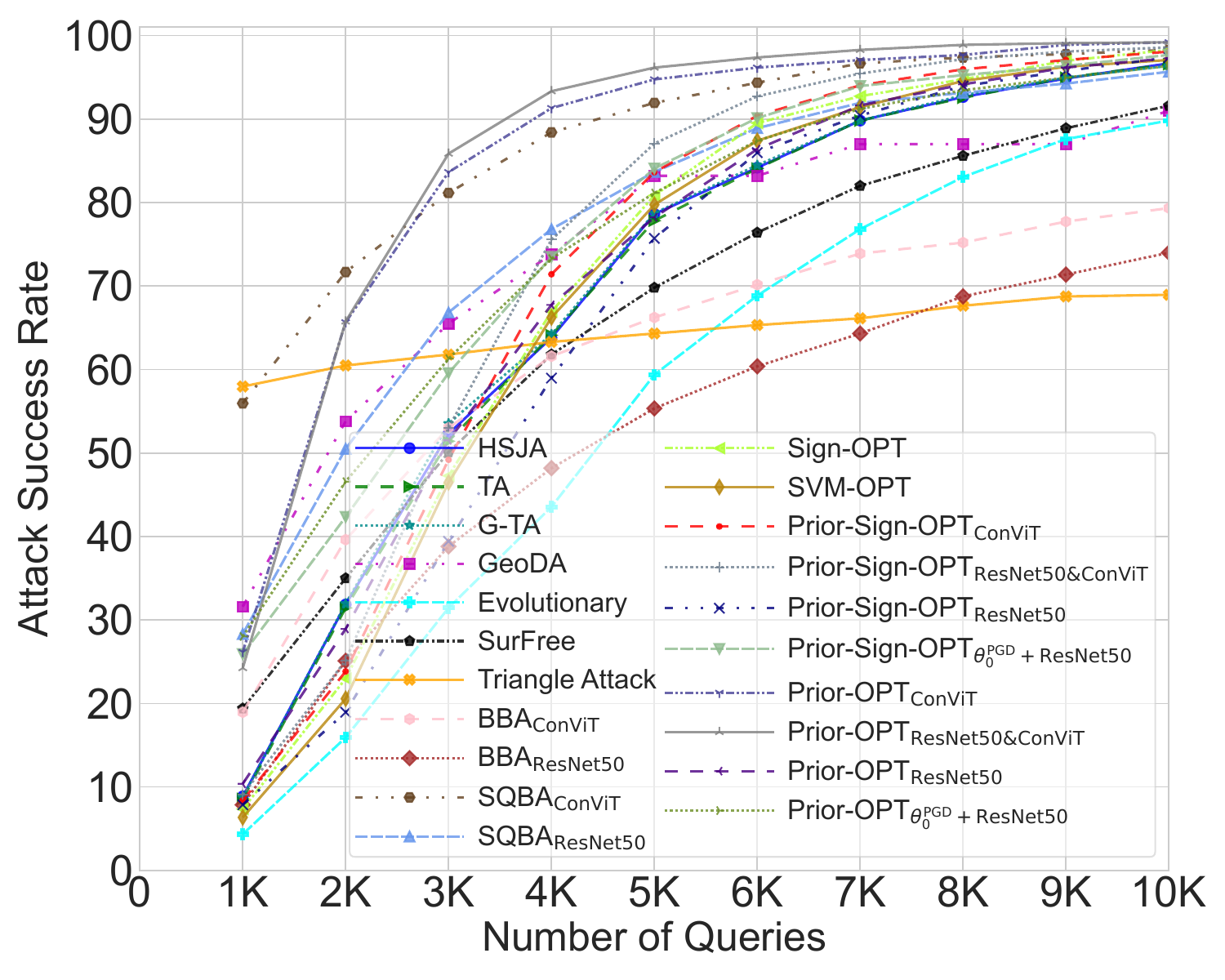}
		\caption{\scriptsize ViT}
		\label{subfig:asr_vit_l2_untargeted_attack}
	\end{subfigure}
	\begin{subfigure}{0.44\textwidth}
		\includegraphics[width=\linewidth]{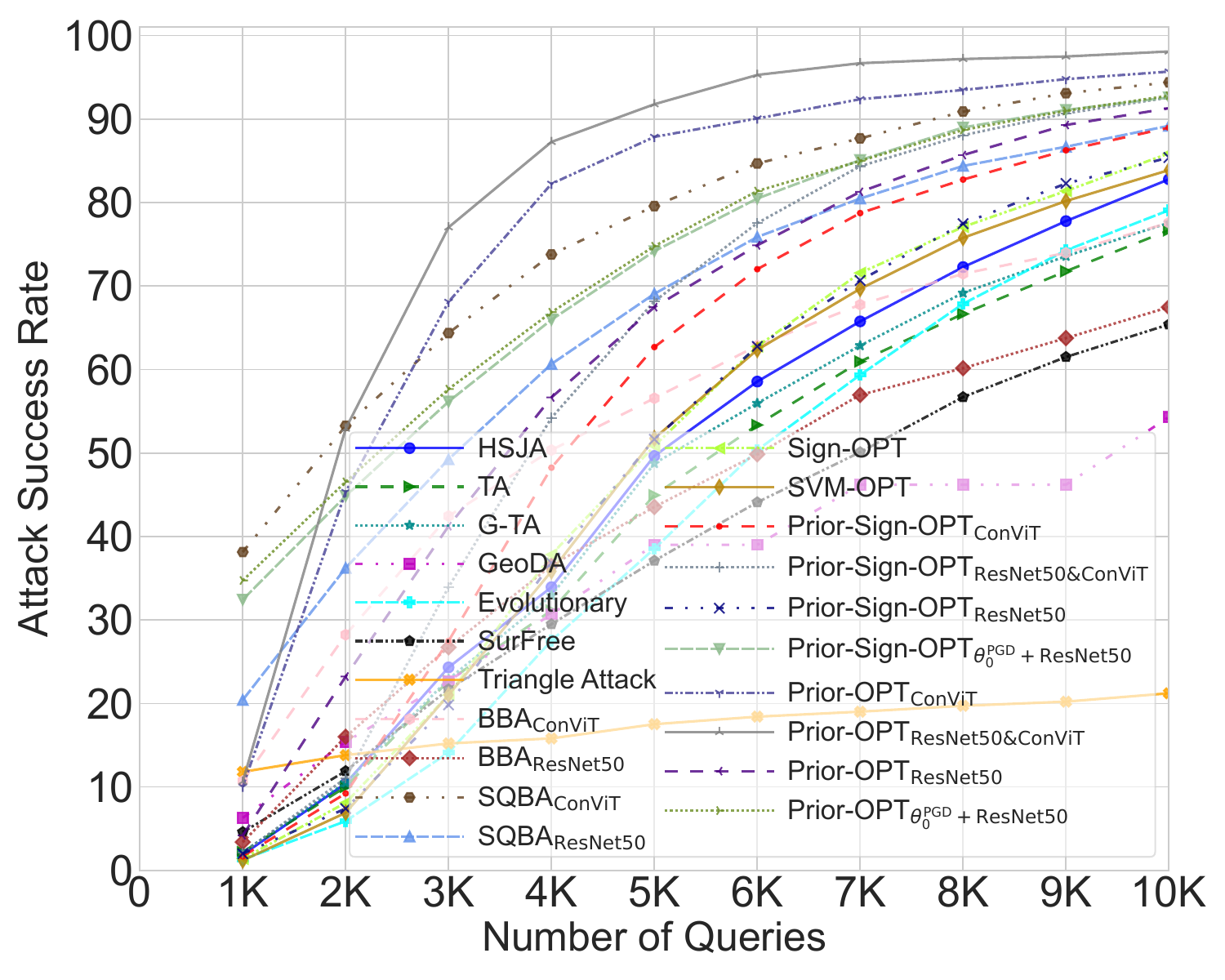}
		\caption{\scriptsize GC ViT}
		\label{subfig:asr_gcvit_l2_untargeted_attack}
	\end{subfigure}
	\begin{subfigure}{0.44\textwidth}
		\includegraphics[width=\linewidth]{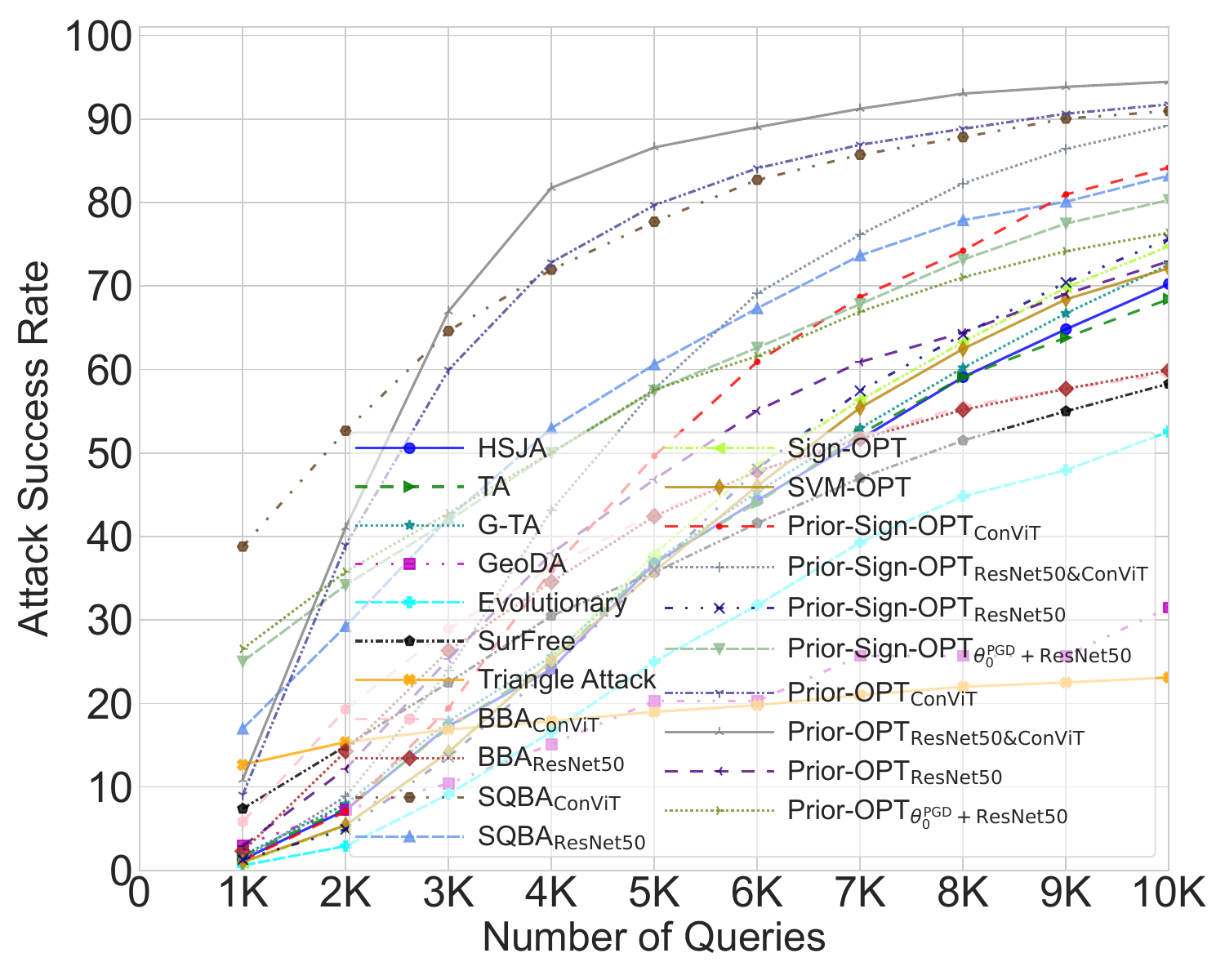}
		\caption{\scriptsize Swin Transformer}
		\label{subfig:asr_swin_l2_untargeted_attack}
	\end{subfigure}
	\caption{Attack success rates of untargeted $\ell_2$-norm attacks under different query budgets on the ImageNet dataset.}
	\label{fig:imagenet_l2_untargeted_attack_asr}
\end{figure}

\begin{figure}[t]
	\centering
	\begin{subfigure}{0.44\textwidth}
		\centering
		\includegraphics[width=\linewidth]{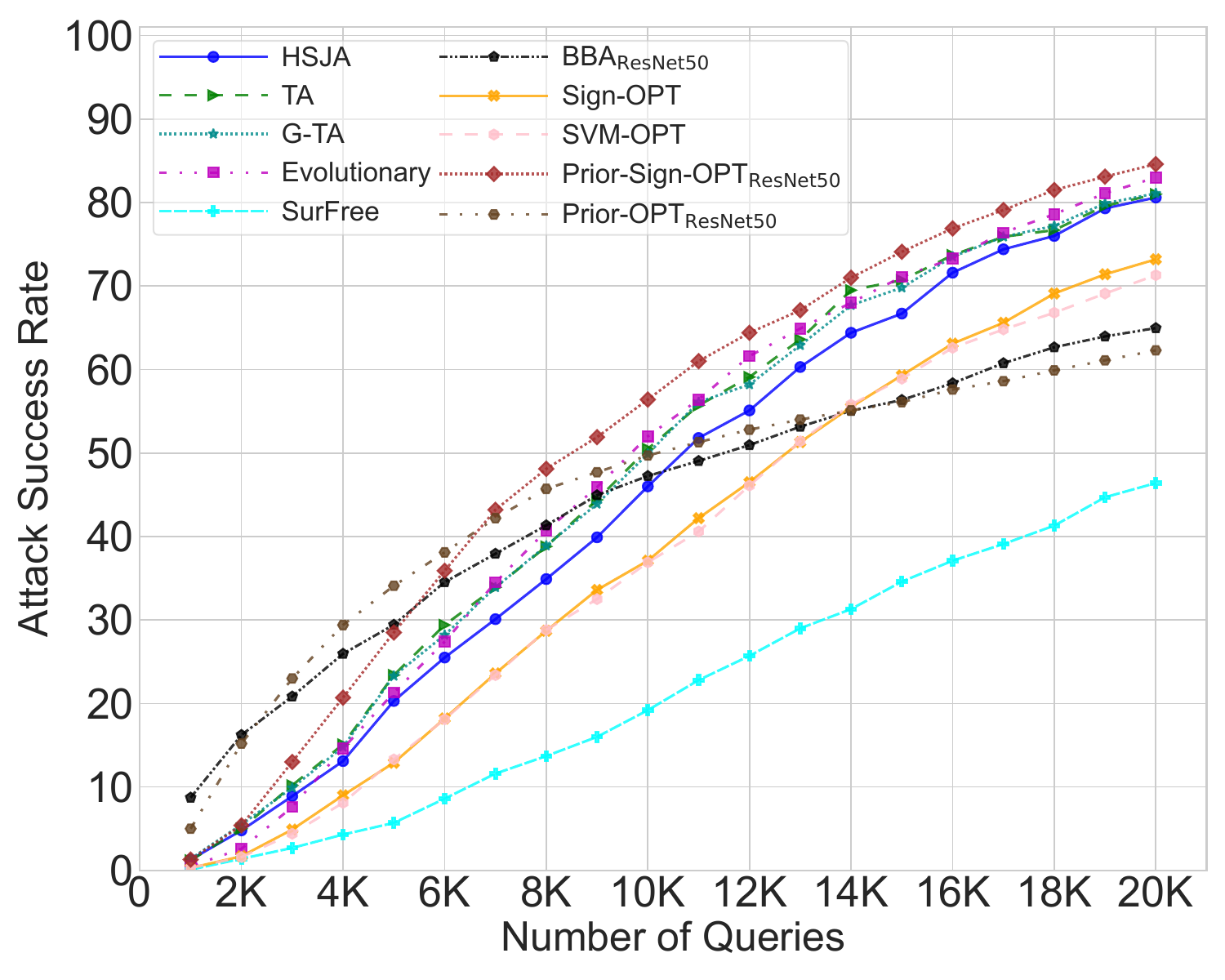}
		\caption{\scriptsize ResNet-101}
		\label{subfig:asr_resnet101_l2_targeted_attack}
	\end{subfigure}
	\begin{subfigure}{0.44\textwidth}
		\includegraphics[width=\linewidth]{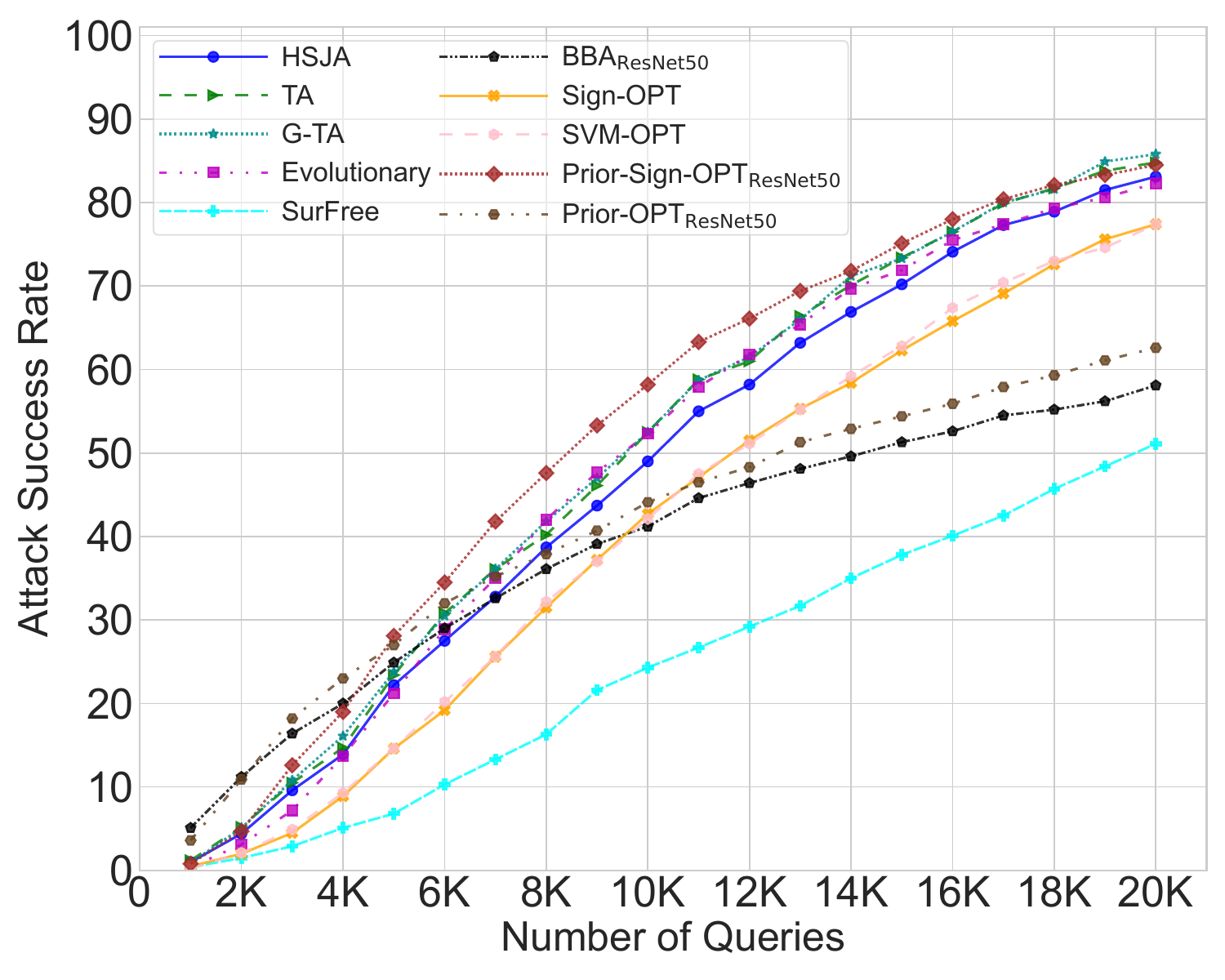}
		\caption{\scriptsize ResNeXt-101 ($64 \times 4$d)}
		\label{subfig:asr_resnext101_l2_targeted_attack}
	\end{subfigure}
	\begin{subfigure}{0.44\textwidth}
		\includegraphics[width=\linewidth]{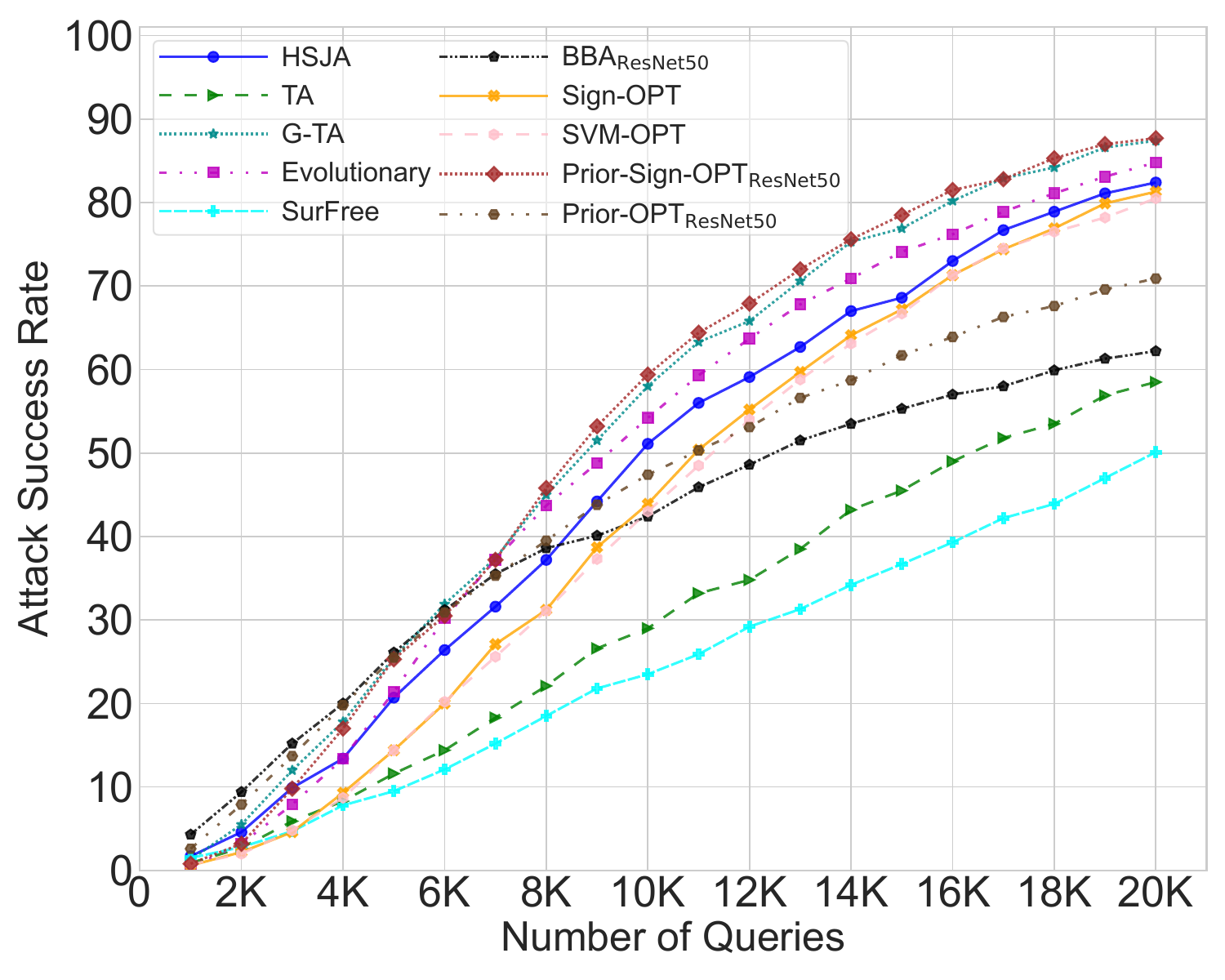}
		\caption{\scriptsize SENet-154}
		\label{subfig:asr_senet154_l2_targeted_attack}
	\end{subfigure}
	\begin{subfigure}{0.44\textwidth}
		\includegraphics[width=\linewidth]{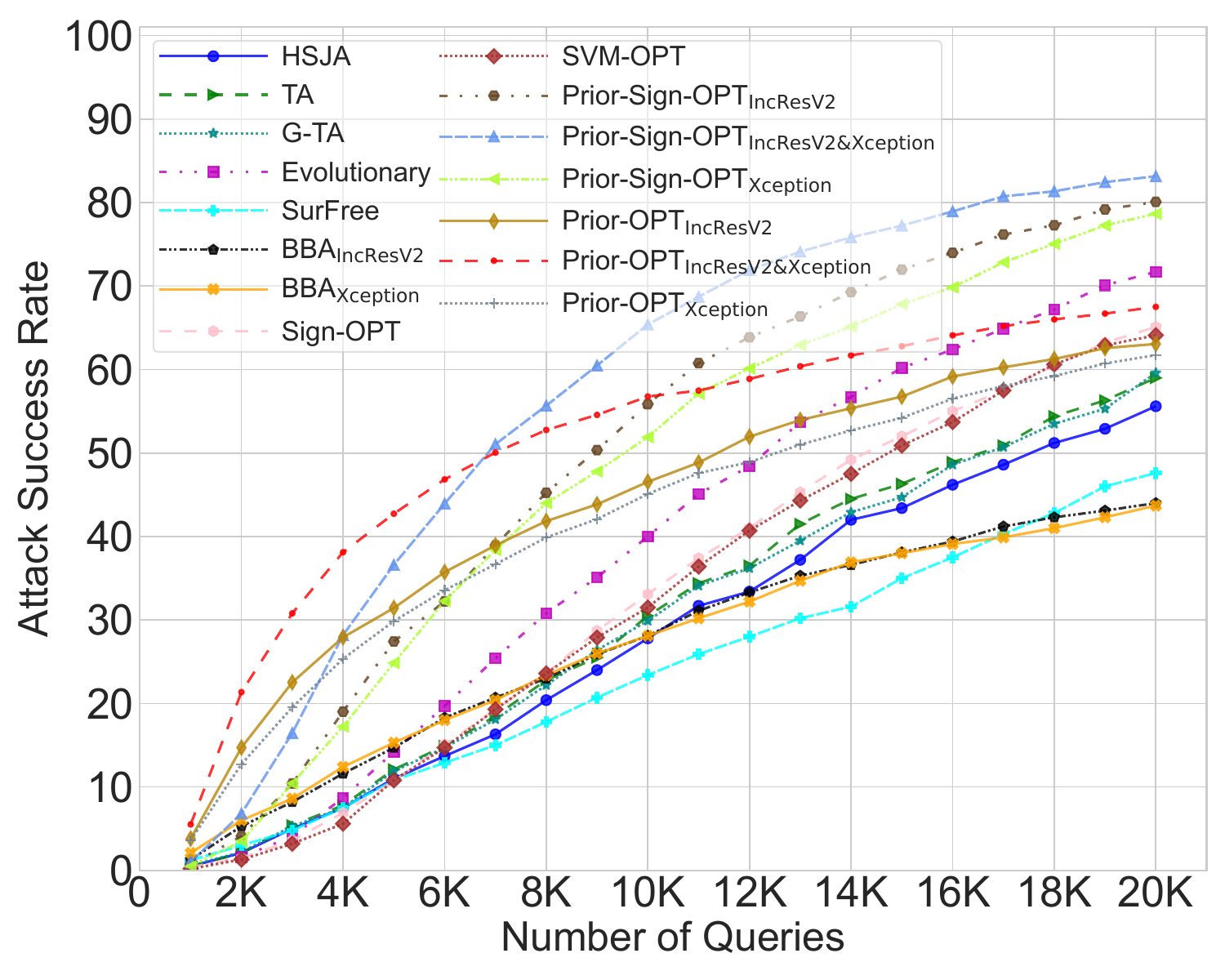}
		\caption{\scriptsize Inception-V3}
		\label{subfig:inceptionv3_l2_targeted_attack_asr}
	\end{subfigure}
	\begin{subfigure}{0.44\textwidth}
		\centering
		\includegraphics[width=\linewidth]{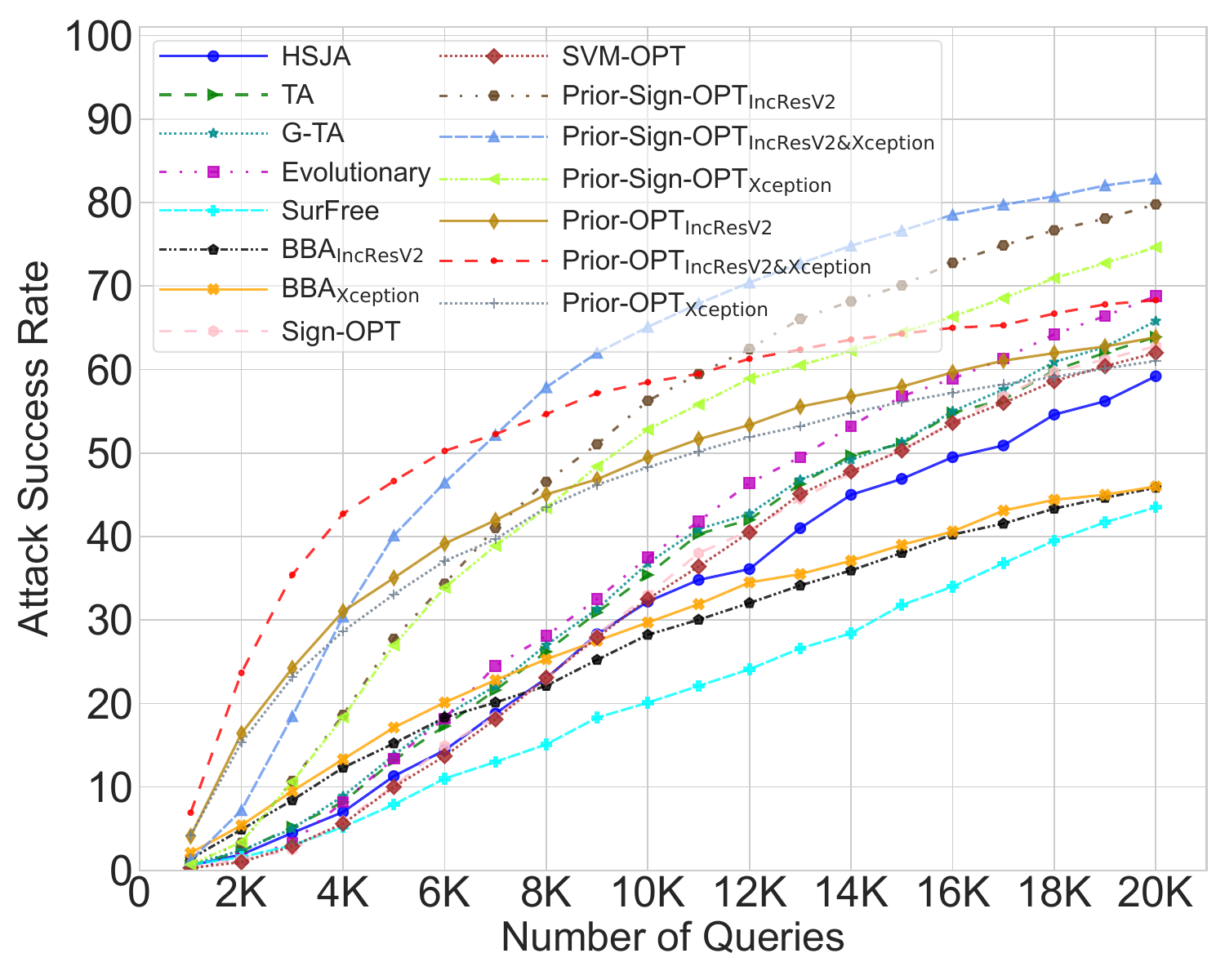}
		\caption{\scriptsize Inception-V4}
		\label{subfig:inceptionv4_l2_targeted_attack_asr}
	\end{subfigure}
	\begin{subfigure}{0.44\textwidth}
		\includegraphics[width=\linewidth]{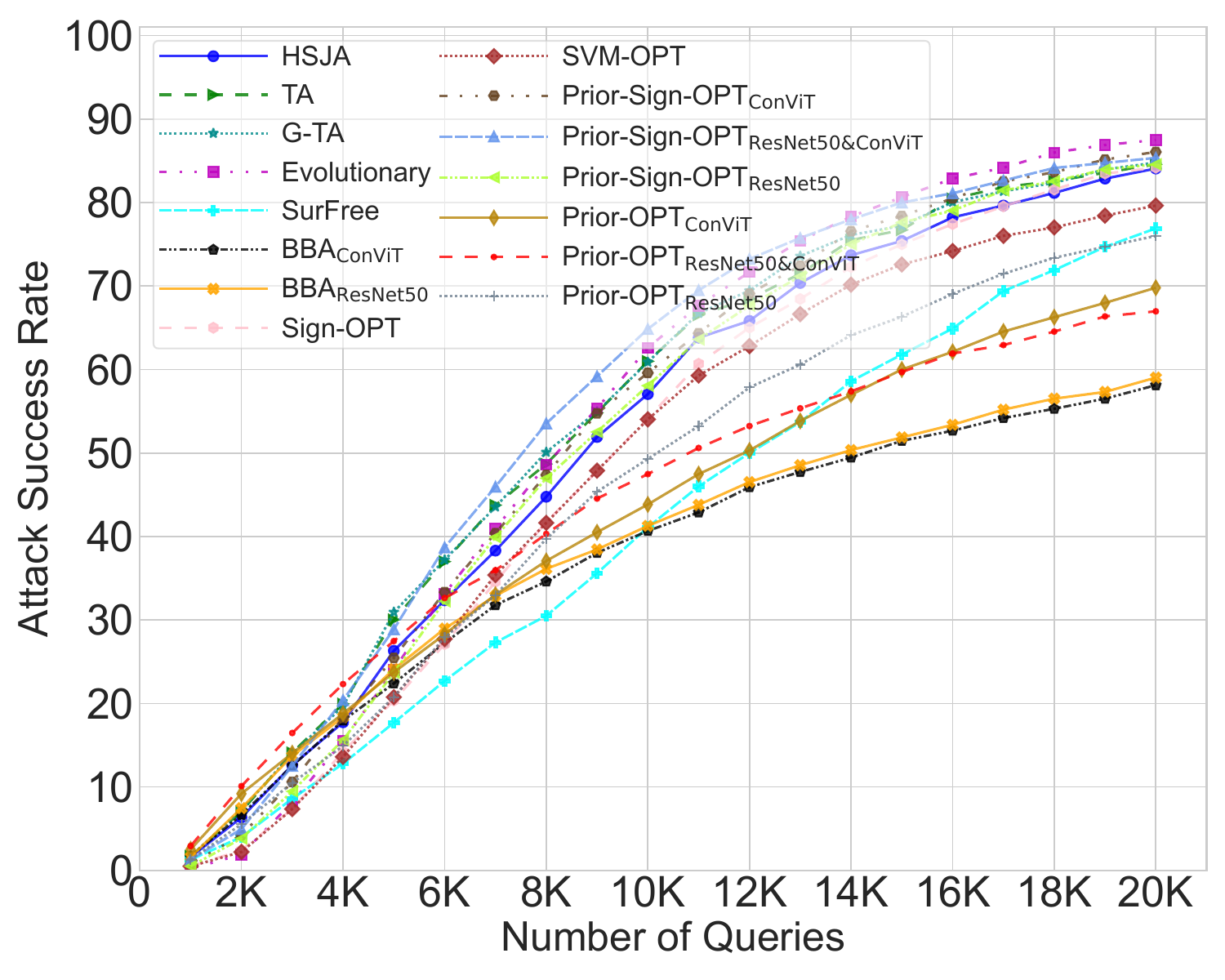}
		\caption{\scriptsize ViT}
		\label{subfig:vit_l2_targeted_attack_asr}
	\end{subfigure}
	\begin{subfigure}{0.44\textwidth}
		\includegraphics[width=\linewidth]{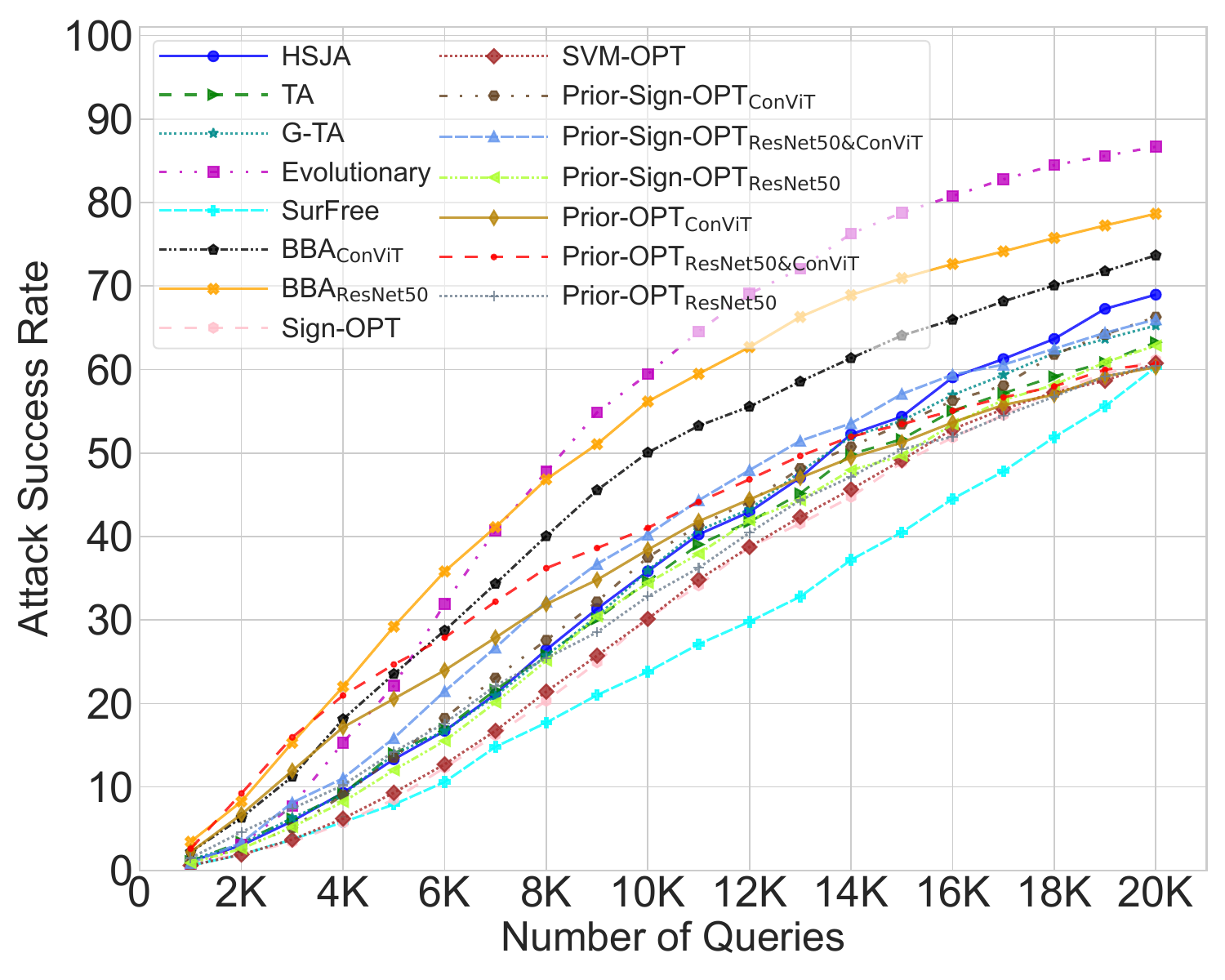}
		\caption{\scriptsize GC ViT}
		\label{subfig:gcvit_l2_targeted_attack_asr}
	\end{subfigure}
	\begin{subfigure}{0.44\textwidth}
		\includegraphics[width=\linewidth]{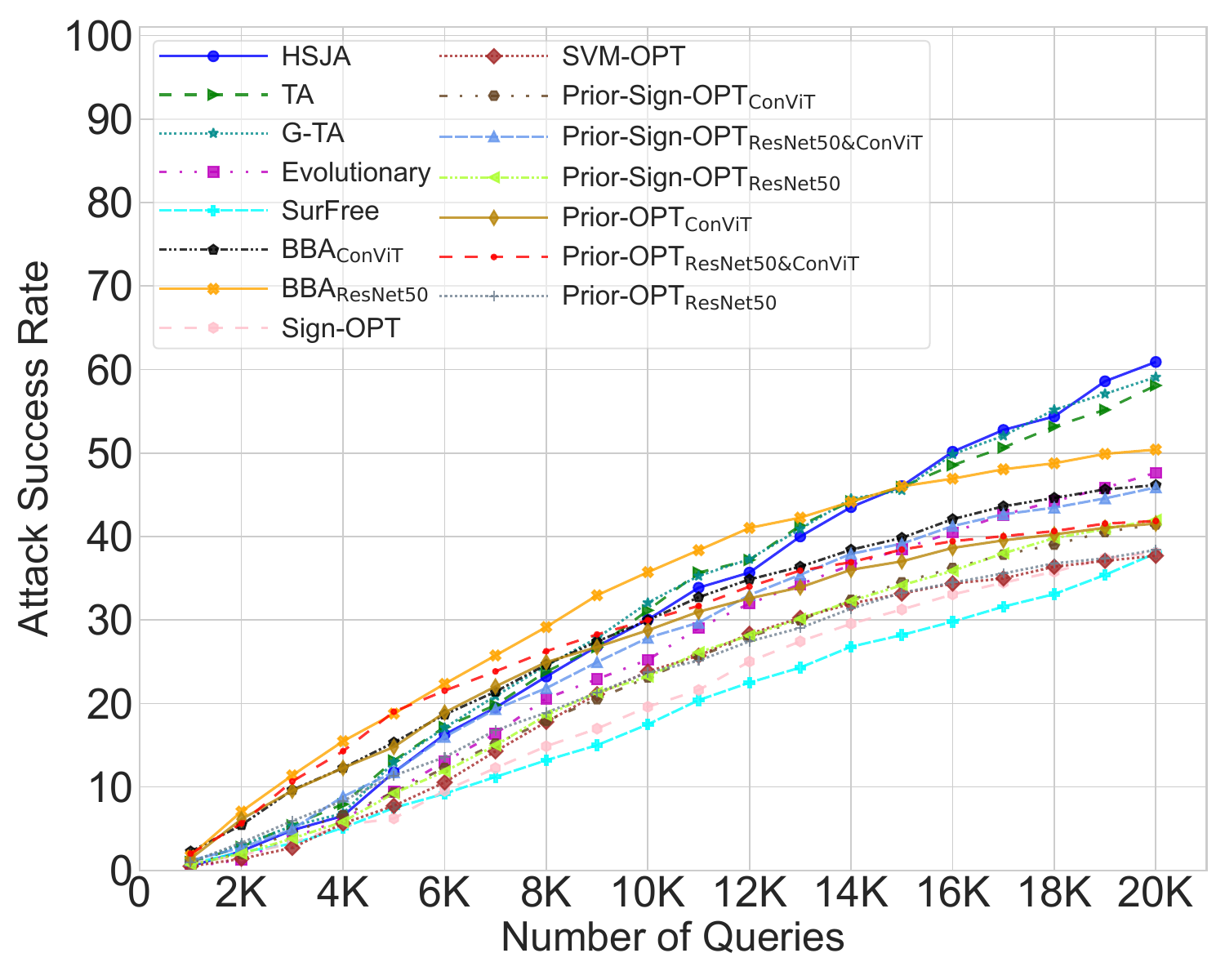}
		\caption{\scriptsize Swin Transformer}
		\label{subfig:asr_swin_l2_targeted_attack}
	\end{subfigure}
	\caption{Attack success rates of targeted $\ell_2$-norm attacks under different query budgets on the ImageNet dataset.}
	\label{fig:imagenet_l2_targeted_attack_asr}
\end{figure}

\begin{figure}[t]
	\centering
	\begin{subfigure}{0.48\textwidth}
		\centering
		\includegraphics[width=\linewidth]{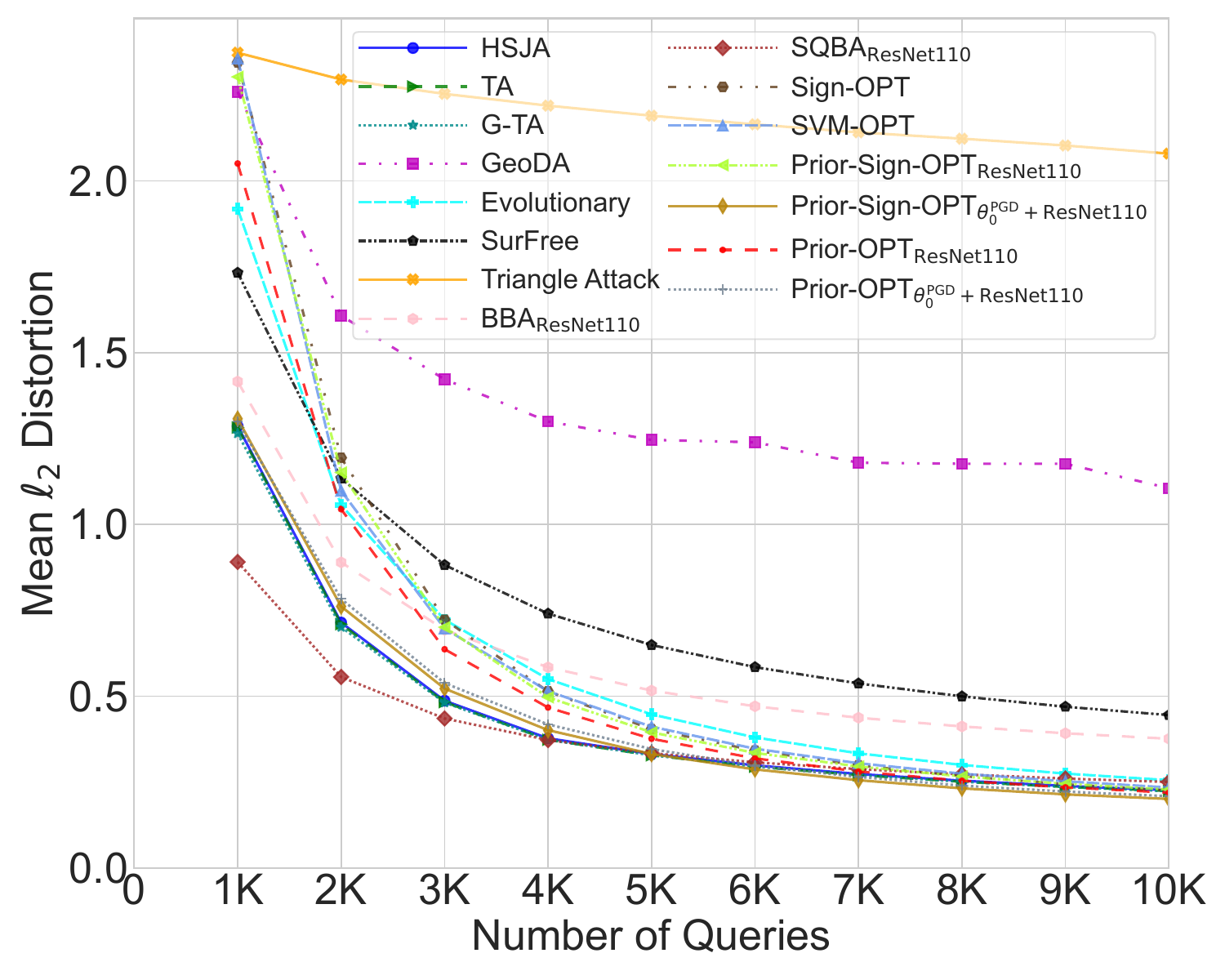}
		\caption{\scriptsize PyramidNet-272}
		\label{subfig:pyramidnet272_l2_untargeted_attack}
	\end{subfigure}
	\begin{subfigure}{0.48\textwidth}
		\includegraphics[width=\linewidth]{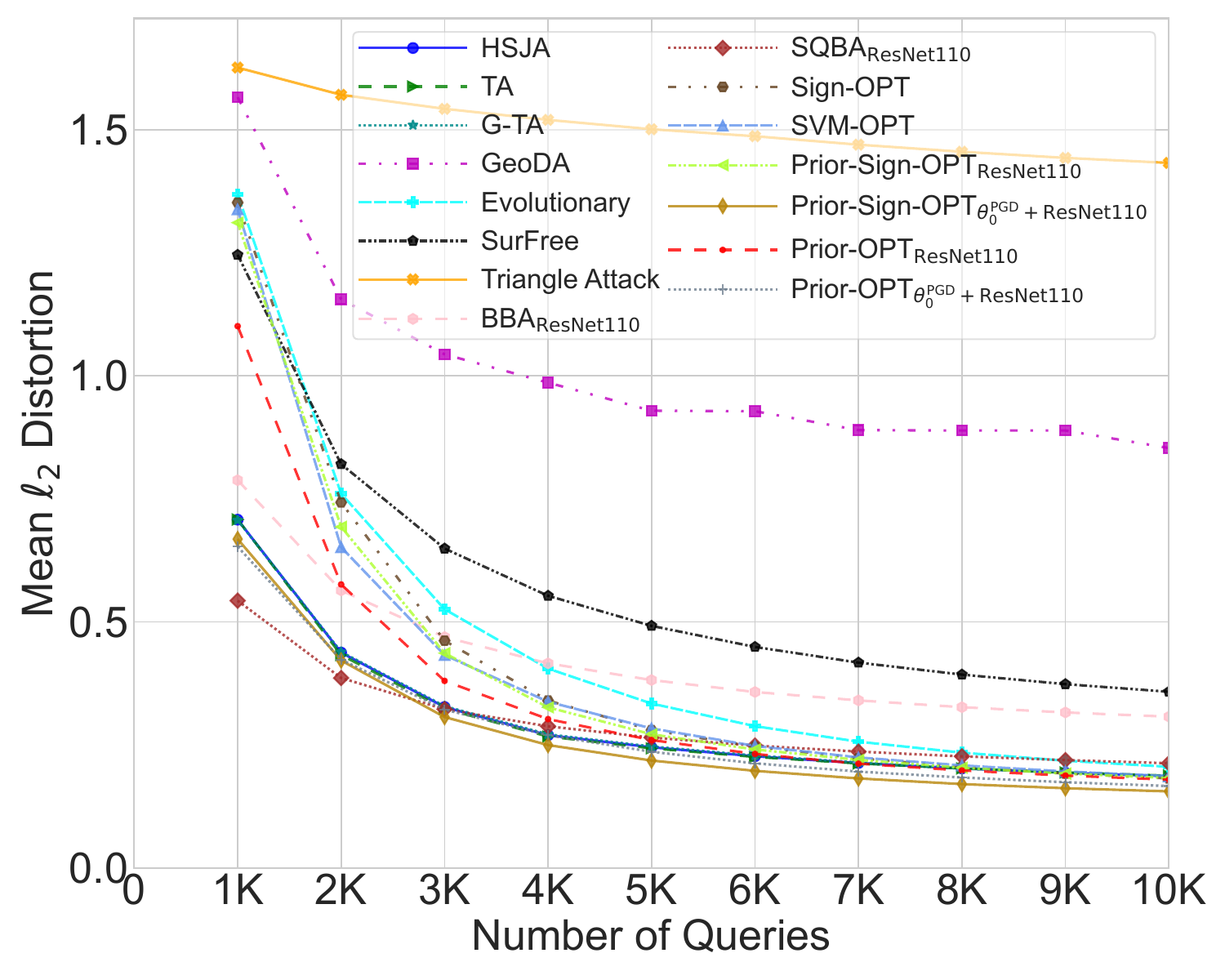}
		\caption{\scriptsize DenseNet-BC-190 ($k=40$)}
		\label{subfig:densenet190_l2_untargeted_attack}
	\end{subfigure}
	\begin{subfigure}{0.48\textwidth}
		\includegraphics[width=\linewidth]{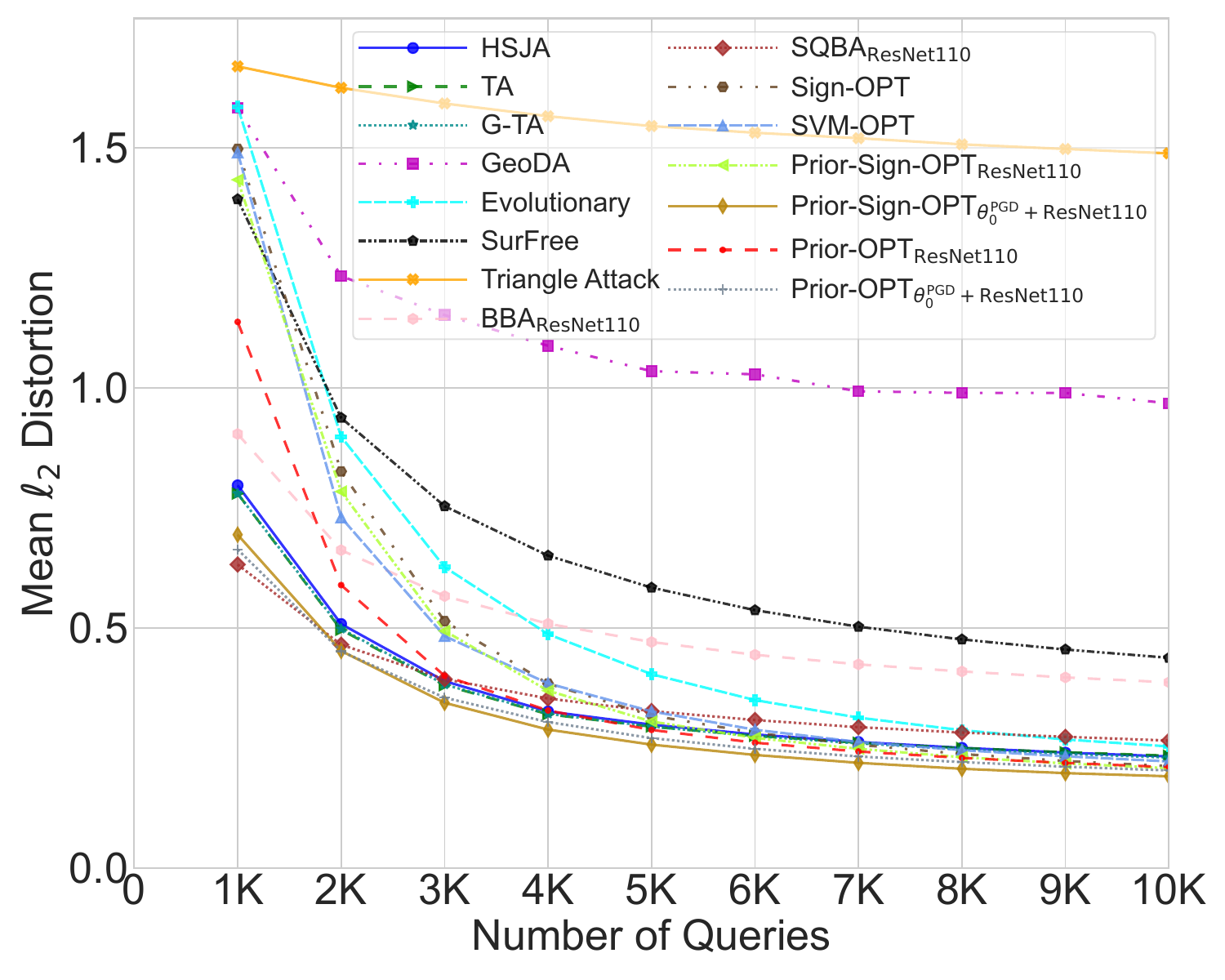}
		\caption{\scriptsize WRN-28}
		\label{subfig:wrn28_l2_untargeted_attack}
	\end{subfigure}
	\begin{subfigure}{0.48\textwidth}
		\includegraphics[width=\linewidth]{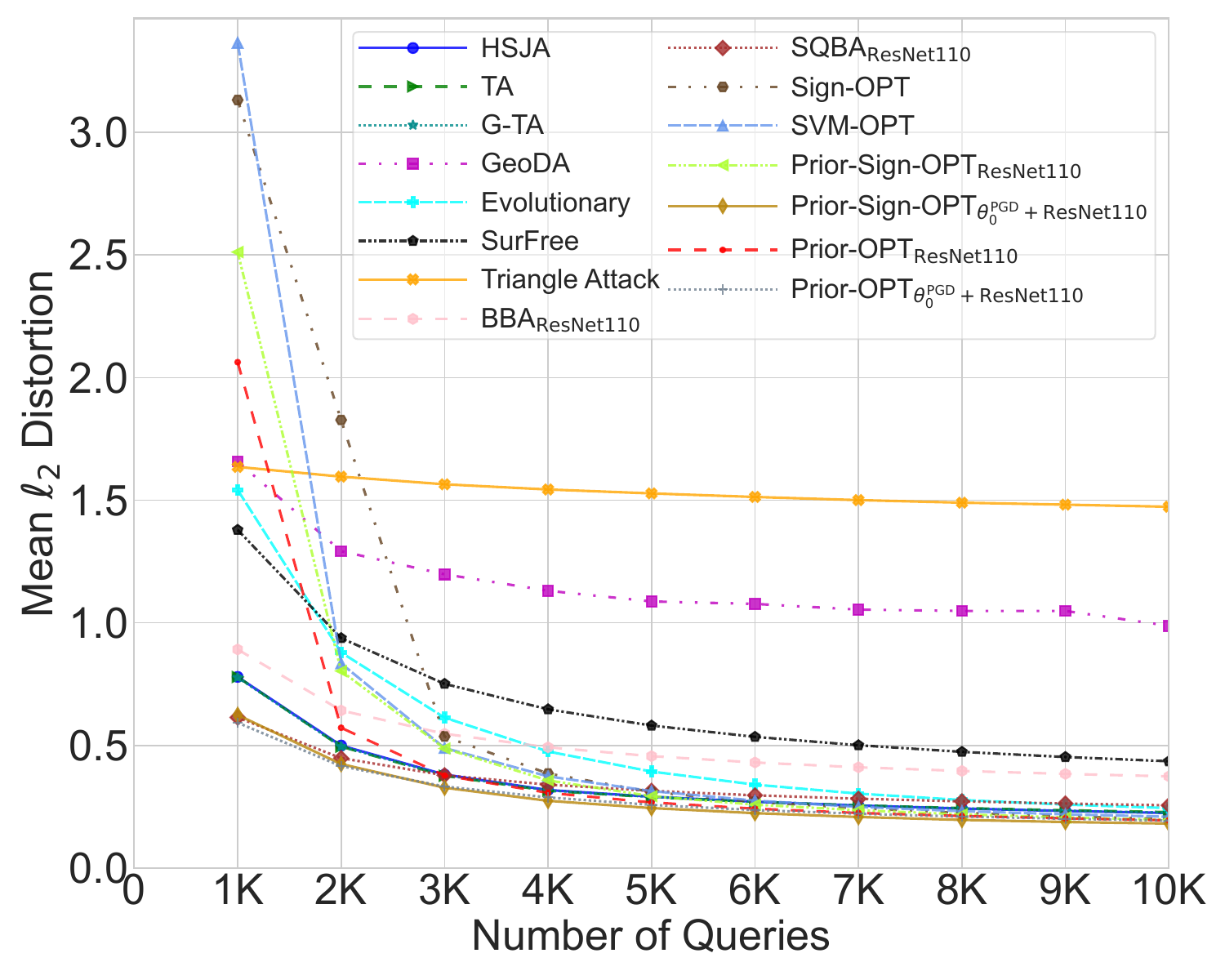}
		\caption{\scriptsize WRN-40}
		\label{subfig:wrn40_l2_untargeted_attack}
	\end{subfigure}
	\caption{Mean distortions of untargeted $\ell_2$-norm attack under different query budgets on the CIFAR-10 dataset.}
	\label{fig:cifar10_l2_untargeted_attack}
\end{figure}

\begin{figure}[t]
	\centering
	\begin{subfigure}{0.48\textwidth}
		\centering
		\includegraphics[width=\linewidth]{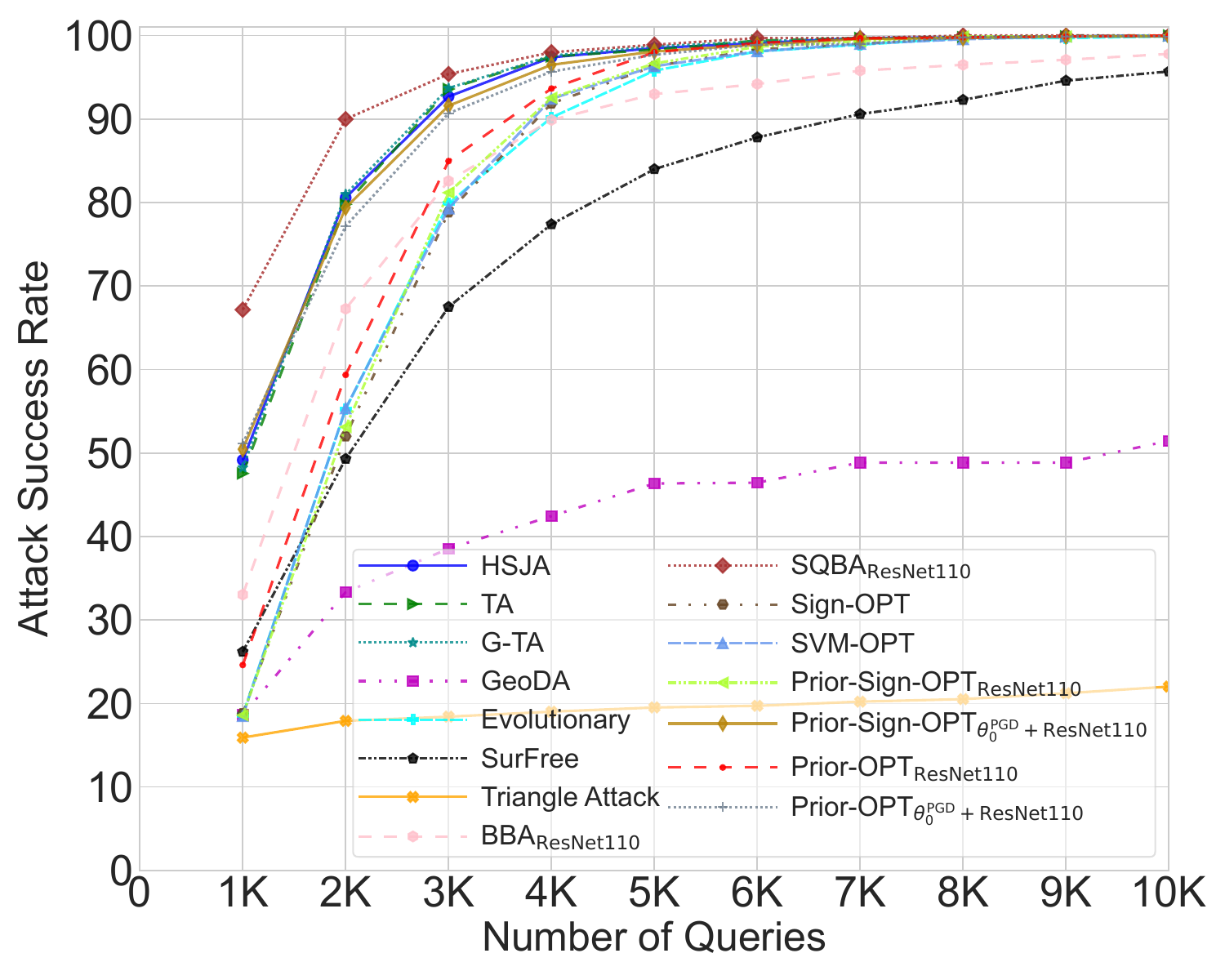}
		\caption{\scriptsize PyramidNet-272}
		\label{subfig:asr_pyramidnet272_l2_untargeted_attack}
	\end{subfigure}
	\begin{subfigure}{0.48\textwidth}
		\includegraphics[width=\linewidth]{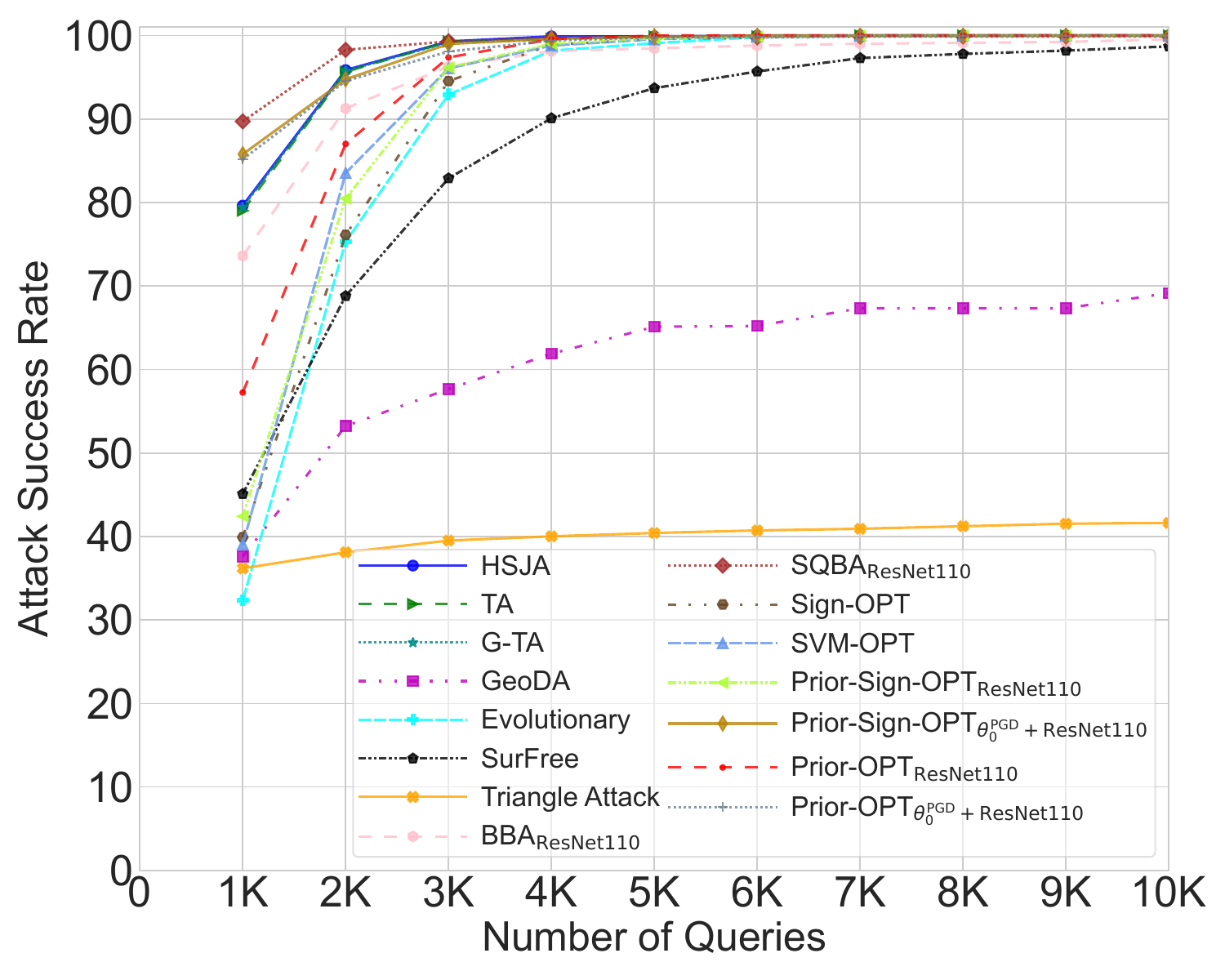}
		\caption{\scriptsize DenseNet-BC-190 ($k=40$)}
		\label{subfig:asr_densenet190_l2_untargeted_attack}
	\end{subfigure}
	\begin{subfigure}{0.48\textwidth}
		\includegraphics[width=\linewidth]{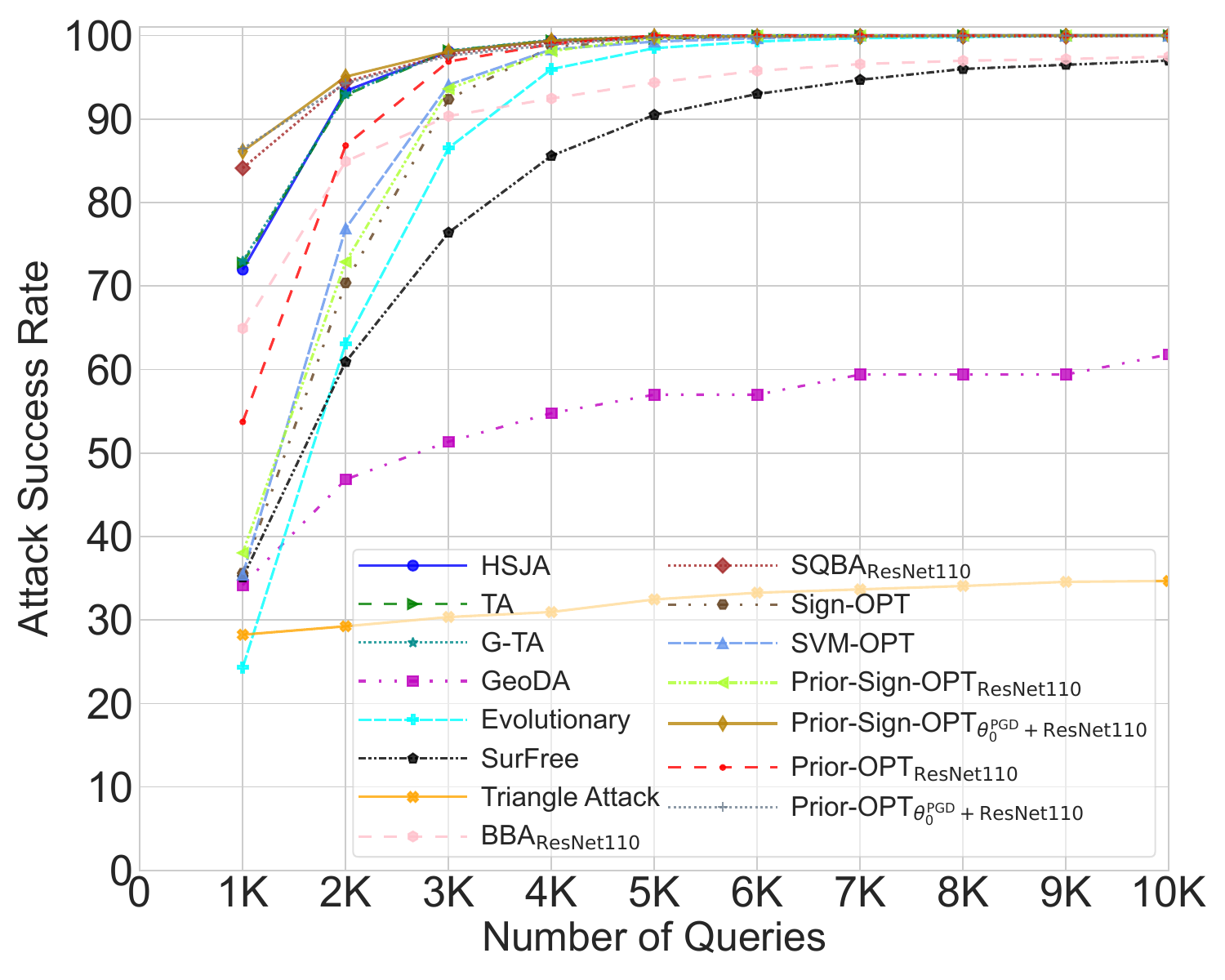}
		\caption{\scriptsize WRN-28}
		\label{subfig:asr_wrn28_l2_untargeted_attack}
	\end{subfigure}
	\begin{subfigure}{0.48\textwidth}
		\includegraphics[width=\linewidth]{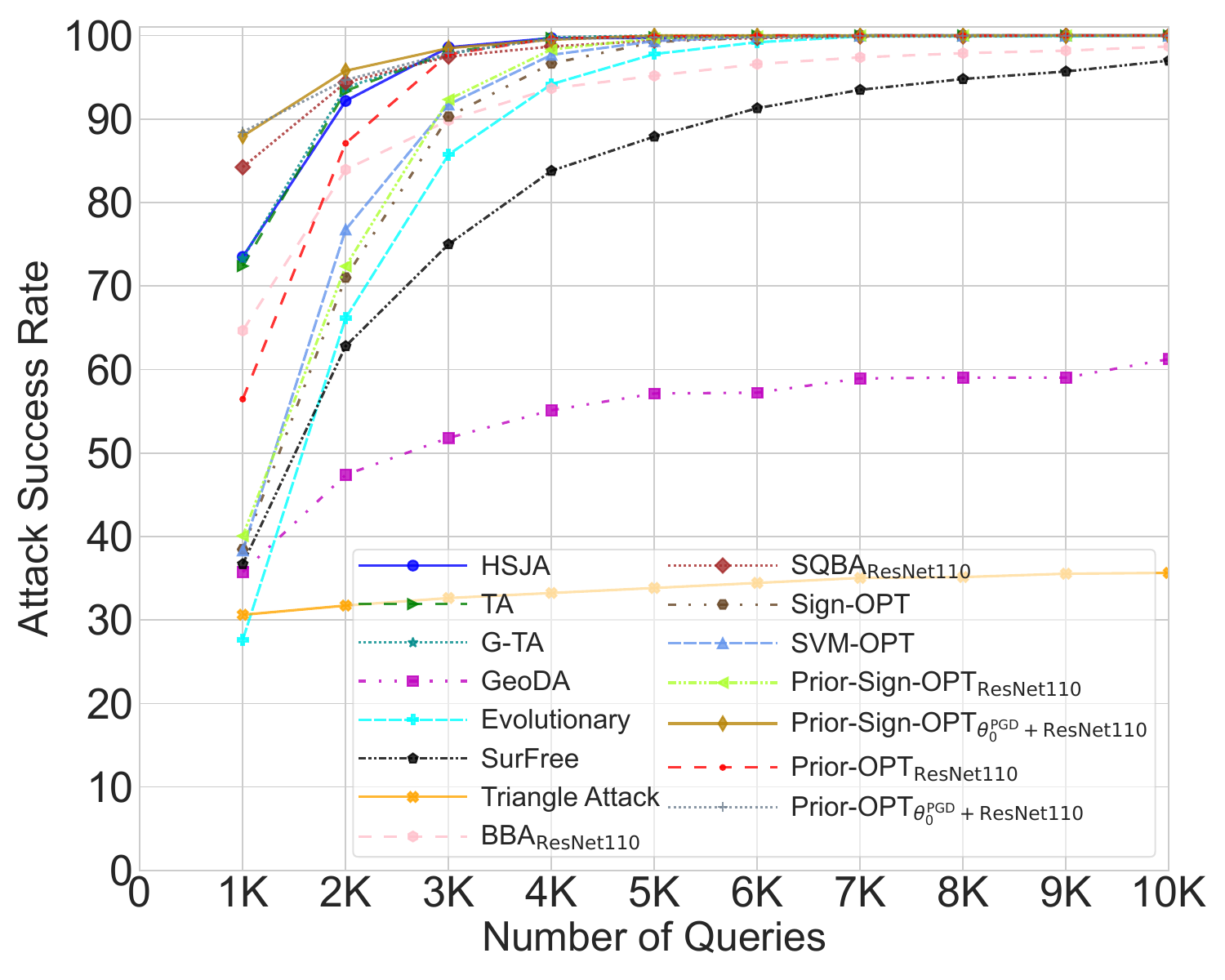}
		\caption{\scriptsize WRN-40}
		\label{subfig:asr_wrn40_l2_untargeted_attack}
	\end{subfigure}
	\caption{Attack success rates of untargeted $\ell_2$-norm attacks under different query budgets on the CIFAR-10 dataset.}
	\label{fig:cifar10_l2_untargeted_attack_asr}
\end{figure}

\begin{figure}[h]
	\centering
	\begin{subfigure}{0.48\textwidth}
		\centering
		\includegraphics[width=\linewidth]{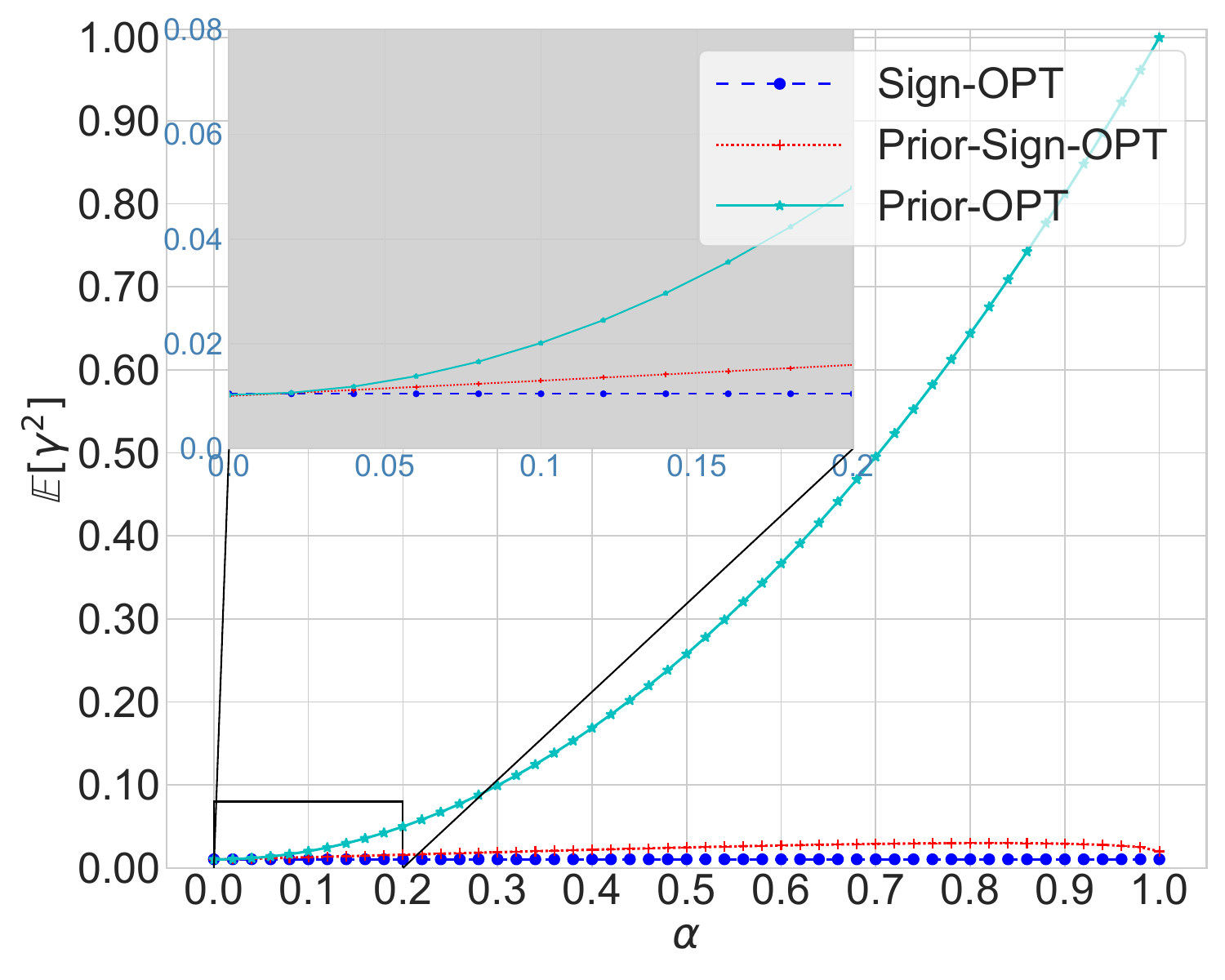}
		\caption{Effect of prior's accuracy $\alpha$}
		\label{subfig:ablation_study_alpha}
	\end{subfigure}
	\begin{subfigure}{0.48\textwidth}
		\includegraphics[width=\linewidth]{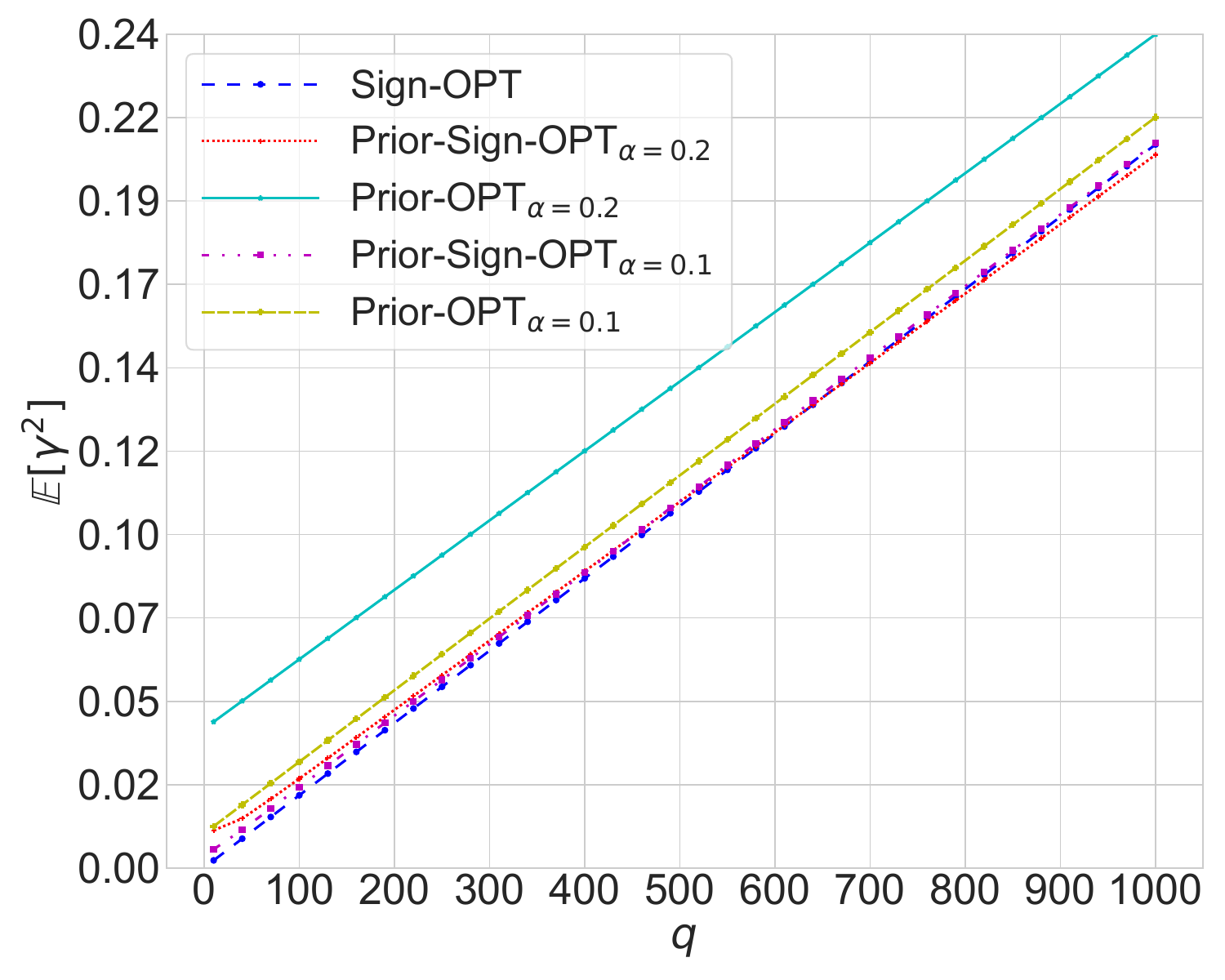}
		\caption{Effect of number of vectors $q$}
		\label{subfig:ablation_study_q}
	\end{subfigure}
	\begin{subfigure}{0.48\textwidth}
		\includegraphics[width=\linewidth]{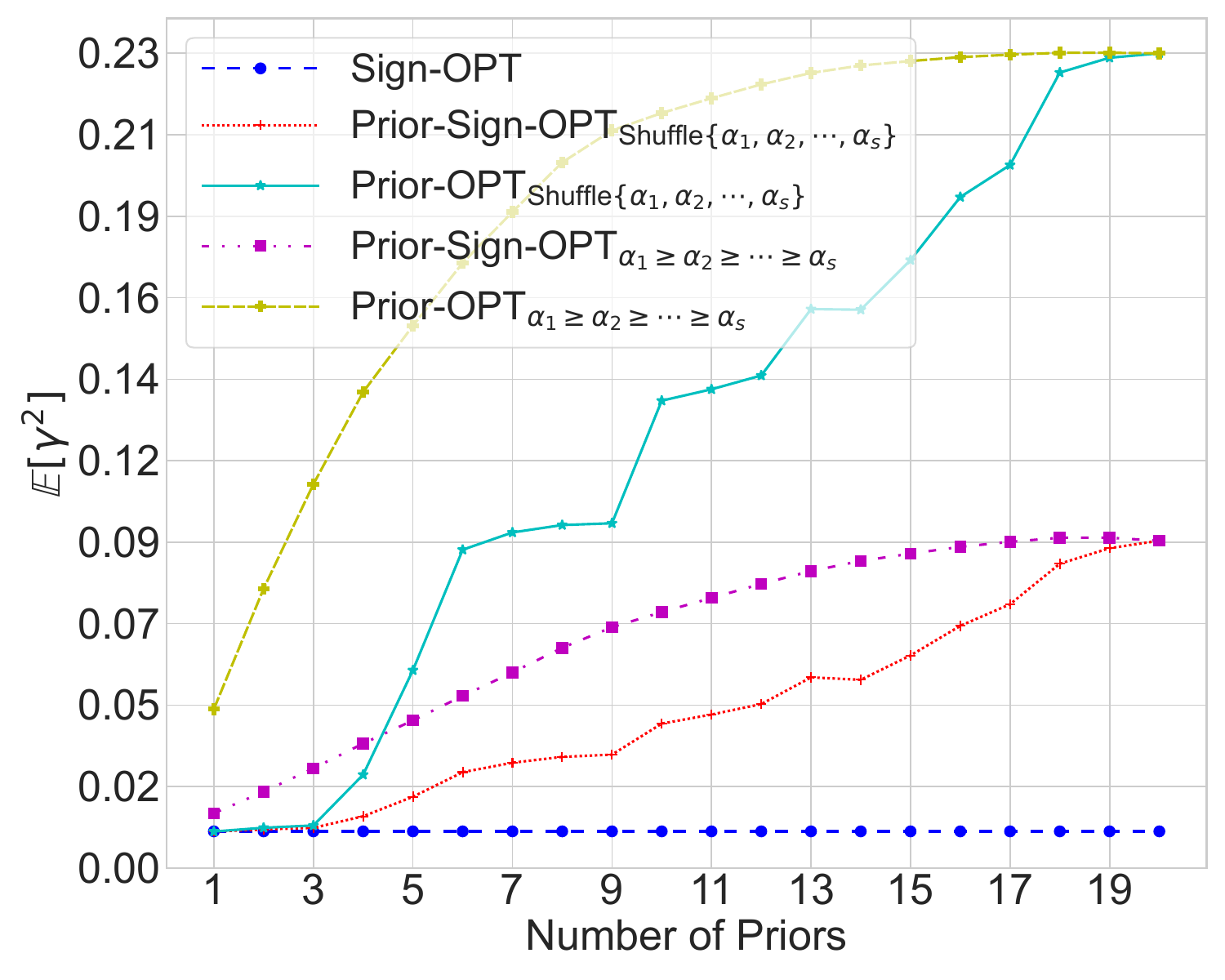}
		\caption{Effect of number of priors}
		\label{subfig:ablation_study_prior_number}
	\end{subfigure}
	\begin{subfigure}{0.48\textwidth}
		\includegraphics[width=\linewidth]{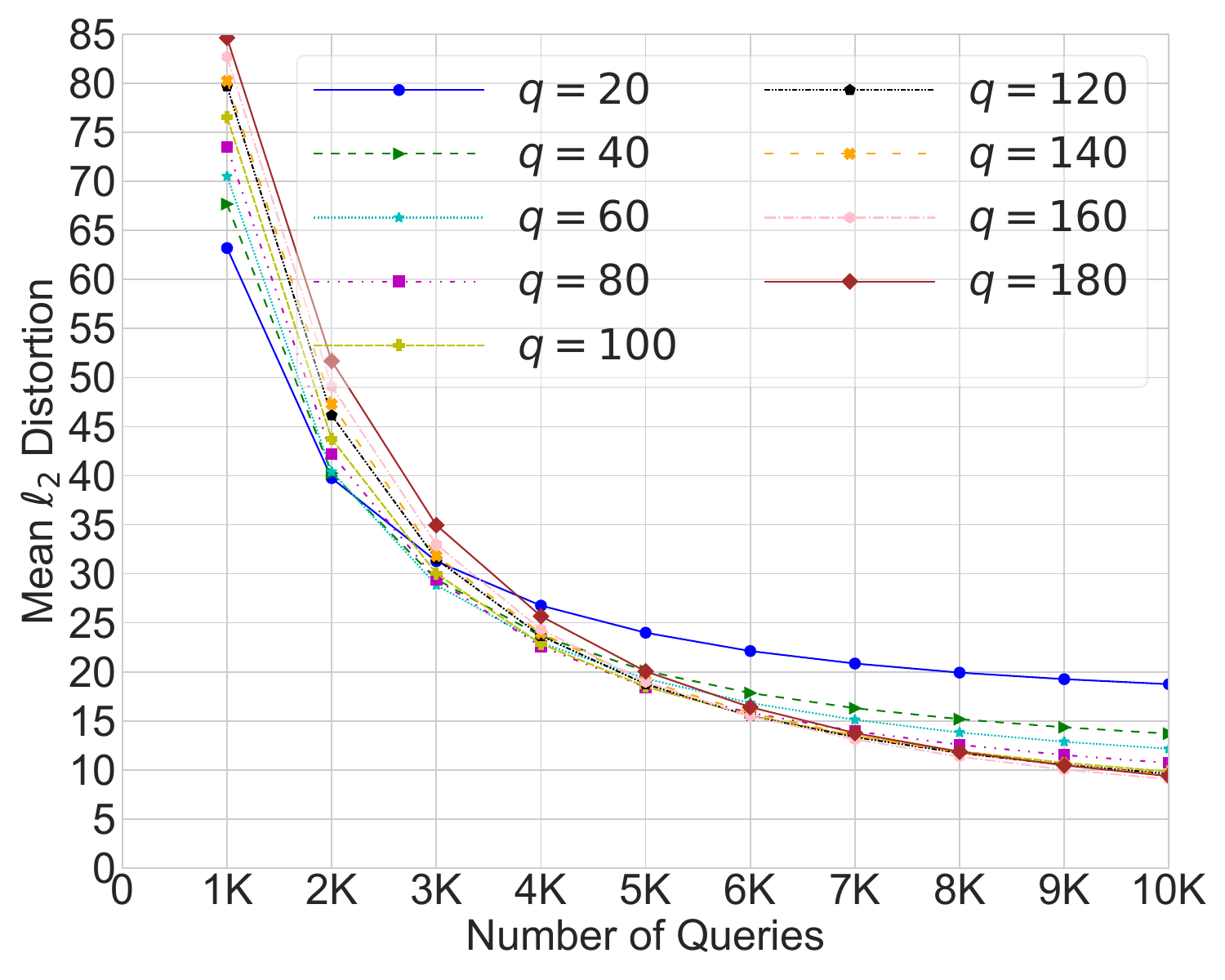}
		\caption{Prior-Sign-OPT with different $q$}
		\label{subfig:ablation_study_prior_sign_opt}
	\end{subfigure}
	\caption{Experimental results of ablation studies of $\E[\gamma^2]$. Figs. \ref{subfig:ablation_study_alpha}, \ref{subfig:ablation_study_q}, and \ref{subfig:ablation_study_prior_number} are based on theoretical results (Eqs. \eqref{eq:square-sign-opt}, \eqref{eq:prior-sign-opt-square-s>1} and \eqref{eq:prior-opt-expectation_gamma_square-s>1}) with $d=3072$. Fig. \ref{subfig:ablation_study_prior_sign_opt} demonstrates the results of attacking against Swin Transformer on the ImageNet dataset using Prior-Sign-OPT with different $q$.}
	\label{fig:ablation_study_E_gamma_square}
\end{figure}

\begin{figure}[h]
	\centering
	\begin{subfigure}{0.2\textwidth}
		\centering
		\includegraphics[width=\linewidth]{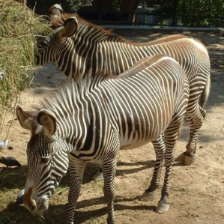}
		\caption{\small \nn{0} query}
	\end{subfigure}
	\begin{subfigure}{0.2\textwidth}
		\includegraphics[width=\linewidth]{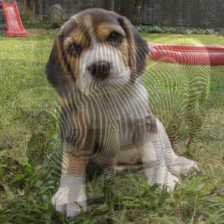}
		\caption{\small \nn{1110} queries}
	\end{subfigure}
	\begin{subfigure}{0.2\textwidth}
		\includegraphics[width=\linewidth]{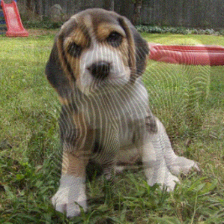}
		\caption{\small \nn{2142} queries}
	\end{subfigure}
	\begin{subfigure}{0.2\textwidth}
		\includegraphics[width=\linewidth]{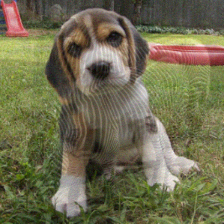}
		\caption{\small \nn{3017} queries}
	\end{subfigure}
	\begin{subfigure}{0.2\textwidth}
		\includegraphics[width=\linewidth]{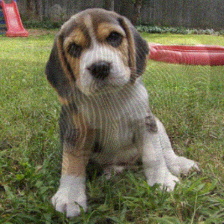}
		\caption{\small \nn{5007} queries}
	\end{subfigure}
	\begin{subfigure}{0.2\textwidth}
		\includegraphics[width=\linewidth]{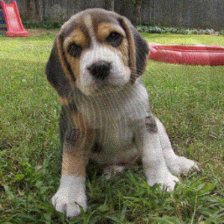}
		\caption{\small \nn{8039} queries}
	\end{subfigure}
	\begin{subfigure}{0.2\textwidth}
		\includegraphics[width=\linewidth]{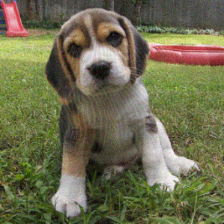}
		\caption{\small \nn{10104} queries}
	\end{subfigure}
	\begin{subfigure}{0.2\textwidth}
		\includegraphics[width=\linewidth]{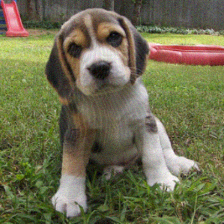}
		\caption{\small \nn{12184} queries}
	\end{subfigure}
	\begin{subfigure}{0.2\textwidth}
		\includegraphics[width=\linewidth]{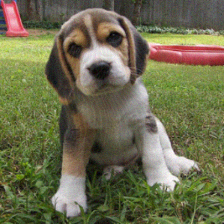}
		\caption{\small \nn{15176} queries}
	\end{subfigure}
	\begin{subfigure}{0.2\textwidth}
		\includegraphics[width=\linewidth]{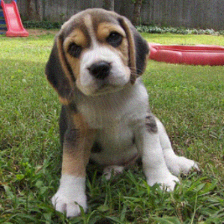}
		\caption{\small \nn{18477} queries}
	\end{subfigure}
	\begin{subfigure}{0.2\textwidth}
		\includegraphics[width=\linewidth]{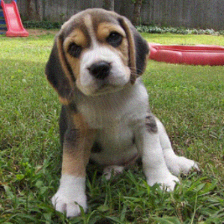}
		\caption{\small \nn{19032} queries}
	\end{subfigure}
	\begin{subfigure}{0.2\textwidth}
		\includegraphics[width=\linewidth]{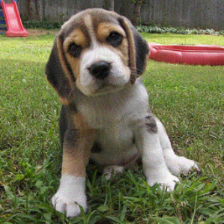}
		\caption{\small \nn{20470} queries}
	\end{subfigure}
	\caption{Adversarial images generated with different queries in Sign-OPT targeted attacks against ResNet-101.}
	\label{fig:sign-opt-adv-images}
\end{figure}
\begin{figure}[h]
	\centering
	\begin{subfigure}{0.2\textwidth}
		\centering
		\includegraphics[width=\linewidth]{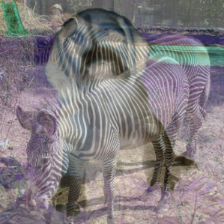}
		\caption{\small \nn{0} query}
	\end{subfigure}
	\begin{subfigure}{0.2\textwidth}
		\includegraphics[width=\linewidth]{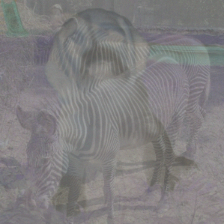}
		\caption{\small \nn{1110} queries}
	\end{subfigure}
	\begin{subfigure}{0.2\textwidth}
		\includegraphics[width=\linewidth]{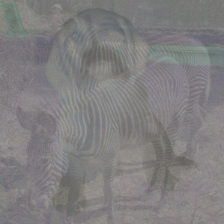}
		\caption{\small \nn{2142} queries}
	\end{subfigure}
	\begin{subfigure}{0.2\textwidth}
		\includegraphics[width=\linewidth]{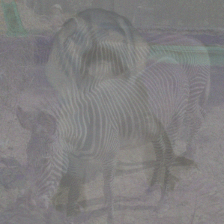}
		\caption{\small \nn{3017} queries}
	\end{subfigure}
	\begin{subfigure}{0.2\textwidth}
		\includegraphics[width=\linewidth]{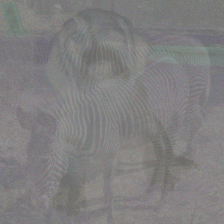}
		\caption{\small \nn{5007} queries}
	\end{subfigure}
	\begin{subfigure}{0.2\textwidth}
		\includegraphics[width=\linewidth]{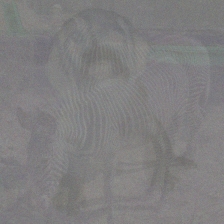}
		\caption{\small \nn{8039} queries}
	\end{subfigure}
	\begin{subfigure}{0.2\textwidth}
		\includegraphics[width=\linewidth]{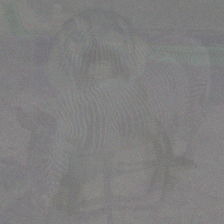}
		\caption{\small \nn{10104} queries}
	\end{subfigure}
	\begin{subfigure}{0.2\textwidth}
		\includegraphics[width=\linewidth]{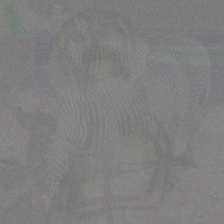}
		\caption{\small \nn{12184} queries}
	\end{subfigure}
	\begin{subfigure}{0.2\textwidth}
		\includegraphics[width=\linewidth]{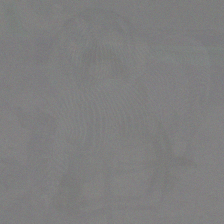}
		\caption{\small \nn{15176} queries}
	\end{subfigure}
	\begin{subfigure}{0.2\textwidth}
		\includegraphics[width=\linewidth]{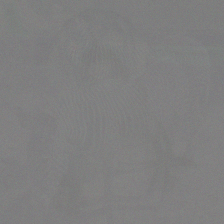}
		\caption{\small \nn{18477} queries}
	\end{subfigure}
	\begin{subfigure}{0.2\textwidth}
		\includegraphics[width=\linewidth]{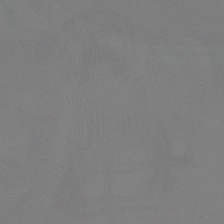}
		\caption{\small \nn{19032} queries}
	\end{subfigure}
	\begin{subfigure}{0.2\textwidth}
		\includegraphics[width=\linewidth]{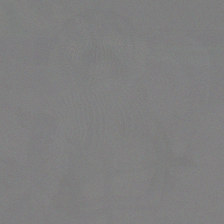}
		\caption{\small \nn{20470} queries}
	\end{subfigure}
	\caption{The corresponding adversarial perturbations generated with different queries in Sign-OPT targeted attacks against ResNet-101.}
	\label{fig:sign-opt-adv-perturbations}
\end{figure}
\begin{figure}[h]
	\centering
	\begin{subfigure}{0.2\textwidth}
		\centering
		\includegraphics[width=\linewidth]{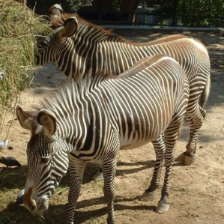}
		\caption{\small \nn{0} query}
	\end{subfigure}
	\begin{subfigure}{0.2\textwidth}
		\includegraphics[width=\linewidth]{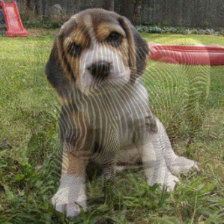}
		\caption{\small \nn{1092} queries}
	\end{subfigure}
	\begin{subfigure}{0.2\textwidth}
		\includegraphics[width=\linewidth]{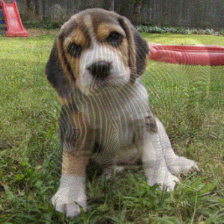}
		\caption{\small \nn{2124} queries}
	\end{subfigure}
	\begin{subfigure}{0.2\textwidth}
		\includegraphics[width=\linewidth]{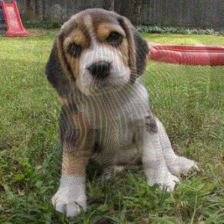}
		\caption{\small \nn{3088} queries}
	\end{subfigure}
	\begin{subfigure}{0.2\textwidth}
		\includegraphics[width=\linewidth]{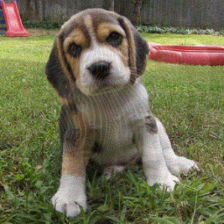}
		\caption{\small \nn{5368} queries}
	\end{subfigure}
	\begin{subfigure}{0.2\textwidth}
		\includegraphics[width=\linewidth]{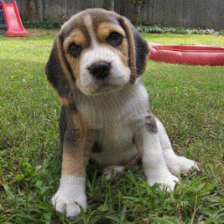}
		\caption{\small \nn{8191} queries}
	\end{subfigure}
	\begin{subfigure}{0.2\textwidth}
		\includegraphics[width=\linewidth]{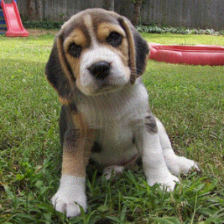}
		\caption{\small \nn{10331} queries}
	\end{subfigure}
	\begin{subfigure}{0.2\textwidth}
		\includegraphics[width=\linewidth]{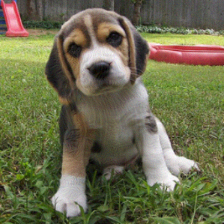}
		\caption{\small \nn{12025} queries}
	\end{subfigure}
	\begin{subfigure}{0.2\textwidth}
		\includegraphics[width=\linewidth]{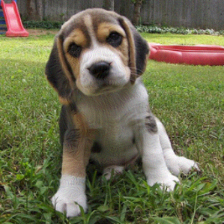}
		\caption{\small \nn{15192} queries}
	\end{subfigure}
	\begin{subfigure}{0.2\textwidth}
		\includegraphics[width=\linewidth]{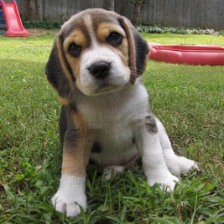}
		\caption{\small \nn{18149} queries}
	\end{subfigure}
	\begin{subfigure}{0.2\textwidth}
		\includegraphics[width=\linewidth]{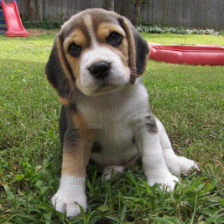}
		\caption{\small \nn{19138} queries}
	\end{subfigure}
	\begin{subfigure}{0.2\textwidth}
		\includegraphics[width=\linewidth]{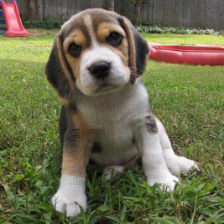}
		\caption{\small \nn{20114} queries}
	\end{subfigure}
	\caption{Adversarial images generated with different queries in Prior-Sign-OPT targeted attacks against ResNet-101.}
	\label{fig:prior-sign-opt-adv-images}
\end{figure}
\begin{figure}[h]
	\centering
	\begin{subfigure}{0.2\textwidth}
		\centering
		\includegraphics[width=\linewidth]{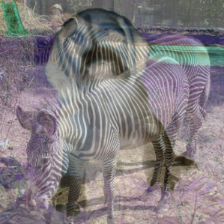}
		\caption{\small \nn{0} query}
	\end{subfigure}
	\begin{subfigure}{0.2\textwidth}
		\includegraphics[width=\linewidth]{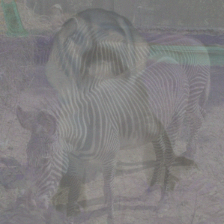}
		\caption{\small \nn{1092} queries}
	\end{subfigure}
	\begin{subfigure}{0.2\textwidth}
		\includegraphics[width=\linewidth]{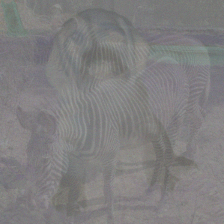}
		\caption{\small \nn{2124} queries}
	\end{subfigure}
	\begin{subfigure}{0.2\textwidth}
		\includegraphics[width=\linewidth]{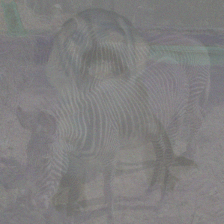}
		\caption{\small \nn{3088} queries}
	\end{subfigure}
	\begin{subfigure}{0.2\textwidth}
		\includegraphics[width=\linewidth]{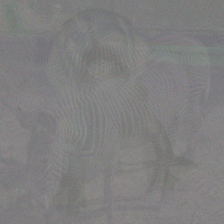}
		\caption{\small \nn{5368} queries}
	\end{subfigure}
	\begin{subfigure}{0.2\textwidth}
		\includegraphics[width=\linewidth]{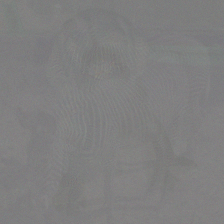}
		\caption{\small \nn{8191} queries}
	\end{subfigure}
	\begin{subfigure}{0.2\textwidth}
		\includegraphics[width=\linewidth]{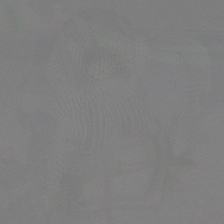}
		\caption{\small \nn{10331} queries}
	\end{subfigure}
	\begin{subfigure}{0.2\textwidth}
		\includegraphics[width=\linewidth]{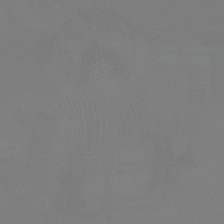}
		\caption{\small \nn{12025} queries}
	\end{subfigure}
	\begin{subfigure}{0.2\textwidth}
		\includegraphics[width=\linewidth]{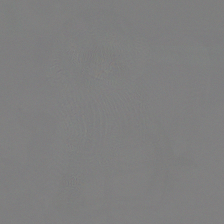}
		\caption{\small \nn{15192} queries}
	\end{subfigure}
	\begin{subfigure}{0.2\textwidth}
		\includegraphics[width=\linewidth]{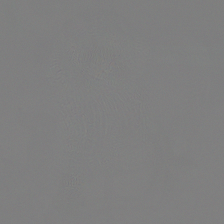}
		\caption{\small \nn{18149} queries}
	\end{subfigure}
	\begin{subfigure}{0.2\textwidth}
		\includegraphics[width=\linewidth]{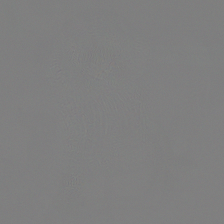}
		\caption{\small \nn{19138} queries}
	\end{subfigure}
	\begin{subfigure}{0.2\textwidth}
		\includegraphics[width=\linewidth]{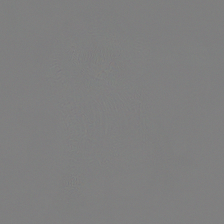}
		\caption{\small \nn{20114} queries}
	\end{subfigure}
	\caption{The corresponding adversarial perturbations generated with different queries in Prior-Sign-OPT targeted attacks against ResNet-101.}
	\label{fig:prior-sign-opt-adv-perturbations}
\end{figure}
\begin{figure}[h]
	\centering
	\begin{subfigure}{0.2\textwidth}
		\centering
		\includegraphics[width=\linewidth]{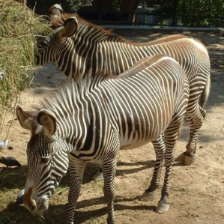}
		\caption{\small \nn{0} query}
	\end{subfigure}
	\begin{subfigure}{0.2\textwidth}
		\includegraphics[width=\linewidth]{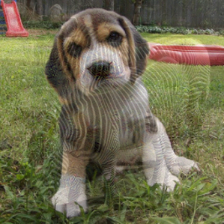}
		\caption{\small \nn{1142} queries}
	\end{subfigure}
	\begin{subfigure}{0.2\textwidth}
		\includegraphics[width=\linewidth]{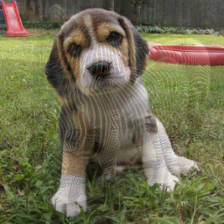}
		\caption{\small \nn{2424} queries}
	\end{subfigure}
	\begin{subfigure}{0.2\textwidth}
		\includegraphics[width=\linewidth]{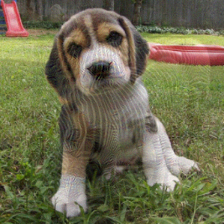}
		\caption{\small \nn{3336} queries}
	\end{subfigure}
	\begin{subfigure}{0.2\textwidth}
		\includegraphics[width=\linewidth]{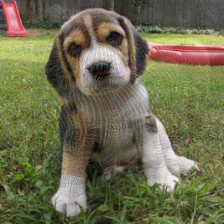}
		\caption{\small \nn{5105} queries}
	\end{subfigure}
	\begin{subfigure}{0.2\textwidth}
		\includegraphics[width=\linewidth]{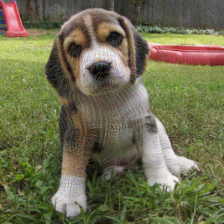}
		\caption{\small \nn{8047} queries}
	\end{subfigure}
	\begin{subfigure}{0.2\textwidth}
		\includegraphics[width=\linewidth]{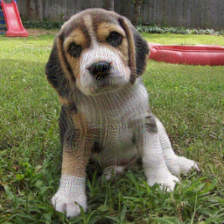}
		\caption{\small \nn{10068} queries}
	\end{subfigure}
	\begin{subfigure}{0.2\textwidth}
		\includegraphics[width=\linewidth]{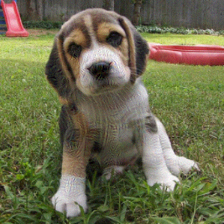}
		\caption{\small \nn{12239} queries}
	\end{subfigure}
	\begin{subfigure}{0.2\textwidth}
		\includegraphics[width=\linewidth]{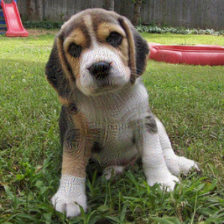}
		\caption{\small \nn{15646} queries}
	\end{subfigure}
	\begin{subfigure}{0.2\textwidth}
		\includegraphics[width=\linewidth]{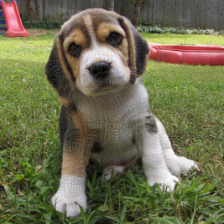}
		\caption{\small \nn{18179} queries}
	\end{subfigure}
	\begin{subfigure}{0.2\textwidth}
		\includegraphics[width=\linewidth]{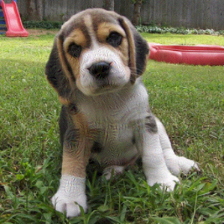}
		\caption{\small \nn{19189} queries}
	\end{subfigure}
	\begin{subfigure}{0.2\textwidth}
		\includegraphics[width=\linewidth]{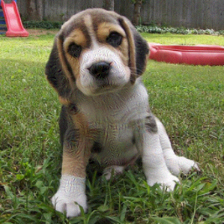}
		\caption{\small \nn{20233} queries}
	\end{subfigure}
	\caption{Adversarial images generated with different queries in Prior-OPT targeted attacks against ResNet-101.}
	\label{fig:prior-opt-adv-images}
\end{figure}
\begin{figure}[h]
	\centering
	\begin{subfigure}{0.2\textwidth}
		\centering
		\includegraphics[width=\linewidth]{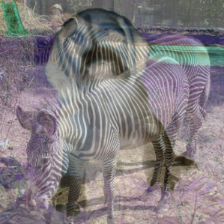}
		\caption{\small \nn{0} query}
	\end{subfigure}
	\begin{subfigure}{0.2\textwidth}
		\includegraphics[width=\linewidth]{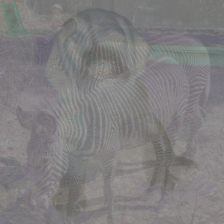}
		\caption{\small \nn{1142} queries}
	\end{subfigure}
	\begin{subfigure}{0.2\textwidth}
		\includegraphics[width=\linewidth]{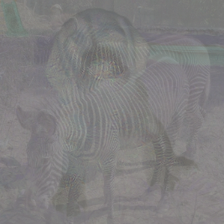}
		\caption{\small \nn{2424} queries}
	\end{subfigure}
	\begin{subfigure}{0.2\textwidth}
		\includegraphics[width=\linewidth]{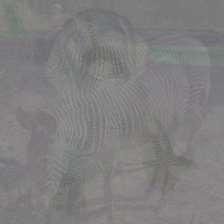}
		\caption{\small \nn{3336} queries}
	\end{subfigure}
	\begin{subfigure}{0.2\textwidth}
		\includegraphics[width=\linewidth]{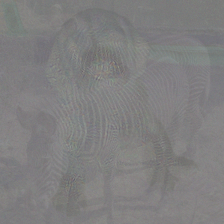}
		\caption{\small \nn{5105} queries}
	\end{subfigure}
	\begin{subfigure}{0.2\textwidth}
		\includegraphics[width=\linewidth]{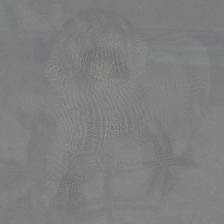}
		\caption{\small \nn{8047} queries}
	\end{subfigure}
	\begin{subfigure}{0.2\textwidth}
		\includegraphics[width=\linewidth]{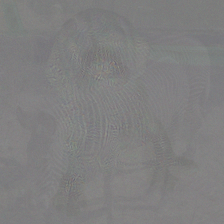}
		\caption{\small \nn{10068} queries}
	\end{subfigure}
	\begin{subfigure}{0.2\textwidth}
		\includegraphics[width=\linewidth]{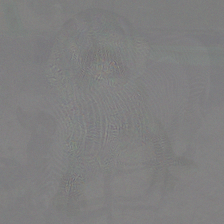}
		\caption{\small \nn{12239} queries}
	\end{subfigure}
	\begin{subfigure}{0.2\textwidth}
		\includegraphics[width=\linewidth]{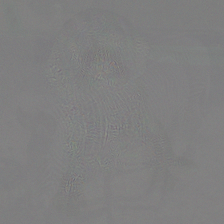}
		\caption{\small \nn{15646} queries}
	\end{subfigure}
	\begin{subfigure}{0.2\textwidth}
		\includegraphics[width=\linewidth]{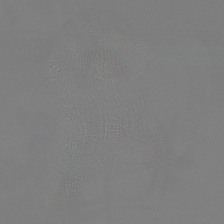}
		\caption{\small \nn{18179} queries}
	\end{subfigure}
	\begin{subfigure}{0.2\textwidth}
		\includegraphics[width=\linewidth]{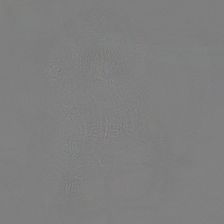}
		\caption{\small \nn{19189} queries}
	\end{subfigure}
	\begin{subfigure}{0.2\textwidth}
		\includegraphics[width=\linewidth]{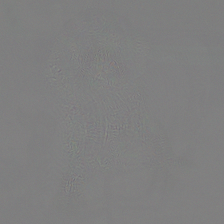}
		\caption{\small \nn{20233} queries}
	\end{subfigure}
	\caption{The corresponding adversarial perturbations generated with different queries in Prior-OPT targeted attacks against ResNet-101.}
	\label{fig:prior-opt-adv-perturbations}
\end{figure}
\end{document}